\pdfoutput=1

\documentclass{article}

\let\savedaddcontentsline\addcontentsline
\usepackage{iclr2027_conference,times}
\let\addcontentsline\savedaddcontentsline
\iclrfinalcopy

\usepackage[T1]{fontenc}
\usepackage{microtype}
\usepackage{graphicx}
\usepackage{subcaption}
\usepackage{booktabs}
\usepackage{tabularx}
\usepackage{array}
\newcolumntype{Y}{>{\raggedright\arraybackslash}X}

\definecolor{citationblue}{RGB}{0,32,128}
\usepackage[breaklinks,colorlinks=true,citecolor=citationblue]{hyperref}       %

\usepackage{amsmath}
\usepackage{amssymb}
\usepackage{mathtools}
\usepackage{amsthm}
\usepackage{algorithm}
\usepackage{algorithmic}
\usepackage[most]{tcolorbox}

\makeatletter
\renewcommand{\paragraph}{%
  \@startsection{paragraph}{4}%
  {\z@}{0ex}{-1em}%
  {\normalfont\normalsize\bfseries}%
}
\makeatother

\newtcolorbox{defbox}{
  colback=white,
  colframe=black,
  boxrule=0.4pt,
  arc=0pt,
  left=4pt,
  right=4pt,
  top=2pt,
  bottom=2pt
}
\usepackage{my_math}

\newcommand{\norm}[1]{\left\|#1\right\|}

\newcommand{\inner}[2]{\left\langle{#1}, {#2}\right\rangle}

\usepackage{amsmath,amsfonts,bm}

\def\1{\bm{1}}

\def\vzero{{\bm{0}}}

\def\va{{\bm{a}}}
\def\vb{{\bm{b}}}

\def\vd{{\bm{d}}}

\makeatletter
\newcommand{\ve}{\@ifnextchar\bgroup{\velong}{{\bm{e}}}}
\newcommand{\velong}[1]{{\bm{#1}}}
\makeatother

\def\vf{{\bm{f}}}
\def\vg{{\bm{g}}}
\def\vh{{\bm{h}}}

\def\vm{{\bm{m}}}
\def\vn{{\bm{n}}}

\def\vp{{\bm{p}}}
\def\vq{{\bm{q}}}
\def\vr{{\bm{r}}}

\def\vu{{\bm{u}}}
\def\vv{{\bm{v}}}
\def\vw{{\bm{w}}}
\def\vx{{\bm{x}}}
\def\vy{{\bm{y}}}
\def\vz{{\bm{z}}}

\def\mA{{\bm{A}}}
\def\mC{{\bm{C}}}
\def\mD{{\bm{D}}}
\def\mE{{\bm{E}}}
\def\mF{{\bm{F}}}
\def\mG{{\bm{G}}}
\def\mH{{\bm{H}}}
\def\mI{{\bm{I}}}
\def\mJ{{\bm{J}}}
\def\mK{{\bm{K}}}

\def\mM{{\bm{M}}}
\def\mN{{\bm{N}}}

\def\mP{{\bm{P}}}
\def\mQ{{\bm{Q}}}

\def\mS{{\bm{S}}}

\def\mW{{\bm{W}}}
\def\mX{{\bm{X}}}
\def\mY{{\bm{Y}}}
\def\mZ{{\bm{Z}}}

\def\mPhi{{\bm{\Phi}}}

\def\mDelta{{\bm{\Delta}}}

\def\mSigma{{\bm{\Sigma}}}

\DeclareMathAlphabet{\mathsfit}{\encodingdefault}{\sfdefault}{m}{sl}
\SetMathAlphabet{\mathsfit}{bold}{\encodingdefault}{\sfdefault}{bx}{n}

\newcommand{\E}{\mathbb{E}}
\newcommand{\R}{\mathbb{R}}

\providecommand{\explain}[1]{\text{\textcolor[gray]{0.45}{#1}}}

\newcommand{\vvareps}{\boldsymbol{\epsilon}}
\newcommand{\lrhat}{\bar\lr_\star}
\newcommand{\lrstar}{\lr_\star}
\newcommand{\lrP}{\lr_\star^{\mathsf P}}
\newcommand{\cstar}{c_\star}
\newcommand{\bstar}{b_\star}
\newcommand{\bstarP}{{b_\star^{\mathsf P}}}
\newcommand{\mustar}{\mu_\star}
\newcommand{\mustarP}{{\mu_\star^{\mathsf P}}}
\newcommand{\XstarP}{{\mX_\star^{\mathsf P}}}
\newcommand{\dcov}{\delta_{\mathrm{cov}}}
\newcommand{\rhoT}{\rho_T}

\newcommand{\WCupdate}[1]{{#1}}

\newcommand{\HWupdate}[1]{{#1}}

\usepackage[capitalize,noabbrev]{cleveref}

\IfFormatAtLeastTF{2026-06-01}{%
  \newcommand{\newaliasedtheorem}[2]{%
    \newtheorem{#1}[theorem]{#2}%
  }
}{%
  \usepackage{aliascnt}
  \newcommand{\newaliasedtheorem}[2]{%
    \newaliascnt{#1}{theorem}%
    \newtheorem{#1}[#1]{#2}%
    \aliascntresetthe{#1}%
  }
}
\theoremstyle{plain}
\newtheorem{theorem}{Theorem}[section]
\newaliasedtheorem{proposition}{Proposition}
\newaliasedtheorem{observation}{Observation}
\crefname{observation}{observation}{observations}
\Crefname{observation}{Observation}{Observations}
\newaliasedtheorem{lemma}{Lemma}
\newaliasedtheorem{corollary}{Corollary}
\theoremstyle{definition}
\newaliasedtheorem{definition}{Definition}
\newaliasedtheorem{assumption}{Assumption}
\crefname{assumption}{assumption}{assumptions}
\Crefname{assumption}{Assumption}{Assumptions}
\theoremstyle{remark}
\newaliasedtheorem{remark}{Remark}

\usepackage[textsize=tiny]{todonotes}

\usepackage{enumitem}
\setlist[itemize]{
  leftmargin=1.5em,
  topsep=0.12em,
  itemsep=0.12em,
  parsep=0.12em,
  partopsep=0.12em,
}
\setlist[enumerate]{
  leftmargin=1.5em,
  topsep=0.12em,
  itemsep=0.12em,
}

\title{Does LLM Pre-Training Typically Occur at the Edge of Stability?}

\author{%
  \begin{tabular}[t]{@{}*{2}{>{\centering\arraybackslash}p{\dimexpr(\textwidth-2\tabcolsep)/2\relax}@{}}}
  Yuhang Cai\textsuperscript{*\textdagger} & Haodong Wen\textsuperscript{*\textdaggerdbl}\\
  \normalfont\small\texttt{willcai@berkeley.edu} & \normalfont\small\texttt{whd25@mails.tsinghua.edu.cn}\\[1.5ex]
  Haofeng Huang\textsuperscript{*\textdaggerdbl} & Deyi Liu\textsuperscript{\textsection}\\
  \normalfont\small\texttt{huanghf22@mails.tsinghua.edu.cn} & \normalfont\small\texttt{deyiliu@bytedance.com}\\[1.5ex]
  Yiyuan Ma\textsuperscript{\textsection} & Kaifeng Lyu\textsuperscript{\textdaggerdbl}\\
  \normalfont\small\texttt{mayiyuan.unicorn@bytedance.com} & \normalfont\small\texttt{klyu@mail.tsinghua.edu.cn}
  \end{tabular}%
}

\begin{document}

\maketitle

\lhead{}
\renewcommand{\headrulewidth}{0pt}

\begingroup
\renewcommand{\thefootnote}{\fnsymbol{footnote}}
\footnotetext[1]{Equal contribution.}
\footnotetext[2]{Affiliations: \textdagger{} UC Berkeley; \textdaggerdbl{} Tsinghua University; \textsection{} ByteDance Seed.}
\endgroup

\begin{abstract}
  Quadratic approximations provide a common lens for studying neural network optimization, but recent evidence challenges their predictive validity. In full-batch gradient descent with learning rate (LR) $\eta$, \citet{cohen2021gradient} observed the Edge of Stability (EoS), where the largest Hessian eigenvalue concentrates near $2/\eta$, seemingly at odds with classical stability conditions predicted by quadratic approximations.
  In this work, we revisit the fidelity of quadratic approximations as a model of neural network training dynamics, with a particular focus on their failure modes in LLM training.
	We first highlight a distinct failure mode of the quadratic approximation, where persistent negative curvature causes the quadratic dynamics to diverge regardless of the LR, even though the real dynamics remain stable.
  We term this phenomenon the \emph{Edge of Convexity}~(EoC).
  After removing the effect of this LR-independent instability, we then propose a generalized definition of \textit{Edge of Stability}~(EoS) for stochastic training with adaptive optimizers, based on whether increasing the LR induces significant divergence in the quadratic dynamics while leaving the real dynamics largely unchanged.
	Across different LLM pretraining runs with various model sizes up to $787$M, we find that (1) EoC is always observed across LLM pretraining, and (2) EoS is also prevalent but not universal: it disappears when the LR becomes sufficiently small (e.g., after decay) or when the batch size falls below a critical threshold that has a positive correlation with the critical batch size.
	Together, these findings characterize when and how quadratic approximations fail and provide a foundation for future work on understanding training dynamics.
\end{abstract}

\section{Introduction}

In optimization theory, the dynamics of gradient-based methods are commonly analyzed through a local \textit{quadratic approximation} of the loss function, given by the second-order Taylor expansion around the current iterate:
\begin{equation}
    \loss(\param_{t+1}) \approx \loss(\param_t) + (\param_{t+1} - \param_t)^\top \nabla \loss(\param_t) + \frac{1}{2} (\param_{t+1} - \param_t)^\top \nabla^2 \loss(\param_t) (\param_{t+1} - \param_t).
\end{equation}
This approximation yields the well-known \textit{descent lemma} for gradient descent, which guarantees a decrease in loss when the LR $\lr$ satisfies $\lr < 2/\lambda_{\max}$. Here, $\lambda_{\max}$ is the maximum eigenvalue of the Hessian, which serves as a measure of the local curvature of the loss function.
A standard analysis pipeline in optimization theory then proceeds by bounding quantities that appear in the lemma and
deriving guarantees on the loss convergence.

However, in deep learning, the quadratic approximation often fails to accurately capture the true training dynamics.
Notably,~\citet{cohen2021gradient} show that full-batch gradient descent (GD) typically operates at the \textit{Edge of Stability} (EoS) when training neural networks.
For a fixed LR $\lr$, the maximum Hessian eigenvalue $\lambda_{\max}$ tends to increase during training and eventually oscillates around~$2/\lr$, thus $\lr \approx 2/\lambda_{\max}$.
Under the quadratic approximation, this places the dynamics exactly at the stability boundary of gradient descent, since the quadratic model predicts loss divergence once $\lr > 2/\lambda_{\max}$.
Consequently, the descent lemma can no longer guarantee a persistent decrease of the loss.
Yet in practice, the loss continues to decrease over long time scales.
Similar EoS behavior has also been observed for full-batch adaptive optimizers such as Adam and RMSProp~\citep{cohen2022adaptive}, when curvature is measured using an appropriately preconditioned Hessian.

\begin{figure}[t]
\vspace{-0.3in}
    \centering
    \begin{subfigure}[t]{0.49\textwidth}
    \centering
    \includegraphics[width=\linewidth]{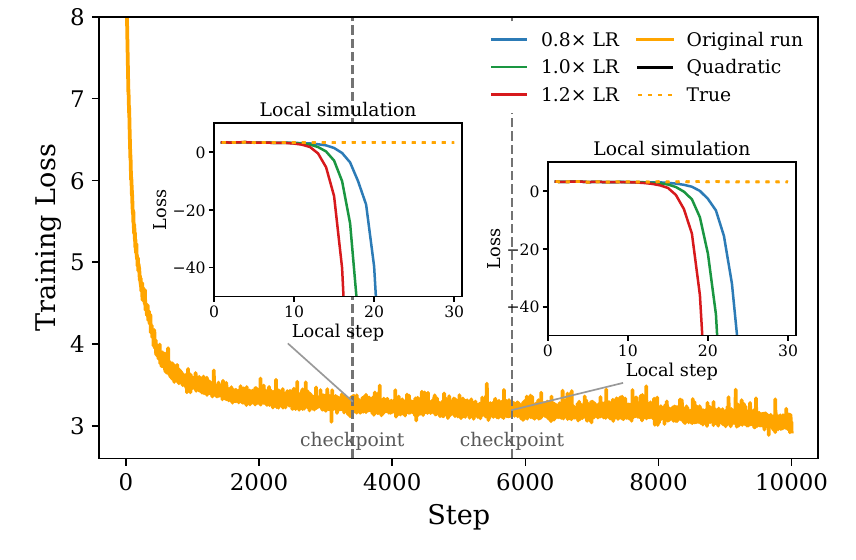}
    \vspace{-0.27in}
    \caption{Illustration of Edge of Convexity (EoC).}
    \label{fig:fig1-EoC}
    \end{subfigure}\hfill
    \begin{subfigure}[t]{0.49\textwidth}
    \centering
    \includegraphics[width=\linewidth]{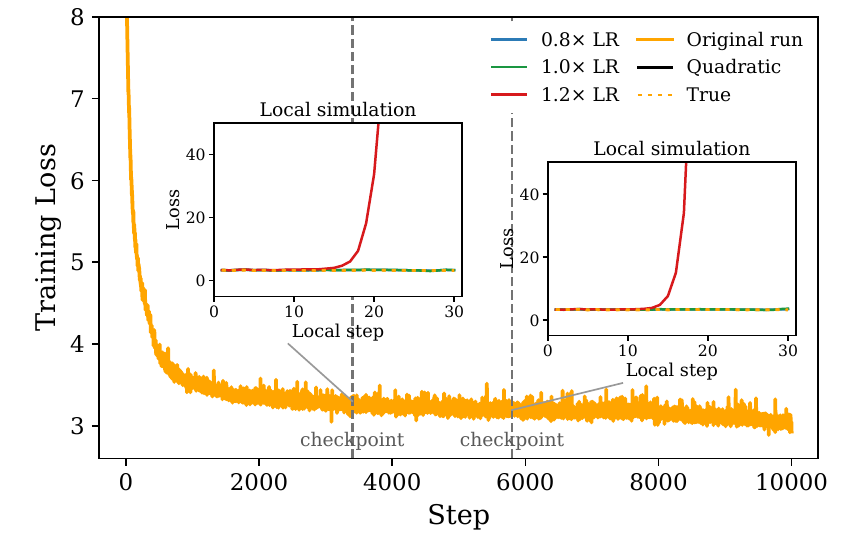}
    \vspace{-0.27in}
    \caption{Illustration of Edge of Stability (EoS).}
    \label{fig:fig2-EoS}
    \end{subfigure}
    \vspace{-0.1in}
    \caption{%
    LLM pretraining on FineWeb-10B exhibits two distinct failure modes of the quadratic approximation.
    (a) \textbf{EoC phenomenon}: regardless of the LR choice, the training dynamics under quadratic approximation diverge due to persistent negative curvature, while the true dynamics remain stable.
     (b) \textbf{EoS phenomenon}: after compensating for negative curvature, the quadratic loss remains as stable as the true loss for $0.8 \times$ and $1.0 \times$ the training LR, but increasing the LR slightly to $1.2 \times$ causes the quadratic loss to diverge.
     This shows that the current training LR is close to a critical stability threshold. See \Cref{sec:eoc-main,subsec:stochastic-eos-definition} for details.}
    \label{fig:eoc_eos_illustrations}
    \vspace{-0.14in}
\end{figure}

The EoS phenomenon therefore exposes a fundamental limitation of existing optimization analyses for deep learning, as many such analyses critically rely on the descent lemma and the underlying quadratic approximation.
This discrepancy has motivated a line of work that seeks to develop tools to understand training dynamics without critically relying on the quadratic approximation~\citep{arora2022understanding,damian2022self,cohen2024understanding}.

Despite these advances, existing empirical observations and theoretical analyses remain limited in several important aspects:
(i) Most theoretical definitions of EoS focus on full-batch training, whereas stochastic training is more common in practice;
(ii) Most studies focus on failures of the quadratic approximation caused by large positive Hessian eigenvalues, even though negative Hessian eigenvalues can also induce a similar gap between theory and practice, as we will elaborate in this work;
(iii)~Due to the prohibitive computational cost of computing and analyzing the Hessian,
direct measurements of EoS are largely limited to relatively small-scale settings such as CIFAR-10 and ImageNet, leaving open whether the same failure mode persists in large-scale training.

\paragraph{Our Contributions.} We present a systematic study of the failure modes of the quadratic approximation in stochastic, adaptive, and large-scale training, with a particular focus on LLM pretraining.
\begin{itemize}
    \item We first highlight an LR-independent failure mode of the quadratic approximation, where persistent negative curvature during training causes the quadratic dynamics to diverge regardless of the LR, even though the real dynamics remain stable.
    We term this phenomenon the \textit{Edge of Convexity}~(EoC). See~\Cref{subsec:eoc-definition}.

    \item We isolate and remove the effect of EoC, and then study the \textit{Edge of Stability}~(EoS) as an LR-dependent failure mode.
    We propose a generalized definition of EoS for stochastic training with adaptive optimizers, based on whether increasing the LR induces significant divergence in the quadratic dynamics while leaving the real dynamics largely unchanged.
    For full-batch GD, our definition recovers the original EoS definition. See~\Cref{subsec:stochastic-eos-definition,sec:agm}.

    \item We design efficient methods to detect both EoC and EoS at scale with theoretical guarantees (\Cref{sec:method-eoc,sec:method-eos}). Our results on LLM pretraining suggest that (1) EoC consistently arises; (2) EoS is prevalent but not universal, and disappears when the LR becomes sufficiently small (e.g., after decay) or when the batch size falls below a critical threshold that scales linearly with the critical batch size (\Cref{subsec:exp-scales,subsec:exp-bsz-threshold,subsec:exp-lr-eos}).
\end{itemize}

\begin{figure*}[t]
\vspace{-0.2in}
	\centering
	\begin{subfigure}[t]{0.32\textwidth}
		\centering
		\includegraphics[width=\linewidth]{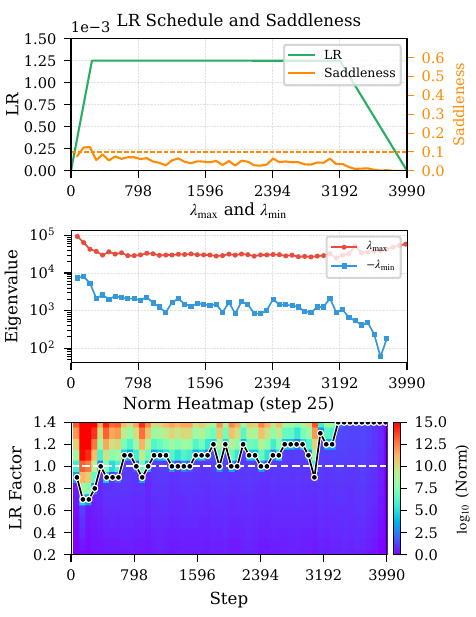}
		\caption{GPT-small (104M).}
		\label{fig-model:small-main}
	\end{subfigure}\hfill
	\begin{subfigure}[t]{0.32\textwidth}
		\centering
		\includegraphics[width=\linewidth]{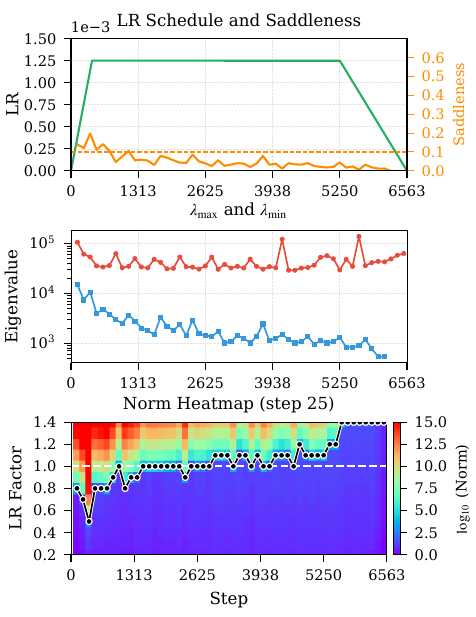}
		\caption{GPT-medium (344M).}
		\label{fig-model:medium-main}
	\end{subfigure}\hfill
	\begin{subfigure}[t]{0.32\textwidth}
		\centering
		\includegraphics[width=\linewidth]{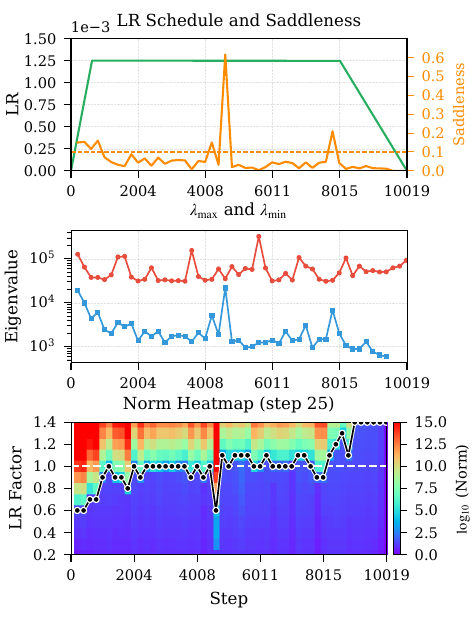}
		\caption{GPT-large (787M).}
		\label{fig-model:large}
	\end{subfigure}
\caption{\textbf{EoS and EoC appear across model sizes.} At most checkpoints, the bottom heatmaps show a sharp transition from bounded displacement to explosion near LR factor $1.0$, indicating that the training LR is at the critical threshold of EoS. In the middle row, $-\lambda_{\min}$ has a smaller magnitude than $\lambda_{\max}$, corresponding to saddleness below the $0.1$ reference line in the top plots and consistent with EoC. Each heatmap column is a checkpoint, and the vertical axis gives the simulated LR divided by the training LR. See \Cref{subsec:exp-scales} for the results and \Cref{app:detection-settings} for training and detection settings.}
	\label{fig-small-medium-large}
	\vspace{-0.15in}
\end{figure*}

\section{Related Work}
\label{sec:related-work}
\paragraph{Edge of Stability.}
Early studies connected sharpness and stability to optimizer behavior in neural network training: \citet{wu2018sgdselects} analyzed SGD minima selection through dynamical stability using sharpness and non-uniformity, and \citet{xing2018walk} observed SGD ``bouncing'' between valley walls during training. \citet{jastrzkebski2018relation} then showed that SGD visits increasingly sharp regions up to a maximum set by LR and batch size, and that the step is often too large along the sharpest directions. \citet{cohen2021gradient} later demonstrated that full-batch gradient descent typically operates at an Edge of Stability (EoS), where the largest Hessian eigenvalue oscillates near $2/\eta$ while loss decreases non-monotonically. \citet{cohen2022adaptive} extended this phenomenon to adaptive methods via saturation of a preconditioned stability threshold. Finally, \citet{arora2022understanding,li2022analyzing,damian2022self,lyu2022understanding} provide theoretical explanations for EoS in different training regimes, but a unified understanding of this phenomenon is still lacking.

\paragraph{Training Stability of Neural Networks} Series works studied the training stability of constant LR SGD in quadratic losses via a linear system analysis of the second moment of the parameters~\citep{wu2018sgdselects,ma2021linear,granziol2022learning,wuwangsu2022alignment,mulayoff2024exact}. They obtained tight conditions in the form of $\|\E[(\mI - \eta \mH_t)^{\otimes 2}]\| \le 1$, which are valid instability criteria for a particular Lyapunov
function. However, in the training
neural networks with (S)GD, the unstable convergence phenomenon was observed~\citep{wu2018sgdselects,xing2018walk,jastrzkebski2018relation}, formally identified and termed \textit{Edge of Stability} (EoS)~\citep{cohen2021gradient}. Subsequently, several studies sought to explain this unstable convergence~\citep{ma2022beyond,ahn2022understanding,wu2023implicit,arora2022understanding,damian2022self,cohen2024understanding}. Among them, \citet{cohen2024understanding} gave a general characterization of the oscillatory dynamics of deep learning in the EoS regime via a time-averaged differential equation termed \textit{central flow}. However, the effectiveness of the quadratic approximation was never carefully studied in the large-scale neural network training, hence it is not clear when and how the quadratic approximation fails or not in the realistic regime.

\paragraph{Hessian Spectrum of Neural Networks.} Most studies on the Hessian spectrum of neural networks reported that the Hessian spectra of neural networks consist of a ``bulk'' together with a few large ``outliers'' in magnitude~\citep{lecun2002efficient,dauphin2014identifying,sagun2016eigenvalues,sagun2017empirical,ghorbani2019investigation,chaudhari2019entropy,yao2020pyhessian}. Among them, \citet{sagun2016eigenvalues,sagun2017empirical} mentioned that there are still negative eigenvalues during training, but their magnitude is much smaller than the large outliers. More recent works reported and explained a near-block diagonal structure in the Hessian spectrum of CNNs and Transformers~\citep{zhang2024transformers,dong2025towards}. However, one thing that has long been overlooked is the characterization of the negative eigenvalues in the population Hessian during large-scale training. In this work, we present detailed empirical findings of the negative eigenvalues across different setups in large-scale neural network training.

\paragraph{Comparison with \citet{andreyev2024edge}.} The recent work~\citep{andreyev2024edge} is among the most relevant to this work, which defines Edge of Stochastic Stability for SGD via the batch sharpness, where Batch Sharpness hovers around $2/\eta$. Compared to \citet{andreyev2024edge}, our work
(1) In the full-batch setting, our definition for EoS can be reduced to the original definition of EoS by \citet{cohen2021gradient,cohen2022adaptive}, while theirs cannot;
(2)  We also identify another failure mode of the quadratic approximation called Edge of Convexity (EoC); (3) We provide scalable EoS and EoC detection methods with theoretical guarantees; (4) We conduct experiments on large-scale LLM pretraining with AdamW, while most of their experiments are limited to ResNets on CIFAR-10 and SVHN with vanilla SGD.

\section{Preliminary} \label{subsec:full-batch-eos}
\paragraph{Notation.}
Throughout, $\|\cdot\|$ denotes the spectral norm for matrices and the $\ell_2$ norm for vectors.
We write $\lambda_{\max}(\mA)$ and $\lambda_{\min}(\mA)$ for the largest and smallest eigenvalues of a symmetric matrix $\mA$,
and $\rho(\mA)$ for the spectral radius of a square matrix $\mA$.
We write $\vectorize(\cdot)$ for vectorization and $\otimes$ for the Kronecker product.
We use $\mathbb{S}_d$ for the set of $d\times d$ symmetric matrices and $\mathbb{S}_d^+$ for the subset of positive semidefinite matrices.
For $\mS\succeq 0$, we write $\|\vu\|_{\mS}=\sqrt{\vu^\top\mS\vu}$.
We use $[n]$ to denote $\{1,2,\ldots,n\}$.
We write $\grad f(\param)$ and $\hess f(\param)$ for the gradient and Hessian of $f$ at $\param$.

\paragraph{Problem Setup.}%
We study a neural network $f(\vx;\param):\R^{d+s}\to\R^q$, where $\vx\in\R^d$ is the input and $\param\in\R^s$ is the trainable parameter.
Given a dataset $\{(\vx_i,y_i)\}_{i=1}^n$ and a per sample loss function $\ell$, define the population loss and batch loss as $\loss(\param) \coloneqq
    \frac{1}{n}\sum_{i=1}^n\ell\bigl(f(\vx_i;\param),y_i\bigr)$ and $\loss_{\batch}(\param) \coloneqq
    \frac{1}{B}\sum_{i\in\batch}
    \ell\bigl(f(\vx_i;\param),y_i\bigr),$
where $\batch\subseteq[n]$ is a batch of size $B$.
\paragraph{Full-Batch Edge of Stability.}
We recall the full-batch Edge of Stability (EoS) introduced in \citet{cohen2021gradient}. %
Consider full-batch gradient descent (GD) with LR $\eta$ as $\param_{t+1} = \param_t-\eta \nabla \loss(\param_t).$
Let $\mH_t \coloneqq\nabla^2\loss(\param_t)$, and write $\lambda_{\max}(\mH_t)$ for the maximal eigenvalue of $\mH_t$.
Classical local analysis says that, if the loss around $\param_t$ is well described by a positive semidefinite quadratic function, then GD is stable only when $\eta \lambda_{\max}(\mH_t) \le 2.$
Equivalently, the threshold $2/\eta$ is the largest curvature without causing instability.

\citet{cohen2021gradient} found that neural network training does not usually stay well below this threshold.
During full batch training, $\lambda_{\max}(\mH_t)$ often increases until it reaches a value near $2/\eta$.
It then remains near this value for a long part of training.
The loss is not monotone at every step, but it continues to decrease over longer time scales.
They call this regime the \textit{Edge of Stability} (EoS).

To relate this observation to our discussion on the fidelity of quadratic approximation, we fix a checkpoint $\param_{t_0}$ and consider
\begin{align}
    Q_{t_0}(\param)
    =
    \loss(\param_{t_0})
    +(\param-\param_{t_0})^\top\grad\loss(\param_{t_0})
    +\frac{1}{2}(\param-\param_{t_0})^\top
    \mH_{t_0}(\param-\param_{t_0}).
    \label{eq:full-batch-quadratic-surrogate}
\end{align}
Suppose that this quadratic has a stationary point $\param_\star$ such that $ \grad\loss(\param_{t_0})
    +\mH_{t_0}(\param_\star-\param_{t_0})
    =0.$ Gradient descent on $Q_{t_0}$ then satisfies
\begin{align*}
    \param_{t+1}-\param_\star
    =
    (\mI-\eta\mH_{t_0})(\param_t-\param_\star).
\end{align*}
If $\mH_{t_0}\succeq 0$, this system is stable when $\rho(\mI-\eta\mH_{t_0})\leq 1,$
or equivalently, $\eta\lambda_{\max}(\mH_{t_0})\leq 2$.
Once $\eta\lambda_{\max}(\mH_{t_0})>2$, $\param_{t}$ oscillates with growing magnitude along a leading eigenvector of $\mH_{t_0}$.

Based on this perspective, the observation that $\lambda_{\max}$ concentrates near $2/\lr$ throughout training is striking.
It places training dynamics precisely at the boundary of stability predicted by the quadratic model, rather than safely within it. The EoS phenomenon therefore exposes a fundamental mismatch between the predictions of quadratic approximation and the actual behavior of neural network training, which is the starting point for our study of EoS in stochastic training.

\section{Stability of Stochastic Gradient Descent on Quadratics}
\label{sec:stochastic-quadratic-stability}

The full-batch EoS reviewed above is insightful but incomplete for modern neural networks since it relies on two simplifying conditions. First, the Hessian is assumed to be positive semidefinite. If it has a
negative eigenvalue, then the local quadratic dynamics contain an exploding direction for every positive LR, so the usual positive curvature stability threshold is no longer the only source of instability. Second, the analysis is deterministic and applies to full-batch gradient descent. It does not capture the additional randomness introduced by mini-batch sampling.

We then study these two effects on a stochastic quadratic objective. We consider a stochastic variant of \Cref{eq:full-batch-quadratic-surrogate}  to approximate the training loss. Specifically, at step $t$, let $f_t(\vx)=\frac{1}{2}\vx^\top\mA_t\vx+\vb_t^\top\vx+c_t,$ where $\mA_t$ is symmetric, $\Ebb[\mA_t]=\mA$, and $\Ebb[\vb_t]=\vb$.
We assume that $\{(\mA_t,\vb_t)\}_{t\geq 0}$ are sampled independently from a fixed distribution, have finite second moments, and are independent of the initialization.
SGD with LR $\eta$ gives
\begin{align}
    \vx_{t+1}
    =
    \vx_t-\eta\grad f_t(\vx_t)
    =
    (\mI-\eta\mA_t)\vx_t-\eta\vb_t.
    \label{eq:stochastic-quadratic-update}
\end{align}
Let $\mM_t \coloneqq \mI-\eta\mA_t.$
Suppose that the population quadratic has a stationary point $\vx_\star$ satisfying $\mA\vx_\star+\vb=0.$ And we define $\vvareps_t
    \coloneqq
    \vx_t -\vx_\star,
    \vr_t
    \coloneqq
    \mA_t\vx_\star+\vb_t.$
Then $\Ebb[\vr_t]=0$, the error $\vvareps_t$ follows
\begin{align}
    \vvareps_{t+1} =
    (\mI-\eta\mA_t)(\vx_t - \vx_\star) - \eta \mA_t \vx_\star -\eta\vb_t =
    \mM_t\vvareps_t-\eta\vr_t.
    \label{eq:error-dynamics-stochastic-quadratic}
\end{align}
We study the mean-square stability of the above stochastic quadratic dynamics.
\begin{definition}
\label{def:mean-square-stability}
For the stochastic dynamics in
\cref{eq:error-dynamics-stochastic-quadratic},
we say that the system is mean-square stable if, for every initialization
$\vvareps_0\in\R^d$ the resulting trajectory satisfies
\begin{align}
    \sup_{t\geq 0}
    \Ebb\left[\|\vvareps_t\|^2\right]
    <\infty.
    \label{eq:mean-square-stability}
\end{align}
\end{definition}

\paragraph{Why Mean-Square Stability.}
For a stochastic system, the mean trajectory does not describe the size of a typical perturbation.
Positive and negative deviations can cancel in expectation while their squared norm grows.
We therefore study $\Ebb[\|\vvareps_t\|^2]$.
This is a standard stability measure in many analyses of stochastic linear systems induced by SGD~\citep{wu2018sgdselects,ma2021linear,wuwangsu2022alignment,mulayoff2024exact}.
Our definition requires this quantity to remain bounded over time,
allowing persistent fluctuations without requiring individual trajectories
to converge. We discuss the relation to almost-sure convergence
in~\Cref{app:mean-square-vs-almost-sure}.

The next proposition gives the moment dynamics and a sufficient condition for mean-square stability.

\begin{proposition}
\label{prop:stochastic-quadratic-moment-dynamics}
For each fixed $\eta$, there exist a matrix $\mS$ and a vector $\vb_{\mathrm{quad}}$ that do not
depend on $t$ such that
\begin{align}
    \Ebb[\vvareps_{t+1}]
    &=
    (\mI-\eta\mA)\Ebb[\vvareps_t],
    \label{eq:first-moment-dynamics}
    \\
    \vectorize\!\left(
        \Ebb[\vvareps_{t+1}\vvareps_{t+1}^\top]
    \right)
    &=
    \Ebb[\mM_t\otimes\mM_t]
    \vectorize\!\left(
        \Ebb[\vvareps_t\vvareps_t^\top]
    \right)
    +\mS\Ebb[\vvareps_t]
    +\vb_{\mathrm{quad}}.
    \label{eq:second-moment-dynamics}
\end{align}
For every deterministic initialization, the affine system \eqref{eq:error-dynamics-stochastic-quadratic} has bounded second moments if
\begin{align*}
    \rho\!\left(\Ebb[\mM_t\otimes\mM_t]\right) < 1.
\end{align*}
\end{proposition}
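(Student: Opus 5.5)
We derive both moment recursions directly from the error dynamics \eqref{eq:error-dynamics-stochastic-quadratic}, using the independence structure. Since $(\mA_t,\vb_t)$ is drawn independently of the past and of the initialization, $\vvareps_t$ is a measurable function of $\vvareps_0$ and $(\mA_s,\vb_s)_{s<t}$, so it is independent of $(\mM_t,\vr_t)$. Taking expectations in $\vvareps_{t+1}=\mM_t\vvareps_t-\eta\vr_t$ gives $\Ebb[\mM_t]\Ebb[\vvareps_t]-\eta\Ebb[\vr_t]$. Here $\Ebb[\mM_t]=\mI-\eta\mA$, and $\Ebb[\vr_t]=\mA\vx_\star+\vb=0$, which yields \eqref{eq:first-moment-dynamics}.

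For the second moment, we expand
\[
\vvareps_{t+1}\vvareps_{t+1}^\top=\mM_t\vvareps_t\vvareps_t^\top\mM_t^\top-\eta\mM_t\vvareps_t\vr_t^\top-\eta\vr_t\vvareps_t^\top\mM_t^\top+\eta^2\vr_t\vr_t^\top
\]
and vectorize each term using $\vectorize(\mX\mY\mZ)=(\mZ^\top\otimes\mX)\vectorize(\mY)$.

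\emph{Quadratic term.} $\vectorize(\mM_t\vvareps_t\vvareps_t^\top\mM_t^\top)=(\mM_t\otimes\mM_t)\vectorize(\vvareps_t\vvareps_t^\top)$, since $\mM_t$ is symmetric. Independence then gives $\Ebb[\mM_t\otimes\mM_t]\vectorize(\Ebb[\vvareps_t\vvareps_t^\top])$.

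\emph{Cross terms.} We write $\vectorize(\mM_t\vvareps_t\vr_t^\top)=(\vr_t\otimes\mM_t)\vvareps_t$ and $\vectorize(\vr_t\vvareps_t^\top\mM_t)=(\mM_t\otimes\vr_t)\vvareps_t$, viewing the Kronecker products as $d^2\times d$ matrices. Independence then gives
\[
\mS\coloneqq-\eta\,\Ebb[\vr_t\otimes\mM_t+\mM_t\otimes\vr_t].
\]
Note that $\mS$ is generally nonzero, because $\vr_t$ and $\mA_t$ are correlated even though $\Ebb[\vr_t]=0$.

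\emph{Constant term.} $\vb_{\mathrm{quad}}\coloneqq\eta^2\vectorize(\Ebb[\vr_t\vr_t^\top])$.

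All of these quantities are finite by the finite-second-moment assumption, and they are independent of $t$ because the pairs $(\mA_t,\vb_t)$ are identically distributed.

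For the stability claim, we stack $\vm_t\coloneqq\Ebb[\vvareps_t]$ and $\vp_t\coloneqq\vectorize(\Ebb[\vvareps_t\vvareps_t^\top])$. This gives a time-invariant affine system $\vz_{t+1}=\mT\vz_t+\vc$, where $\mT$ is block lower triangular with diagonal blocks $\mI-\eta\mA$ and $\mK\coloneqq\Ebb[\mM_t\otimes\mM_t]$. If $\rho(\mK)<1$, we first show $\rho(\mI-\eta\mA)<1$. By Jensen, $\Ebb[\mM_t]\otimes\Ebb[\mM_t]$ is dominated in a suitable sense, but a cleaner route is the following. For an eigenpair $(\lambda,\vu)$ of the symmetric matrix $\mI-\eta\mA$, consider $\vectorize(\vu\vu^\top)$ and the PSD cone. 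The map $\mathcal{K}(\mX)=\Ebb[\mM_t\mX\mM_t]$ is positive, so by Krein--Rutman $\rho(\mK)$ is attained by a PSD eigenmatrix. Moreover,
\[
\mathcal{K}(\vu\vu^\top)\succeq\Ebb[\mM_t]\vu\vu^\top\Ebb[\mM_t]=\lambda^2\vu\vu^\top
\]
by convexity (a variance argument). Iterating, $\mathcal{K}^k(\vu\vu^\top)\succeq\lambda^{2k}\vu\vu^\top$, so $\lambda^2\le\rho(\mK)<1$.

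It follows that $\rho(\mT)=\max\{\rho(\mI-\eta\mA),\rho(\mK)\}<1$, so $\vz_t=\mT^t\vz_0+\sum_{s<t}\mT^s\vc$ is bounded. In particular $\Ebb\|\vvareps_t\|^2=\operatorname{tr}\Ebb[\vvareps_t\vvareps_t^\top]$ is bounded for every deterministic initialization.

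The main obstacle is this last step: relating the spectrum of the first-moment block to $\rho(\mK)$. The monotonicity argument must be made carefully, using that $\mathcal{K}$ preserves PSD order and that $\mathcal{K}^k(\vu\vu^\top)\succeq\lambda^{2k}\vu\vu^\top$ forces $\lambda^{2k}\le\|\mK^k\|$. The alternative, assuming $\rho(\mI-\eta\mA)<1$ separately, would weaken the statement.
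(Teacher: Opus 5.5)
Your proposal is correct and follows essentially the paper's route: the paper applies its generic moment-recursion result with $\mM_t=\mI-\eta\mA_t$ and $\vq_t=-\eta\vr_t$ to get the same recursions and the same $\mS$ and $\vb_{\mathrm{quad}}$. It then uses block-triangularity, the mean-radius inequality $\rho(\mI-\eta\mA)^2\le\rho(\Ebb[\mM_t\otimes\mM_t])$, and a Neumann series, as you do. The one variation is how you prove the mean-radius inequality. You use the Loewner-order iteration $\mathcal K^k(\vu\vu^\top)\succeq\lambda^{2k}\vu\vu^\top$, which needs $\mI-\eta\mA$ symmetric. The paper uses Jensen's inequality $\|\Ebb\vy_t\|^2\le\Ebb\|\vy_t\|^2$ with a rescaling $\mM_t/\sqrt{s}$, which also covers the nonsymmetric AGM means. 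Your Krein--Rutman remark is not needed.
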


The proof is given in \Cref{app:proof-moments}.
The above proposition implies that stochastic quadratics have two different instability modes.
The first is caused by the negative curvature of the population quadratic and is independent of the LR threshold.
The second is the usual positive-curvature instability, but in the stochastic setting its threshold is controlled by a second-moment operator rather than by $\lambda_{\max}(\mA)$ alone. The following arguments make the above ideas clear.

\subsection{Negative Curvature Divergence}\label{subsec:neg-divergence}
Suppose that $\lambda_{\min}(\mA)<0$, and let $\vu_{\min}$ be a corresponding unit eigenvector. Then the negative curvature makes the stochastic quadratic dynamics
unstable for every positive LR. Define the mean error in
this direction by $a_t    \coloneqq\vu_{\min}^{\top}\Ebb[\vvareps_t].$
Using the mean dynamics
$\Ebb[\vvareps_{t+1}]=(\mI-\eta\mA)\Ebb[\vvareps_t]$, we have
\begin{align*}
    a_{t+1}
    &=
    \vu_{\min}^{\top}
    (\mI-\eta\mA)
    \Ebb[\vvareps_t]
    =
    \left(
        1-\eta\lambda_{\min}(\mA)
    \right)a_t
    =
    \left(
        1+\eta|\lambda_{\min}(\mA)|
    \right)a_t.
\end{align*}
Therefore, we have that $a_t=
    \left(
        1+\eta|\lambda_{\min}(\mA)|
    \right)^t a_0.$
For any initialization with $a_0\neq 0$, this component grows without
bound for every $\eta>0$. Moreover,
\begin{align*}
    \Ebb\left[\|\vvareps_t\|^2\right]
    \geq
    \left\|\Ebb[\vvareps_t]\right\|^2
    \geq
    \left|
        \vu_{\min}^{\top}\Ebb[\vvareps_t]
    \right|^2
    =
    |a_t|^2.
\end{align*}
Hence the second moment is also unbounded.
This instability does not disappear when the LR is reduced. A smaller LR only slows down the escape along the negative curvature direction, while
the quadratic model remains unstable for every positive LR.

\subsection{Positive Curvature Divergence and Critical LR}
Next, we show that even if we factor out those directions causing negative divergence and focus on the locally convex subspace, the instability can also arise from mini-batch noise and admits a critical LR threshold.
\WCupdate{The most direct construction is to keep only the directions of positive}
\WCupdate{curvature, discarding the eigenvectors of $\mA$ with nonpositive eigenvalues.}
\WCupdate{On the remaining subspace the population quadratic is strongly convex.}

Let $\Pi^+$ be the orthogonal projection onto the eigenspace of $\mA$ with
positive eigenvalues. We apply this fixed projection at every step and
define $\mM_t^+\coloneqq\Pi^+\mM_t\Pi^+.$
The projected error dynamics are
\begin{align*}
    \vvareps_{t+1}^+
    =\mM_t^+\vvareps_t^+-\eta\Pi^+\vr_t.
\end{align*}
These dynamics keep the positive curvature part of the original stochastic quadratic while removing the negative curvature mode. We then define the critical LR for SGD using the mean-square stability of this projected system.

\begin{definition}
\label{def:stochastic-quadratic-critical-lr-general}
For the stochastic quadratic dynamics in \cref{eq:stochastic-quadratic-update}, let $\vvareps_t^+$ follow the projected error dynamics above from $\vvareps_0^+\coloneqq\Pi^+\vvareps_0$.
Define the critical LR on the positive curvature subspace by
\begin{align}
    \lr_\star
    \coloneqq
    \sup\left\{
        \eta>0:
        \sup_{t\geq 0}
        \Ebb\left[\|\vvareps_t^+\|^2\right]
        <\infty
        \text{ for every }\vvareps_0\in\R^d
    \right\}.
    \label{eq:stochastic-critical-lr-general}
\end{align}
\end{definition}
In the full batch case, $\mA_t\equiv\mA$ and $\vb_t\equiv\vb$, so
$\vr_t=\mA\vx_\star+\vb=0$ by stationarity of $\vx_\star$.
Since $\mA$ is symmetric, we can write the error in the eigenvector basis of $\mA$.
Projecting onto the positive curvature subspace simply means ignoring all
\WCupdate{coordinates associated with nonpositive eigenvalues. Along a remaining direction}
\WCupdate{with eigenvalue $\lambda>0$, the update multiplies the error by}
\WCupdate{$1-\eta\lambda$, so the error stays bounded when $|1-\eta\lambda|\leq 1$ and}
\WCupdate{decays when $|1-\eta\lambda|<1$.}
\WCupdate{Therefore, when $\lambda_{\max}(\mA)>0$, all remaining directions have bounded}
error exactly when
\begin{align*}
    0<\eta\leq\frac{2}{\lambda_{\max}(\mA)},
    \qquad
    \lr_\star=\frac{2}{\lambda_{\max}(\mA)}.
\end{align*}
The definition therefore recovers the full batch threshold in \Cref{subsec:full-batch-eos}.

For the stochastic case, the stability depends on more than the eigenvalues of $\mA$. For the projected transition,
\begin{align}
    \Ebb[\mM_t^+\otimes\mM_t^+]
    =
    \Ebb[\mM_t^+]\otimes\Ebb[\mM_t^+]
    +\operatorname{Cov}(\mM_t^+),
    \label{eq:second-moment-covariance-decomp}
\end{align}
where
\begin{align*}
    \operatorname{Cov}(\mM_t^+)
    \coloneqq
    \Ebb[\mM_t^+\otimes\mM_t^+]
    -\Ebb[\mM_t^+]\otimes\Ebb[\mM_t^+].
\end{align*}
The covariance term records variation in the Hessian across batches.
It can make the second moment unstable even when the mean transition is stable, and it can move the stability boundary away from $2/\lambda_{\max}(\mA)$.
This is what is missing from the full-batch EoS picture: stochastic EoS is not determined by $\lambda_{\max}$ alone, and before it can even be defined cleanly, the negative-curvature instability must be separated out.
Further stability analysis, including the treatment of zero curvature directions, is given in \Cref{app:critical}.

\section{Stability of Stochastic Gradient Descent on Neural Networks}
\label{sec:eoc-eos}\label{sec:main-results}\label{sec:methods}
We now apply the stochastic quadratic analysis in
\Cref{sec:stochastic-quadratic-stability} to neural network training with SGD. Fix a checkpoint $\param_{t_0}$. For a batch $\batch_t$,
we consider the quadratic surrogate
\begin{align}
    Q_{\batch_t}(\param)
    =
    \loss_{\batch_t}(\param_{t_0})
    +(\param-\param_{t_0})^\top g_{\batch_t}
    +\frac{1}{2}(\param-\param_{t_0})^\top
    \mH_{\batch_t}(\param-\param_{t_0}),
    \label{eq:stochastic-quadratic-surrogate}
\end{align}
where $g_{\batch_t}\coloneqq \nabla \loss_{\batch_t}(\param_{t_0})$  and $\mH_{\batch_t} \coloneqq \nabla^2 \loss_{\batch_t}(\param_{t_0})$. SGD with LR $\eta$ on this quadratic is given by
\begin{align}
    \param_{t+1}
    =
    \param_t-\eta\grad Q_{\batch_t}(\param_t).
    \label{eq:stochastic-quadratic-sgd-update}
\end{align}
As shown in \Cref{sec:stochastic-quadratic-stability}, applying SGD to the local quadratic surrogate gives a stochastic linear system. Therefore, the two instability mechanisms identified for stochastic quadratics also arise in the local quadratic surrogate of neural network (NN) training in \Cref{eq:stochastic-quadratic-sgd-update}.

First, we recall that $\mH_{t_0}
    \coloneqq
    \hess\loss(\param_{t_0})
    =
    \Ebb_{\batch_t}[\mH_{\batch_t}]$
is the population Hessian at step $t_0$. If $\lambda_{\min}(\mH_{t_0})<0$, the population quadratic $Q_t(\vw)=\Ebb[Q_{\batch_t}(\vw)]$ is unbounded below as $t \to \infty$, and its mean dynamics have an expanding direction for
every positive LR. This is the negative curvature failure studied in \Cref{subsec:neg-divergence}.

\WCupdate{Second, we remove the directions of nonpositive curvature by projecting every local update. Let $\Pi^+_{t_0}$ project onto the positive eigenspace of $\mH_{t_0}$, and let $\vvareps_t^+$ denote the displacement of this projected run from the checkpoint. The update is}
\[
 \vvareps_{t+1}^+
 =\Pi^+_{t_0}\bigl[(\mI-\eta\mH_{\batch_t})\vvareps_t^+
                   -\eta\grad\loss_{\batch_t}(\param_{t_0})\bigr],
 \qquad \vvareps_0^+\in\operatorname{range}(\Pi^+_{t_0}).
\]
As in \Cref{def:stochastic-quadratic-critical-lr-general}, $\lr_\star(\param_{t_0})$ is the supremum of positive LRs for which $\sup_t\Ebb\|\vvareps_t^+\|^2<\infty$ for every deterministic initial displacement in this subspace.

For NN training, once we replace the actual loss with $Q_{\batch_t}(\param)$, it is typically observed that the linear system formed by quadratic approximation diverges quickly, while the true dynamics of the NN remain stable (\Cref{fig:fig1-EoC},~\Cref{fig:fig2-EoS}). So a natural question is: which regime leads to the failure of quadratic approximation in NN training, negative or positive curvature divergence? We identify: NN training sits on an \emph{edge} where either divergence mechanism is \emph{just at the threshold} of breaking the local quadratic approximation.

\subsection{Edge of Convexity}
\label{subsec:eoc-definition}\label{sec:eoc-main}
\paragraph{Negative curvature makes the quadratic model unstable.}
Negative Hessian eigenvalues in NNs have been widely reported in
prior works~\citep{sagun2016eigenvalues,sagun2017empirical,alain2019negative}.
In this paper, we show that they can make the quadratic model diverge while NN training remains stable (\Cref{fig:fig1-EoC}). If $\lambda_{\min}(\mH_{t_0})<0$, SGD on the quadratic model in \cref{eq:stochastic-quadratic-surrogate} has an exploding direction for every $\eta>0$ (\Cref{subsec:neg-divergence}). A smaller LR slows this explosion but cannot prevent it.
We first measure the size of the negative curvature relative to the positive curvature at the same checkpoint.
\begin{definition}[Saddleness]
\label{def:relative-negative-curvature}
Suppose that $\lambda_{\max}(\mH_{t_0})>0$.
We define the saddleness at $\param_{t_0}$ by
\begin{align}
    \nu(\param_{t_0})
    \coloneqq
    \frac{\max\{0,-\lambda_{\min}(\mH_{t_0})\}}
    {\lambda_{\max}(\mH_{t_0})}.
    \label{eq:relative-negative-curvature}
\end{align}
\end{definition}

This ratio measures the negative curvature relative to the largest positive curvature. For any symmetric matrix $\mH$, $\inf_{\mH_+\succeq 0}\|\mH-\mH_+\|=\max\{0,-\lambda_{\min}(\mH)\}.$
Thus the numerator in \cref{eq:relative-negative-curvature} is the spectral norm distance from the Hessian to the positive semidefinite cone.
Dividing by $\lambda_{\max}$ removes the overall scale of the Hessian. When $\nu$ is small, the Hessian is close to positive semidefinite
relative to its largest positive curvature.
It is still nonconvex whenever $\lambda_{\min}<0$.

For the saddleness defined above, one observation is that for LLM pre-training, after the warm-up phase, $\nu \le 0.1$, which leads to the first edge area of LLM pre-training.
\begin{defbox}
\textbf{Edge of Convexity (EoC) at $\vw_{t_0}$.}
At a checkpoint $\param_{t_0}$, training is at EoC when $\lambda_{\min}(\mH_{t_0})<0$, but saddleness $\nu(\param_{t_0})$ itself is small.
\end{defbox}

EoC therefore describes a particular failure of the quadratic approximation. The Hessian is close to positive semidefinite in relative scale, but its negative curvature is enough to make the quadratic model unstable for every positive LR.

\paragraph{Measuring population negative curvature during LLM pretraining.}
Early studies on small image and synthetic datasets reported negative Hessian eigenvalues much smaller in magnitude than the largest positive eigenvalue~\citep{sagun2016eigenvalues,sagun2017empirical,alain2019negative}.
In this paper, we extend these measurements to the population Hessian throughout LLM pretraining (\Cref{fig-small-medium-large}).
Doing so requires $\lambda_{\min}(\mH_{t_0})$.
The usual approach at scale is to run Lanczos on one mini-batch Hessian~\citep{ghorbani2019investigation,yao2020pyhessian,zhang2024transformers}.
This gives $\lambda_{\min}(\mH_{\batch})$. And
since $\lambda_{\min}(\cdot)$ is concave, Jensen's inequality gives $\Ebb[\lambda_{\min}(\mH_{\batch})]\le\lambda_{\min}(\mH_{t_0})$, so the mini-batch value is biased below the population value. This downward bias can be substantial in practice. To this end, we propose a stochastic power iteration for the estimate of $\lambda_{\min}(\mH_{t_0})$, which further gives the systematic detection of EoC.  The complete EoC detection procedure is given in \Cref{sec:method-eoc}, along with the theoretical guarantee in \Cref{app:proof-of-min-eigen}.

\subsection{Edge of Stability under Stochastic Training}
\label{subsec:stochastic-eos-definition}
Even after removing the effect of EoC (i.e., restricting to the positive-curvature subspace in \Cref{def:stochastic-quadratic-critical-lr-general}), the quadratic approximation can still diverge. This divergence is caused by positive curvature when the LR exceeds the critical threshold $\lr_\star(\param_{t_0})$. More subtly, neural network training often operates near this boundary: a tiny positive increase in the LR can already trigger divergence.

\begin{defbox}
\textbf{Stochastic Edge of Stability (EoS) at $\param_{t_0}$.}
Training is at EoS when the current LR is close to the stochastic critical LR on the positive-curvature subspace, $\eta\approx \lr_\star(\param_{t_0}).$
\end{defbox}
\WCupdate{The mean curvature is positive definite on this subspace, so the LRs with bounded second moments have a single critical endpoint (\Cref{app:critical-lr-agm}). In the full-batch case at a stationary point, this critical LR is $2/\lambda_{\max}(\mH_{t_0})$ whenever $\lambda_{\max}(\mH_{t_0})>0$, as computed after \Cref{def:stochastic-quadratic-critical-lr-general}, which recovers the full-batch EoS condition of \citet{cohen2021gradient}. See \Cref{sec:related-work} for comparison with previous work on stochastic EoS.}

As discussed in \Cref{sec:stochastic-quadratic-stability}, for the stochastic case, even after projection, mean-square stability is governed by the second-moment operator $\Ebb[\mM_t^+\otimes\mM_t^+]
    =
    \Ebb[\mM_t^+]\otimes\Ebb[\mM_t^+]
    +
    \mathrm{Cov}(\mM_t^+).$
The covariance term can shift the critical LR away from $2/\lambda_{\max}$. Moreover, directly solving $\rho\!\left(\Ebb[\mM_t^+\otimes\mM_t^+]\right)<1$ is infeasible at neural network scale, since $\mM_t^+\otimes\mM_t^+$ is a fourth-order object and its entries are affected by noisy mini-batch Hessian samples.

Thus, the definitions above reveal two practical obstacles.
First, EoC and EoS live on different curvature subspaces, but explicitly separating these subspaces is infeasible for large NNs, especially for LLM pretraining.
Second, the stochastic EoS threshold is not only determined by $\lambda_{\max}$ alone but also correlated with the stochastic noise in training, so we cannot detect it by simply checking whether $\eta\lambda_{\max}\approx 2$.

\paragraph{EoS detection.}
\WCupdate{We use a \textit{local simulation} to detect EoS in pretraining. First, to compensate for negative curvature, we add $\tfrac{\extraWD}{2}\|\param-\param_{t_0}\|^2$ to the local quadratic surrogate \eqref{eq:stochastic-quadratic-surrogate}, with $\extraWD=\max\{0,-\widehat\lambda_{\min}\}$, where $\widehat\lambda_{\min}$ is the estimate of the minimum eigenvalue from \Cref{sec:method-eoc}. We then simulate SGD on this loss from the checkpoint for a fixed number of steps over a grid of LRs around the training LR $\lr$, using the same batch sequence for every run. The estimate $\widehat\lr_\star$ is the smallest LR in the grid for which the displacement norm $\|\vw -\vw_{t_0}\|$ reaches a prescribed cutoff. We declare EoS when $\widehat\lr_\star$ is close to $\lr$. The complete procedure is given in \Cref{sec:method-eos} and theoretical analysis is in \Cref{app:statements}.}

\paragraph{Illustration on CIFAR-10.}
\Cref{fig:cifar10-sgd-eos} applies both detectors to ResNet-20 and ResNet-32 trained with SGD.
Each column is a checkpoint, each row an LR factor, and color encodes the final displacement norm of the local simulation.
Across most checkpoints in the middle phase of training, the bounded-to-exploding boundary lies at LR factor $1.0$ (white dashed line).
The training LR is close to $\lr_\star$, and SGD training operates at the EoS for a large part of training.
\Cref{sec:experiments} applies the same protocol to LLM pretraining with AdamW, with definitions and detectors extended to adaptive methods below.
\subsection{Adaptive Gradient Methods at the Edge of Convexity and Edge of Stability}
\label{sec:agm}

So far, the discussion has focused on SGD. Modern LLMs are usually trained with adaptive methods such as AdamW~\citep{loshchilov2017decoupled}.
We now consider a general class of adaptive gradient methods~\citep{duchi2011adaptive,li2025adam} as
\begingroup
\renewcommand{\theHequation}{AGM}
\begin{equation}
\tag{AGM}
\begin{aligned}
    \vm_{t+1}
    &=
    \beta_1\vm_t
    +(1-\beta_1)\vg_{\batch_t}(\param_t),
    \\
    \mP_{t+1}
    &=
    \Psi(
        \mP_t,
        \vg_{\batch_t}(\param_t),
        \param_t
    ),
    \\
    \param_{t+1}
    &=
    \param_t
    -\eta\mP_{t+1}^{-1}\vm_{t+1}
    -\eta\lambda_{\rm WD}\param_t,
\end{aligned}
\label{eq:agm-main}
\end{equation}
\endgroup
where $0\leq\beta_1<1$, $\vm_t$ is the momentum,
$\mP_t\succ0$ is the preconditioner matrix, $\Psi(\cdot)$ is the preconditioner update rule, and $\lambda_{\rm WD}$ is the decoupled weight decay coefficient. AdamW, momentum SGD, and SGD are special cases of this update.

At checkpoint $\param_{t_0}$, we freeze $\mP\coloneqq\mP_{t_0}$ and use the local quadratic surrogate \eqref{eq:stochastic-quadratic-surrogate}.
The effective curvature and its saddleness are $\mA\coloneqq\mP^{-1/2}\mH_{t_0}\mP^{-1/2}+\wdc\mI,\nu\coloneqq \max\{0,-\bar\mu_{\min}\}/\bar\mu_{\max},$
where $\bar\mu_{\min}=\lambda_{\min}(\mA)$ and $\bar\mu_{\max}=\lambda_{\max}(\mA)>0$.
Negative effective curvature gives an expanding mean direction for every positive LR, even with momentum (\Cref{app:agm-EoC}).

For EoS, we project both momentum and displacement onto the strictly positive eigenspace of $\mA$ in preconditioned coordinates at every local update.
As for SGD, the critical LR is the supremum of LRs for which these projected dynamics are mean-square stable.
\Cref{prop:agm-critical-lr-equivalence} shows that this threshold is positive and finite, and gives an equivalent spectral characterization.
\begin{defbox}
\textbf{Edge of Convexity (EoC) for AGMs.}
At checkpoint $\param_{t_0}$, AGM training is at EoC when $\bar\mu_{\min}<0$ and $\nu$ is small.
\end{defbox}
\begin{defbox}
\textbf{Edge of Stability (EoS) for AGMs.}
At checkpoint $\param_{t_0}$, AGM training is at EoS when its current LR is close to the unique critical LR of the projected local dynamics, $\lr\approx\lr_\star$.
\end{defbox} The local model and the definitions of AGM saddleness and critical LR $\lr_\star$ are given in \Cref{app:agm-derivation}. The detailed discussion of detecting EoC and EoS for AGMs can be found in \Cref{sec:methods-detailed}.

\section{Experiments}
\label{sec:experiments}
We apply our detection methods to study EoS and EoC during language model pretraining. Our experiments examine how broadly these phenomena occur and how EoS depends on the training configurations like batch size and learning rate schedule.

\begin{figure*}[t]
\vspace{-0.3in}
	\centering
	\begin{subfigure}[t]{0.33\textwidth}
		\centering
		\includegraphics[width=\linewidth]{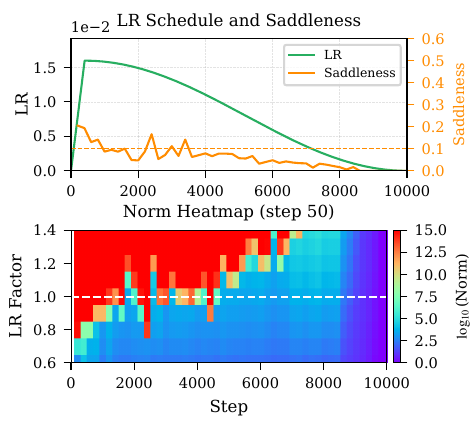}
		\caption{Cosine schedule.}
		\label{fig-sc:cos}
	\end{subfigure}\hfill
	\begin{subfigure}[t]{0.33\textwidth}
		\centering
		\includegraphics[width=\linewidth]{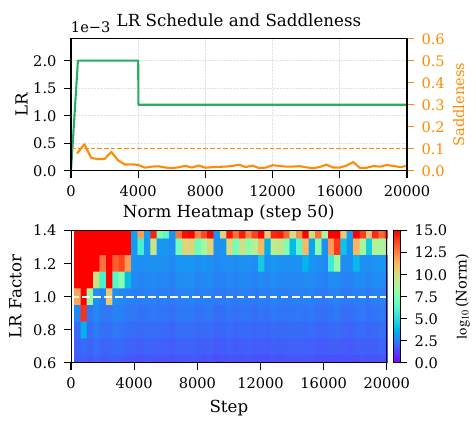}
		\caption{Multistep schedule I.}
		\label{fig-sc:ms}
    \end{subfigure}\hfill
    \begin{subfigure}[t]{0.33\textwidth}
		\centering
		\includegraphics[width=\linewidth]{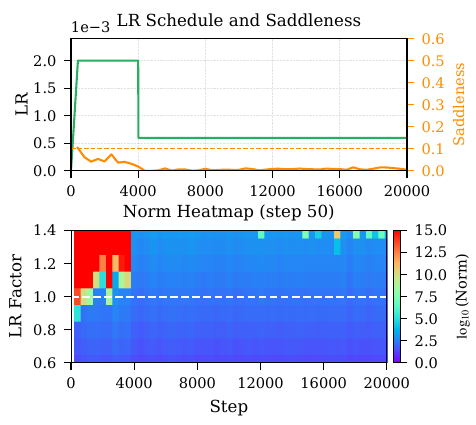}
		\caption{Multistep schedule II.}
		\label{fig-sc:ms-II}
    \end{subfigure}
    \vspace{-0.1in}
    \caption{\textbf{Training may leave the EoS regime after LR decay.} In (a) and (b), the transition from bounded displacement to explosion moves to larger LR factors during cosine decay and after the abrupt multistep LR drop, respectively. After the larger LR drop in (c), the local simulations remain bounded at the training LR for the rest of the observed run, where training enters the stable regime and does not come back to EoS. See \Cref{subsec:exp-lr-eos,appendix:exp:LRdrop,appendix:exp:LRdecay} for further discussion.}
	\label{fig-sc}
\end{figure*}

\begin{figure*}[ht]
	\centering
	\begin{subfigure}[t]{0.45\textwidth}
		\centering
		\includegraphics[width=\linewidth]{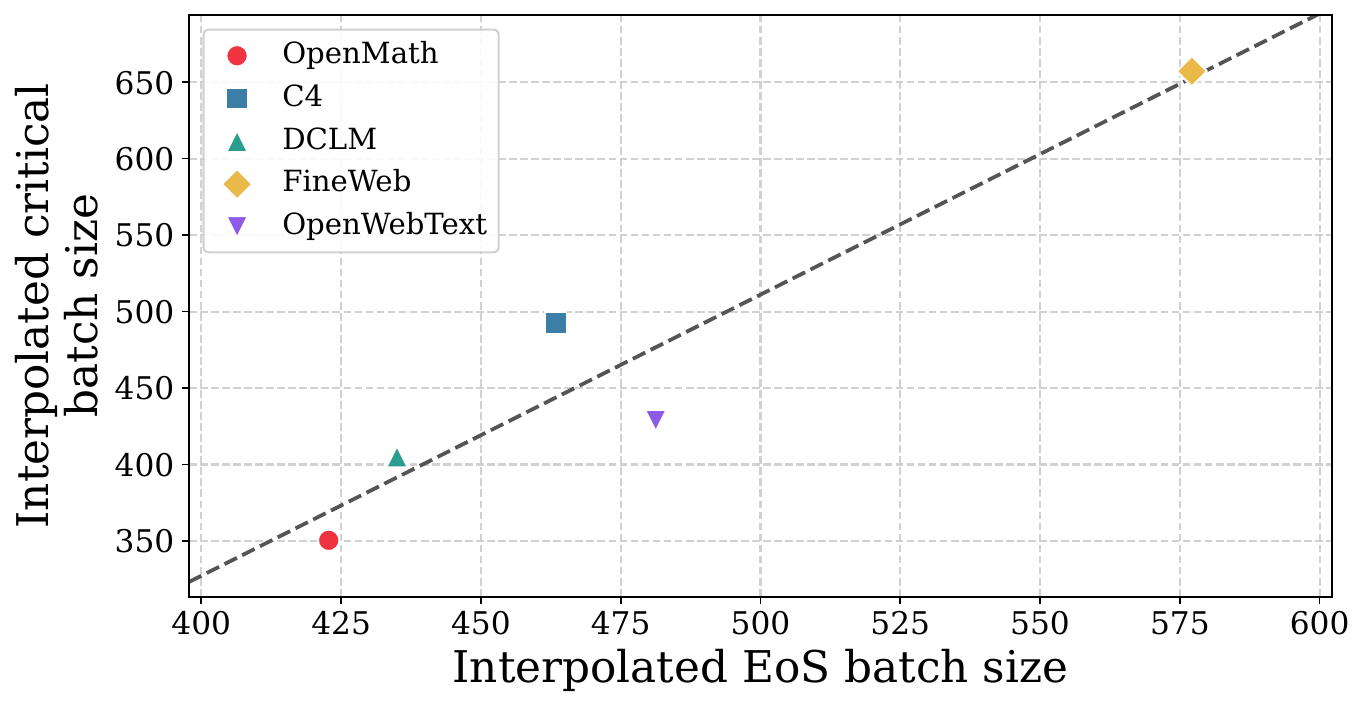}
		\caption{EoS batch size threshold vs.\ critical batch size.}
		\label{fig-ebs:comp}
	\end{subfigure}\hfill
	\begin{subfigure}[t]{0.45\textwidth}
		\centering
		\includegraphics[width=\linewidth]{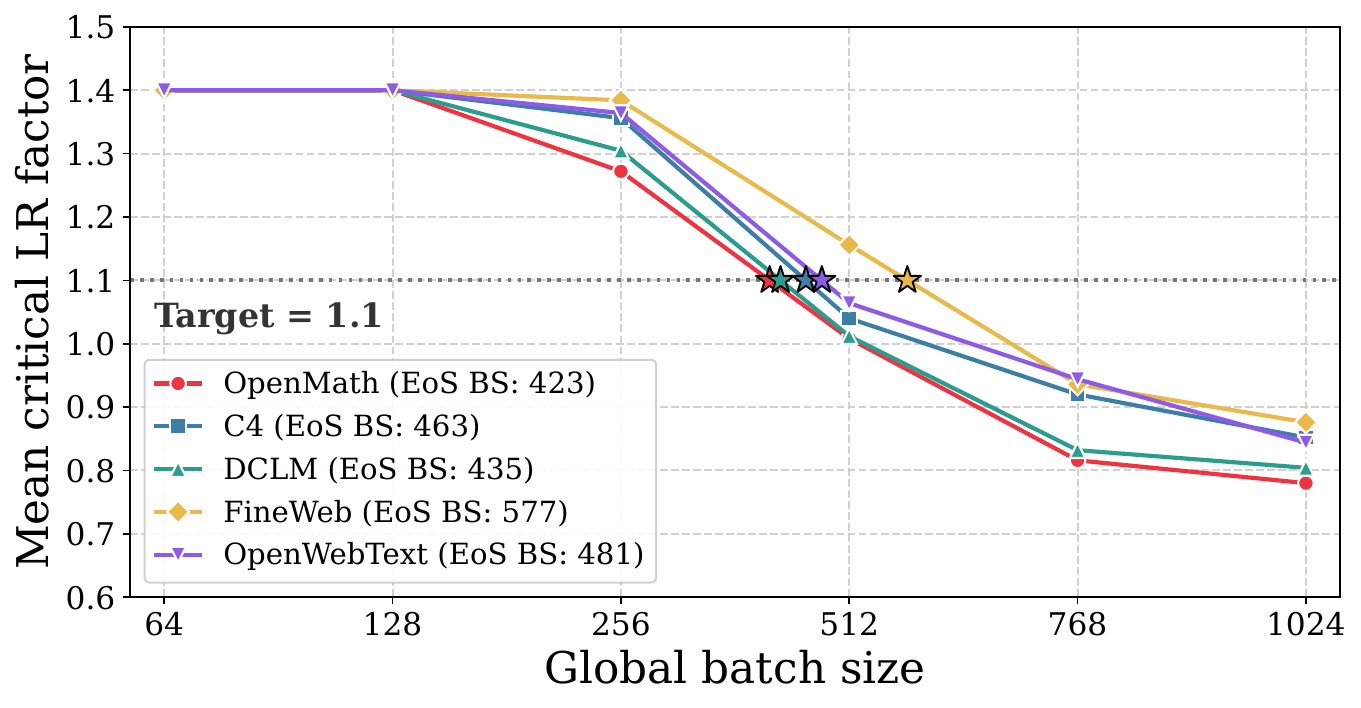}
		\caption{Critical LR factor $q$ vs.\ batch size.}
		\label{fig-ebs:ebs}
	\end{subfigure}\hfill
    \vspace{-0.1in}
    \caption{\textbf{The EoS batch size (EBZ) and the critical batch size (CBZ) show a positive correlation across datasets.} (a) Larger EBZ generally corresponds to larger CBZ, and the dashed line is a linear fit. (b) The critical LR factor $q=\widehat\lr_\star/\lr$, averaged over checkpoints, decreases as batch size increases. Stars mark the interpolated crossings of $q=1.1$, which define EBZ. See \Cref{subsec:exp-bsz-threshold,appendix:exp:ebs} for the experimental protocol and threshold estimation.}
	\label{fig-ebs}
\end{figure*}

\subsection{Experimental Setup}
We pretrain GPT-style models at four scales ranging from 43M to 787M. Unless otherwise specified, runs use FineWeb~\citep{penedo2024fineweb}, AdamW ($\mathrm{WD}=0.1$, $\beta_1=0.9$, $\beta_2=0.95$), sequence length 1024, and a Warm--Stable--Decay (WSD) LR schedule~\citep{hu2024minicpmunveilingpotentialsmall} with 4\% warmup steps and 20\% decay steps. Full details are provided in \Cref{app:add-exp-setup}.

\subsection{EoS and EoC Are Widely Observed}
\label{subsec:exp-scales}

We evaluate EoS and EoC at over $3{,}000$ checkpoints from more than $100$ pretraining runs spanning different model sizes, peak LRs, batch sizes, and shapes of LR schedules. Both phenomena are widely observed across our experiments (\Cref{fig-small-medium-large,fig:lr-sweep-summary,fig-lr,fig-bs,fig-sc}). At many checkpoints, the estimated critical LR is close to the training LR, while the estimated minimum eigenvalue is negative and saddleness is below $0.1$, consistent with EoS and EoC, respectively. Additional results and discussion are given in \Cref{app:additional-results,appendix:exp:ebs,appendix:exp:LRdrop,appendix:exp:LRdecay}.
We next examine how these phenomena depend on the training configuration.
Specifically, we focus on two key training hyperparameters: batch size and the LR schedule. Note that EoC remains widely observed as these hyperparameters vary, so the following analysis focuses primarily on how they affect EoS.

\begin{figure*}[t]
    \centering
    \begin{subfigure}[t]{0.48\textwidth}
        \centering
        \includegraphics[width=\linewidth]{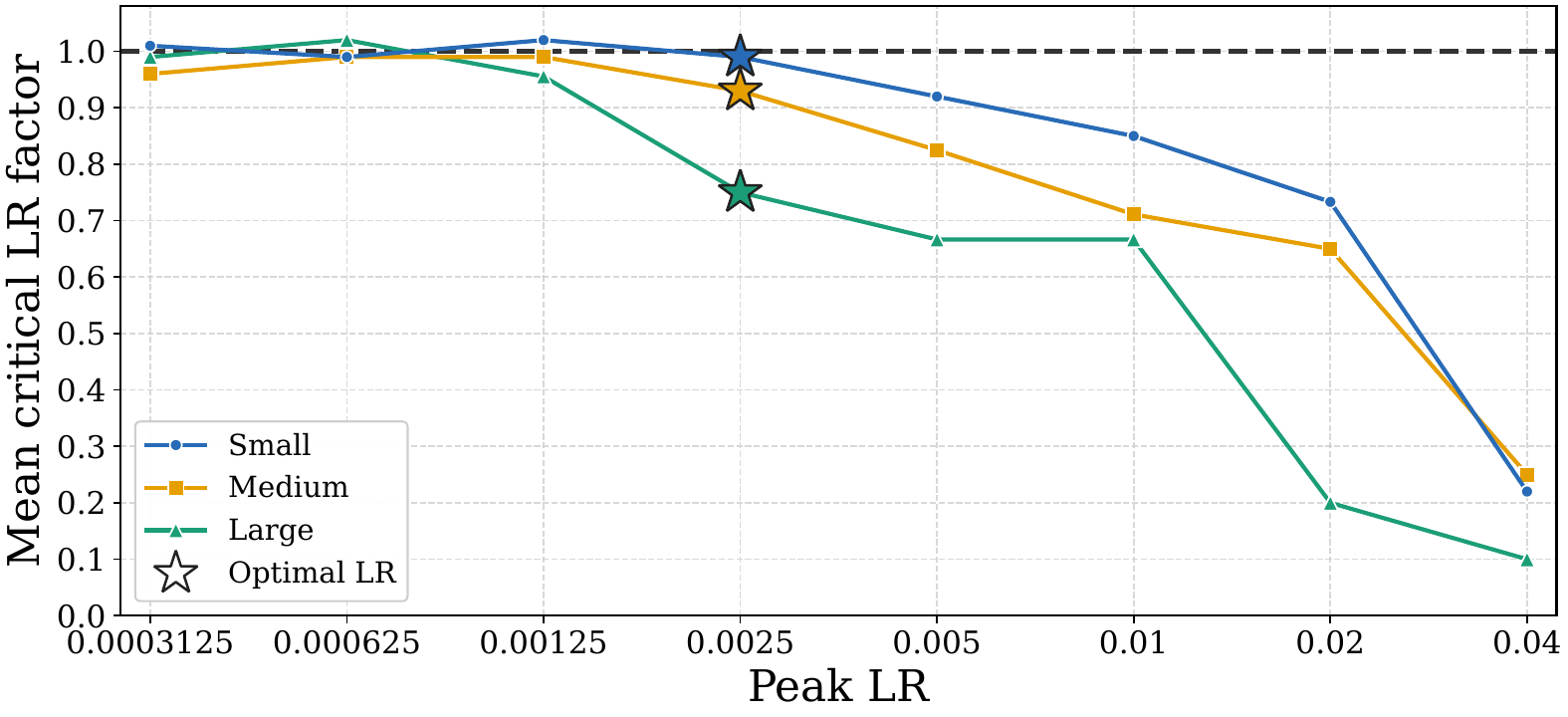}
        \vspace{-0.15in}
        \caption{Mean critical LR factor.}
        \label{fig:lr-sweep-critical-factor}
    \end{subfigure}\hfill
    \begin{subfigure}[t]{0.48\textwidth}
        \centering
        \includegraphics[width=\linewidth]{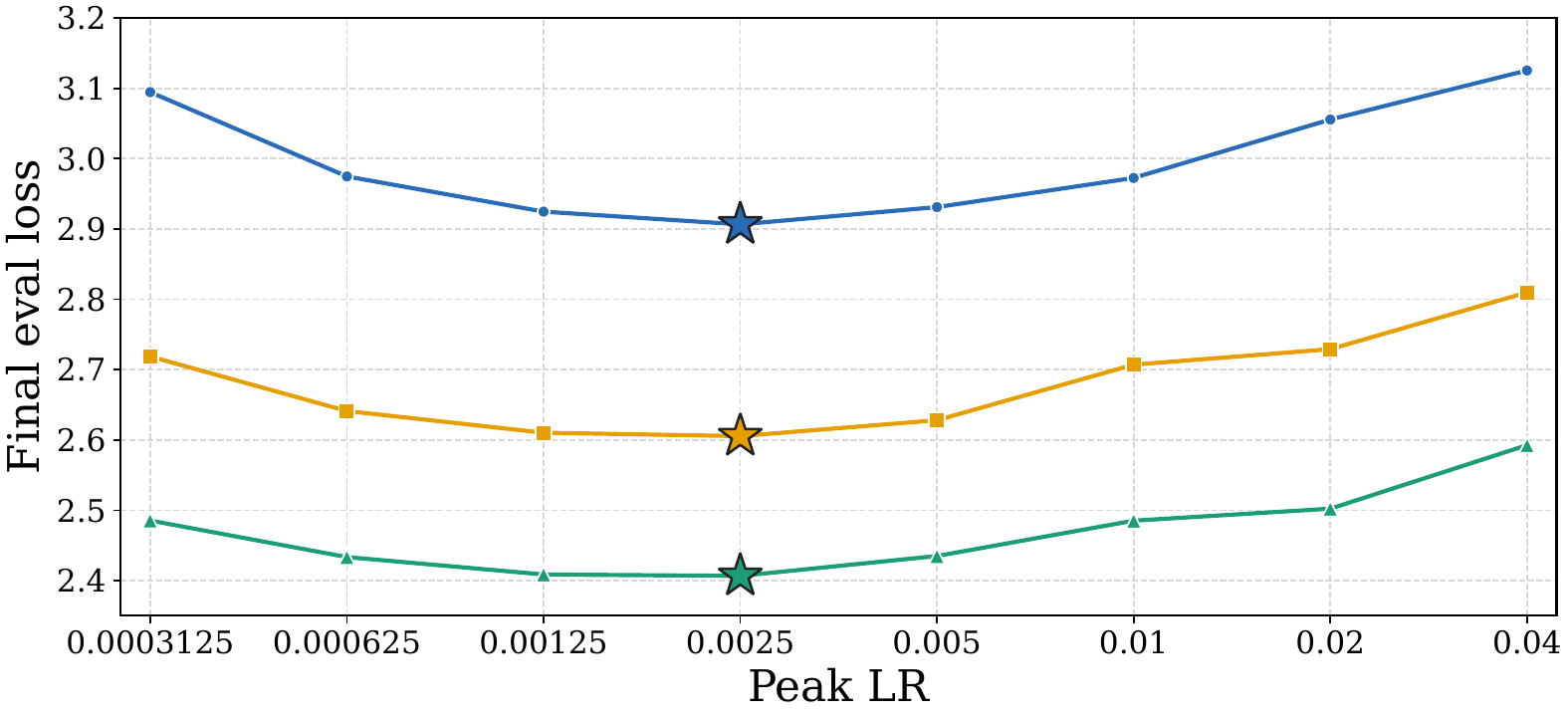}
        \vspace{-0.15in}
        \caption{Final eval loss.}
        \label{fig:lr-sweep-mu}
    \end{subfigure}
\vspace{-0.1in}
\caption{\textbf{Training may go beyond EoS and enter an extremely unstable regime with large LR.} (a) As the peak LR increases, the critical LR factor $q=\widehat\lr_\star/\lr$, averaged over the tested checkpoints, falls from near the reference line at $1.0$ to well below it. Values near $1.0$ indicate EoS, while smaller values mean that the training LR exceeds the estimated critical LR. (b) These large peak LRs also yield higher final evaluation loss.
See \Cref{subsec:exp-scales,subsec:exp-lr-eos} for further discussion.}
    \label{fig:lr-sweep-summary}
    \vspace{-0.2in}
\end{figure*}

\subsection{Batch size dependence of EoS}\label{subsec:exp-bsz-threshold}
For the dependence of batch size, we first show that EoS emerges above a threshold of batch size, which is referred to as the \textbf{EoS Batch Size (EBZ)}. We then demonstrate an interesting correlation between EBZ and the critical batch size (CBZ) across different datasets.
\paragraph{EoS emerges above the EBZ threshold.}
\Cref{fig-bs} shows that EoS is observed over a larger fraction of training at $B=512$ and $B=1024$ than at $B=256$.
For the two larger batch sizes, the transition from bounded displacement to explosion stays near LR factor $1.0$ over much of the stable phase, indicating that the training LR is close to the estimated critical LR.
Below this batch size threshold, the compensated local simulations at checkpoints after the early phase of training remain bounded at every tested LR factor.
These runs use GPT-extra-small on FineWeb with WSD schedules and adjust the LR, weight decay, and $\beta_2$ according to the batch size scaling rule in \Cref{appendix:exp:ebs}.
Further, across all five datasets, \Cref{fig-ebs:ebs} shows that the critical LR factor, averaged over the tested checkpoints of each run, decreases as batch size increases. We estimate EBZ as the batch size at which the fitted mean critical LR factor first decreases to $1.1$ as batch size increases. This corresponds to the crossing at $1.1$ marked in the figure.
\paragraph{EBZ and CBZ show a positive correlation across datasets.}
In \Cref{fig-ebs:comp}, each point represents one dataset, and the points follow an overall upward linear trend. FineWeb has the largest EBZ and CBZ, while OpenWebMath has the smallest values of both. Here, CBZ is estimated from the same training runs as EBZ and denotes the largest batch size whose final evaluation loss is within $0.3\%$ of the best loss in that dataset's sweep. Thus, datasets with a larger threshold for EoS also tend to maintain near optimal final evaluation loss at larger batch sizes. Full sweep and estimation details are given in \Cref{appendix:exp:ebs}.

\subsection{LR Dependence of EoS}
\label{subsec:exp-lr-eos}
We next show that LR decay can cause training to leave EoS, and that training may not return to the EoS regime after a sufficiently large LR decay. We also show that training at excessively large peak LRs can enter an extremely unstable regime beyond EoS.
\paragraph{Training can enter the stable regime after LR decay.}
Before LR decay, training often operates at EoS, with the training LR close to the estimated critical LR (\Cref{fig-sc,fig:ms-ablation,fig:decay-ablation}).
After the abrupt LR drop in \Cref{fig-sc:ms}, the transition from bounded displacement to explosion moves to larger LR factors. The estimated critical LR is now higher than the reduced training LR, and the compensated local simulation stays bounded at the training LR over the tested horizon.
As training continues, the estimated critical LR can fall back toward the current training LR and EoS can return, as in \Cref{fig:ms-lr0008}. With a sufficiently large LR reduction, however, the estimated critical LR stays well above the training LR for the rest of the observed run (\Cref{fig-sc:ms-II,fig:ms-tau03,fig:decay-y-02}). The compensated local simulations then remain bounded at the training LR, and EoS does not return.
The effects of the drop magnitude and decay duration are discussed in \Cref{appendix:exp:LRdrop,appendix:exp:LRdecay}.

\paragraph{Training can enter an extremely unstable regime with large LR.}
Training with peak LRs substantially above the empirically optimal range can yield higher loss and enter an \textit{extremely unstable regime} beyond EoS (\Cref{fig:lr-sweep-summary}). In this regime, the compensated local quadratic dynamics still allow a critical LR, but it lies far below the training LR. For example, at peak LR $0.04$, the estimated critical LR is only about one tenth of the training LR on average, and the compensated local simulations explode at the training LR. See \Cref{app:additional-results} for further discussion and \Cref{fig:tpp20-small,fig:tpp20-medium,fig:tpp20-large} for detailed diagnostics across model sizes and peak LRs.

\section{Conclusion}
This paper extends the definition of EoS to the LLM training under a unified framework, together with another clearly clarified phenomenon EoC, accounting for the two failure modes of the quadratic approximation. %
Several directions remain open for future study. (i) Our experiments mainly focus on SGD and AdamW. It is interesting to see how EoS would behave when training with other optimizers like Muon. (ii) Fully understanding EoS and EoC also calls for solid theoretical analyses of the realistic neural network training dynamics.

\bibliography{ref}
\bibliographystyle{iclr2027_conference}

\newpage

\tableofcontents

\newpage

\appendix
\crefalias{section}{appendix}
\crefalias{subsection}{subappendix}
\crefalias{subsubsection}{subsubappendix}

\section{Detecting EoC and EoS for Adaptive Gradient Methods}\label{sec:methods-detailed}
Here we provide detailed detection methods of EoC and EoS for AGMs. We first define the local AGM model, its saddleness, and its critical LR in \Cref{app:agm-derivation}.
\Cref{sec:method-eoc,sec:method-eos} then describe the EoC and EoS detection procedures and their guarantees.
Both procedures use Hessian-vector products without forming the Hessian.
\Cref{app:hvp-cost} gives their computational cost.
The complete theoretical statements and proofs are in \Cref{app:statements,app:proofs-moments,app:proof-power-iteration,fh:sec:appendix}.

\subsection{Adaptive Gradient Methods}\label{app:agm-derivation}
We consider adaptive gradient methods (AGMs) with a momentum vector $\vm_t$ and a positive definite preconditioner $\mP_t$.
For the mini-batch loss $\loss_{\batch_t}$, the updates are
\begin{align*}
 \vm_{t+1}
 &=\beta_1\vm_t+(1-\beta_1)\grad\loss_{\batch_t}(\param_t),\\
 \mP_{t+1}
 &=\Psi\bigl(\mP_t,\grad\loss_{\batch_t}(\param_t),\param_t\bigr),\\
 \param_{t+1}
 &=\param_t-\lr\mP_{t+1}^{-1}\vm_{t+1}-\lr\wdc\param_t.
\end{align*}
Here $\lr>0$ is the LR, $0\le\beta_1<1$ is the momentum coefficient, and $\wdc\ge0$ is the decoupled weight decay coefficient.
The preconditioner update $\Psi$ preserves positive definiteness.

\paragraph{The local quadratic model.}
Fix a checkpoint $\param_{t_0}$.
For each mini-batch $\batch_t$, we approximate its loss $\loss_{\batch_t}$ by the second-order Taylor expansion around this checkpoint.
Write $\vvareps=\param-\param_{t_0}$ for the displacement, and define the gradient and Hessian at the checkpoint by
\[
 \vg_{\batch_t}\coloneqq\grad\loss_{\batch_t}(\param_{t_0}),
 \qquad
 \mH_{\batch_t}\coloneqq\hess\loss_{\batch_t}(\param_{t_0}).
\]
The resulting \emph{quadratic surrogate} is
\begin{equation}\label{eq:agm-quadratic-surrogate}
 Q_{\batch_t}(\param_{t_0}+\vvareps)
 \coloneqq\loss_{\batch_t}(\param_{t_0})
 +\vg_{\batch_t}^{\top}\vvareps
 +\tfrac12\vvareps^{\top}\mH_{\batch_t}\vvareps.
\end{equation}
This surrogate matches the loss, gradient, and Hessian at the checkpoint and omits terms of higher order in the displacement.
At each local step, we sample a fresh batch and use its surrogate in place of the neural network loss.
The coefficients $\vg_{\batch_t}$ and $\mH_{\batch_t}$ are always evaluated at $\param_{t_0}$.

\paragraph{Local AGM dynamics.}
We freeze the preconditioner at $\mP\coloneqq\mP_{t_0}\succ0$ and let the momentum evolve under the surrogate gradients.
Let $\vvareps_t$ be the displacement after $t$ local steps.
The surrogate gradient at this displacement is $\vg_{\batch_t}+\mH_{\batch_t}\vvareps_t$.
Keeping $\lr$, $\beta_1$ and $\wdc$ fixed, the local updates become
\begin{equation}\label{eq:agm-checkpoint-dynamics}
\begin{aligned}
 \vm_{t+1}
 &=\beta_1\vm_t+(1-\beta_1)(\vg_{\batch_t}+\mH_{\batch_t}\vvareps_t),\\
 \vvareps_{t+1}
 &=\vvareps_t-\lr\mP^{-1}\vm_{t+1}
   -\lr\wdc(\param_{t_0}+\vvareps_t).
\end{aligned}
\end{equation}
The simulation starts at $\vvareps_0=\vzero$ with the checkpoint momentum.
Weight decay acts on the full parameter $\param_{t_0}+\vvareps_t$.
The choice $\beta_1=0$, $\mP=\mI$, and $\wdc=0$ recovers SGD.

In preconditioned coordinates, we stack the momentum and displacement as
\[
 \vz_t\coloneqq(\lr\mP^{-1/2}\vm_t,\mP^{1/2}\vvareps_t)^\top.
\]
Then \eqref{eq:agm-checkpoint-dynamics} has the affine form
\begin{equation}\label{eq:agm-local-linear-system}
 \vz_{t+1}=\mM_t(\lr)\vz_t+\vq_t.
\end{equation}
Write $\mD_{\batch_t}=\mP^{-1/2}\mH_{\batch_t}\mP^{-1/2}$ for the preconditioned mini-batch Hessian.
The transition matrix and forcing are
\[
\begin{aligned}
 \mM_t(\lr)
    &=
    \begin{bmatrix}
   \beta_1\mI & (1-\beta_1)\lr\mD_{\batch_t}\\
   -\beta_1\mI & (1-\lr\wdc)\mI-(1-\beta_1)\lr\mD_{\batch_t}
 \end{bmatrix},\\[3pt]
 \vq_t
    &=
    \begin{bmatrix}
   (1-\beta_1)\lr\mP^{-1/2}\vg_{\batch_t}\\
   -(1-\beta_1)\lr\mP^{-1/2}\vg_{\batch_t}
     -\lr\wdc\mP^{1/2}\param_{t_0}
 \end{bmatrix}.
\end{aligned}
\]
Two trajectories driven by the same batches have the same forcing $\vq_t$.
Their difference therefore evolves by multiplication with $\mM_t(\lr)$.
We use this homogeneous evolution to define stability of local perturbations.

\paragraph{Effective curvature and saddleness.}
Let $\mH_{t_0}\coloneqq\hess\loss(\param_{t_0})$ be the population Hessian.
The preconditioned Hessians and effective curvatures are
\begin{equation}\label{eq:agm-effective-curvature}
\begin{aligned}
 \mD_{t_0}&\coloneqq\mP^{-1/2}\mH_{t_0}\mP^{-1/2},
 &\mA&\coloneqq\mD_{t_0}+\wdc\mI,\\
 \mD_{\batch_t}&\coloneqq\mP^{-1/2}\mH_{\batch_t}\mP^{-1/2},
 &\mA_{\batch_t}&\coloneqq\mD_{\batch_t}+\wdc\mI.
\end{aligned}
\end{equation}
If $\lambda_{\min}(\mA)<0$, some local perturbations grow in mean at every positive LR, even with momentum (\Cref{app:agm-EoC}).
We measure the size of this negative effective curvature relative to the largest positive effective curvature.

\begin{definition}[Saddleness for AGMs]\label{def:agm-saddleness}
Suppose $\lambda_{\max}(\mA)>0$.
The AGM saddleness at the checkpoint is
\begin{equation}\label{eq:agm-saddleness}
 \nu
 \coloneqq\frac{\max\{0,-\lambda_{\min}(\mA)\}}{\lambda_{\max}(\mA)}.
\end{equation}
\end{definition}
The preconditioner and weight decay are included through $\mA$.
AGM training is at EoC when $\lambda_{\min}(\mA)<0$ and this ratio is small.
For SGD without weight decay, $\mA=\mH_{t_0}$, so this definition reduces to \Cref{def:relative-negative-curvature}.

\paragraph{Critical LR.}
To isolate instability caused by the LR, we remove the directions of nonpositive effective curvature.
Let $\Pi^+$ project onto the strictly positive eigenspace of $\mA$, assumed to be nonempty.
We project both momentum and displacement in preconditioned coordinates at every update.
\HWupdate{The effective curvature after the projection is therefore positive definite on this subspace.}
Write $\mM_t^+(\lr)$ and $\vq_t^+$ for the transition and forcing in coordinates on this subspace.
The projected local system is
\begin{equation}\label{eq:agm-projected-local-system}
 \vz_{t+1}^+=\mM_t^+(\lr)\vz_t^++\vq_t^+.
\end{equation}
Following \Cref{def:mean-square-stability}, we call this system mean-square stable if
\[
 \sup_{t\ge0}\Ebb\|\vz_t^+\|^2<\infty
 \qquad\text{for every deterministic initial state }\vz_0^+.
\]
The initial state includes both momentum and displacement.
This definition allows persistent fluctuations and does not require the second moment to tend to zero.

\begin{definition}[Critical LR for AGMs]\label{def:agm-critical-lr}
Keep the checkpoint, $\mP$, $\beta_1$, $\wdc$, and the batch distribution fixed as the LR varies.
The AGM critical LR is
\begin{equation}\label{eq:agm-critical-lr-definition}
 \lr_\star\coloneqq
 \sup\bigl\{\lr>0:\text{the projected system \eqref{eq:agm-projected-local-system} is mean-square stable}\bigr\}.
\end{equation}
\end{definition}

The next proposition gives an equivalent way to determine this critical LR from the homogeneous transition.
Its second-moment matrix is
\begin{equation}\label{eq:agm-local-second-moment-operator}
 \mK^+(\lr)\coloneqq\Ebb[\mM_t^+(\lr)\otimes\mM_t^+(\lr)].
\end{equation}
We assume that the pairs $(\mH_{\batch_t},\vg_{\batch_t})$ are i.i.d.\ and independent of the initial state.
Both entries have finite second moments, each Hessian is symmetric, and their means are the population derivatives at the checkpoint.
These are the mini-batch assumptions of \Cref{ass:minibatch}.

\begin{proposition}[Spectral characterization of the AGM critical LR]\label{prop:agm-critical-lr-equivalence}
Under \Cref{ass:minibatch}, fix $\mP\succ0$, $0\le\beta_1<1$ and $\wdc\ge0$.
Suppose the strictly positive eigenspace of $\mA$ is nonempty.
The critical LR in \Cref{def:agm-critical-lr} satisfies
\begin{equation}\label{eq:agm-critical-lr-spectral}
 0<\lr_\star=\sup\bigl\{\lr>0:\rho(\mK^+(\lr))<1\bigr\}<\infty.
\end{equation}
Moreover,
\begin{equation}\label{eq:agm-critical-lr}
 \rho\bigl(\mK^+(\lr)\bigr)
 \begin{cases}
  <1,&0<\lr<\lr_\star,\\
  =1,&\lr=\lr_\star,\\
  >1,&\lr>\lr_\star.
 \end{cases}
\end{equation}
\end{proposition}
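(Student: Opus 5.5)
The plan is to reduce everything to the homogeneous second-moment recursion and then study $\eta\mapsto\rho(\mK^+(\eta))$. Write $\mM_t^+(\eta)=\mM_0^++\eta\mN_t$. Here $\mM_0^+=\begin{bmatrix}\beta_1\mI&0\\-\beta_1\mI&\mI\end{bmatrix}$ and $\mN_t$ collects the $\mD_{\batch_t}$ and $\wdc$ terms (all restricted to $\operatorname{range}\Pi^+$). Then $\mK^+(\eta)$ is a matrix polynomial of degree two in $\eta$.

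\textbf{Step 1: equivalence of the two sets.} As in \Cref{prop:stochastic-quadratic-moment-dynamics}, $\Sigma_t=\Ebb[\vz_t^+\vz_t^{+\top}]$ obeys $\Sigma_{t+1}=\mathcal K_\eta(\Sigma_t)+(\text{terms linear in }\Ebb\vz_t^+)+\text{const}$, with $\mathcal K_\eta(X)=\Ebb[\mM_t^+X\mM_t^{+\top}]$. The mean obeys $\Ebb\vz_{t+1}^+=\Ebb[\mM_t^+]\Ebb\vz_t^+ +\Ebb\vq_t^+$.

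\emph{Sufficiency.} Use $\rho(\Ebb\mM^+)^2\le\rho(\mK^+)$. If $\rho(\mK^+)<1$, the cascade is a stable affine system, so second moments stay bounded; this is the argument of \Cref{prop:stochastic-quadratic-moment-dynamics}.

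\emph{Necessity.} $\mathcal K_\eta$ preserves the PSD cone. By Krein--Rutman, $\rho(\mK^+)$ is attained by a PSD eigenmatrix $X_\star\neq0$. Decompose $X_\star=\sum_i\vz_i\vz_i^\top$ and use deterministic initial states $\vz_i$. If $\rho>1$, at least one of these runs has $\Ebb\|\vz_t^+\|^2\ge c\rho^t$; the forcing cannot cancel this, since it enters through an affine term and we may compare two runs driven by the same batches. Hence every $\eta$ with $\rho(\mK^+(\eta))>1$ is unstable, and the two suprema in \eqref{eq:agm-critical-lr-spectral} coincide once the set where $\rho<1$ is shown to be an interval (Step 4).

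\textbf{Step 2: $\eta_\star>0$.} At $\eta=0$, $\rho(\mK^+(0))=1$ because of the unit eigenvalue of $\mM_0^+$, so I would do a first-order perturbation argument, or more robustly build a Lyapunov matrix. For small $\eta$, the mean dynamics are heavy-ball on the positive definite curvature $\mA^+$, with contraction $1-c\eta\lambda_{\min}(\mA^+)$. The noise contributes only at order $\eta^2\,\Ebb\|\mN_t\|^2$, using finite second moments. So I would construct $X\succ0$, built from the Lyapunov matrix of the deterministic heavy-ball system, such that $\mathcal K_\eta^*(X)\preceq(1-c\eta)X+C\eta^2X$. This gives $\rho<1$ for small $\eta>0$.

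\textbf{Step 3: $\eta_\star<\infty$.} Jensen's inequality on the cone gives $\mathcal K_\eta(X)\succeq\Ebb[\mM^+]X\Ebb[\mM^+]^\top$, hence $\rho(\mK^+(\eta))\ge\rho(\Ebb\mM^+(\eta))^2$. For a positive eigenvalue $a$ of $\mA^+$, the $2\times2$ block of $\Ebb\mM^+$ has trace $\beta_1+1-\eta\wdc-(1-\beta_1)\eta(a-\wdc)$, which tends to $-\infty$ linearly in $\eta$, while its determinant stays bounded. Hence one eigenvalue has modulus greater than $1$ for all large $\eta$, so $\rho(\mK^+)>1$ there.

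\textbf{Step 4: single crossing (main obstacle).} I need the trichotomy \eqref{eq:agm-critical-lr}, in particular that $\{\eta:\rho(\mK^+(\eta))<1\}$ is an interval and that $\rho=1$ only at its endpoint. My first attempt is the Lyapunov characterization: $\rho<1$ iff there is $X\succ0$ with $\mathcal K_\eta^*(X)\prec X$. Write $\mathcal K^*_\eta(X)-X=\mathcal L_0(X)+\eta\mathcal L_1(X)+\eta^2\mathcal L_2(X)$, where $\mathcal L_2(X)=\Ebb[\mN^\top X\mN]\succeq0$.

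In the SGD case $\mathcal L_0=0$. Then the feasible set for a fixed $X$ is $(0,\eta_X)$, and the union over $X$ is an interval. For momentum, $\mathcal L_0\neq0$. I would try to reparametrize $X=X(\eta)$ along a scaling path that makes the strictly feasible set star-shaped toward $0$: if $X$ certifies $\eta_1$, then a suitably rescaled $X'$ certifies every $\eta<\eta_1$, using positivity of $\mathcal L_2$ and $\mA^+\succ0$.

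If that fails, the fallback is analytic. $\rho(\mK^+(\eta))$ is the Perron root of a cone-preserving polynomial family, so $\eta\mapsto\log\rho$ is continuous. I would then show that at any $\eta$ with $\rho=1$ the derivative, computed with the left and right Perron eigenvectors, is strictly positive. This again reduces to positivity of $\mA^+$ in the first-order term. It rules out multiple crossings and touching, and it gives $\rho=1$ exactly at $\eta_\star$.
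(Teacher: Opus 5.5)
There is a genuine gap, and it sits in your Step 4, which is the actual content of the proposition. Steps 1--3 are mostly fine. Step 1 is the paper's own argument: the strict spectral condition gives bounded affine moments through the moment recursion. Above the threshold, a PSD Perron eigenmatrix gives a diverging homogeneous start, and coupling two affine runs on the same batches removes the forcing. Steps 2--3 are a workable direct substitute for the paper's endpoint limits of $G_{\mC}$.

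One slip in Step 3: the determinant of the $2\times2$ mean block is $\beta_1(1-\eta\lambda_{\rm WD})$, which is unbounded when $\beta_1\lambda_{\rm WD}>0$. The conclusion still holds, because the characteristic polynomial at $-1$ equals $2(1+\beta_1)-2\beta_1\eta\lambda_{\rm WD}-(1-\beta_1)\eta a\to-\infty$, which forces a real root below $-1$.

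\textbf{Why Step 4 is not established.} As you note yourself, without the trichotomy a touching point with $\rho=1$ beyond the first crossing could be affine-stable, so the two suprema could differ. Neither of your routes proves the trichotomy.

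In (a), with momentum $\mathcal L_0(X)=\mM_0^{+\top}X\mM_0^+-X$ is indefinite. Already for $X=\mI$ its displacement block is $0$ while its cross block is $-\beta_1\mI$. So a certificate at $\eta_1$ does not factor down to smaller $\eta$ the way it does for SGD, and the ``suitably rescaled $X'$'' is never specified. Even for SGD, the union-of-intervals argument only shows that $\{\rho<1\}$ is an interval. It does not show that $\rho>1$ strictly beyond the endpoint.

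In (b), the Perron root of a PSD-preserving family is in general not simple. It is already degenerate in the noiseless case when the top eigenvalue of $\mA^+$ is repeated. So $\eta\mapsto\rho(\mK^+(\eta))$ need not be differentiable, and the pairing of left and right Perron eigenmatrices can vanish. At a crossing away from $\eta=0$, the first-order term $\langle W,\mathbb{E}[\mN_tX\mM_t^{+\top}+\mM_t^+X\mN_t^\top]\rangle$ involves the full noisy transition, so $\mA^+\succ0$ does not fix its sign. For SGD, the two eigen-equations reduce this pairing to $\eta\,\mathbb{E}\operatorname{tr}(A_tWA_tX)$, with $A_t$ the projected batch curvature. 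That is only nonnegative, and nothing comparable is available with momentum.

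\textbf{The missing idea: a stability test in which the LR enters monotonically.} The paper eliminates the momentum to get a position recurrence with effective momentum $b=\beta_1(1-\eta\lambda_{\rm WD})$ and step $\alpha=\eta(1-\beta_1)$. It then proves (\Cref{fh:thm:exact}) that strict mean-square stability holds iff $(1-b)/\alpha>r_{\mC}(b)$. Here $r_{\mC}(b)$ is the maximum, over unit-Frobenius PSD matrices, of a quotient $q_{\mC,b}$ whose numerator carries the coefficients $c_b=(1-b)/(1+b)$ and $-k_b=-b/(1-b^2)$; it does not depend on $\alpha$. The proof has three ingredients:
\begin{itemize}
\item the factorization $\mathcal I-\mathcal T=(\mathcal I-\mathcal T_0)(\mathcal I-\mathcal K\mathcal N)$, where $\mathcal K=(\mathcal I-\mathcal T_0)^{-1}$ is the cumulative mean response (not your $\mathcal K_\eta$);
\item a reduction of $\mathcal K\mathcal N$ to $\mathcal D^{-1}\mathcal V$, with $\mathcal V(X)=\mathbb{E}[\Delta_tX\Delta_t]$ and $\mathcal D$ a self-adjoint positive definite operator on position matrices;
\item Perron--Frobenius combined with the symmetrization $\mathcal D^{-1/2}\mathcal V\mathcal D^{-1/2}$.
\end{itemize}
Along the AGM curve, $(1-b)/\alpha=1/\eta+\beta_1\lambda_{\rm WD}/(1-\beta_1)$ is strictly decreasing. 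Meanwhile $q_{\mC,b}$ is pointwise nonincreasing in $b$, since $c_b$ decreases and $k_b$ increases, and $b$ decreases in $\eta$. So the difference has exactly one root (\Cref{fh:thm:decay-exact}).

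Strict $\rho>1$ above the root comes from dividing the state by $s^t$. If $\rho\le1$, the rescaled system is a strictly stable recurrence with mean $\mC+u\mI$, momentum $b/s^2$ and step $\alpha/s$. Letting $s\downarrow1$ gives $r_{\mC}(b)\le(1-b)/\alpha$, a contradiction. Treating $\mK^+(\eta)$ as a quadratic matrix polynomial in $\eta$ hides exactly this monotone structure.
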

Below $\lr_\star$, the projected affine system is mean-square stable.
Above $\lr_\star$, some deterministic initialization has an unbounded second moment.
At $\lr_\star$, boundedness can depend on the forcing.
Thus the proposition equates the two critical LRs without identifying boundedness with strict decay at the boundary.
The proof is in \Cref{app:proof-agm-critical-lr-equivalence}.
AGM training is at EoS when its training LR is close to $\lr_\star$.

\WCupdate{The stability notion and the projection are those of the main text.}
\WCupdate{For SGD without weight decay, this critical LR is the one of \Cref{def:stochastic-quadratic-critical-lr-general} applied to the local quadratic.}
The next two subsections describe how we estimate the effective curvature and the critical LR of this local model.

\subsection{EoC Detection}\label{sec:method-eoc}
EoC at $\param_{t_0}$ means that $\mA$ has a negative eigenvalue while the saddleness $\nu$ is small.
Detecting it requires $\lambda_{\max}(\mA)$ and $\lambda_{\min}(\mA)$.
We estimate the maximum eigenvalue by Lanczos on a large batch.
In our measurements, the top eigenpair varies little across mini-batches, whereas the bottom eigenvectors are nearly orthogonal (\Cref{fig:batch-lanczos-eigenpair-stability}).
Lanczos on one mini-batch estimates $\lambda_{\min}(\mA_{\batch_t})$, whose expectation is at most $\lambda_{\min}(\mA)$ by Jensen's inequality.
We therefore estimate $\lambda_{\min}(\mA)$ by \HWupdate{a stochastic power iteration method} that draws a fresh mini-batch at every step.

\begin{figure}[htbp]
    \centering
    \includegraphics[width=0.98\linewidth]{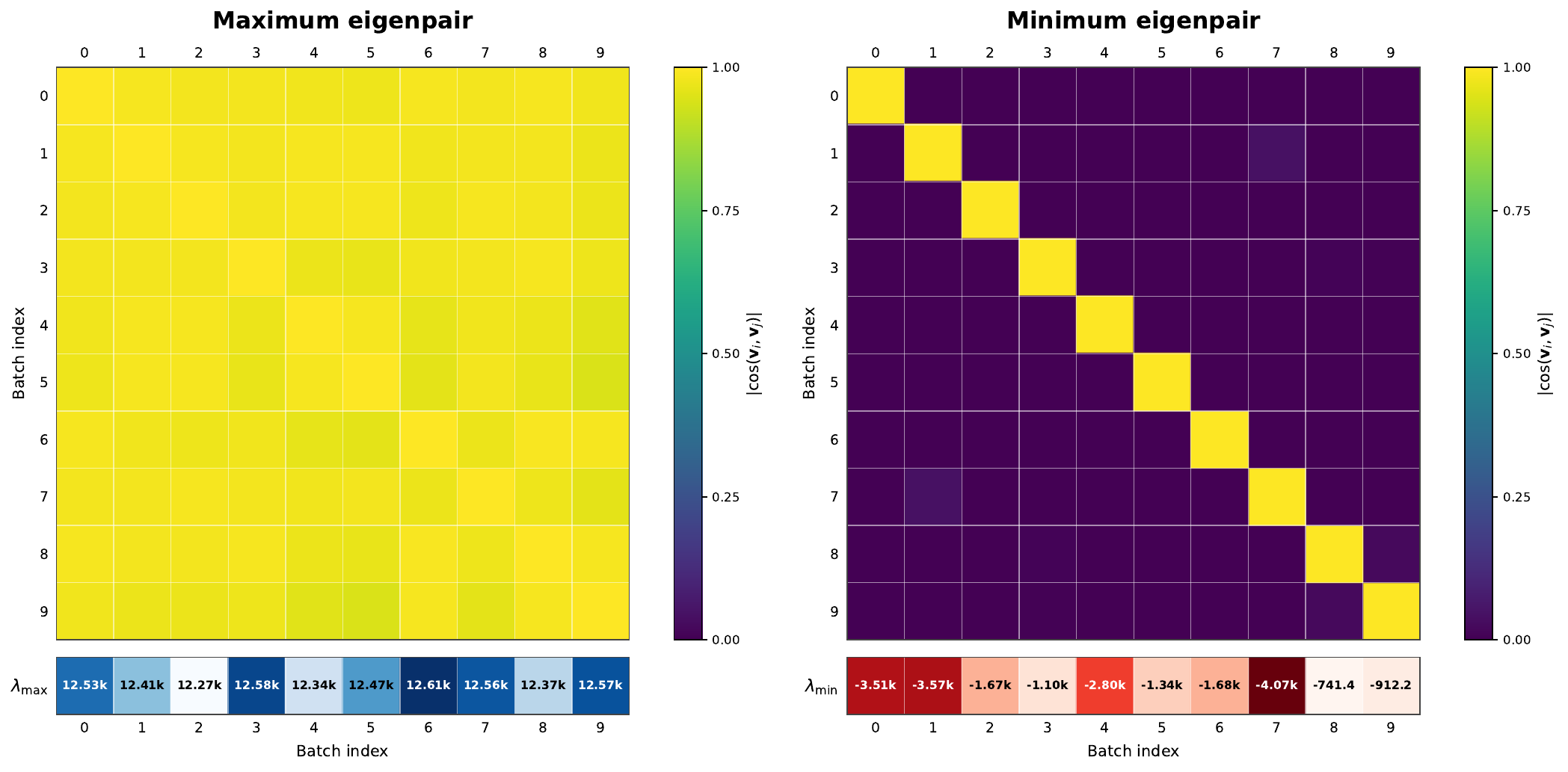}
    \caption{\textbf{Mini-batch Lanczos estimates are stable for the top eigenpair but unstable for the bottom eigenpair.}
    We run Lanczos on ten independently sampled mini-batches at the same checkpoint for the Adam-preconditioned Hessian.
    The maximum eigenvectors have high pairwise absolute cosine and similar eigenvalues across batches.
    In contrast, the minimum eigenvectors are nearly orthogonal across batches and the corresponding $\lambda_{\min}$ values vary substantially, showing that per-batch Lanczos estimates are dominated
    by batch-specific negative-curvature directions.}
    \label{fig:batch-lanczos-eigenpair-stability}
\end{figure}

\paragraph{Stochastic power iteration with bisection.}
\Cref{alg:min_eig_stochastic_bisect} runs the shifted iteration on the mini-batch effective curvatures,
\[
  \vx_{t+1}=\bigl(\mI-\etapi(\mA_{\batch_t}+\lambda\mI)\bigr)\vx_t,\qquad t=0,\dots,T_{\mathrm p}-1,
\]
with a fresh mini-batch at every step, each $\mA_{\batch_t}\vx_t$ formed from one Hessian-vector product.
Since the batches are independent, the mean iterate follows the population matrix, $\Ebb[\vx_{t+1}]=(\mI-\etapi(\mA+\lambda\mI))\Ebb[\vx_t]$.
If $\lambda_{\min}(\mA)+\lambda<0$, this matrix has an eigenvalue larger than one. When the initial vector has nonzero overlap with a bottom eigenvector, the component of $\Ebb[\vx_t]$ along that eigenvector grows geometrically.
If $\lambda_{\min}(\mA)+\lambda\ge0$ and $\etapi$ is small enough, $\Ebb[\vx_t]$ stays bounded.
A query at shift $\lambda$ is therefore declared to explode when $\|\vx_t\|$ exceeds the cutoff $B$ at some step $t\le T_{\mathrm p}$, a relative cutoff since $\|\vx_0\|=1$.
\WCupdate{Bisection starts from a given search interval $[0,\Lambda]$ and moves its lower endpoint to an exploding shift and its upper endpoint to a non-exploding shift.}
\WCupdate{After $N$ rounds the output $\widehat\lambda$ is the final upper endpoint, and the estimate of the minimum eigenvalue is $\widehat\lambda_{\min}=-\widehat\lambda$.}

\WCupdate{Three choices matter in practice: the initial vector, the stepsize, and the search interval.}
The initial vector $\vx_0$ is the same unit vector for every query. For nonzero checkpoint momentum, we use $\vm_{t_0}/\|\vm_{t_0}\|$.
\WCupdate{The stepsize $\etapi$ must satisfy $\etapi(\lambda_{\max}(\mA)+\Lambda)\le1$ over the whole search interval. This makes the mean transition positive semidefinite for every queried shift and keeps $\|\mI-\etapi(\mA+\lambda\mI)\|\le1$ for every $\lambda\in[\lambda_\star,\Lambda]$, where $\lambda_\star\coloneqq\max\{0,-\lambda_{\min}(\mA)\}$.}
We set it from the Lanczos estimate $\widehat\lambda_{\max}$, which \Cref{alg:eoc_local_Detection_AGM} computes first on $\bar\mA$, the effective curvature averaged over a fixed set of mini-batches.
This estimate does not certify the population stepsize condition.
\WCupdate{The search interval $[0,\Lambda]$ is an input to the algorithm and must contain $\lambda_\star$. The guarantee takes it as given, assumes $\lambda_{\min}(\mA)<0$, and includes the bisection resolution $\Lambda2^{-N}$. We use $B\ge2$. The algorithm also runs when $\lambda_{\min}(\mA)\ge0$, but the guarantee below does not cover that case.}
Each query uses at most $T_{\mathrm p}$ Hessian-vector products.
\Cref{alg:eoc_local_Detection_AGM,alg:eos_local_Detection_AGM} pass $T_{\mathrm p}$ to \Cref{alg:min_eig_stochastic_bisect}, and \Cref{alg:eos_local_Detection_AGM} writes $T$ for its own simulation horizon. The horizons used in the experiments are listed in \Cref{app:detection-settings}.

\paragraph{EoC criterion.}
\Cref{alg:eoc_local_Detection_AGM} combines the two eigenvalue estimates into the saddleness estimate
$\widehat\nu=\max\{0,-\widehat\lambda_{\min}\}/\widehat\lambda_{\max}$, which estimates the AGM saddleness $\nu$ of \Cref{def:agm-saddleness}. We declare EoC when $\widehat\lambda_{\min}<0$ and $\widehat\nu\le\nu_{\rm tol}$. This ratio requires $\widehat\lambda_{\max}>0$. The experiments use $\nu_{\rm tol}=0.1$.
The same estimate sets the compensation coefficient $\extraWD=\max\{0,-\widehat\lambda_{\min}\}$ for EoS detection.

\begin{algorithm}[t]
\caption{Minimum eigenvalue estimation by stochastic power iteration and bisection}
\label{alg:min_eig_stochastic_bisect}
\begin{algorithmic}[1]
    \REQUIRE \WCupdate{Symmetric samples $\mA_{\batch_t}$, i.i.d.\ with mean $\mA$; inner stepsize $\etapi$; horizon $T_{\mathrm p}$; bisection rounds $N$; cutoff $B\ge2$; a given unit vector $\vx_0$; and a search interval $[0,\Lambda]$ containing $\lambda_\star=\max\{0,-\lambda_{\min}(\mA)\}$ with $\etapi(\lambda_{\max}(\mA)+\Lambda)\le1$.}
    \ENSURE \WCupdate{Estimated shift $\widehat\lambda$ and the estimate $\widehat{\lambda}_{\min}=-\widehat\lambda$ of the minimum eigenvalue.}

\STATE \textbf{Subroutine} \textsc{Stable}$(\lambda)$:
\STATE \hspace{0.6cm} Set $\vx\gets \vx_0$; use fresh independent samples for this query.
\FOR{$t=0,1,\dots,T_{\mathrm p}-1$}
    \STATE Draw a fresh mini-batch $\batch_t$ and form $\mA_{\batch_t}$.
    \STATE $\vx \gets \vx - \etapi (\mA_{\batch_t} + \lambda\mI)\vx$.
    \IF{$\|\vx\| > B$}
        \STATE \textbf{return} \texttt{false} \COMMENT{explode}
    \ENDIF
\ENDFOR
\STATE \textbf{return} \texttt{true} \COMMENT{stable}

\STATE \textbf{Main procedure:}
    \STATE $\lambda_{\mathrm{lo}} \gets 0$, \quad $\lambda_{\mathrm{hi}} \gets \Lambda$. \COMMENT{\WCupdate{start from the given search interval}}

\FOR{$k=1,2,\dots,N$}
    \STATE $\lambda_{\mathrm{mid}} \gets (\lambda_{\mathrm{lo}}+\lambda_{\mathrm{hi}})/2$.
    \IF{\textsc{Stable}$(\lambda_{\mathrm{mid}})=\texttt{true}$}
        \STATE $\lambda_{\mathrm{hi}} \gets \lambda_{\mathrm{mid}}$ \COMMENT{stable}
    \ELSE
        \STATE $\lambda_{\mathrm{lo}} \gets \lambda_{\mathrm{mid}}$ \COMMENT{explode}
    \ENDIF
\ENDFOR

\STATE $\widehat\lambda \gets \lambda_{\mathrm{hi}}$, \quad $\widehat{\lambda}_{\min} \gets -\widehat\lambda$.
\STATE \textbf{return} $\widehat\lambda,\ \widehat{\lambda}_{\min}$.
\end{algorithmic}
\end{algorithm}

\begin{algorithm}[t]
\caption{EoC detection for AGMs}
\label{alg:eoc_local_Detection_AGM}
\begin{algorithmic}[1]
\REQUIRE Checkpoint $\param_{t_0}$; frozen preconditioner $\mP=\mP_{t_0}\succ 0$; weight decay $\wdc\ge 0$;
tolerance $\nu_{\rm tol}>0$;
momentum state $\vm_{t_0}$; \Cref{alg:min_eig_stochastic_bisect} parameters $(\etapi,T_{\mathrm p},N,B,\Lambda)$;
Lanczos iterations $T_{\rm LZ}$; averaged operator $\bar\mA$, the mean of $\mA_{\batch}$ over a fixed set of mini-batches.
\ENSURE $\textsc{EoC}\in\{\texttt{true},\texttt{false}\}$; estimates $\widehat\lambda_{\min}$, $\widehat\lambda_{\max}$, and $\widehat\nu$.

\STATE Define symmetric stochastic samples
$\mA_{\batch_t} := \mP^{-1/2}\mH_{\batch_t}\mP^{-1/2} + \wdc \mI$.

\STATE $\widehat{\lambda}_{\max} \gets \textsc{LanczosMaxEig}(\bar\mA;\,T_{\rm LZ})$. \COMMENT{$\etapi$ is chosen from $\widehat{\lambda}_{\max}$, see the text}
\STATE Set $\vx_0\gets\vm_{t_0}/\|\vm_{t_0}\|$.
\STATE $(\widehat{\lambda},\, \widehat{\lambda}_{\min}) \gets
\text{\Cref{alg:min_eig_stochastic_bisect}}(\mA_{\batch_t};\etapi,T_{\mathrm p},N,B,\Lambda,\vx_0)$.

\STATE $\widehat{\nu} \gets \dfrac{\max\{0,\,-\widehat{\lambda}_{\min}\}}{\widehat{\lambda}_{\max}}$.

\STATE $\textsc{EoC} \gets \big(\widehat{\lambda}_{\min} < 0\big)\ \wedge\ \big(\widehat{\nu} \le \nu_{\rm tol}\big)$.

\STATE \textbf{return} $\textsc{EoC},\,\widehat{\lambda}_{\min},\,\widehat{\lambda}_{\max},\,\widehat{\nu}$.
\end{algorithmic}
\end{algorithm}

\paragraph{Guarantee.}
The following theorem separates the error due to curvature noise, the finite horizon, and the bisection resolution.
\begin{theorem}[Informal; accuracy of the stochastic power iteration]\label{thm:power-iteration-informal}
Let $\lambda_{\min}(\mA)<0$, and let the mini-batch effective curvatures $\mA_{\batch_t}=\mA+\mDelta_t$ be i.i.d.\ symmetric with $\Ebb[\mDelta_t]=\zeroB$ and $\|\mDelta_t\|\le\sigma$ almost surely.
  \WCupdate{Let the unit initialization $\vx_0$ satisfy $|\vu_{\min}^\top\vx_0|\ge\varepsilon>0$ for a unit eigenvector $\vu_{\min}$ of $\mA$ with eigenvalue $\lambda_{\min}(\mA)$, and let the search-interval and stepsize conditions of \Cref{thm:lambda_hat_bound} hold.}
For fixed confidence $1-\delta$, \Cref{alg:min_eig_stochastic_bisect} with horizon $T_{\mathrm p}$, cutoff $B$ and $N$ bisection rounds outputs, with probability at least $1-\delta$,
\[
  \bigl|\widehat\lambda_{\min}-\lambda_{\min}(\mA)\bigr|
  \le O\!\bigl(\etapi\log(2N)\bigr)
     +O\!\Bigl(\frac{1}{\etapi T_{\mathrm p}}\Bigr)
     +\Lambda2^{-N},
\]
where the hidden constants depend on the noise bound $\sigma$, cutoff $B$, initial overlap $\varepsilon$, and failure probability $\delta$.
\end{theorem}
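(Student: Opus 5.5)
The argument analyses a single query \textsc{Stable}$(\lambda)$ and then unions over the $N$ bisection queries. Write $\lambda_\star=-\lambda_{\min}(\mA)>0$ and $\mM_t(\lambda)=\mI-\etapi(\mA+\lambda\mI)-\etapi\mDelta_t$. The goal is two one-sided statements, each holding with probability at least $1-\delta/N$ per query:
\begin{itemize}
\item \textbf{(i) Explosion below the threshold.} If $\lambda\le\lambda_\star-\gamma_1$ with $\gamma_1=O(1/(\etapi T_{\mathrm p}))+O(\etapi\log(2N))$, the query returns \texttt{false}.
\item \textbf{(ii) Stability above the threshold.} If $\lambda\ge\lambda_\star+\gamma_2$ with $\gamma_2=O(\etapi\log(2N))$, the query returns \texttt{true}.
\end{itemize}
Given these, a union bound over the $N$ queries gives the rest. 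On the good event every bisection decision is correct outside the band $[\lambda_\star-\gamma_1,\lambda_\star+\gamma_2]$. By induction, the invariant that $[\lambda_{\mathrm{lo}},\lambda_{\mathrm{hi}}]$ meets this band is preserved. The final upper endpoint is therefore within $\gamma_1+\gamma_2+\Lambda2^{-N}$ of $\lambda_\star$, which is the claimed bound.

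\textbf{Stable side (ii).} The stepsize condition $\etapi(\lambda_{\max}(\mA)+\Lambda)\le1$ makes $\mI-\etapi(\mA+\lambda\mI)$ positive semidefinite. Its norm is $1-\etapi(\lambda-\lambda_\star)\le1-\etapi\gamma_2$. For the squared norm,
\[
\|\vx_{t+1}\|^2=\|\bar\mM\vx_t\|^2-2\etapi\langle\bar\mM\vx_t,\mDelta_t\vx_t\rangle+\etapi^2\|\mDelta_t\vx_t\|^2 .
\]
Take logarithms, $\log\|\vx_{t+1}\|\le\log\|\vx_t\|-\etapi\gamma_2+\etapi\xi_t+O(\etapi^2\sigma^2)$. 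Here $\xi_t=-\langle\bar\mM\hat\vx_t,\mDelta_t\hat\vx_t\rangle/\|\bar\mM \hat \vx_t\|$ is a martingale difference bounded by $\sigma$. Set $S_t=\log\|\vx_t\|$. This is a supermartingale with drift $-\etapi(\gamma_2-O(\etapi\sigma^2))$ and increments bounded by $O(\etapi\sigma)$.

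A maximal Azuma/Freedman inequality for a drifted walk controls $\sup_{t\le T_{\mathrm p}}S_t$ uniformly in $t$. The probability that the walk ever climbs by $\log B$ is then at most $\exp(-c\,\gamma_2\log B/(\etapi\sigma^2))$. Requiring this to be at most $\delta/N$ gives $\gamma_2\asymp\etapi\sigma^2\log(N/\delta)/\log B$, which explains the $O(\etapi\log(2N))$ term. The $O(\etapi^2\sigma^2)$ Itô-type correction is absorbed into the same term.

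\textbf{Explosive side (i).} Project onto $\vu_{\min}$. The mean transition multiplies this coordinate by $1+\etapi(\lambda_\star-\lambda)\ge1+\etapi\gamma_1$, starting from an overlap of at least $\varepsilon$. The same logarithmic supermartingale argument, now as a submartingale lower bound for $\log|\vu_{\min}^\top\vx_t|$, has two difficulties:
\begin{itemize}
\item The noise $\mDelta_t$ leaks mass between the coordinate and its complement.
\item $\|\vx_t\|$ may be small in the other directions.
\end{itemize}
To handle this, track the ratio of the $\vu_{\min}$ component to the full norm. Alternatively, lower bound $\|\vx_t\|\ge|\vu_{\min}^\top\vx_t|$ and use
\[
\vu_{\min}^\top\vx_{t+1}=(1+\etapi(\lambda_\star-\lambda))\vu_{\min}^\top\vx_t-\etapi\vu_{\min}^\top\mDelta_t\vx_t .
\]
While $\|\vx_t\|\le B$, the perturbation is at most $\etapi\sigma B$. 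The linear-recursion form $a_t=(1+\etapi g)^t\big(a_0-\sum_s(1+\etapi g)^{-s-1}\etapi\,\vu_{\min}^\top\mDelta_s\vx_s\big)$ is a martingale sum with bounded increments. By Azuma, with probability $1-\delta/N$ it stays above $\varepsilon/2$, provided $g$ is at least of order $\etapi\sigma^2B^2\log(N/\delta)/\varepsilon^2$. Then $(1+\etapi g)^{T_{\mathrm p}}\varepsilon/2>B$ once $g\gtrsim\log(2B/\varepsilon)/(\etapi T_{\mathrm p})$, which produces the $O(1/(\etapi T_{\mathrm p}))$ term. Stopping at the first exit above $B$ makes the bounded-increment assumption legitimate.

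\textbf{Main obstacle.} The hard step is the stable side (ii). The mean contraction is only $1-\etapi\gamma_2$ and is not uniform in direction, so the multiplicative noise must be controlled in log scale over the whole horizon and uniformly in time. The exact form of the drift correction $O(\etapi\sigma^2)$, and how it interacts with the positive semidefinite condition on $\bar\mM$, must be checked carefully.
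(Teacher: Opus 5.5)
Your proposal is correct and follows the paper's proof of \Cref{thm:lambda_hat_bound} essentially step for step. The explosive side is exactly \Cref{lem:pi-explode}: the bottom-eigenvector coordinate discounted by $\rho^{t}$, stopped at the first exit above $B$, then Azuma with geometrically summable increments and an $\varepsilon/2$ margin, which gives $g_-\propto\etapi\sigma^2B^2\log(N/\delta)/\varepsilon^2$ and the horizon term $e_{\mathrm p}$; your drifted log-walk on the stable side is \Cref{lem:pi-noexplode} in logarithmic coordinates, and the union bound (conditioned on earlier queries, since the shifts are adaptive) and bisection bracketing are the same. For the stable side, the paper does not normalize $\xi_t$ by $\|\mM(\lambda)\hat\vx_t\|$, which can vanish, and it never claims the log-increments are bounded below; with $a=\etapi(\lambda-\lambda_\star)$ and $h=\etapi\sigma$ it uses the one-sided bound $\|\vx_{t+1}\|^2/\|\vx_t\|^2\le 1+\xi_t-(a-h^2)\le e^{\xi_t-(a-h^2)}$ with $\xi_t=-2\etapi\langle\mM(\lambda)\hat\vx_t,\mDelta_t\hat\vx_t\rangle$, $|\xi_t|\le 2h$, and then applies Hoeffding's lemma and Ville's inequality to the supermartingale $\|\vx_t\|^{2p}$, $p=(a-h^2)/(2h^2)$, which is the time-uniform maximal inequality you invoke, free of any $\log T_{\mathrm p}$ factor.
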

The first term comes from curvature noise, the second from running each query for only $T_{\mathrm p}$ steps, and the third from the number of bisection rounds.
For large $T_{\mathrm p}$, choosing $\etapi$ of order $[T_{\mathrm p}\log(2N)]^{-1/2}$ balances the first two terms and gives $O(\sqrt{\log(2N)/T_{\mathrm p}})$, plus the bisection error.
Increasing the horizon at a fixed stepsize reduces the finite-horizon term but does not remove the noise term from this bound.
\WCupdate{The theorem does not verify the search-interval condition or the overlap. Both are inputs, and the theorem is silent when $\lambda_{\min}(\mA)\ge0$.}
The formal statement is \Cref{thm:lambda_hat_bound} in \Cref{app:proof-of-min-eigen}, and the proof is in \Cref{app:proof-power-iteration}.

\clearpage
\subsection{EoS Detection}\label{sec:method-eos}
\WCupdate{The AGM critical LR $\lr_\star$ is the threshold for bounded second moments on the strictly positive eigenspace of $\mA$ (\Cref{def:agm-critical-lr}).}
Forming this projection is impractical at LLM scale, and running the local dynamics without it is misleading, because negative effective curvature produces an expanding direction at every positive LR (\Cref{app:agm-EoC}).
\WCupdate{We therefore replace the projection by \emph{compensation}: the penalty $\tfrac{\extraWD}{2}\|\vvareps\|_{\mP}^2$, with compensation coefficient $\extraWD=\max\{0,-\widehat\lambda_{\min}\}$ from \Cref{sec:method-eoc}, shifts the effective curvature from $\mA$ to $\mA+\extraWD\mI$.}
\WCupdate{We then locate the stability boundary of this compensated model by simulation.}

\begin{algorithm}[t]
\caption{EoS detection for AGMs}
\label{alg:eos_local_Detection_AGM}
\begin{algorithmic}[1]
\REQUIRE Checkpoint $\param_{t_0}$; momentum state $\vm_{t_0}$; frozen preconditioner $\mP=\mP_{t_0}\succ 0$;
weight decay $\wdc \ge 0$; training LR $\lr$; momentum parameter $\beta_1\in[0,1)$;
horizon $T$; cutoff $M$; random seed $s$; increasing LR factors $\{\alpha_k\}_{k=1}^K$; EoS tolerance $\tau_{\rm eos}>0$;
parameters $(\etapi,T_{\mathrm p},N,B,\Lambda,\vx_0)$ for \Cref{alg:min_eig_stochastic_bisect}, with $\vx_0=\vm_{t_0}/\|\vm_{t_0}\|$.
\ENSURE Estimated critical LR $\widehat{\lr}_\star$, critical LR ratio $q(\param_{t_0})=\widehat{\lr}_\star/\lr$, and $\textsc{EoS}$ flag.

\STATE \textbf{(Step 1: estimate the compensation coefficient.)}
Define symmetric stochastic samples
$\mA_{\batch_t} :=\mP^{-1/2}\mH_{\batch_t}\mP^{-1/2} + \wdc \mI$.
Run \Cref{alg:min_eig_stochastic_bisect} on $\mA_{\batch_t}$ to obtain $\widehat{\lambda}_{\min}$.
Set $\extraWD \gets \max\{0,-\widehat{\lambda}_{\min}\}$.

\STATE \textbf{Subroutine} \textsc{ExplodeDetect}$(\alpha)$:
\STATE \hspace{0.6cm} Set the random seed to $s$; $\vvareps\gets \vzero$; $\vm\gets\vm_{t_0}$; $R\gets 0$.
\FOR{$t=0,1,\dots,T-1$}
    \STATE \hspace{0.6cm} Sample mini-batch $\batch_t$.
    \STATE \hspace{0.6cm} $\vg \gets \vg_{\batch_t}
        + \mH_{\batch_t}\,\vvareps
        + \extraWD\,\mP\vvareps$.
    \STATE \hspace{0.6cm} $\vm \gets \beta_1\, \vm + (1-\beta_1)\, \vg$.
    \STATE \hspace{0.6cm} $\vvareps \gets \vvareps - \alpha\lr\,\mP^{-1}\vm - \alpha\lr\,\wdc(\vvareps+\param_{t_0})$.
    \STATE \hspace{0.6cm} $R \gets \max\{R,\|\vvareps\|_{\mP}\}$.
    \IF{$R \ge M$}
        \STATE \hspace{0.6cm} \textbf{return} \texttt{true} \COMMENT{explode}
    \ENDIF
\ENDFOR
\STATE \hspace{0.6cm} \textbf{return} \texttt{false} \COMMENT{stable}

\STATE \textbf{(Step 2: search over $\{\alpha_k\}$.)}
Find $k^\star := \min\{k:\textsc{ExplodeDetect}(\alpha_k)=\texttt{true}\}$.
\IF{no such $k^\star$ exists}
    \STATE Record the censored bounds $\widehat{\lr}_\star>\alpha_K\lr$ and $q>\alpha_K$; set $\textsc{EoS}\gets\texttt{false}$.
    \STATE \textbf{return} $\widehat{\lr}_\star, q, \textsc{EoS}$.
\ENDIF

\STATE $\widehat{\lr}_\star \gets \alpha_{k^\star}\lr$.
\STATE $q(\param_{t_0}) \gets \alpha_{k^\star}$.
\STATE $\textsc{EoS} \gets \big(q(\param_{t_0})\in[1-\tau_{\rm eos},\,1+\tau_{\rm eos}]\big)$.
\STATE \textbf{return} $\widehat{\lr}_\star, q, \textsc{EoS}$.
\end{algorithmic}
\end{algorithm}

\paragraph{Compensated local loss.}
Let $\vvareps\coloneqq\param-\param_{t_0}$ be the displacement from the checkpoint. With $\extraWD$ from \Cref{sec:method-eoc}, the compensated local loss is the quadratic surrogate \eqref{eq:agm-quadratic-surrogate} with the penalty measured in the norm of the frozen preconditioner,
\[
  G_t(\vvareps)
  \coloneqq
  \loss_{\batch_t}(\param_{t_0})
  +\vvareps^\top\vg_{\batch_t}
  +\tfrac{1}{2}\vvareps^\top\mH_{\batch_t}\vvareps
  +\tfrac{\extraWD}{2}\|\vvareps\|_{\mP}^2,
  \qquad \|\vvareps\|_{\mP}^2=\vvareps^\top\mP\vvareps .
\]
\WCupdate{The penalty shifts the mini-batch effective curvature from $\mA_{\batch_t}$ to $\mA_{\batch_t}+\extraWD\mI$ in preconditioned coordinates. It enters the gradient used in the momentum update, while the decoupled weight decay keeps its coefficient $\wdc$. \Cref{app:setting} writes the resulting dynamics as a stochastic linear system.}

\paragraph{Local simulation.}
\Cref{alg:eos_local_Detection_AGM} first sets $\extraWD$ from the power-iteration estimate and then runs the local AGM dynamics \eqref{eq:agm-checkpoint-dynamics} with the compensated gradient $\vg_{\batch_t}+(\mH_{\batch_t}+\extraWD\mP)\vvareps_t$.
The preconditioner is frozen at $\mP$. The momentum starts from the checkpoint value $\vm_{t_0}$ and is updated inside the simulation. The displacement starts from $\vvareps_0=\vzero$.
One trajectory is run for each LR factor $\alpha_k$ in an increasing grid $\alpha_1<\cdots<\alpha_K$ around the training LR $\lr$, and all trajectories use the same mini-batch sequence, fixed through the random seed $s$.
A trajectory is marked as exploding when $\|\vvareps_t\|_{\mP}$ reaches the cutoff $M$ at some step $t\le T$.
The estimate $\widehat\lr_\star$ is the smallest LR in the grid at which the trajectory explodes, and $q=\widehat\lr_\star/\lr$ is the critical LR ratio reported in the experiments.
We declare EoS when $q\in[1-\tau_{\rm eos},1+\tau_{\rm eos}]$, with $\tau_{\rm eos}=0.1$ in the experiments (\Cref{fig-ebs}).
If no factor in the grid explodes, the run is recorded as censored, $q>\alpha_K$, and the checkpoint is not counted as being at EoS.
We hold the training LR $\lr$ fixed during the simulation because the simulation horizon $T$ is short relative to the training schedule. This approximates schedules such as WSD or cosine whenever their LR changes little over the simulation window.

\paragraph{\WCupdate{The expected detector.}}
\WCupdate{We analyze an idealized version, the expected detector. It runs the same stochastic dynamics but replaces the single trajectory by $\Ebb\|\vvareps_t\|^2$, since $\lr_\star$ is a mean-square threshold (\Cref{app:mean-square-vs-almost-sure}).}
\WCupdate{Fix a search window $\mathcal I$ around $\lr$, the horizon $T$ and a cutoff $M>1$. Run the same compensated dynamics from every unit displacement $\|\vvareps_0\|=1$ with the checkpoint momentum, and take the expectation over the mini-batches. The expected detector returns}
\WCupdate{\[
  \lrhat(M,T)
  =\sup\Bigl\{\lr'\in\mathcal I:\ \max_{0\le t\le T}\Ebb\|\vvareps_t\|^2\le M^2\ \text{for every }\|\vvareps_0\|=1\Bigr\}.
\]}
\WCupdate{The detector accepts $\lr'$ if the condition in the braces holds, so $\lrhat(M,T)$ is the supremum of the accepted LRs (\Cref{fh:sec:main}). We keep $\widehat\lr_\star$ for the output of the algorithm.}
The implemented detector differs from it in three ways.
\WCupdate{First, the expected detector takes the supremum of accepted LRs in a continuous window, while the algorithm returns the first exploding LR in an increasing grid.}
Finite-horizon acceptance need not be monotone in the LR, so these thresholds can differ.
\WCupdate{Second, the expected detector starts from unit displacements, so $M>1$ is a relative cutoff, while the algorithm starts from zero and uses $M$ as an absolute bound on $\|\vvareps_t\|_{\mP}$.}
\WCupdate{Third, the expected detector takes the expectation of $\|\vvareps_t\|^2$ over the mini-batch sequence, while the algorithm runs one trajectory.}
\WCupdate{The algorithm uses the preconditioner norm, in which the compensation is measured; the analysis of the expected detector is unchanged under this norm (\Cref{fh:rem:weighted}).}

\begin{samepage}
\paragraph{Guarantee.}
  \WCupdate{Let $\lrP$, the compensated critical LR, be the critical LR of the compensated loss without projection (\Cref{fh:def:stable}).}
  \WCupdate{The output of the expected detector differs from $\lr_\star$ because $\lrP\neq\lr_\star$ and because the simulation stops after $T$ steps.}
  \WCupdate{The following theorem bounds these two errors separately in inverse LR.}
  \WCupdate{It assumes a fixed model with $\mA+\extraWD\mI\succ0$ (\Cref{fh:ass:comparison}), $\lr'\wdc\le1$ on the search window, and that the expected detector accepts $\lrP$ (\Cref{fh:ass:transient}).}
\begin{theorem}[Informal; compensated finite-horizon threshold]\label{thm:clr-shift-informal}
    \WCupdate{Let $\lrhat(M,T)$ be the output of the expected detector, and let $v\ge0$ satisfy $\Ebb(\mDelta_t^+)^2\preceq v\mA^+$ and $\Ebb\mDelta_t^2\preceq v(\mA+\extraWD\mI)$, where $\mDelta_t=\mA_{\batch_t}-\mA$ and the superscript $+$ denotes the block on the positive eigenspace of $\mA$.}
Under the assumptions in \Cref{fh:sec:main},
\[
      \WCupdate{\left|\frac{1}{\lrhat(M,T)}-\frac{1}{\lr_\star}\right|}
 \le
 \underbrace{\frac{1-\beta_1}{2(1+\beta_1)}\extraWD}_{\text{compensation}}+\underbrace{\frac v2}_{\text{Hessian noise}}
 +O\!\Bigl(\frac{1+\log M}{T}\Bigr).
\]
The $O(\cdot)$ is for a fixed model and search window as $(1+\log M)/T\to0$, over choices of $M$ and $T$ that satisfy \Cref{fh:ass:transient}.
\end{theorem}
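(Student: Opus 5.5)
We bound $|1/\lrhat(M,T)-1/\lr_\star|$ by the triangle inequality through the compensated critical LR,
\[
\Bigl|\tfrac{1}{\lrhat(M,T)}-\tfrac{1}{\lr_\star}\Bigr|\le\Bigl|\tfrac{1}{\lrhat(M,T)}-\tfrac{1}{\lrP}\Bigr|+\Bigl|\tfrac{1}{\lrP}-\tfrac{1}{\lr_\star}\Bigr|,
\]
and treat the two pieces separately. The first is a finite-horizon error, which should give $O((1+\log M)/T)$. The second is a pure model-comparison error between the projected system on the positive eigenspace of $\mA$ and the compensated unprojected system with curvature $\mA+\extraWD\mI\succ0$; it should produce the compensation and noise terms.

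\textbf{Two-sided spectral bounds on each critical LR.} By \Cref{prop:agm-critical-lr-equivalence}, $\lr_\star$ is the unique root of $\rho(\mK^+(\lr))=1$, and the same argument applied to the compensated model characterizes $\lrP$ via $\rho(\mK^{\mathsf P}(\lr))=1$. For each model we want a bracket of the form
\[
\frac{1}{\lr_{\rm det}}\le\frac{1}{\lr_{\rm crit}}\le\frac{1}{\lr_{\rm det}}+\frac v2 .
\]
Here $\lr_{\rm det}$ is the deterministic heavy-ball-with-decoupled-weight-decay threshold. That threshold is computed from the $2\times2$ blocks of the mean transition on each eigendirection with eigenvalue $a$, giving roughly $\lr_{\rm det}\,a(1-\beta_1)/(2(1+\beta_1))\approx 1$ at the Jury boundary.

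\textbf{Lower bound on $\lr_{\rm crit}$.} We would exhibit a quadratic Lyapunov function $V(\vz)=\vz^\top\mathbf{Q}\vz$, where $\mathbf{Q}$ is the standard heavy-ball Lyapunov matrix built from $\mA$. We then check
\[
\Ebb V(\mM_t\vz)\le \bar{\mM}^\top\mathbf{Q}\bar{\mM}+\Ebb[\text{noise}],
\]
and use $\Ebb\mDelta_t^2\preceq v\mA$ to absorb the noise cross term into the deterministic contraction margin. The margin is proportional to $\mA$, so the absorption costs exactly an additive $v/2$ in $1/\lr$.

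\textbf{Upper bound on $\lr_{\rm crit}$.} Since $\Ebb[\mM\otimes\mM]\succeq\bar{\mM}\otimes\bar{\mM}$ on the PSD cone, mean-square stability implies mean stability. Hence $\lr_{\rm crit}\le\lr_{\rm det}$.

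\textbf{Comparing the two deterministic thresholds.} The projected model's top effective eigenvalue is $\lambda_{\max}(\mA)$, and the compensated model's is $\lambda_{\max}(\mA)+\extraWD$. The deterministic thresholds therefore differ in inverse LR by exactly $\frac{1-\beta_1}{2(1+\beta_1)}\extraWD$. Combining this with the two brackets yields the first two terms of the bound. Both noise hypotheses in the statement are needed: $\Ebb(\mDelta_t^+)^2\preceq v\mA^+$ covers the projected model and $\Ebb\mDelta_t^2\preceq v(\mA+\extraWD\mI)$ covers the compensated one.

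\textbf{Finite-horizon error.} For $\lr'$ strictly below $\lrP$, geometric decay of $\mK^{\mathsf P}(\lr')^t$ bounds $\max_{t\le T}\Ebb\|\vvareps_t\|^2$ by a constant; a fixed constant is available because \Cref{fh:ass:transient} says the detector accepts $\lrP$. So acceptance holds up to $\lrP$ and a little beyond. For $\lr'>\lrP$, the spectral radius exceeds one. Since $\rho(\mK^{\mathsf P}(\lr))$ is differentiable in $\lr$ with nonzero derivative at the root, $\rho(\mK^{\mathsf P}(\lr'))\ge1+c(\lr'-\lrP)$. Growth along the Perron eigenvector (the operator preserves the PSD cone) then gives $\Ebb\|\vvareps_T\|^2\gtrsim\rho^T$, which exceeds $M^2$ once $c(\lr'-\lrP)T\gtrsim 1+\log M$. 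The detector therefore rejects such $\lr'$, and $|1/\lrhat-1/\lrP|=O((1+\log M)/T)$.

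\textbf{Main obstacle.} We expect the hard step to be the Lyapunov absorption with momentum. The heavy-ball mean transition is non-normal, so the Kronecker operator is not symmetric and the noise term couples momentum and displacement blocks. Our first attempt would be the explicit per-eigendirection $2\times2$ Lyapunov matrix, with the noise handled blockwise. We would then accept that only the leading-order constant $v/2$ survives after maximizing over directions.
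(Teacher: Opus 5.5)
Your compensation half is sound and has the same structure as the paper's proof of \Cref{fh:cor:saddleness}. Both bracket each critical LR as $0\le 1/\lr_{\mathrm{crit}}-1/\lr_{\mathrm{det}}\le v/2$, take the left inequality from mean instability, and difference the two deterministic thresholds to get $\frac{1-\beta_1}{2(1+\beta_1)}\extraWD$.

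The paper reads the bracket off the exact criterion, $c_bL_{\mC}/2\le r_{\mC}(b)\le c_bL_{\mC}/2+v/2$. Your Lyapunov certificate also gives it exactly, not just to leading order, so the step you flag as the main obstacle is not one:
\begin{itemize}
\item Put the margin $\gamma\mC$ in the position block only.
\item The Lyapunov matrix then has position block $\alpha^{-2}\mY$, where $\mY=\Dcal^{-1}(\gamma\mC)$ has eigenvalues $\gamma/(2(1-b)/\alpha-c_ba_i)$.
\item The noise enters only as $\Ebb[\mDelta_t\mY\mDelta_t]\preceq\|\mY\|\,v\mC$. There is no cross term, and the noise is not blockwise in the eigenbasis.
\item Absorbing this into $\gamma\mC$ is exactly the condition $(1-b)/\alpha>c_bL_{\mC}/2+v/2$.
\end{itemize}
With $\beta_1\wdc>0$ you must also use that $c_{b(\lr)}$ increases with $\lr$, to turn this pointwise condition into $1/\lr_{\mathrm{crit}}\le1/\lr_{\mathrm{det}}+v/2$.

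The genuine gap is the finite-horizon step. You assert that $\rho(\mK^{\mathsf P}(\lr))$ is differentiable with nonzero derivative at $\lrP$, so that $\rho\ge1+c(\lr'-\lrP)$ on the window. Nothing you have gives a rate. \Cref{fh:thm:decay-exact} only says $\rho>1$ above the root, and the Perron root of a polynomial family of cone-preserving maps need not be simple or differentiable. Your argument does not exclude $\rho-1\asymp(\lr'-\lrP)^2$, which would give only $O(\sqrt{(1+\log M)/T})$. Your Lyapunov certificate cannot fill this, because it only yields upper bounds on $\rho$; you need the necessity direction in quantitative form.

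The paper supplies this in \Cref{fh:prop:radius-quotient}:
\begin{itemize}
\item If $\rho(\Tcal)\le R$, divide trajectories by $s^t$ with $s>\sqrt R$.
\item This gives a strictly stable recurrence with momentum $b/s^2$ and mean $\mC+u\mI$, where $u=(s-1)(s-b)/(s\alpha)$.
\item Apply \Cref{fh:thm:exact} to it, undo the shift with \Cref{fh:lem:shift}, and compare momenta. This yields $q_{\mC,b}(\mX)-(1-b)/\alpha\le(R-1)[\cdots]$.
\item At $\mX=\XstarP$, the critical identity turns this into $1/\lrP-1/\lr\le H_T$.
\end{itemize}

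Your second step, $\Ebb\|\vvareps_T\|^2\gtrsim\rho^T$ ``along the Perron eigenvector,'' also hides a constant. That constant must be uniform in $\lr'$ and must survive the detector's affine forcing and prescribed checkpoint momentum. \Cref{fh:lem:detector-radius} handles this as follows:
\begin{itemize}
\item It differences the runs started from $\pm\vv$, which removes the forcing and the checkpoint momentum.
\item It normalizes the displacement block of the Perron eigenmatrix to $\|\mW_{11}\|=1$ and shows the momentum block satisfies $\|\mW_{22}\|\le(\beta_1/(1-\beta_1))^2$.
\item It recovers the momentum from consecutive displacements.
\end{itemize}
This gives the uniform bound $\rho\le\rhoT$.

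The opposite side needs no argument: acceptance of $\lrP$ gives $\lrhat(M,T)\ge\lrP$ directly. Your remark about acceptance ``a little beyond'' $\lrP$ is therefore unnecessary, and as stated it is unjustified.
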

\end{samepage}
The compensation term is linear in $\extraWD$, which equals $\widehat\nu\,\widehat\lambda_{\max}$ for the detector. The noise level $v$ need not be small. For a fixed cutoff $M$, the finite-horizon term decreases as $1/T$.
\WCupdate{\Cref{fh:cor:saddleness} proves this simplified bound. \Cref{fh:thm:main} gives a bound of a different form, which compares the noise of the two systems directly instead of bounding each by $v$. Small saddleness alone need not make the total error small.}
\WCupdate{The theorem applies to the expectation over mini-batches from unit initial displacements, not to the single trajectory from zero displacement that the experiments run.}
\WCupdate{Since the compensation coefficient $\extraWD$ is estimated, it does not by itself guarantee $\mA+\extraWD\mI\succ0$.}
\Cref{fh:prop:cutoff} gives a sufficient cutoff for accepting the compensated critical LR.
The complete assumptions and the full theorem are in \Cref{fh:sec:main}. The simplified bound and its proof are in \Cref{fh:sec:comparison}.

\subsection{Cost of a Hessian-Vector Product}\label{app:hvp-cost}
Curvature enters both procedures only through Hessian-vector products (HVPs), which we compute with PyTorch's autograd by differentiating a gradient-vector inner product,
\begin{equation}
    \vg(\param) \;=\; \nabla_{\param}\loss(\param),
    \qquad
    \mathrm{HVP}(\vv) \;=\; \nabla_{\param}\big(\vg(\param)^\top \vv\big)
    \;=\; \nabla^2_{\param}\loss(\param)\,\vv.
\end{equation}
We estimate the cost for a linear layer, since matrix multiplications dominate the attention and MLP projections in our models.

\paragraph{Matrix-multiplication count for a linear layer.}
Consider a linear layer
\begin{equation}
    \mY = \mX \mW,
\end{equation}
where $\mX\in\R^{n\times d_{\mathrm{in}}}$ holds a batch of $n$ inputs, $\mW\in\R^{d_{\mathrm{in}}\times d_{\mathrm{out}}}$, and $\mY\in\R^{n\times d_{\mathrm{out}}}$.
Let $\mG=\partial \loss/\partial \mY$ be the upstream gradient.
The forward pass uses one general matrix multiplication (GEMM),
\begin{equation}
    \mY=\mX\mW \quad \Rightarrow \quad 1~\text{GEMM}.
\end{equation}
The first-order backward pass computes gradients with respect to $\mX$ and $\mW$,
\begin{equation}
    \frac{\partial \loss}{\partial \mX} = \mG \mW^\top,
    \qquad
    \frac{\partial \loss}{\partial \mW} = \mX^\top \mG,
\end{equation}
which costs two GEMMs. Thus, a forward and backward pass through a linear layer costs approximately three GEMMs, ignoring elementwise operations.

The autograd procedure above computes an HVP by backpropagating through the gradient computation in a second reverse-mode pass. For layers whose cost is dominated by matrix multiplication, this incurs overhead comparable to an additional forward and backward pass through the same linear operators, plus extra GEMMs from differentiating the backward equations. Including the required second-order terms gives approximately
\begin{equation}
    \underbrace{1}_{\text{forward}} \;+\;
    \underbrace{2}_{\text{backward}} \;+\;
    \underbrace{2}_{\text{differentiate backward (w.r.t.\ }\mW\text{)}} \;+\;
    \underbrace{4}_{\text{propagate directional adjoints}}
    \;\approx\; 9~\text{GEMMs}.
\end{equation}
This is about $3\times$ the cost of one gradient evaluation ($9$ versus $3$ GEMMs per linear layer), or $9\times$ a forward pass when GEMMs dominate. Since transformer blocks contain many such linear layers (QKV projections, the attention output projection, and the two MLP projections), this constant-factor overhead provides a practical estimate of HVP cost for our models.

\section{Experimental Details}\label{app:add-exp-setup}
This appendix gives the models, training and detection settings behind \Cref{sec:experiments}, the protocol of the batch size threshold experiment, and the figures referenced from the main text but not shown there.

\subsection{Models, Training, and Detection Settings}
\paragraph{Model architectures.}
\Cref{tab:model-arch} lists the models used in our experiments.

\begin{table}[h]
\centering
\caption{Model architectures used in our experiments.}
\label{tab:model-arch}
\begin{tabular}{lccccc}
\toprule
Model & \#Layers & Hidden Dim & \#Heads & Head Dim & Params \\
\midrule
GPT-extra-small & 4  & 768  & 12 & 64 & 43M \\
GPT-small       & 12 & 768  & 12 & 64 & 104M \\
GPT-medium      & 24 & 1{,}024 & 16 & 64 & 344M \\
GPT-large       & 36 & 1{,}280 & 20 & 64 & 787M \\
\bottomrule
\end{tabular}
\end{table}

For each dataset, we train a byte-level BPE tokenizer with a vocabulary of $8{,}192$ tokens and use it for all model sizes. We use a small vocabulary instead of the standard $50{,}257$-token GPT-2 tokenizer because rarely trained tokens distort the preconditioned curvature. Under Adam, the second-moment estimate of an embedding row grows with how often its token is trained on, so a token that is seldom seen has $\sqrt{\hat v}$ close to the floor set by $\epsilon$. When such a token appears in the batch used for the Hessian-vector products, its row is amplified by roughly $1/\sqrt{\hat v}$ and can dominate the top eigenvalues. A smaller vocabulary leaves fewer such tokens. With $8{,}192$ tokens, at most $72$ tokens are rare by the criterion below at the checkpoints we used to build the filter.

Even with the smaller vocabulary, we remove the remaining rare tokens from the data used for EoS and EoC detection. All detection batches come from a single training shard that no run is trained on. For a run with batch size $B$ at checkpoint step $K$, we call a token rare if it has not appeared in the first $K$ training steps, if it has appeared less than once per step on average, or if it has not appeared in the last $20$ steps. The padding token is always treated as rare. We take the union of the rare sets over the stable-phase checkpoints of all three model sizes ($72$ tokens) and drop every sequence of the shard whose inputs or targets contain a rare token. We also drop every sequence in which a token that appears fewer than $100$ times per training step occurs more than $32$ times, because such a burst over-represents that token's row in a single batch. The filtered shard keeps $92{,}060$ of $97{,}656$ sequences ($94.3\%$), and the same shard is used for every model size, learning rate and checkpoint.

For model architectures, we apply QK-normalization~\citep{henry-etal-2020-query} and attention gating~\citep{qiu2025gated} for stability.
All models are trained with a sequence length of 1024.

\label{app:detection-settings}We run most experiments on GPT-extra-small. Unless otherwise specified, we use 100 stochastic power iterations and 50 compensated local simulation steps for GPT-extra-small. For the larger models (GPT-small/medium/large), we use 50 power iterations and 25 local simulation steps. In all local simulations, we mark a run as exploding when the displacement norm reaches the cutoff $M=10^5$.

\subsection{Additional Results}\label{app:additional-results}

\begin{figure*}[t]
      \centering
      \begin{subfigure}[t]{0.36\textwidth}
          \centering
          \includegraphics[width=\linewidth]{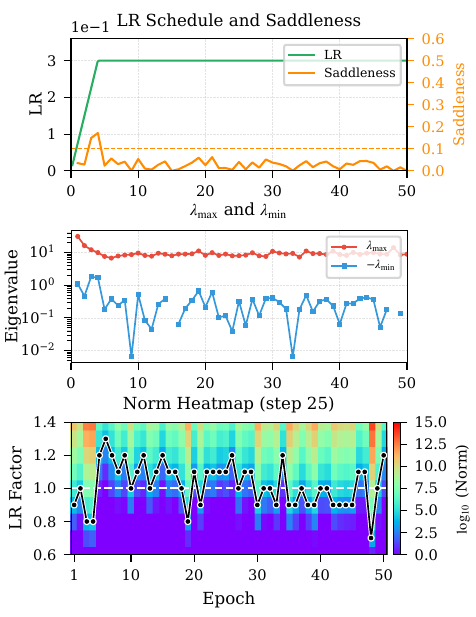}
          \caption{ResNet20 Model}
          \label{fig:cifar10-resnet20}
      \end{subfigure}\hspace{0.04\textwidth}
      \begin{subfigure}[t]{0.36\textwidth}
          \centering
          \IfFileExists{figs/fig-cifar10-resnet32-lr0.4-bs1024.pdf}{%
            \includegraphics[width=\linewidth]{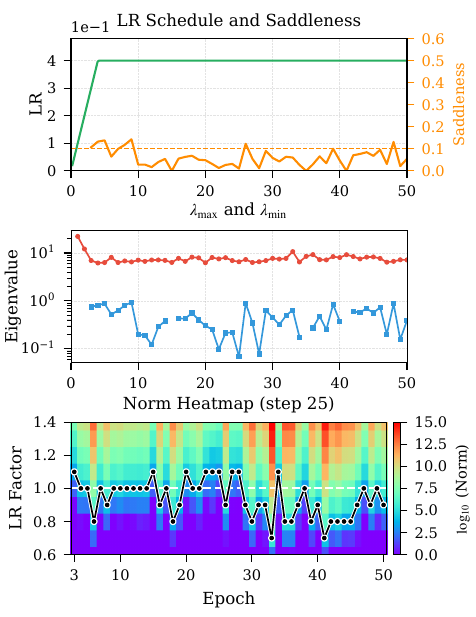}%
          }{%
            \fbox{\parbox[c][1.5in][c]{0.90\linewidth}{\centering
              Figure unavailable in the inherited source.\\[0.5em]
              ResNet32, CIFAR-10\\LR $0.4$, batch size $1024$.}}%
          }
          \caption{ResNet32 Model}
          \label{fig:cifar10-resnet32}
      \end{subfigure}

      \caption{\textbf{EoS and EoC appears on model trained with SGD.}
      We train ResNet20 and ResNet32 with MSE loss and SGD, using batch size $1024$ and weight decay $5\times10^{-4}$ for $50$ epochs.
      Panels follow \Cref{fig-small-medium-large}, using the raw Hessian and epoch on the horizontal axis.}
      \label{fig:cifar10-sgd-eos}
\end{figure*}

\paragraph{EoS across the pretraining runs.} The bounded-to-exploding transition of the compensated local simulation appears throughout our AdamW pretraining runs.
Across models from 43M to 787M parameters, \Cref{fig-small-medium-large,fig-lr} show many checkpoints at which the local simulation changes sharply from bounded trajectories to explosion near LR factor $1$.
\Cref{fig-sc,fig-lr} show that the location and duration of this behavior depend on the LR schedule and the peak LR.
\Cref{fig-ebs,fig-bs} show that it also depends on the batch size.
Taken together, these experiments show that EoS is common in our AdamW pretraining runs, and absent at the smallest batch size and extreme peak LRs.
Within the LR range tested around the training LR, the simulations show one transition at most checkpoints.
Under the assumptions of \Cref{fh:thm:decay}, strict homogeneous stability holds on a single LR interval whose endpoint is the AGM critical LR (\Cref{prop:agm-critical-lr-equivalence}).

\paragraph{Robustness to the compensation coefficient.}
\Cref{fig:convex-comp} shows one checkpoint with three compensation coefficients. The sharp transition is absent without compensation, appears at the estimated value, and remains visible at the larger value $600$.

\subsection{Batch Size Threshold Experiments}\label{appendix:exp:ebs}
This subsection gives the protocol behind \Cref{subsec:exp-bsz-threshold}.
We study five datasets (FineWeb~\citep{penedo2024fineweb}, OpenWebMath~\citep{paster2024openwebmath}, OpenWebText~\citep{gokaslan2019openwebtext}, C4~\citep{raffel2020t5}, and DCLM~\citep{li2024datacomplm}) using GPT-extra-small.
We use AdamW and the WSD schedule throughout, with baseline weight decay $\wdc=0.1$.

To vary the batch size, we follow the square-root scaling rule~\citep{malladi2022sdes}.
When increasing the batch size from $B$ to $kB$, we scale the LR and the weight decay, keeping the remaining hyperparameters fixed,
\[
\lr \leftarrow \sqrt{k}\,\lr, \qquad \wdc \leftarrow \sqrt{k}\,\wdc,
\]
so that the timescale $B/(\lr\wdc)$ is the same for all batch sizes. We also adjust the second-moment decay coefficient $\beta_2$ following \citet{marek2025smallbatchsizetraining}:
\[
\beta_2 \leftarrow \beta_2^{k}.
\]

\paragraph{CBZ estimation.}
For each dataset, we sweep the batch size under this scaling rule and record the final evaluation loss of each run.
We estimate the CBZ discussed in \Cref{subsec:exp-bsz-threshold} as the largest batch size whose final evaluation loss stays within $0.3\%$ of the best loss over the sweep.

\paragraph{EBZ estimation.}
For each batch size $B$, we estimate the critical LR ratio $q=\widehat\lr_\star/\lr$ defined in \Cref{subsec:exp-bsz-threshold}, averaged over checkpoints. Runs whose local simulations stay bounded over the whole LR-factor grid are recorded as censored at the top of the grid.
We then fit the averaged ratio as a function of $B$. The EBZ of \Cref{subsec:exp-bsz-threshold} is the smallest $B$ at which the fitted curve reaches $1.1$.

\Cref{fig-ebs:comp} compares EBZ to CBZ.
Across the five datasets, EBZ and CBZ are strongly positively correlated (Pearson correlation $0.95$). EBZ is below CBZ for C4 and FineWeb and above it for OpenWebMath, DCLM, and OpenWebText, so neither threshold is uniformly smaller.

\subsection{Multistep LR Drop Ablations}\label{appendix:exp:LRdrop}
\paragraph{Abrupt LR drops.}
We evaluate multistep schedule variants in \Cref{fig:ms-ablation}.
Before the drop, all six runs are at EoS ($0.9\le q\le1.1$) at most checkpoints, $8$ of the $9$ checkpoints before the drop in panels a--c and $12$ to $13$ of the $14$ in panels d--f.
After the drop, the transition moves to larger LR factors, and EoS returns only when the LR after the drop stays large.
With the peak LR fixed at $0.002$, no checkpoint after the drop is at EoS for any of the drop ratios $0.6$, $0.45$, and $0.3$ (panels a--c). The median $q$ after the drop is $1.3$ for the ratio $0.6$ and $1.4$ for $0.45$. For the ratio $0.3$, no LR factor in the grid explodes at $35$ of the $41$ checkpoints after the drop.
With the drop ratio fixed at $0.7$ (panels d--f), EoS returns at $26$ of the $36$ checkpoints after the drop for the peak LR $0.008$, at $7$ for $0.004$, and at none for $0.0005$.

\subsection{Linear LR Decay Ablations}\label{appendix:exp:LRdecay}
We evaluate different linear-decay configurations in \Cref{fig:decay-ablation}.
As in the multistep ablations, the decay ratio affects whether and when EoS re-emerges during the decay phase.
The transition time also depends on the decay length. Longer decay schedules give a slower transition, with the transition time roughly proportional to the number of decay steps in our settings.

\Cref{fig:decay-ablation} shows how EoC changes during LR decay.
As the LR decays, $-\widehat\lambda_{\min}$ falls from about $850$ in the stable phase to between $56$ and $540$ once the LR is below half its peak, and the saddleness falls from about $0.09$ to $0.03$ or less. EoC therefore weakens in the small-LR regime of these runs.

\begin{figure*}[p]
  \centering
  \begin{subfigure}[p]{0.33\textwidth}
    \centering
    \includegraphics[width=\linewidth]{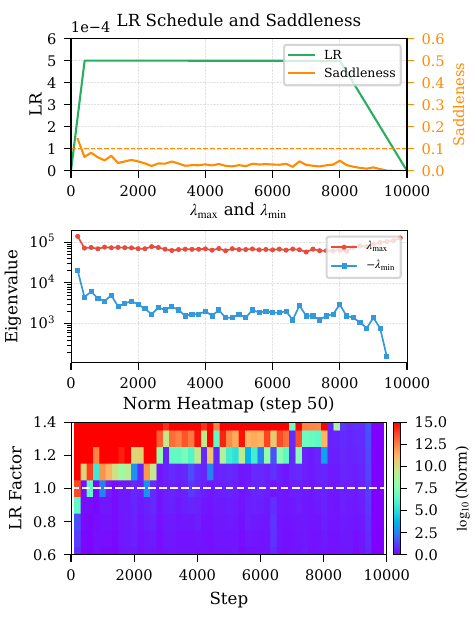}
    \caption{WSD, LR $=0.0005$.}
    \label{fig-lr:0.0005}
  \end{subfigure}\hfill
  \begin{subfigure}[p]{0.33\textwidth}
    \centering
    \includegraphics[width=\linewidth]{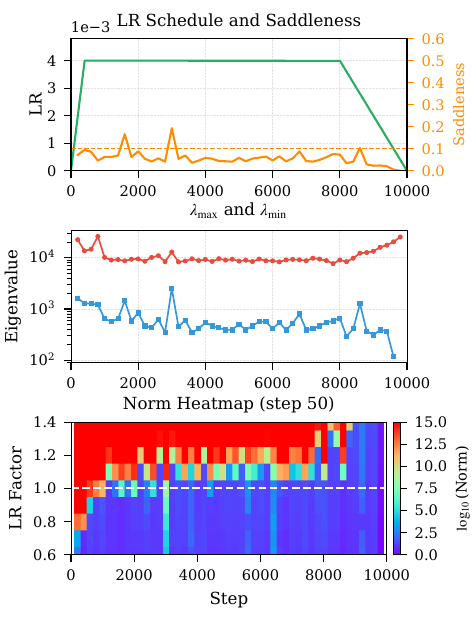}
    \caption{WSD, LR $=0.004$.}
    \label{fig-lr:0.004}
  \end{subfigure}\hfill
  \begin{subfigure}[p]{0.33\textwidth}
    \centering
    \includegraphics[width=\linewidth]{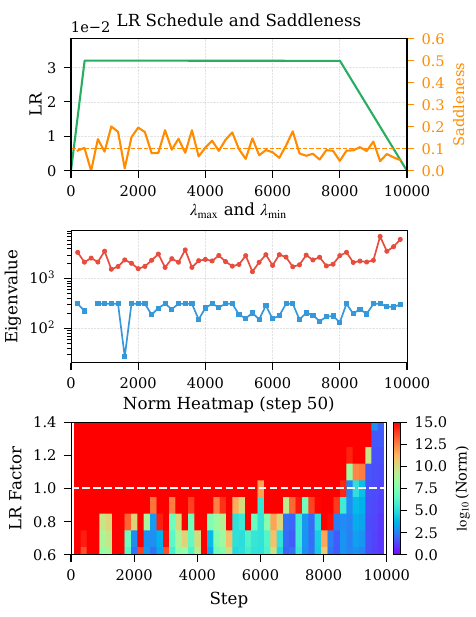}
    \caption{WSD, LR $=0.032$.}
    \label{fig-lr:0.032}
  \end{subfigure}

  \caption{\textbf{EoS and EoC measurements under different LRs.}
  We train GPT-extra-small (43M) on FineWeb-10B using AdamW with the WSD schedule and vary the LR.
  All runs use 400 linear warmup steps, 7600 stable steps, and 2000 linear decay steps. We identify $0.004$ as the optimal peak LR via a separate sweep.}
  \label{fig-lr}
\end{figure*}

\begin{figure*}[p]
  \centering
  \begin{subfigure}[p]{0.33\textwidth}
    \centering
    \includegraphics[width=\linewidth]{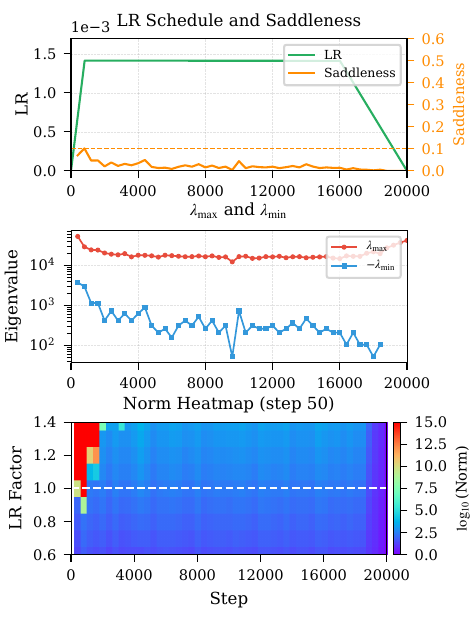}
    \caption{WSD, batch size $=256$.}
    \label{fig-bs:256}
  \end{subfigure}\hfill
  \begin{subfigure}[p]{0.33\textwidth}
    \centering
    \includegraphics[width=\linewidth]{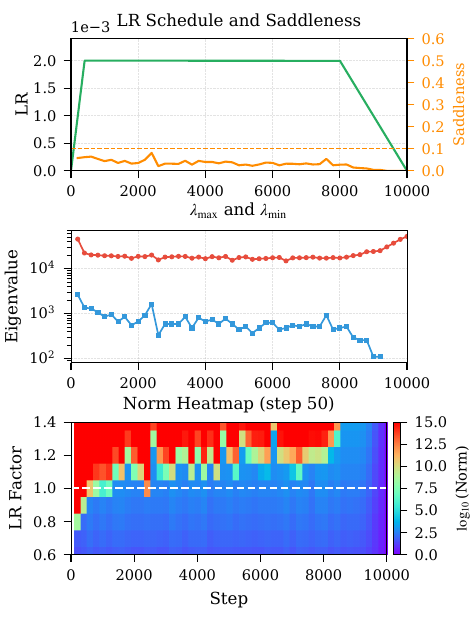}
    \caption{WSD, batch size $=512$.}
    \label{fig-bs:512}
  \end{subfigure}\hfill
  \begin{subfigure}[p]{0.33\textwidth}
    \centering
    \includegraphics[width=\linewidth]{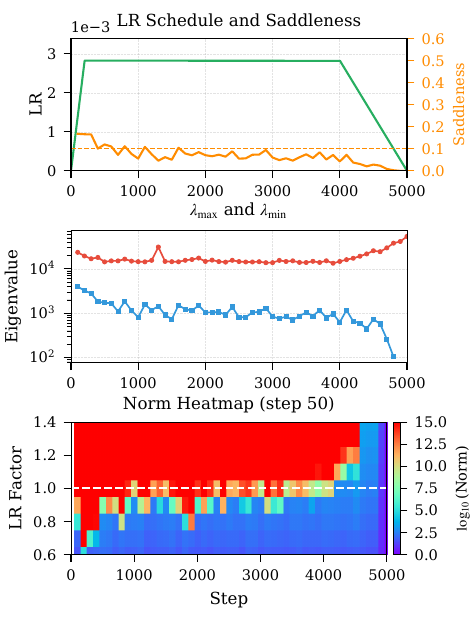}
    \caption{WSD, batch size $=1024$.}
    \label{fig-bs:1024}
  \end{subfigure}

  \caption{\textbf{EoS emerges only above a batch size threshold.}
  We train GPT-extra-small (43M) on FineWeb using AdamW with the WSD schedule at global batch sizes $256$, $512$, and $1{,}024$ with the same number of training tokens.
  Starting from LR $0.002$, weight decay $0.1$, and $\beta_2=0.95$ at batch size $512$, we scale the LR, the weight decay, and $\beta_2$ with the square-root rule of \Cref{appendix:exp:ebs}.
		Each panel follows the reporting protocol of \Cref{fig-small-medium-large}.
  (a) At batch size $256$, the transition is near factor $1.0$ only at the first checkpoints, and from step $3{,}600$ on no LR factor in the grid explodes.
  (b, c) At $512$ and $1{,}024$, the transition stays close to factor $1.0$ for most of training.}
  \label{fig-bs}
\end{figure*}

\begin{figure}[p]
  \centering
  \begin{subfigure}[p]{0.33\textwidth}
    \centering
    \includegraphics[width=\linewidth]{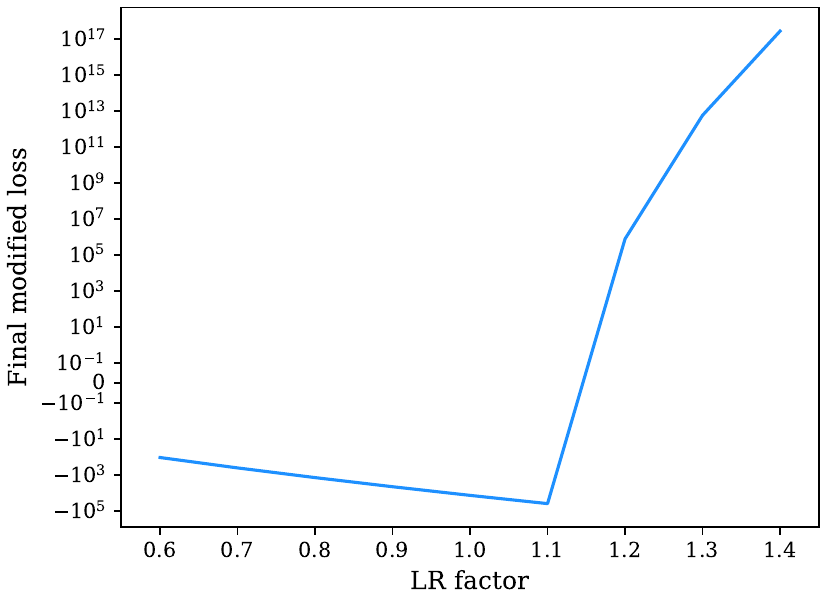}
    \caption{No compensation ($\extraWD=0$).}
    \label{fig:cc-0}
  \end{subfigure}\hfill
  \begin{subfigure}[p]{0.33\textwidth}
    \centering
    \includegraphics[width=\linewidth]{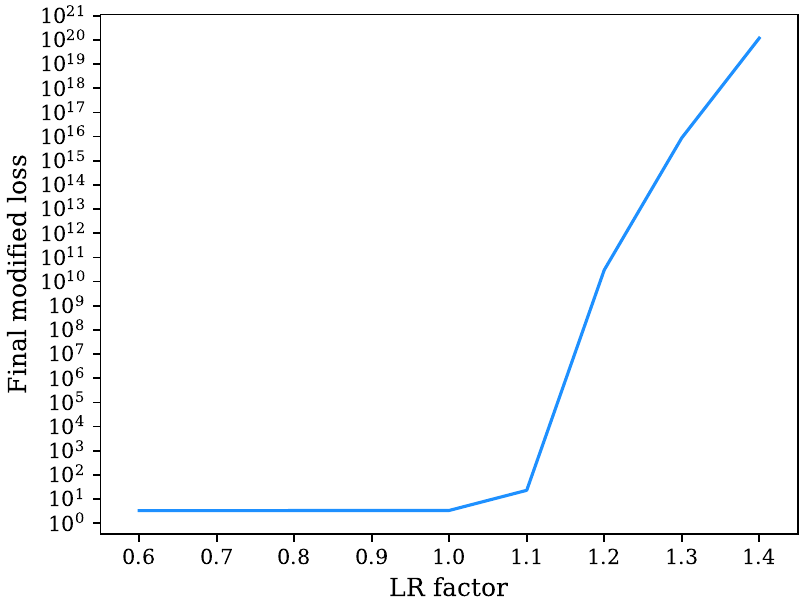}
    \caption{Estimated compensation ($\extraWD\approx 408$).}
    \label{fig:cc-408}
  \end{subfigure}\hfill
  \begin{subfigure}[p]{0.33\textwidth}
    \centering
    \includegraphics[width=\linewidth]{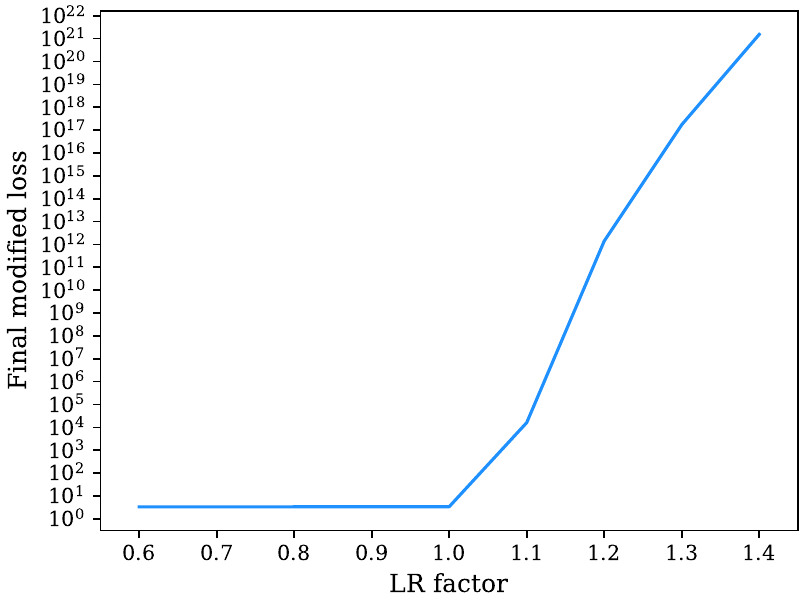}
    \caption{Compensation $\extraWD=600$.}
    \label{fig:cc-600}
  \end{subfigure}
  \caption{\textbf{Effects of different compensation coefficients.}
  We train GPT-extra-small (43M) with AdamW on FineWeb-10B, run EoS detection with different compensation coefficients $\extraWD$, and report the loss at local step 25.
  (a) $\extraWD=0$ (no compensation). (b) $\extraWD\approx 408$ (estimated). (c) $\extraWD=600$.}
  \label{fig:convex-comp}
\end{figure}

\begin{figure}[p]
  \centering
  \begin{subfigure}[p]{0.33\textwidth}
    \centering
    \includegraphics[width=\linewidth]{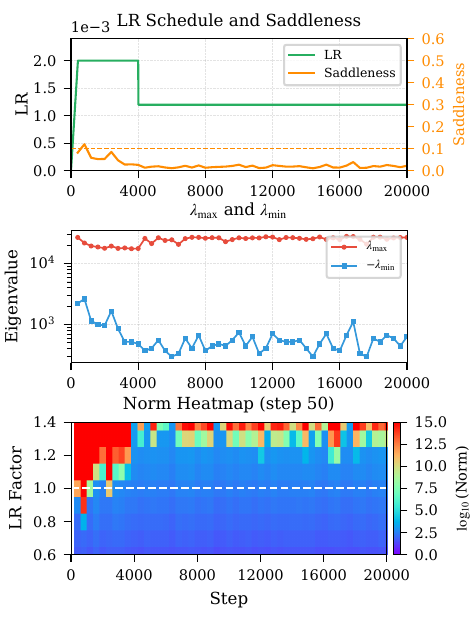}
    \caption{Multistep, drop ratio $=0.6$.}
    \label{fig:ms-tau06}
  \end{subfigure}\hfill
  \begin{subfigure}[p]{0.33\textwidth}
    \centering
    \includegraphics[width=\linewidth]{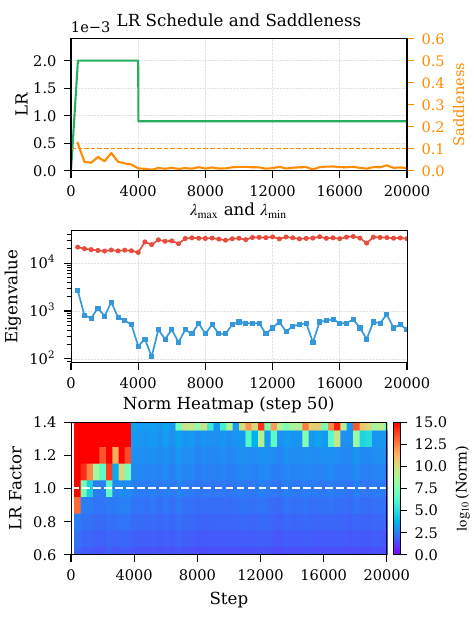}
    \caption{Multistep, drop ratio $=0.45$.}
    \label{fig:ms-tau045}
  \end{subfigure}\hfill
  \begin{subfigure}[p]{0.33\textwidth}
    \centering
    \includegraphics[width=\linewidth]{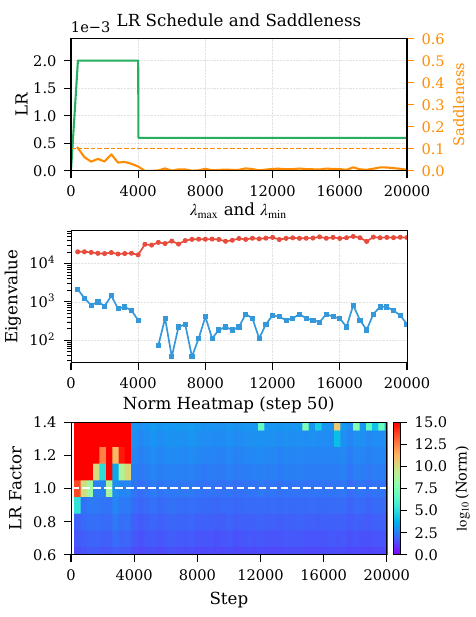}
    \caption{Multistep, drop ratio $=0.3$.}
    \label{fig:ms-tau03}
  \end{subfigure}

  \vspace{2pt}

  \begin{subfigure}[p]{0.33\textwidth}
    \centering
    \includegraphics[width=\linewidth]{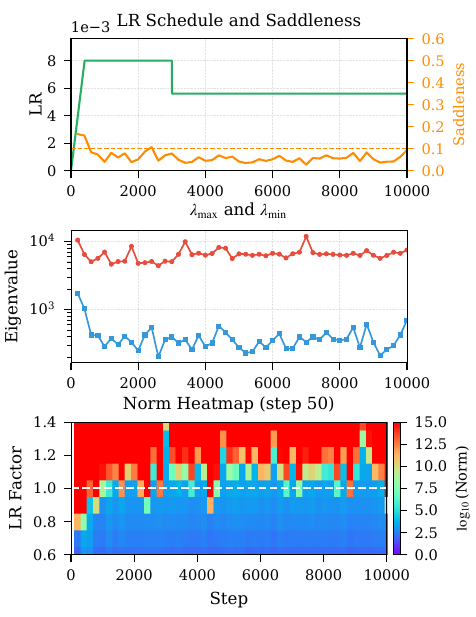}
    \caption{Multistep, peak LR $=0.008$.}
    \label{fig:ms-lr0008}
  \end{subfigure}\hfill
  \begin{subfigure}[p]{0.33\textwidth}
    \centering
    \includegraphics[width=\linewidth]{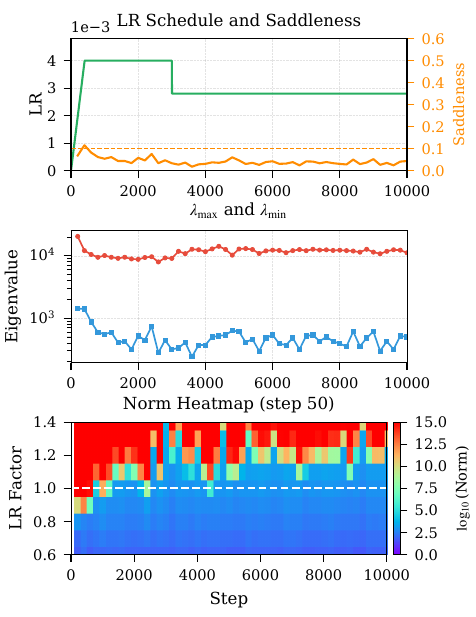}
    \caption{Multistep, peak LR $=0.004$.}
    \label{fig:ms-lr0004}
  \end{subfigure}\hfill
  \begin{subfigure}[p]{0.33\textwidth}
    \centering
    \includegraphics[width=\linewidth]{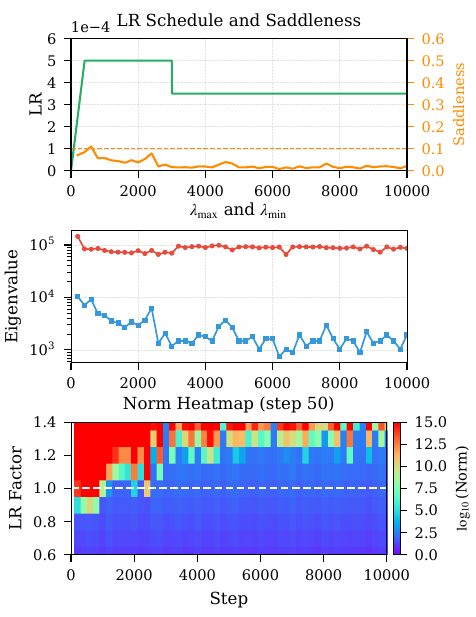}
    \caption{Multistep, peak LR $=0.0005$.}
    \label{fig:ms-lr00005}
  \end{subfigure}

  \caption{\textbf{Ablations on the multistep LR drop.}
  We train GPT-extra-small (43M) and vary the peak LR and the drop ratio of the multistep schedule.
  (a)--(c) The peak LR is fixed at $0.002$ and the drop ratio varies over $0.6$, $0.45$, and $0.3$, with the drop at step 4000.
  (d)--(f) The drop ratio is fixed at $0.7$ and the peak LR varies over $0.008$, $0.004$, and $0.0005$, with the drop at step 3000.}
  \label{fig:ms-ablation}
\end{figure}

\begin{figure}[p]
  \centering
  \begin{subfigure}[p]{0.33\textwidth}
    \centering
    \includegraphics[width=\linewidth]{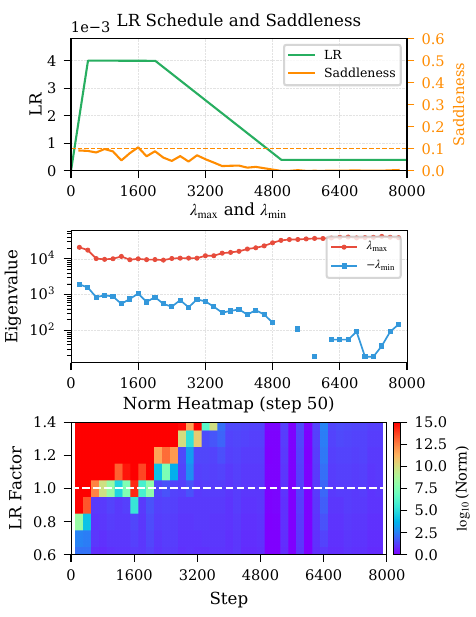}
    \caption{Decay length $=3000$.}
    \label{fig:decay-x-3000}
  \end{subfigure}\hfill
  \begin{subfigure}[p]{0.33\textwidth}
    \centering
    \includegraphics[width=\linewidth]{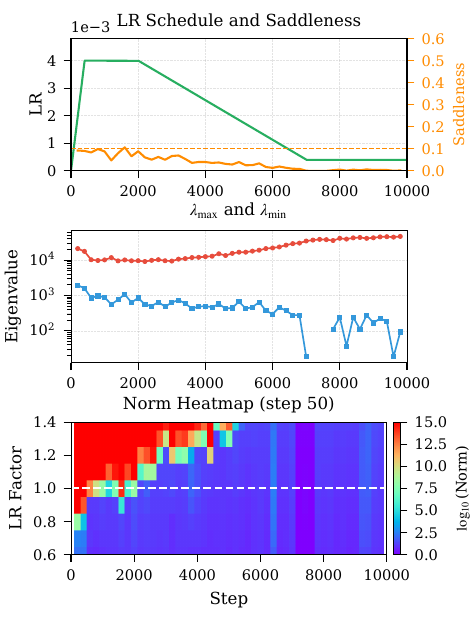}
    \caption{Decay length $=5000$.}
    \label{fig:decay-x-5000}
  \end{subfigure}\hfill
  \begin{subfigure}[p]{0.33\textwidth}
    \centering
    \includegraphics[width=\linewidth]{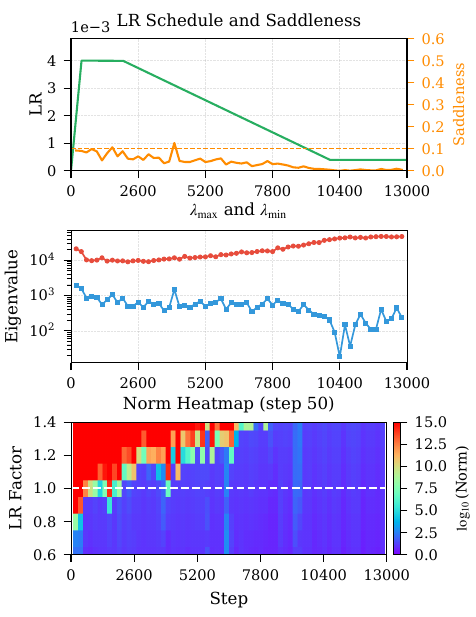}
    \caption{Decay length $=8000$.}
    \label{fig:decay-x-8000}
  \end{subfigure}

  \vspace{2pt}

  \begin{subfigure}[p]{0.33\textwidth}
    \centering
    \includegraphics[width=\linewidth]{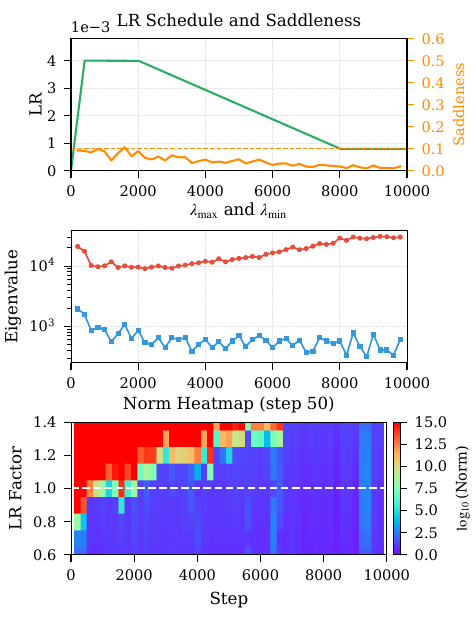}
    \caption{Decay ratio $=0.2$.}
    \label{fig:decay-y-02}
  \end{subfigure}\hfill
  \begin{subfigure}[p]{0.33\textwidth}
    \centering
    \includegraphics[width=\linewidth]{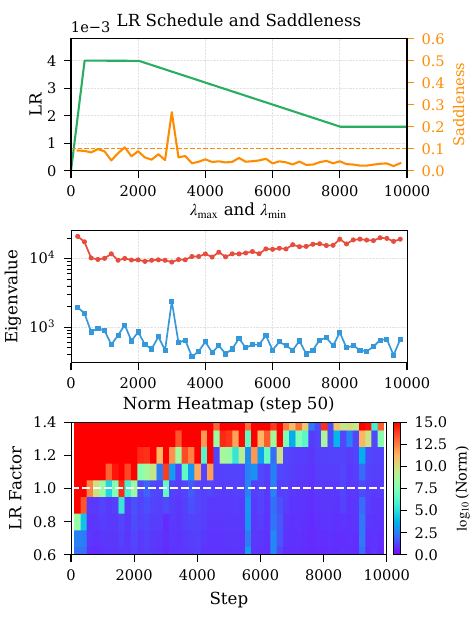}
    \caption{Decay ratio $=0.4$.}
    \label{fig:decay-y-04}
  \end{subfigure}\hfill
  \begin{subfigure}[p]{0.33\textwidth}
    \centering
    \includegraphics[width=\linewidth]{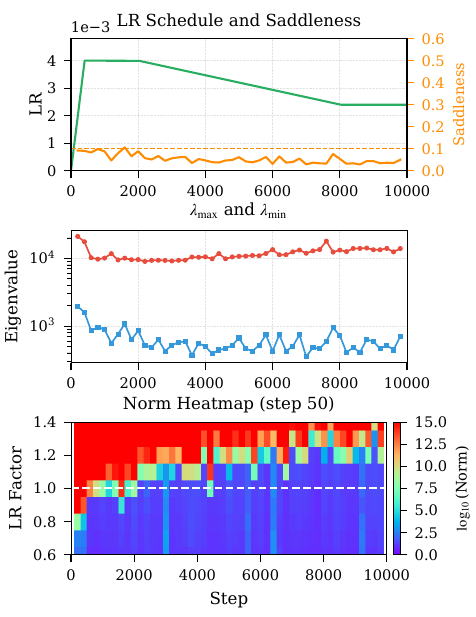}
    \caption{Decay ratio $=0.6$.}
    \label{fig:decay-y-06}
  \end{subfigure}

  \caption{\textbf{Ablations for the linear decay phase.}
  We train GPT-extra-small (43M) and vary the decay length and decay ratio of the linear decay phase.
  (a)--(c) The decay ratio is fixed at $0.1$, with decay lengths of 3000, 5000, and 8000 steps.
  (d)--(f) The decay length is fixed at 6000 steps, with decay ratios of $0.2$, $0.4$, and $0.6$.}
  \label{fig:decay-ablation}
\end{figure}

\subsection{Experiments Across Model Scales}\label{appendix:exp:scales}
We train GPT-small, GPT-medium, and GPT-large (\Cref{tab:model-arch}) on FineWeb for $20$ tokens per parameter with batch sizes $512$, $1{,}024$, and $1{,}536$, AdamW ($\beta_1=0.9$, $\beta_2=0.95$, weight decay $0.1$), and a WSD schedule with $4\%$ warmup and a linear decay to zero over the last $20\%$ of steps.
For each size, we sweep eight peak LRs from $3.125\times10^{-4}$ to $0.04$, doubling at each step.
We run EoS and EoC detection at ten checkpoints in the stable phase of each run, with compensated local simulations at LR factors from $0.6$ to $1.4$.
For the runs with large peak LRs, where no LR factor from $0.6$ stays bounded at many checkpoints, we re-estimate $\extraWD$ at LR factor $0.1$ and add LR factors $0.2$ to $0.5$.
\Cref{fig:tpp20-small,fig:tpp20-medium,fig:tpp20-large} show all $24$ runs, and \Cref{fig:lr-sweep-summary} summarizes them.
\Cref{fig-small-medium-large} uses one additional run per size at peak LR $1.25\times10^{-3}$, with checkpoints saved throughout training.

At the three smallest peak LRs, $17$, $18$, and $19$ of the $30$ checkpoints of each size are at EoS ($0.9\le q\le1.1$), and $26$, $27$, and $26$ satisfy the EoC criterion.
From peak LR $0.01$ on, only $12$ of the $90$ checkpoints are at EoS, and at $38$ of them no LR factor in the grid stays bounded.

\begin{figure}
    \centering
    \includegraphics[width=0.6\linewidth]{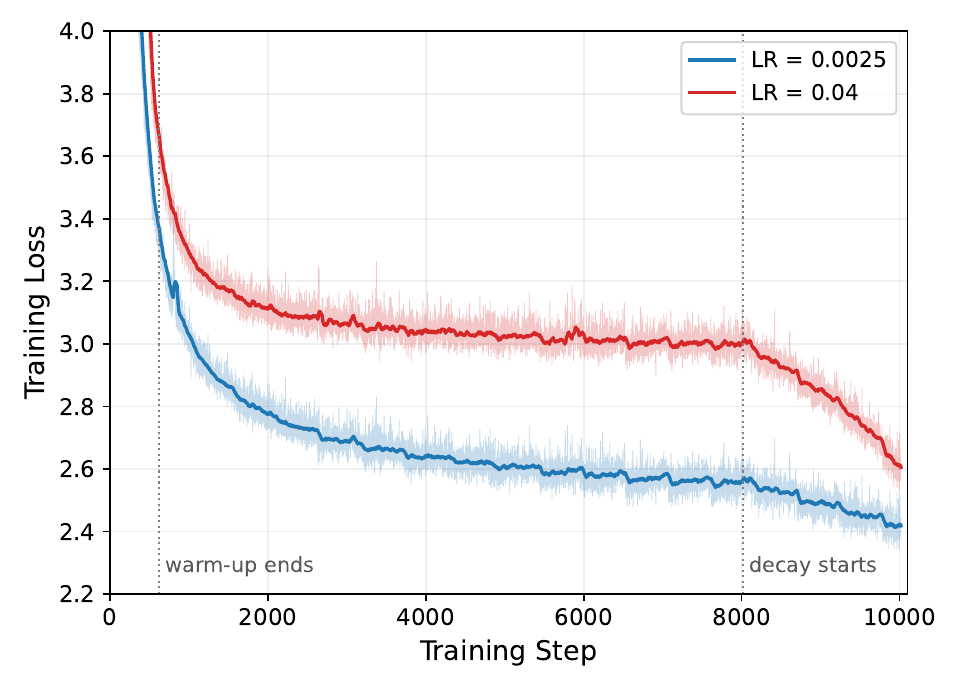}
\caption{\textbf{Training loss of GPT-large.}
We compare training loss versus steps at the
empirically optimal peak LR ($0.0025$) and the largest
tested peak LR ($0.04$).
Both runs are trained for 20 tokens per parameter.}
\label{fig:GPT2-large-training-loss}
\end{figure}

\begin{figure}[p]
    \centering
    \includegraphics[width=0.9\linewidth]{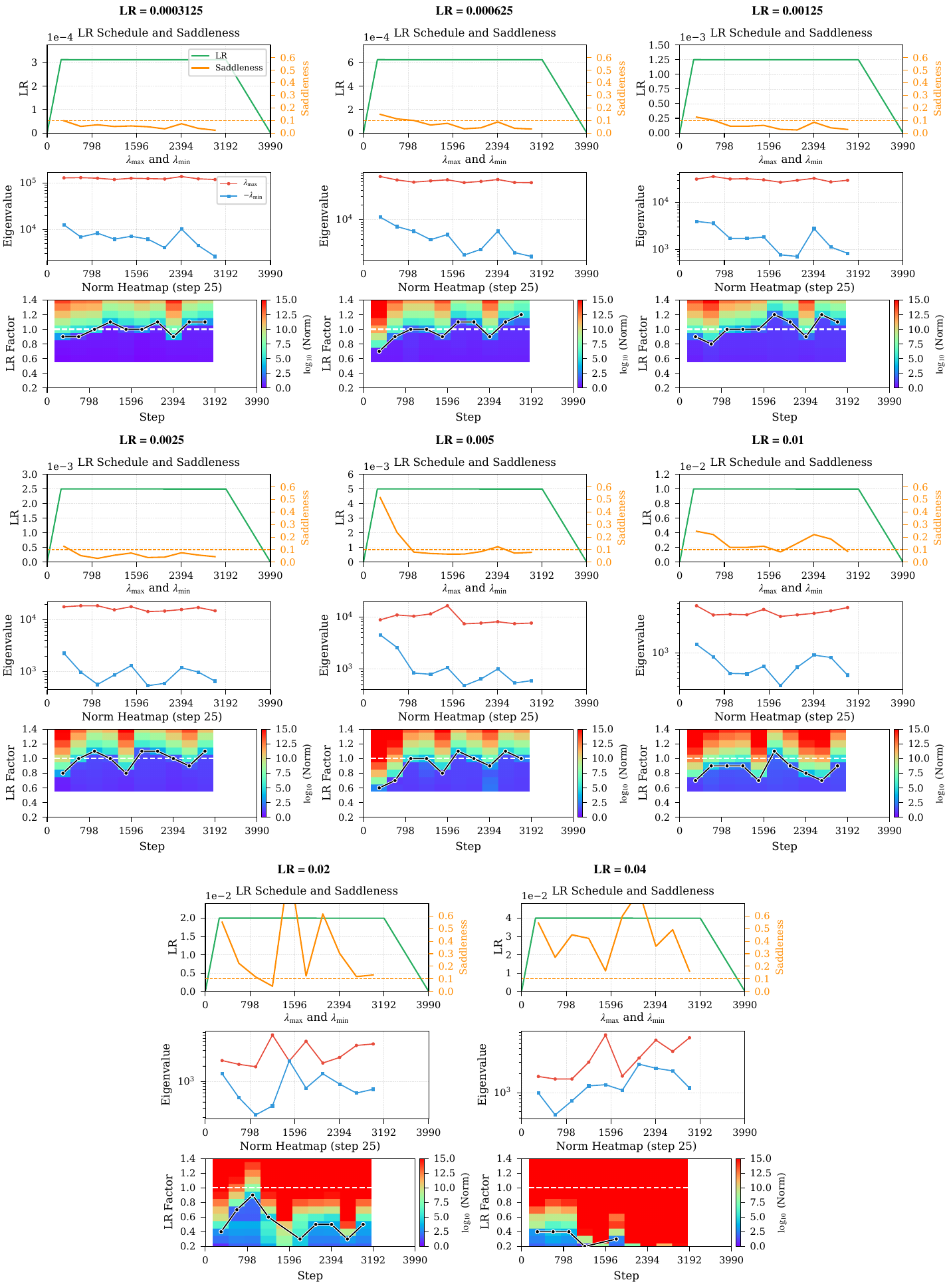}
    \caption{\textbf{GPT-small training stays near EoS up to peak LR $0.01$ and goes beyond EoS at larger peak LRs.}
    Up to peak LR $0.01$, the critical LR factor $q=\widehat\lr_\star/\lr$, the smallest LR factor whose simulation explodes, lies between $0.9$ and $1.2$ at $51$ of the $60$ checkpoints, and $35$ of these are at EoS ($0.9\le q\le1.1$).
    Over the same peak LRs, $44$ of the $60$ checkpoints satisfy the EoC criterion.
    For peak LRs from $0.02$ on, the training LR exceeds the estimated critical LR ($q<0.9$) at $19$ of the $20$ checkpoints, and at $5$ of the $10$ checkpoints of peak LR $0.04$ no LR factor in the grid stays bounded.
    In these two runs, the rows below LR factor $0.6$, the $-\lambda_{\min}$ curve, and the saddleness use the compensation coefficient $\extraWD$ estimated at LR factor $0.1$, while the rows from $0.6$ up use the one estimated at $0.6$. Because the two coefficients differ, a bounded factor $0.6$ sits above exploding lower factors at one checkpoint of peak LR $0.02$.
    Each panel is one peak LR, each heatmap column is one of ten checkpoints in the stable phase, and the black line marks the largest LR factor that stays bounded.
    Saddleness above $0.7$ runs off the top of the axis, and blank heatmap cells were not simulated.
    See \Cref{fig-small-medium-large} for the panel format and \Cref{appendix:exp:scales} for the setup.}
    \label{fig:tpp20-small}
\end{figure}

\begin{figure}[p]
    \centering
    \includegraphics[width=0.9\linewidth]{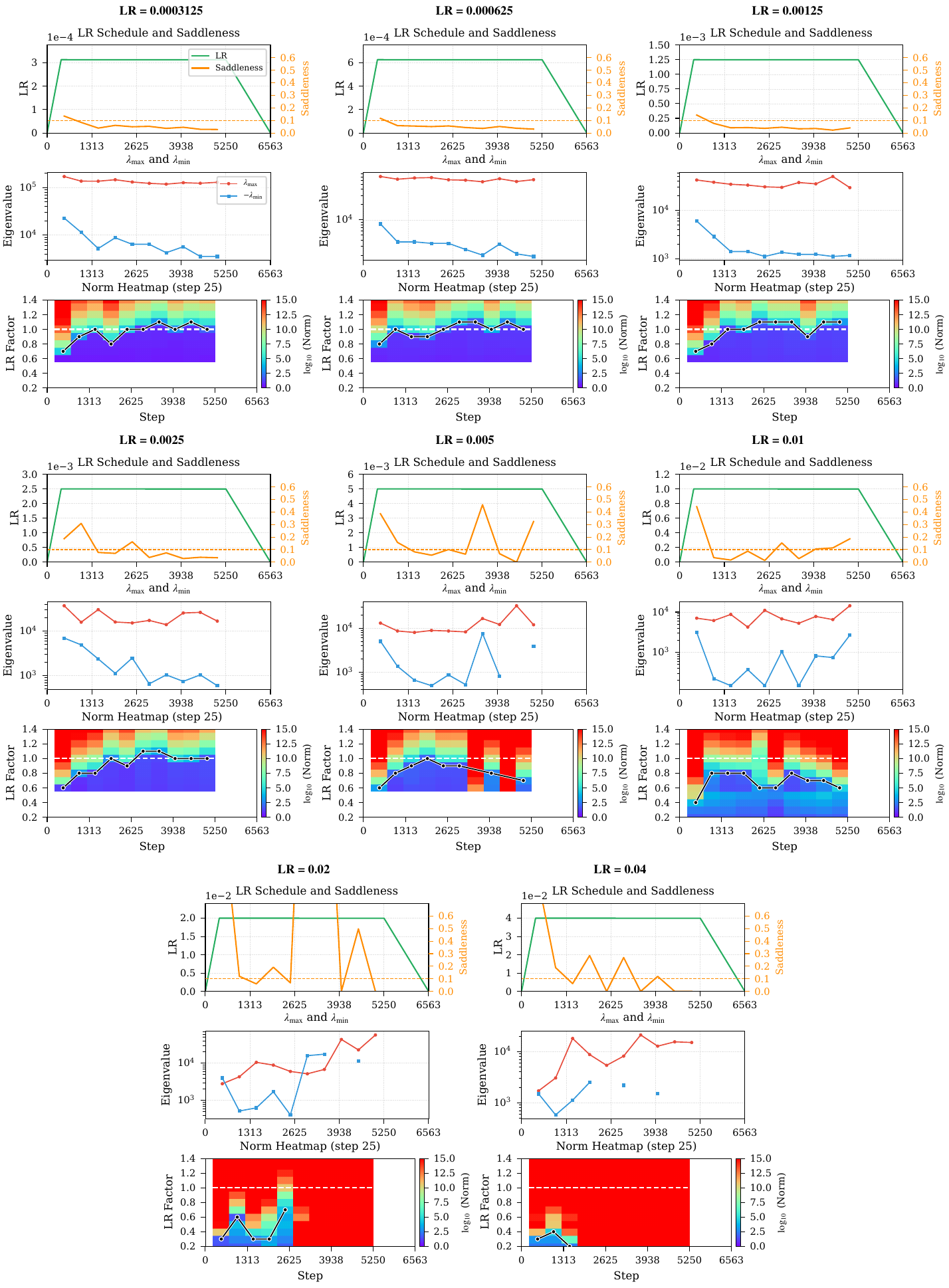}
    \caption{\textbf{GPT-medium training stays near EoS up to peak LR $5\times10^{-3}$ and goes beyond EoS at larger peak LRs.}
    Up to peak LR $5\times10^{-3}$, the critical LR factor $q=\widehat\lr_\star/\lr$, the smallest LR factor whose simulation explodes, lies between $0.9$ and $1.2$ at $43$ of the $50$ checkpoints, and $31$ of these are at EoS ($0.9\le q\le1.1$).
    Over the same peak LRs, $38$ of the $50$ checkpoints satisfy the EoC criterion.
    For peak LRs from $0.01$ on, the training LR exceeds the estimated critical LR ($q<0.9$) at $26$ of the $30$ checkpoints, and at $7$ of the $10$ checkpoints of peak LR $0.04$ no LR factor in the grid stays bounded.
    In these three runs, the rows below LR factor $0.6$, the $-\lambda_{\min}$ curve, and the saddleness use the compensation coefficient $\extraWD$ estimated at LR factor $0.1$, while the rows from $0.6$ up use the one estimated at $0.6$.
    Each panel is one peak LR, each heatmap column is one of ten checkpoints in the stable phase, and the black line marks the largest LR factor that stays bounded.
    Saddleness above $0.7$ runs off the top of the axis, and blank heatmap cells were not simulated.
    See \Cref{fig-small-medium-large} for the panel format and \Cref{appendix:exp:scales} for the setup.}
    \label{fig:tpp20-medium}
\end{figure}

\begin{figure}[p]
    \centering
    \includegraphics[width=0.9\linewidth]{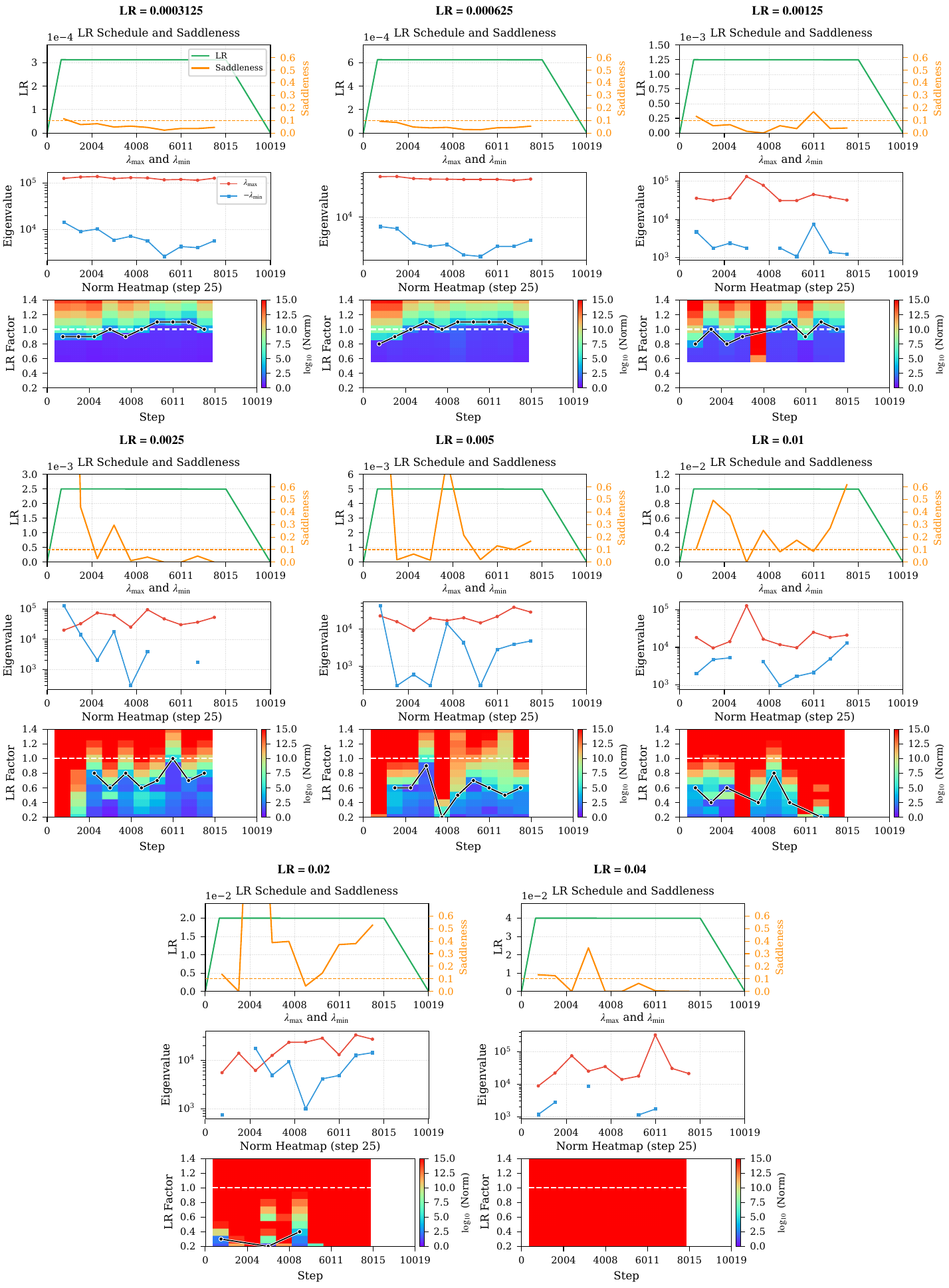}
    \caption{\textbf{GPT-large training stays near EoS up to peak LR $1.25\times10^{-3}$ and goes beyond EoS at larger peak LRs.}
    Up to peak LR $1.25\times10^{-3}$, the critical LR factor $q=\widehat\lr_\star/\lr$, the smallest LR factor whose simulation explodes, lies between $0.9$ and $1.2$ at $29$ of the $30$ checkpoints, and $19$ of these are at EoS ($0.9\le q\le1.1$).
    Over the same peak LRs, $26$ of the $30$ checkpoints satisfy the EoC criterion.
    For peak LRs from $2.5\times10^{-3}$ on, the training LR exceeds the estimated critical LR ($q<0.9$) at $44$ of the $50$ checkpoints, and at every checkpoint of peak LR $0.04$ no LR factor in the grid stays bounded.
    In these five runs, the rows below LR factor $0.6$, the $-\lambda_{\min}$ curve, and the saddleness use the compensation coefficient $\extraWD$ estimated at LR factor $0.1$, while the rows from $0.6$ up use the one estimated at $0.6$. Because the two coefficients differ, a bounded factor $0.6$ sits above exploding lower factors at two checkpoints of peak LR $2.5\times10^{-3}$ and one of $0.01$.
    Each panel is one peak LR, each heatmap column is one of ten checkpoints in the stable phase, and the black line marks the largest LR factor that stays bounded.
    Saddleness above $0.7$ runs off the top of the axis, and blank heatmap cells were not simulated.
    See \Cref{fig-small-medium-large} for the panel format and \Cref{appendix:exp:scales} for the setup.}
    \label{fig:tpp20-large}
\end{figure}

\begin{figure}[p]
    \centering
    \includegraphics[width=0.9\linewidth]{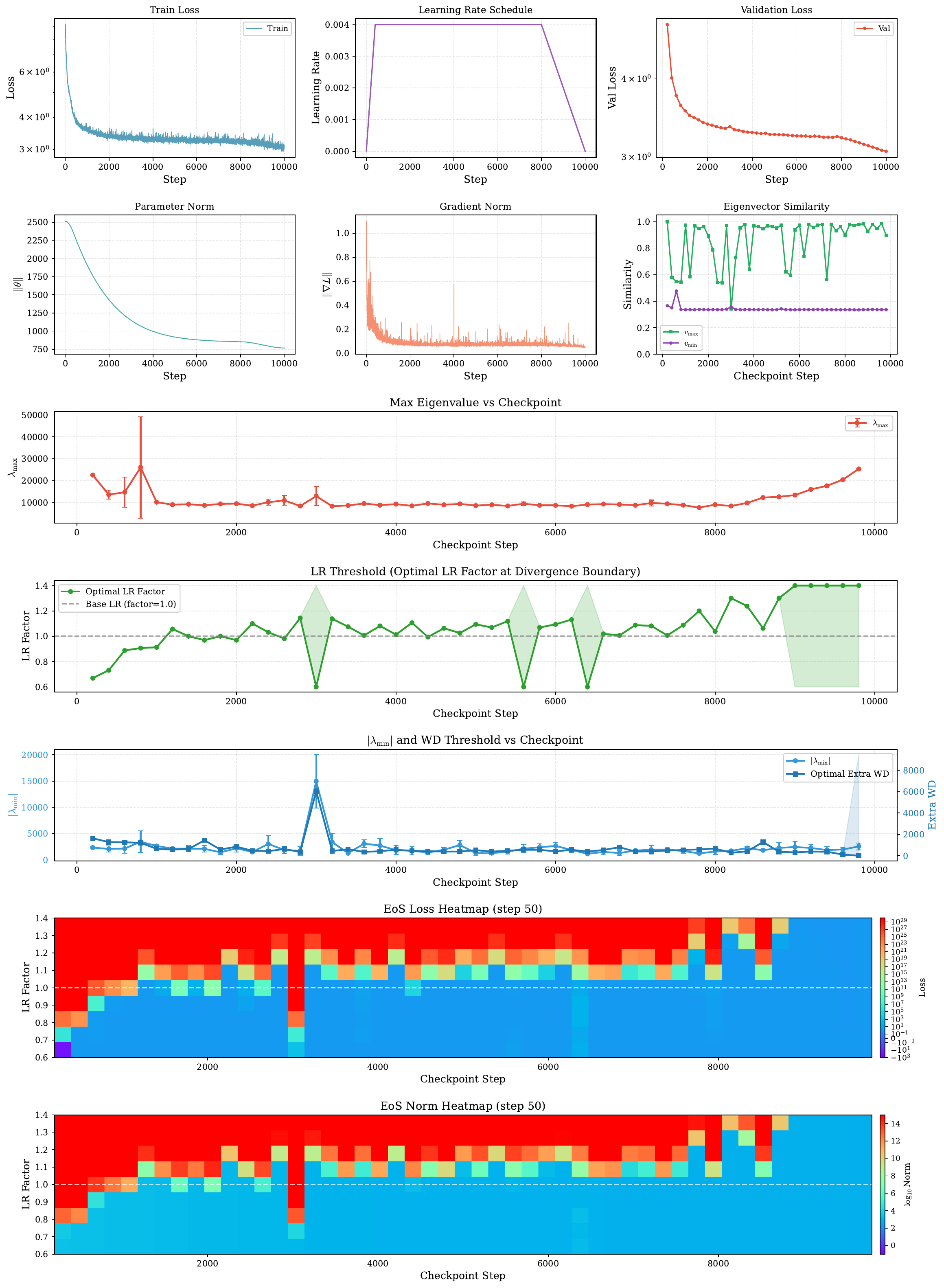}
    \caption{\textbf{Summary of a representative training run.}
    We show training dynamics for GPT-extra-small trained on FineWeb-10B using AdamW with LR $0.004$, weight decay $0.1$, and the WSD LR schedule.}
    \label{fig:training-summary}
\end{figure}
\clearpage

\section{Theoretical Analysis}\label{app:statements}\label{sec:Adaptive-Gradient-Methods}
\HWupdate{This appendix studies optimization dynamics on quadratic approximations of the loss around a fixed
training checkpoint, with the preconditioner held fixed. We describe how the mean and second
moment of the state (momentum and parameter displacement) evolve from step to step, and present
stability thresholds and error bounds for the EoC and EoS detectors.}
Proofs are in \Cref{app:proofs-moments,app:proof-power-iteration,fh:sec:appendix}.

\subsection{Setting and the Local Linear System}\label{app:setting}
\paragraph{The local system.}
We consider adaptive gradient methods (AGMs), including Adam~\citep{kingma2014adam} and AdamW~\citep{loshchilov2017decoupled}, with the updates of \eqref{eq:agm-main}, recalled here,
\begin{align}
    \vm_{t+1} &=  \beta_1 \cdot  \vm_t + (1-\beta_1) \cdot \grad \loss_{\batch_t}(\param_t) \label{eq:AGM1} \\
    \precond_{t+1} &= \Psi (\precond_t, \grad \loss_{\batch_t}(\param_t), \param_t) \label{eq:AGM2} \\
    \param_{t+1} &= \param_t - \lr \precond_{t+1}^{-1} \vm_{t+1} - \lr \wdc \param_t, \label{eq:AGM3}
\end{align}
where $\vm_t\in\R^d$ is the momentum and $\Psi$ maps a positive definite preconditioner and two vectors in $\R^d$ to a positive definite preconditioner.
This class includes SGD with and without momentum.
We assume that the preconditioner varies slowly along the training trajectory and freeze it over the short window of a local simulation at a checkpoint $\param_{t_0}$, writing $\mP\coloneqq\mP_{t_0}$.
This is an idealization. Adam's second-moment decay coefficient $\beta_2$ close to one motivates this approximation, but freezing $\mP_t$ still requires its change over the simulation window to be small.

Recall the mini-batch quadratic surrogate \eqref{eq:agm-quadratic-surrogate} from \Cref{app:agm-derivation}:
\begin{align}
    Q_{\batch_t}(\param) &\coloneqq \loss_{\batch_t}(\param_{t_0}) + (\param - \param_{t_0})^\top \grad \loss_{\batch_t}(\param_{t_0}) \notag \\
    &\quad + \tfrac{1}{2} (\param - \param_{t_0})^\top \hess \loss_{\batch_t}(\param_{t_0}) (\param - \param_{t_0}),\label{eq:Q_b_t}
\end{align}
where $\batch_t$ is the mini-batch sampled at local step $t$.
SGD with LR $\eta$ on this surrogate gives
\begin{align}
    \param_{t_0+t+1} = \param_{t_0+t} -  \eta \nabla Q_{\batch_t}(\param_{t_0+t}).\label{eq:update-SGD}
\end{align}
Write $\vvareps_t\coloneqq\param_{t_0+t}-\param_{t_0}$ for the displacement after $t$ local steps, as in the local AGM model of \Cref{app:agm-derivation}.
For the local simulation of an AGM, the momentum and parameter updates \eqref{eq:AGM1} and \eqref{eq:AGM3} are run on the surrogate \eqref{eq:Q_b_t}, with the gradient $\grad\loss_{\batch_t}(\param_t)$ replaced by $\grad Q_{\batch_t}(\param_{t_0+t})=\vg_{\batch_t}+\mH_{\batch_t}\vvareps_t$ and the preconditioner fixed at $\mP$. Here $\vg_{\batch_t}=\grad\loss_{\batch_t}(\param_{t_0})$ and $\mH_{\batch_t}=\hess\loss_{\batch_t}(\param_{t_0})$ are the mini-batch gradient and Hessian at the checkpoint, as in \Cref{sec:eoc-eos}. The local dynamics are
\begin{align}
  \vm_{t+1} &= \beta_1\vm_t+(1-\beta_1)
       (\vg_{\batch_t}+\mH_{\batch_t}\vvareps_t),\notag\\
  \vvareps_{t+1} &= \vvareps_t-\eta\mP^{-1}\vm_{t+1}
       -\eta\wdc(\vvareps_t+\param_{t_0}).\label{eq:local-agm-dynamics}
\end{align}

\paragraph{Compensation.}
Compensation adds a quadratic penalty to the local surrogate, measured in the norm of the frozen preconditioner. As in \Cref{sec:method-eos}, we write $G_t$ for this loss:
\begin{align*}
    G_t(\vvareps) \coloneqq \loss_{\batch_t}(\param_{t_0}) + \vvareps^\top \grad \loss_{\batch_t}(\param_{t_0}) + \tfrac{1}{2} \vvareps^\top \hess \loss_{\batch_t}(\param_{t_0}) \vvareps + \tfrac{\extraWD}{2} \|\vvareps\|^2_{\mP}.
\end{align*}
Here $\|\vvareps\|_{\mP}^2=\vvareps^\top\mP\vvareps$, with $\mP$ fixed throughout the local simulation. For SGD, $\mP=\mI$, so the penalty is $\tfrac{\extraWD}{2}\|\vvareps\|^2$.
Its gradient is $\grad G_t(\vvareps)=\vg_{\batch_t}+(\mH_{\batch_t}+\extraWD\mP)\vvareps$, so the penalty replaces the mini-batch Hessian $\mH_{\batch_t}$ by $\mH_{\batch_t}+\extraWD\mP$ in \eqref{eq:local-agm-dynamics}, which in preconditioned coordinates is a shift of the curvature by $\extraWD\mI$ (\Cref{prop:augmented-system}).
This is the same shift that the Euclidean penalty in \Cref{subsec:stochastic-eos-definition} applies to the raw Hessian.
The decoupled weight decay keeps its coefficient $\wdc$, and $\extraWD=0$ recovers the uncompensated surrogate.

\begin{assumption}\label{ass:minibatch}
The pairs $(\mH_{\batch_t},\vg_{\batch_t})$, $t\ge0$, are i.i.d., independent of the initial state $(\vm_0,\vvareps_0)$, each $\mH_{\batch_t}$ is symmetric, and both have finite second moments.
Their means are the population derivatives at the checkpoint,
\[
 \Ebb[\vg_{\batch_t}]=\grad\loss(\param_{t_0}),\qquad
 \Ebb[\mH_{\batch_t}]=\hess\loss(\param_{t_0}).
\]
\end{assumption}
\HWupdate{At each local step, we sample a fresh mini-batch independently from the same distribution. The expected batch Hessian and gradient equal their population counterparts at the fixed checkpoint. Together with the finite second moment assumption, this allows us to derive recursions for the mean and second moment of the local iterates.}

\paragraph{The augmented linear system.}\label{app:linear-system-AGM}
Centering the displacement at a reference point and changing to preconditioned coordinates turns \eqref{eq:local-agm-dynamics} into the stochastic linear system \eqref{eq:agm-local-linear-system} of \Cref{app:agm-derivation}. The following proposition gives its matrices in terms of the preconditioned mini-batch Hessian $\mD_{\batch_t}\coloneqq\mP^{-1/2}\mH_{\batch_t}\mP^{-1/2}$ defined there.
\begin{proposition}\label{prop:augmented-system}
Let $\vx_\star\in\R^d$ be any fixed vector.
We express the local iterates relative to
$\param_{t_0}+\vx_\star$ and define
\begin{align*}
    \vz_t&=
    \begin{bmatrix}\eta\mP^{-1/2}\vm_t\\\mP^{1/2}(\vvareps_t-\vx_\star)
    \end{bmatrix},
 &\vh_t&=(1-\beta_1)\eta\mP^{-1/2}
           (\vg_{\batch_t}+\mH_{\batch_t}\vx_\star).
\end{align*}
Under the local dynamics \eqref{eq:local-agm-dynamics}, $\vz_{t+1}=\mM_t(\eta)\vz_t+\vq_t$, where
\begin{align}
    \mM_t(\eta)&=
    \begin{bmatrix}
 \beta_1\mI & (1-\beta_1)\eta\mD_{\batch_t}\\
 -\beta_1\mI & (1-\eta\wdc)\mI-(1-\beta_1)\eta\mD_{\batch_t}
 \end{bmatrix},
    &\vq_t&=
    \begin{bmatrix}
 \vh_t\\-\vh_t-\eta\wdc\mP^{1/2}(\vx_\star+\param_{t_0})
 \end{bmatrix}.
 \label{eq:agm-augmented-matrix}
\end{align}
With compensation, the same holds with $\mD_{\batch_t}$ replaced by $\mD_{\batch_t}+\extraWD\mI$ in $\mM_t(\eta)$ and $\mH_{\batch_t}$ replaced by $\mH_{\batch_t}+\extraWD\mP$ in $\vh_t$. The transition matrix becomes
\begin{align}
  \widetilde\mM_t(\eta)
  =
  \begin{bmatrix}
    \beta_1\mI
    &
    (1-\beta_1)\eta(\mD_{\batch_t}+\extraWD\mI)
    \\
    -\beta_1\mI
    &
    (1-\eta\wdc)\mI
    -(1-\beta_1)\eta(\mD_{\batch_t}+\extraWD\mI)
  \end{bmatrix}.
  \label{eq:comp-agm-augmented-matrix}
\end{align}
Under \Cref{ass:minibatch}, $(\mM_t(\eta),\vq_t)$ is a function of $\batch_t$ alone and is independent of $\vz_t$.
\end{proposition}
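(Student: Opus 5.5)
The proof is a direct substitution. We rewrite \eqref{eq:local-agm-dynamics} in the variables $\vz_t$ and read off the coefficients. We start from the momentum update, multiplied by $\eta\mP^{-1/2}$:
\[
\eta\mP^{-1/2}\vm_{t+1}=\beta_1\,\eta\mP^{-1/2}\vm_t+(1-\beta_1)\eta\mP^{-1/2}\bigl(\vg_{\batch_t}+\mH_{\batch_t}\vx_\star\bigr)+(1-\beta_1)\eta\mP^{-1/2}\mH_{\batch_t}(\vvareps_t-\vx_\star).
\]
Here we split $\vvareps_t=(\vvareps_t-\vx_\star)+\vx_\star$. We then insert $\mI=\mP^{-1/2}\mP^{1/2}$ in the last term, so that it becomes $(1-\beta_1)\eta\mD_{\batch_t}\,\mP^{1/2}(\vvareps_t-\vx_\star)$. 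This identifies the top block row $[\beta_1\mI,\ (1-\beta_1)\eta\mD_{\batch_t}]$ and the top forcing $\vh_t$.

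Next we treat the displacement update. We subtract $\vx_\star$ and multiply by $\mP^{1/2}$:
\[
\mP^{1/2}(\vvareps_{t+1}-\vx_\star)=(1-\eta\wdc)\mP^{1/2}(\vvareps_t-\vx_\star)-\mP^{1/2}\cdot\eta\mP^{-1}\vm_{t+1}-\eta\wdc\mP^{1/2}(\vx_\star+\param_{t_0}).
\]
This uses $\vvareps_t+\param_{t_0}=(\vvareps_t-\vx_\star)+(\vx_\star+\param_{t_0})$. The middle term equals $-\eta\mP^{-1/2}\vm_{t+1}$, which is minus the top component of $\vz_{t+1}$ just computed. Substituting that expression yields the bottom block row $[-\beta_1\mI,\ (1-\eta\wdc)\mI-(1-\beta_1)\eta\mD_{\batch_t}]$ and the bottom forcing $-\vh_t-\eta\wdc\mP^{1/2}(\vx_\star+\param_{t_0})$. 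This proves \eqref{eq:agm-augmented-matrix}.

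For the compensated case, the gradient of $G_t$ is $\vg_{\batch_t}+(\mH_{\batch_t}+\extraWD\mP)\vvareps$. The same computation therefore applies verbatim with $\mH_{\batch_t}$ replaced by $\mH_{\batch_t}+\extraWD\mP$. Since $\mP^{-1/2}(\mH_{\batch_t}+\extraWD\mP)\mP^{-1/2}=\mD_{\batch_t}+\extraWD\mI$, this gives \eqref{eq:comp-agm-augmented-matrix}, and $\vh_t$ acquires the stated replacement.

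Finally, independence. $\mM_t(\eta)$ and $\vq_t$ are deterministic functions of $(\mH_{\batch_t},\vg_{\batch_t})$ together with fixed quantities ($\mP$, $\eta$, $\beta_1$, $\wdc$, $\extraWD$, $\vx_\star$, $\param_{t_0}$). By induction, $\vz_t$ is a measurable function of the initial state and of $(\mH_{\batch_s},\vg_{\batch_s})_{s<t}$. \Cref{ass:minibatch} makes the pairs i.i.d.\ and independent of the initial state, so $(\mM_t,\vq_t)$ is independent of $\vz_t$.

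No step is a genuine obstacle. The only care needed is bookkeeping: the bottom row must use the already-updated momentum $\vm_{t+1}$, not $\vm_t$, which is what produces the $-\beta_1\mI$ block and the $-\vh_t$ forcing.
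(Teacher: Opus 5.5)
Your proposal is correct and follows the paper's proof essentially step for step. The paper likewise multiplies the momentum update by $\eta\mP^{-1/2}$ and inserts $\mP^{-1/2}\mP^{1/2}=\mI$, then centers and rescales the displacement update and substitutes the already-updated momentum. It handles compensation through $\mP^{-1/2}(\mH_{\batch_t}+\extraWD\mP)\mP^{-1/2}=\mD_{\batch_t}+\extraWD\mI$, and obtains independence because $\vz_t$ depends only on the initial state and the earlier batches.
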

\HWupdate{The choice of $\vx_\star$ changes the additive term $\vq_t$ but leaves $\mM_t(\eta)$ unchanged. For two trajectories using the same batches, $\vq_t$ cancels when we subtract their updates, so $\mM_t(\eta)$ determines how their difference evolves. Setting $\vx_\star=\vzero$ recovers the local system in \eqref{eq:agm-local-linear-system}. Compensation adds $\extraWD\mI$ to the preconditioned batch Hessian in the momentum update, while the weight decay update remains unchanged. The proof is given in \Cref{app:proof-linear-system}.

\paragraph{Moment recursions and mean square stability.}
\Cref{prop:augmented-system} expresses the local AGM updates as
$\vz_{t+1}=\mM_t(\eta)\vz_t+\vq_t$.
We now use this representation to study whether
$\Ebb[\|\vz_t\|^2]$ remains bounded.
For a fixed LR $\eta$, we write $\mM_t=\mM_t(\eta)$.
The following proposition gives the recursions for the mean
and second moment, together with a sufficient condition
for mean square stability.}

\begin{proposition}[Moment recursion of a stochastic linear system]\label{prop:moment-recursion}
Let $\vz_{t+1}=\mM_t\vz_t+\vq_t$, where $\vz_0$ has finite second moment and the pairs $(\mM_t,\vq_t)$, $t\ge0$, are i.i.d., have finite second moments, and are independent of $\vz_0$. Let $\mM\coloneqq\Ebb[\mM_t]$, $\vz_t^{(1)}\coloneqq\Ebb[\vz_t]$ and $\vz^{(2)}_t\coloneqq\vectorize(\Ebb[\vz_t\vz_t^\top])$, and set
\begin{align*}
    \mS\coloneqq\Ebb[\vq_t \otimes \mM_t + \mM_t \otimes \vq_t],\quad\vb_{\mathrm{quad}}\coloneqq\vectorize(\Ebb[\vq_t\vq_t^\top]).
\end{align*}
Then
\begin{align}
    \begin{bmatrix}
        \vz_{t+1}^{(1)}\\
        \vz_{t+1}^{(2)}
    \end{bmatrix} =
    \begin{bmatrix}
        \mM&0\\
        \mS&\Ebb[\mM_t \otimes \mM_t]
    \end{bmatrix}
    \begin{bmatrix}
        \vz_{t}^{(1)}\\
        \vz_{t}^{(2)}
    \end{bmatrix}+
    \begin{bmatrix}
        \Ebb[\vq_t]\\
        \vb_{\mathrm{quad}}
    \end{bmatrix},\label{eq:generic-moment-recursion}
\end{align}
and the spectral radius of the block matrix is $\max\{\rho(\mM),\rho(\Ebb[\mM_t\otimes\mM_t])\}$.
If $\rho(\Ebb[\mM_t\otimes\mM_t])<1$, both moments converge to finite limits, so $\sup_{t\ge0}\Ebb\|\vz_t\|^2<\infty$. The transition $\mM_t$ and the forcing $\vq_t$ may be correlated.
\end{proposition}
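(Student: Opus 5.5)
Following the structure of the statement, we first derive the two recursions in \eqref{eq:generic-moment-recursion}, then read off the spectral radius from the block-triangular form, and finally deduce boundedness from the spectral condition.

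\textbf{The recursions.} The key structural fact is that $(\mM_t,\vq_t)$ is independent of $\vz_t$. This holds because $\vz_t$ is a measurable function of $\vz_0$ and $(\mM_s,\vq_s)_{s<t}$, which are independent of the fresh pair at time $t$. Taking expectations in $\vz_{t+1}=\mM_t\vz_t+\vq_t$ and factoring $\Ebb[\mM_t\vz_t]=\mM\,\Ebb[\vz_t]$ gives the first block row. For the second moment we expand
\[
\vz_{t+1}\vz_{t+1}^\top=\mM_t\vz_t\vz_t^\top\mM_t^\top+\mM_t\vz_t\vq_t^\top+\vq_t\vz_t^\top\mM_t^\top+\vq_t\vq_t^\top ,
\]
vectorize, and apply $\vectorize(\mA\mX\mB^\top)=(\mB\otimes\mA)\vectorize(\mX)$. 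The quadratic term becomes $(\mM_t\otimes\mM_t)\vectorize(\vz_t\vz_t^\top)$. The cross terms become $(\vq_t\otimes\mM_t)\vz_t$ and $(\mM_t\otimes\vq_t)\vz_t$, using $\vectorize(\va\vb^\top)=\vb\otimes\va$ and the mixed-product rule with $\vz_t=\vz_t\otimes 1$. Taking expectations and using independence to factor each product gives $\Ebb[\mM_t\otimes\mM_t]\vz_t^{(2)}+\mS\vz_t^{(1)}+\vb_{\mathrm{quad}}$. Correlation between $\mM_t$ and $\vq_t$ causes no difficulty, because both are kept together inside the expectations defining $\mS$ and $\vb_{\mathrm{quad}}$. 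Finite second moments make every expectation finite; in particular $\Ebb\|\vz_t\|^2<\infty$ follows inductively by Cauchy--Schwarz.

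\textbf{Spectral radius.} The transition matrix is block lower triangular. Its characteristic polynomial therefore factors as the product of those of the diagonal blocks, so its spectrum is the union of the two spectra, and the spectral radius is $\max\{\rho(\mM),\rho(\Ebb[\mM_t\otimes\mM_t])\}$.

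\textbf{Stability.} This is the only step that needs an idea: we must show that $\rho(\Ebb[\mM_t\otimes\mM_t])<1$ forces $\rho(\mM)<1$. Take an eigenpair $\mM\vv=\mu\vv$ with $\vv\ne0$. Then
\[
\Ebb[\mM_t\otimes\mM_t]\,(\vv\otimes\bar\vv)\quad\text{has }\ (\bar\vv\otimes\vv)^*\text{-component}\ \ldots
\]
but a cleaner route uses $\Ebb[\mM_t\otimes\mM_t]\succeq\mM\otimes\mM$ in the positive-cone sense. Concretely, consider the operator $\mathcal T(\mX)=\Ebb[\mM_t\mX\mM_t^\top]$, which is $\Ebb[\mM_t\otimes\mM_t]$ acting on $\vectorize(\mX)$. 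By Jensen, $\Ebb[\mM_t\mX\mM_t^\top]\succeq\mM\mX\mM^\top$ for $\mX\succeq0$, and iterating gives $\mathcal T^k(\mX)\succeq\mM^k\mX(\mM^k)^\top$. Since $\mathcal T^k\to0$ when $\rho(\mathcal T)<1$, we get $\mM^k\mX(\mM^k)^\top\to0$ for $\mX=\mI$, hence $\mM^k\to0$ and $\rho(\mM)<1$. This handles complex eigenvalues without extra care, which is why I would use the cone argument rather than the eigenvector computation.

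\textbf{Conclusion.} The full block matrix now has spectral radius below one. The affine recursion $\vy_{t+1}=\mathcal G\vy_t+\vc$ with $\rho(\mathcal G)<1$ converges to $(\mI-\mathcal G)^{-1}\vc$, for example via a Neumann series or Gelfand's formula. In particular $\vz_t^{(2)}$ is bounded, and $\Ebb\|\vz_t\|^2=\operatorname{tr}\Ebb[\vz_t\vz_t^\top]$ is bounded uniformly in $t$.
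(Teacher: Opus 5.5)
Your proof is correct and follows the paper's route. Independence of $(\mM_t,\vq_t)$ from $\vz_t$ gives both recursions by vectorization, the block lower-triangular form gives the spectral radius, and once $\rho(\mM)<1$ holds the affine moment recursion converges by a Neumann series.

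The only difference is minor. You obtain $\rho(\mM)<1$ directly from the iterated Jensen bound $\mathcal T^k(\mX)\succeq\mM^k\mX(\mM^k)^\top$ on the full Kronecker map. The paper instead cites its mean-radius inequality $\rho(\mM)^2\le\rho(\Ebb[\mM_t\otimes\mM_t])$, which it proves with the same Jensen idea applied to trajectories. Two small fixes: delete the abandoned eigenvector sentence, and note that the finite-second-moment induction relies on independence to factor $\Ebb[\|\mM_t\|^2\|\vz_t\|^2]$, not on Cauchy--Schwarz alone.
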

Under \Cref{ass:minibatch}, the augmented system of \Cref{prop:augmented-system} is of this form with $\mM_t=\mM_t(\eta)$, and the compensated system with $\mM_t=\widetilde\mM_t(\eta)$.
The two diagonal blocks govern the mean and the second moment. The second-moment transition splits as
\[
 \Ebb[\mM_t \otimes \mM_t]
 = \mM \otimes \mM
 + \Ebb[(\mM_t-\mM)\otimes(\mM_t-\mM)],
\]
since $\otimes$ is bilinear and $\Ebb[\mM_t-\mM]=\zeroB$. For the augmented system this is the AGM analogue of the decomposition
\begin{align*}
    \Ebb[\mM_t^+\otimes\mM_t^+]
    =
    \Ebb[\mM_t^+]\otimes\Ebb[\mM_t^+]
    +
    \mathrm{Cov}(\mM_t^+)
\end{align*}
in \Cref{sec:main-results}. The mean term is the full-batch transition, and the centered term records the variation of the mini-batch Hessian.
The limiting moments of the forced system need not be zero.
The proof is in \Cref{app:proof-moment-recursion}.

We call the homogeneous system \emph{strictly mean-square stable} if $\Ebb\|\vz_t\|^2\to0$ for every deterministic $\vz_0$. By the second-moment characterization lemma (\Cref{fh:lem:covariance}) this holds if and only if $\rho(\Ebb[\mM_t(\eta)\otimes\mM_t(\eta)])<1$. For the same state space, strict homogeneous stability implies bounded moments for the affine system under \Cref{prop:moment-recursion}. Comparisons of their critical LRs must also project onto the same subspace, as we explain in \Cref{app:critical}.

For SGD the displacement block of the augmented system evolves on its own, because the momentum is the current gradient. Applying \Cref{prop:moment-recursion} to that block gives the moment recursion of \Cref{prop:stochastic-quadratic-moment-dynamics} for the surrogate of a checkpoint. The stability statements follow in \Cref{app:critical}.
\begin{corollary}[Moment recursion for SGD]
\label{thm:linear-system-sgd}
Let \Cref{ass:minibatch} hold, let $\beta_1=0$, $\mP=\mI$ and $\wdc=0$, and suppose there exists a $\param_\star$ such that
\begin{align}
    \nabla \loss (\param_{t_0}) + \hess \loss (\param_{t_0})(\param_\star-\param_{t_0}) =0. \label{eq:existence-of-optima}
\end{align}
Under the SGD update \eqref{eq:update-SGD}, the first and second moments
\begin{align*}
\vvareps_t^{(1)} &\coloneqq \Ebb \big[\param_{t_0+t} - \param_\star \big], \\
\vvareps_t^{(2)} &\coloneqq \vectorize \Big(\Ebb [(\param_{t_0+t} - \param_\star)(\param_{t_0+t} -\param_\star)^\top] \Big)
\end{align*}
satisfy
\begin{align}
    \begin{bmatrix}
        \vvareps_{t+1}^{(1)}\\
        \vvareps_{t+1}^{(2)}
    \end{bmatrix} =
    \begin{bmatrix}
        \Ebb[\mM_t]&\zeroB\\
        \mS&\Ebb[\mM_t \otimes \mM_t]
    \end{bmatrix}
    \begin{bmatrix}
        \vvareps_{t}^{(1)}\\
        \vvareps_{t}^{(2)}
    \end{bmatrix}+
    \begin{bmatrix}
        \zeroB\\
        \vb_{\mathrm{quad}}
    \end{bmatrix},\label{eq:linear-system-SGD}
\end{align}
where
\begin{align*}
\mM_t &\coloneqq \mI - \lr \mH_{\batch_t}, &
\vr_t&\coloneqq\vg_{\batch_t}+\mH_{\batch_t}(\param_\star-\param_{t_0}),\\
\mS&\coloneqq-\eta\,\Ebb[\mM_t\otimes\vr_t+\vr_t\otimes\mM_t], &
\vb_{\mathrm{quad}}&\coloneqq\eta^2\vectorize\bigl(\Ebb[\vr_t\vr_t^\top]\bigr),
\end{align*}
and $\Ebb[\vr_t]=\vzero$.
\end{corollary}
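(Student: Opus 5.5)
The plan is to specialize \Cref{prop:moment-recursion} to the displacement block of the SGD system, re-centered at $\param_\star$. With $\beta_1=0$, $\mP=\mI$ and $\wdc=0$, the momentum equals the current surrogate gradient, so the local update \eqref{eq:update-SGD} closes on the displacement alone. We write $\vy_t\coloneqq\param_{t_0+t}-\param_\star$ and compute $\grad Q_{\batch_t}(\param_{t_0+t})=\vg_{\batch_t}+\mH_{\batch_t}(\param_{t_0+t}-\param_{t_0})$. Splitting $\param_{t_0+t}-\param_{t_0}=\vy_t+(\param_\star-\param_{t_0})$ gives
\[
\vy_{t+1}=(\mI-\lr\mH_{\batch_t})\vy_t-\lr\bigl(\vg_{\batch_t}+\mH_{\batch_t}(\param_\star-\param_{t_0})\bigr)=\mM_t\vy_t-\lr\vr_t .
\]
Taking expectations under \Cref{ass:minibatch} gives $\Ebb[\vr_t]=\grad\loss(\param_{t_0})+\hess\loss(\param_{t_0})(\param_\star-\param_{t_0})=\vzero$ by \eqref{eq:existence-of-optima}.

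Next we verify the hypotheses of \Cref{prop:moment-recursion} with forcing $\vq_t\coloneqq-\lr\vr_t$. The pairs $(\mM_t,\vq_t)$ are measurable functions of the i.i.d.\ pairs $(\mH_{\batch_t},\vg_{\batch_t})$, and they are affine in them, so they are i.i.d.\ with finite second moments. They are independent of $\vy_0$, which is deterministic since $\vy_0=\param_{t_0}-\param_\star$. Moreover $\vy_t$ depends only on batches before $t$, so it is independent of $(\mM_t,\vq_t)$.

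Applying \eqref{eq:generic-moment-recursion} then yields the block system. The mean block is $\Ebb[\mM_t]$, and the forcing entries are $\Ebb[\vq_t]=\vzero$ and $\vb_{\mathrm{quad}}=\vectorize(\Ebb[\vq_t\vq_t^\top])=\lr^2\vectorize(\Ebb[\vr_t\vr_t^\top])$. The cross term is $\Ebb[\vq_t\otimes\mM_t+\mM_t\otimes\vq_t]=-\lr\Ebb[\vr_t\otimes\mM_t+\mM_t\otimes\vr_t]$, which matches the stated $\mS$.

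The only step needing care is the Kronecker bookkeeping behind the cross term. We expand $\vectorize\Ebb[(\mM_t\vy_t+\vq_t)(\mM_t\vy_t+\vq_t)^\top]$ and use $\vectorize(\va\vb^\top)=\vb\otimes\va$ together with $(\mM\vy)\otimes\vq=(\mM\otimes\vq)\vy$. This confirms the ordering and shows that the formula is symmetric in the two factors. Independence then lets us replace $\vy_t$ by its mean, which is where $\vvareps^{(1)}_t$ enters. This is already encapsulated in \Cref{prop:moment-recursion}, so no new obstacle is expected beyond matching notation.
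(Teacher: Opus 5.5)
Your proposal is correct and follows the paper's proof essentially step for step. You center the iterates at $\param_\star$ to obtain $\vy_{t+1}=\mM_t\vy_t-\eta\vr_t$, use \eqref{eq:existence-of-optima} with \Cref{ass:minibatch} to get $\Ebb[\vr_t]=\vzero$, and read off the mean block, $\mS$ and $\vb_{\mathrm{quad}}$ by applying \Cref{prop:moment-recursion} with forcing $\vq_t=-\eta\vr_t$, which is exactly what the paper does.
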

The corollary gives closed recursions for the mean and the second moment, driven by the constant terms $\vb_{\mathrm{quad}}$ and $\mS$. It makes no stability claim by itself.
Applying the strict spectral condition of \Cref{prop:moment-recursion} gives the affine boundedness assertion in \Cref{prop:stochastic-quadratic-moment-dynamics}.
The stationarity condition \eqref{eq:existence-of-optima} removes the forcing from the mean recursion. It holds for some $\param_\star$ whenever $\grad\loss(\param_{t_0})$ lies in the range of $\hess\loss(\param_{t_0})$, in particular whenever the Hessian is invertible.
The proof is in \Cref{app:proof-moments}.

\subsection{Stability of the Local Dynamics}\label{app:critical}
Negative effective curvature makes the local dynamics unstable at every positive LR.
On the strictly positive subspace, the LRs for strict homogeneous stability form a single interval.

\paragraph{Negative curvature.}
The first instability mode does not depend on the LR. Write $\mD=\Ebb[\mD_{\batch_t}]$ for the population preconditioned Hessian ($\mD_{t_0}$ in \Cref{app:agm-derivation}).
\begin{lemma}[Negative effective curvature]\label{app:agm-EoC}
Let \Cref{ass:minibatch} hold. If the effective curvature $\mA=\mD+\wdc\mI$ has a negative eigenvalue, then the homogeneous AGM dynamics $\vz_{t+1}=\mM_t(\eta)\vz_t$ of \Cref{prop:augmented-system} are not mean-square stable for any $\eta>0$.
\end{lemma}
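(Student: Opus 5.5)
The plan is to argue through the mean dynamics alone, as in \Cref{subsec:neg-divergence}. Under \Cref{ass:minibatch}, \Cref{prop:augmented-system} shows that $\mM_t(\eta)$ depends only on $\batch_t$ and is independent of $\vz_t$. Taking expectations in the homogeneous system $\vz_{t+1}=\mM_t(\eta)\vz_t$, which is the first block of \Cref{prop:moment-recursion} with $\vq_t=\vzero$, gives
\[
\Ebb[\vz_{t+1}]=\bar\mM\,\Ebb[\vz_t],
\]
where $\bar\mM\coloneqq\Ebb[\mM_t(\eta)]$ is obtained from $\mM_t(\eta)$ by replacing $\mD_{\batch_t}$ with $\mD=\Ebb[\mD_{\batch_t}]$. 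Since $\Ebb\|\vz_t\|^2\ge\|\Ebb\vz_t\|^2$ by Jensen's inequality, it suffices to exhibit one deterministic $\vz_0$ for which $\|\bar\mM^t\vz_0\|\to\infty$.

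To find such a $\vz_0$, let $\vu$ be a unit eigenvector of $\mA$ with eigenvalue $a<0$. Then $\mD\vu=d\vu$ with $d\coloneqq a-\wdc$. The two-dimensional subspace spanned by $(\vu,\vzero)$ and $(\vzero,\vu)$ is invariant under $\bar\mM$, and on it $\bar\mM$ acts by the $2\times2$ matrix
\[
\mC=\begin{bmatrix}\beta_1 & (1-\beta_1)\eta d\\ -\beta_1 & 1-\eta\wdc-(1-\beta_1)\eta d\end{bmatrix}.
\]
The key step is to compute its characteristic polynomial $p(x)=x^2-\operatorname{tr}(\mC)\,x+\det\mC$. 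A short calculation gives $\det\mC=\beta_1(1-\eta\wdc)$ and
\[
p(1)=(1-\beta_1)\,\eta\,(\wdc+d)=(1-\beta_1)\,\eta\,a .
\]
Since $\beta_1<1$, $\eta>0$ and $a<0$, we get $p(1)<0$. Because $p$ is a monic quadratic, $p(x)\to+\infty$ as $x\to\infty$, so $p$ has a real root $r>1$. This is the only step needing care: the algebra must be checked, but the sign argument avoids any case analysis on $\beta_1$ or $\wdc$. It also covers $\beta_1=0$, where $p(x)=x\,(x-1+\eta a)$ and the root is $r=1-\eta a>1$, matching \Cref{subsec:neg-divergence}.

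Finally, take a real eigenvector $(c_1,c_2)^\top$ of $\mC$ for the eigenvalue $r$, and set $\vz_0=(c_1\vu,\,c_2\vu)$. This is a nonzero deterministic initial state containing both a momentum and a displacement component. Then $\Ebb[\vz_t]=\bar\mM^t\vz_0=r^t\vz_0$, so
\[
\Ebb\|\vz_t\|^2\ge r^{2t}\|\vz_0\|^2\to\infty .
\]
Hence the homogeneous dynamics are not mean-square stable, for every $\eta>0$. The mean-square stability notion quantifies over every deterministic initial state, so exhibiting this single $\vz_0$ suffices.
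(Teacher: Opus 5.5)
Your proposal is correct and follows essentially the same route as the paper: restrict the mean transition to the $2\times2$ block along an eigendirection of $\mD$ with negative effective curvature, show that its characteristic polynomial at $1$ equals $(1-\beta_1)\eta a<0$, so a real eigenvalue $r>1$ exists, start from the corresponding eigenvector, and conclude with Jensen's inequality. Your explicit initial state $\vz_0=(c_1\vu,c_2\vu)$ and your check of the $\beta_1=0$ case are a more concrete version of the paper's choice of a right eigenvector of the mean transition.
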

Along an eigendirection of $\mD$ with negative effective curvature, the mean dynamics reduce to a $2\times2$ system with a real eigenvalue above one for every $\eta>0$. Momentum changes the rate of growth, not its existence. A smaller LR slows the escape and does not prevent it.
For SGD the rate is explicit. Write $\mH_{t_0}=\hess\loss(\param_{t_0})$ for the population Hessian at the checkpoint.
\begin{corollary}[Negative-curvature divergence]\label{cor:neg-curvature-divergence}
Under the assumptions of \Cref{thm:linear-system-sgd}, suppose
$\mH_{t_0}$ has a unit eigenvector $\vu$ with eigenvalue $\lambda<0$.
The population quadratic is unbounded below. For every $\eta>0$,
\[
\vu^\top\Ebb[\param_{t_0+t}-\param_\star]
=(1-\eta\lambda)^{t}\,\vu^\top\Ebb[\param_{t_0}-\param_\star].
\]
Thus the mean diverges whenever the initial mean has a nonzero component along $\vu$. The homogeneous system is not mean-square stable.
\end{corollary}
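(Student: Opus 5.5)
The plan is to reduce everything to the mean recursion of \Cref{thm:linear-system-sgd} and then project it onto the eigenvector $\vu$. Under \Cref{ass:minibatch} with $\beta_1=0$, $\mP=\mI$ and $\wdc=0$, set $\vvareps_t^{(1)}=\Ebb[\param_{t_0+t}-\param_\star]$. The first block row of \eqref{eq:linear-system-SGD} gives
\[
 \vvareps_{t+1}^{(1)}=\Ebb[\mM_t]\vvareps_t^{(1)}=(\mI-\eta\mH_{t_0})\vvareps_t^{(1)},
\]
because $\Ebb[\mH_{\batch_t}]=\mH_{t_0}$ and the forcing in the mean vanishes by $\Ebb[\vr_t]=\vzero$. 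This uses the independence of $\batch_t$ from the current iterate, which is already built into the corollary.

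Next, multiply on the left by $\vu^\top$. Since $\mH_{t_0}$ is symmetric and $\mH_{t_0}\vu=\lambda\vu$, we get $\vu^\top(\mI-\eta\mH_{t_0})=(1-\eta\lambda)\vu^\top$. The scalar $a_t=\vu^\top\vvareps_t^{(1)}$ therefore satisfies $a_{t+1}=(1-\eta\lambda)a_t$, and induction gives the displayed identity. Because $\lambda<0$ and $\eta>0$, we have $1-\eta\lambda>1$, so $|a_t|\to\infty$ whenever $a_0\neq0$.

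Unboundedness below of the population quadratic comes from restricting $Q_{t_0}$ to the ray $\param_\star+s\vu$. The stationarity condition \eqref{eq:existence-of-optima} removes the linear term, so $Q_{t_0}(\param_\star+s\vu)=Q_{t_0}(\param_\star)+\tfrac{\lambda}{2}s^2\to-\infty$.

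For the failure of mean-square stability, use Jensen's inequality in the form $\Ebb\|\vvareps\|^2\ge\|\Ebb\vvareps\|^2\ge|\vu^\top\Ebb\vvareps|^2$. The homogeneous system is the same recursion with $\vr_t$ dropped, $\vvareps_{t+1}=\mM_t\vvareps_t$, and its mean obeys the same recursion $\Ebb[\vvareps_{t+1}]=(\mI-\eta\mH_{t_0})\Ebb[\vvareps_t]$. Take the deterministic initialization $\vvareps_0=\vu$. Then $\Ebb\|\vvareps_t\|^2\ge(1-\eta\lambda)^{2t}\to\infty$, which violates \Cref{def:mean-square-stability}.

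Alternatively, one can cite \Cref{app:agm-EoC} directly with $\mA=\mH_{t_0}$. The main care point is routine: checking that conditioning on independence makes $\Ebb[\mH_{\batch_t}\vvareps_t]=\mH_{t_0}\Ebb[\vvareps_t]$. No step is a real obstacle.
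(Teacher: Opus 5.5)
Your proposal is correct and follows essentially the same route as the paper: project the mean recursion of \Cref{thm:linear-system-sgd} onto $\vu$, transfer the geometric growth of the mean to the second moment via Jensen's inequality (handling the homogeneous system by the same computation with $\vr_t\equiv\vzero$), and restrict the population quadratic to the line $\param_\star+s\vu$. The only difference is that you use stationarity to remove the linear term on that line explicitly. The paper instead just notes that the $s^2$ coefficient is $\lambda/2<0$, which suffices without stationarity.
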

This is the negative-curvature failure of \Cref{subsec:neg-divergence}, now for the surrogate of a checkpoint. The proofs of the lemma and the corollary are in \Cref{app:proof-corollaries}.

\paragraph{Critical LR.}\label{app:critical-lr-agm}
To isolate the instability induced by the LR, we restrict the dynamics to the subspace of positive effective curvature, as in \Cref{def:agm-critical-lr}.
Let $\Pi^+$
be the orthogonal projection onto the strictly positive eigenspace of the
symmetric effective mean $\mD+\wdc\mI$.
All projected matrices below act on $\operatorname{range}(\Pi^+)$.
Both the momentum and the displacement are restricted to this subspace, which we assume is nonempty.
Directions of zero effective curvature are dropped because their mean transition has an eigenvalue equal to one, so no LR would be strictly mean-square stable if they were kept.
Let $\mM_t^+(\eta)$ denote the restricted transition and define
\begin{align}
    \mK^+(\eta)
    \coloneqq
    \Ebb\left[\mM_t^+(\eta)\otimes\mM_t^+(\eta)\right].
    \label{eq:agm-projected-second-moment-operator}
\end{align}
The strictly mean-square stable LR set of the projected homogeneous system is
\begin{align}
    \mathcal S_{\rm AGM}(\param_{t_0})
    \coloneqq \{\eta>0:\rho(\mK^+(\eta))<1\}.
    \label{eq:agm-stable-lr-set-app}
\end{align}
\WCupdate{On the range of $\Pi^+$, the effective mean is positive definite by construction.}
The next theorem shows that under \Cref{ass:minibatch} this set is the interval $(0,\eta_\star)$ for a unique finite $\eta_\star$, even when $\beta_1>0$. Momentum makes the transition nonsymmetric but does not create a second stable interval.
\HWupdate{
\begin{theorem}[Unique stability threshold, restated from \Cref{fh:thm:decay-exact}]
\label{fh:thm:decay}
Let \Cref{ass:minibatch} hold.
Fix the preconditioner $\mP\succ0$, $0\le\beta_1<1$, $\wdc\ge0$, and the batch distribution.
Suppose $\Pi^+\ne0$ and
\[
    \vu^\top
    \bigl(\mP^{-1/2}\mH_{t_0}\mP^{-1/2}+\wdc\mI\bigr)
    \vu>0
    \qquad
    \text{for every }
    \vu\in\operatorname{range}(\Pi^+)\setminus\{\vzero\}.
\]
Then there exists a unique $\eta_\star\in(0,\infty)$ such that
\[
    \rho\bigl(\mK^+(\eta)\bigr)
    \begin{cases}
        <1, & 0<\eta<\eta_\star,\\
        =1, & \eta=\eta_\star,\\
        >1, & \eta>\eta_\star.
    \end{cases}
\]
Consequently, the projected homogeneous system
$\vz_{t+1}^+=\mM_t^+(\eta)\vz_t^+$
satisfies
\[
    \lim_{t\to\infty}\Ebb\bigl[\|\vz_t^+\|^2\bigr]=0
    \quad\text{for every deterministic initial state }\vz_0^+
\]
if and only if $0<\eta<\eta_\star$.
\end{theorem}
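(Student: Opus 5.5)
The plan is to treat $\mK^+(\eta)$ as the matrix of the positive linear map $\mathcal T_\eta(\mX)\coloneqq\Ebb[\mM_t^+(\eta)\,\mX\,\mM_t^+(\eta)^\top]$ on symmetric matrices. I will then prove three facts: stability for small $\eta$, instability for large $\eta$, and the fact that the stable set is downward closed, i.e.\ an interval.

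Because $\mathcal T_\eta$ preserves the cone $\mathbb S^+$, the Krein--Rutman (Perron--Frobenius) theorem gives two standard consequences. First, $\rho(\mK^+(\eta))$ is an eigenvalue of $\mathcal T_\eta$ with a PSD eigenvector. Second, $\rho(\mK^+(\eta))<1$ holds if and only if there is a Lyapunov certificate $\mX\succ0$ with $\mathcal T_\eta(\mX)\prec\mX$. Together with the second-moment characterization (\Cref{fh:lem:covariance}), this also yields the final ``if and only if'' about $\Ebb\|\vz_t^+\|^2\to0$. Finally, $\eta\mapsto\rho(\mK^+(\eta))$ is continuous, since the entries of $\mK^+(\eta)$ are polynomials in $\eta$.

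\textbf{Endpoints.} For small $\eta$, I will use the mean transition $\mM^+(\eta)=\Ebb\mM_t^+(\eta)$. Restricted to each eigendirection of the positive definite effective mean $\mA^+$ with eigenvalue $a>0$, it is the $2\times2$ heavy-ball matrix. At $\eta=0$ that matrix has eigenvalues $\beta_1$ and $1$, and first-order perturbation moves the unit eigenvalue to $1-\eta a+O(\eta^2)$. Hence $\rho(\mM^+(\eta)\otimes\mM^+(\eta))\le 1-c\eta$ for small $\eta$. The covariance part of $\mK^+(\eta)$ is $O(\eta^2)$, because the fluctuation of $\mM_t^+(\eta)$ is $\eta$ times the Hessian noise. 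A perturbation argument for the simple unit eigenvalue of $\mK^+(0)$, with the block for $\beta_1<1$ strictly inside the disk, then gives $\rho<1$ on some $(0,\eta_0)$. For large $\eta$, it suffices to test $\mathcal T_\eta$ on a PSD matrix concentrated on the displacement block along a top eigenvector of $\mA^+$. Its growth is of order $\eta^2a^2$, so $\rho(\mK^+(\eta))\ge\rho(\mM^+(\eta))^2\to\infty$, using Jensen in the PSD order: $\mathcal T_\eta(\mX)\succeq\mM^+\mX\mM^{+\top}$.

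\textbf{Interval (main obstacle).} For SGD the argument is clean. Here $\mathcal T_\eta(\mX)-\mX=-\eta(\mA\mX+\mX\mA)+\eta^2\Ebb[\mA_t\mX\mA_t]$. After dividing by $\eta$, this expression is increasing in $\eta$ because $\Ebb[\mA_t\mX\mA_t]\succeq0$. So a certificate at $\eta_1$ remains a certificate at every $\eta<\eta_1$.

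With momentum, I write $\mM_t^+(\eta)=\mM_0+\eta\mN_t$, which is affine in $\eta$ since the weight-decay term is also linear. Then
\[
\mathcal T_\eta(\mX)-\mX=(\mM_0\mX\mM_0^\top-\mX)+\eta\,\mathcal C_1(\mX)+\eta^2\,\Ebb[\mN_t\mX\mN_t^\top],
\]
which is matrix-convex in $\eta$. If the constant term is $\preceq0$, convexity interpolates between $\eta=0$ and a stable $\eta_1$ to give stability on all of $(0,\eta_1]$.

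The difficulty is that $\mM_0$ has eigenvalues $\beta_1$ and $1$, so an arbitrary certificate $\mX$ need not satisfy $\mM_0\mX\mM_0^\top\preceq\mX$. I would first try to fix this by a change of coordinates that diagonalizes $\mM_0$, separating a pure displacement mode with eigenvalue $1$ from a momentum mode with eigenvalue $\beta_1$. I would then try to show that the Perron eigenvector at a stable $\eta_1$ can be replaced by a certificate that is nonexpansive for $\mM_0$. The hope is that this replacement is possible because the unit-eigenvalue directions of $\mM_0$ are exactly those damped at first order by $\mA^+\succ0$.

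If that fails, the fallback is a crossing argument. Using the PSD Perron eigenvector $\mX_\eta$ and the left eigenvector $\mY_\eta$ at any $\eta$ with $\rho=1$, I would show $\langle\mY_\eta,\partial_\eta\mathcal T_\eta(\mX_\eta)\rangle>0$, so that $\rho$ can cross $1$ only upward. Combined with the two endpoint facts and continuity, this forces a single crossing point $\eta_\star$ with the stated trichotomy.
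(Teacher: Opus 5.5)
Your endpoint arguments and the SGD certificate argument are sound. The step that actually carries the theorem is not proved, however. With $\beta_1>0$, and with $\lambda_{\mathrm{WD}}>0$ so that the effective momentum $b=\beta_1(1-\eta\lambda_{\mathrm{WD}})$ itself moves with $\eta$, you never show that the strictly stable set is a single interval, nor that $\rho(\mK^+(\eta))>1$ for every $\eta$ beyond it.

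\textbf{Your first route has a concrete obstruction.} Work in the eigenbasis of $\mM_0=\operatorname{diag}(\beta_1\mI,\mI)$; the induced change of $\mX$ is a congruence, so it preserves the PSD order. A symmetric $\mX$ with blocks $\mX_{11},\mX_{12},\mX_{22}$ gives
\[
\mX-\mM_0\mX\mM_0^\top=\begin{pmatrix}(1-\beta_1^2)\mX_{11}&(1-\beta_1)\mX_{12}\\(1-\beta_1)\mX_{12}^\top&0\end{pmatrix}.
\]
This is PSD only if $\mX_{12}=0$. So the convexity interpolation needs a certificate at $\eta_1$ lying in a proper subspace, and nothing forces the open cone of certificates at $\eta_1$ to meet that subspace.

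\textbf{Your crossing route is incomplete.} It needs differentiability of $\eta\mapsto\rho(\mK^+(\eta))$ at every crossing. That requires a simple Perron root, which cone-preserving maps need not have, or a nonsmooth substitute. It also needs a strict sign that you have not computed. Already for SGD the derivative at a crossing is proportional to $\langle\mY,\eta\,\mathbb{E}[\mA_t\mX\mA_t]\rangle\ge0$, which is only nonnegative. With momentum, nothing you give makes the first-order term $\mathcal{C}_1$ signed.

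\textbf{The trichotomy is also not covered.} Downward closure of $\{\rho<1\}$ plus continuity yields $\{\rho<1\}=(0,\eta_\star)$ and $\rho(\eta_\star)=1$. It does not exclude $\rho\equiv1$ on an interval to the right of $\eta_\star$, which the statement forbids.

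\textbf{A smaller slip.} The unit eigenvalue of $\mK^+(0)$ has multiplicity $p^2$, not one. The small-$\eta$ claim still survives via uniform diagonalizability of the mean transition and Bauer--Fike.

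\textbf{The missing idea} is the paper's exact reduction of mean-square stability to a scalar inequality whose two sides are monotone in $\eta$.
\begin{itemize}
\item In history coordinates the projected dynamics are $\vx_{t+1}=[(1+b)\mI-\alpha(\mC+\mDelta_t)]\vx_t-b\vx_{t-1}$ with $\alpha=\eta(1-\beta_1)$, so the noise enters only the position block.
\item The factorization $\mathcal{I}-\mathcal{T}=(\mathcal{I}-\mathcal{T}_0)(\mathcal{I}-\mathcal{K}\mathcal{N})$ (\Cref{fh:prop:feedback}) and exact elimination of the cross and delayed blocks of the mean Lyapunov equation (\Cref{fh:lem:positionblock}) show the following. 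Stability holds if and only if $\mathcal{D}-\mathcal{V}$ is positive on nonzero PSD matrices, where $\mathcal{D}(\mX)=\frac{1-b}{\alpha}(\mC\mX+\mX\mC)-c_b\mC\mX\mC+k_b[\mC,[\mC,\mX]]$ and $\mathcal{V}(\mX)=\mathbb{E}[\mDelta_t\mX\mDelta_t]$. Equivalently, $(1-b)/\alpha>r_{\mC}(b)$ (\Cref{fh:thm:exact}).
\item Along the AGM curve the left side equals $1/\eta+\beta_1\lambda_{\mathrm{WD}}/(1-\beta_1)$ and strictly decreases. The right side $r_{\mC}(\beta_1(1-\eta\lambda_{\mathrm{WD}}))$ is nondecreasing in $\eta$, because $c_b$ decreases and $k_b$ increases in $b$. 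This gives the unique root.
\item Strict instability above the root also comes from the criterion, not from a derivative. Suppose $\rho\le1$ at such an $\eta$. Dividing trajectories by $s^t$ with $s>1$ gives a strictly stable recurrence with mean $\mC+u\mI$, momentum $b/s^2$ and step $\alpha/s$. Letting $s\downarrow1$ in the exact criterion forces $(1-b)/\alpha\ge r_{\mC}(b)$, a contradiction.
\end{itemize}
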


}
By \Cref{prop:agm-critical-lr-equivalence}, this threshold equals the AGM critical LR defined by bounded affine moments in \Cref{def:agm-critical-lr}.
\WCupdate{\Cref{fh:sec:main} bounds the finite-horizon error of the expected detector.}
The same statement holds for the compensated system of \Cref{prop:augmented-system} whenever $\mD+(\wdc+\extraWD)\mI\succ0$, since it is the same system with a shifted effective mean.
The exact stability criterion is stated in \Cref{fh:sec:criterion}. The proof of the AGM threshold statement is in \Cref{fh:sec:decay}.

Above the critical LR, the second moment of some initial state grows without bound.
\begin{corollary}[Divergence above the critical LR]\label{cor:pos-curvature-divergence}
Under the assumptions of \Cref{fh:thm:decay}, for every $\eta>\eta_\star$ there is a deterministic initial state $\vz_0$ of the projected homogeneous system for which $\Ebb\|\vz_t\|^2\to+\infty$.
\end{corollary}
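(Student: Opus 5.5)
We need to prove: for every $\eta>\eta_\star$ there is a deterministic initial state $\vz_0$ with $\Ebb\|\vz_t\|^2\to\infty$.

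The plan is to work entirely with the second-moment recursion of the homogeneous projected system. Since $\vz_{t+1}^+=\mM_t^+(\eta)\vz_t^+$ with i.i.d.\ transitions independent of the state, \Cref{prop:moment-recursion} (with zero forcing) gives $\vectorize(\Ebb[\vz_t\vz_t^\top])=\mK^+(\eta)^t\vectorize(\vz_0\vz_0^\top)$. By \Cref{fh:thm:decay}, $\rho(\mK^+(\eta))>1$ for every $\eta>\eta_\star$. The task is to turn this spectral statement into divergence from a \emph{rank-one} initial condition $\vz_0\vz_0^\top$, which is not a generic vector in the domain of $\mK^+$.

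The key structural fact I would use is that $\mK^+(\eta)$, viewed as the linear map $\mathcal K(\mX)=\Ebb[\mM_t^+\mX(\mM_t^+)^\top]$ on symmetric matrices, preserves the PSD cone. This cone is closed, convex, pointed and solid. By the Krein--Rutman (Perron--Frobenius for cones) theorem, $\rho(\mathcal K)$ is an eigenvalue with an eigenvector $\mX_\star\succeq0$, $\mX_\star\ne0$. One detail needs a separate check: $\mK^+$ acting on all of $\vectorize$-space and its restriction to symmetric matrices have the same spectral radius. This holds because $\mathcal K$ commutes with transposition, so the space splits into symmetric and antisymmetric parts. The antisymmetric part is handled by writing any $\mX$ as a combination of PSD matrices, or, more simply, by noting $|\langle \mY,\mathcal K^t(\mX)\rangle|$ is controlled by values on the cone. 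I would handle this through the dual map: the adjoint $\mathcal K^*(\mY)=\Ebb[(\mM_t^+)^\top\mY\mM_t^+]$ is also cone-preserving with the same spectral radius, so it has a PSD eigenvector $\mY_\star\ne0$ with $\mathcal K^*(\mY_\star)=\rho\mY_\star$.

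The argument then proceeds as follows. For any $\vz_0$,
\[
\Ebb[\vz_t^\top\mY_\star\vz_t]=\langle\mY_\star,\mathcal K^t(\vz_0\vz_0^\top)\rangle=\langle(\mathcal K^*)^t\mY_\star,\vz_0\vz_0^\top\rangle=\rho^t\,\vz_0^\top\mY_\star\vz_0 .
\]
Since $\mY_\star\succeq0$ is nonzero, I choose $\vz_0$ to be a top eigenvector of $\mY_\star$, so that $\vz_0^\top\mY_\star\vz_0>0$. Then $\|\mY_\star\|\,\Ebb\|\vz_t\|^2\ge\rho^t\vz_0^\top\mY_\star\vz_0\to\infty$ because $\rho>1$. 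This gives the claim with an explicit deterministic initial state.

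The main obstacle is justifying that the adjoint has a PSD Perron eigenvector for the value $\rho(\mK^+(\eta))$ itself, rather than some smaller cone-restricted radius. I would cite the finite-dimensional Krein--Rutman theorem for the cone $\mathbb S^+$, which applies directly since the PSD cone is a proper cone. I would then argue that the spectral radius of $\mathcal K^*$ on $\mathbb S$ equals $\rho(\mK^+(\eta))$ via the symmetric/antisymmetric decomposition. On the antisymmetric part, $\mathcal K^t$ is dominated: for antisymmetric $\mX$, the Hermitian matrices $\pm i\mX$ can be split into PSD parts, so growth there is bounded by growth on the cone. If this bookkeeping gets delicate, a fallback is to use the characterization already invoked in \Cref{fh:lem:covariance}, the second-moment characterization lemma, which presumably states that homogeneous strict stability for all deterministic initial states is equivalent to $\rho<1$. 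The same proof likely yields that the rank-one initial conditions span and detect the spectral radius. Combined with nonnegativity of $\Ebb\|\vz_t\|^2$, this gives divergence along a subsequence, upgraded to a full limit by the Perron argument above.
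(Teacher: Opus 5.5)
Your proposal is correct and follows essentially the same route as the paper. The paper also passes to the adjoint second-moment map $\mathcal T^*(\mX)=\Ebb[\mM_t^+(\eta)^\top\mX\,\mM_t^+(\eta)]$ on symmetric matrices, takes a nonzero PSD Perron eigenmatrix $\mW$ with $\mathcal T^*(\mW)=r\mW$ and $r>1$, picks $\vz_0$ with $\vz_0^\top\mW\vz_0>0$, and concludes $\Ebb\|\vz_t\|^2\ge r^t\vz_0^\top\mW\vz_0/\|\mW\|\to\infty$; the paper justifies that restricting to symmetric matrices preserves the spectral radius by a Cauchy--Schwarz bound on $\mathcal K^t(\vu\vv^\top)$, the same idea as your Hermitian-splitting argument, so your fallback paragraph is unnecessary.
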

The proof is in \Cref{app:proof-pos-curvature}. It uses only that the second-moment spectral radius exceeds one for $\eta>\eta_\star$.

\WCupdate{For SGD, $\beta_1=0$, $\mP=\mI$ and $\wdc=0$, the projected system reduces to the displacement restricted to $\operatorname{range}(\Pi^+_{t_0})$, where $\Pi^+_{t_0}$ is the spectral projector of $\mH_{t_0}$ onto its strictly positive eigenspace,}
\begin{align*}
  \mM_t^+(\eta)&=\WCupdate{\mI-\eta\Pi^+_{t_0}\mH_{\batch_t}\Pi^+_{t_0}},
\qquad \vx_{t+1}=\mM_t^+(\eta)\vx_t,\\
\lr_\star&=\sup\{\eta>0:\rho(\Ebb[\mM_t^+(\eta)\otimes\mM_t^+(\eta)])<1\}.
\end{align*}
In words, $\lr_\star$ is the largest LR below which every perturbation in the convex subspace decays in mean square, and \Cref{cor:pos-curvature-divergence} says that above it some perturbation grows without bound. This is the positive-curvature failure of \Cref{sec:stochastic-quadratic-stability}.
The stability notion in \Cref{def:agm-critical-lr} is the bounded affine notion of the main text.
\WCupdate{We project after every update, so a batch Hessian cannot move mass from the discarded directions back into the range of $\Pi^+_{t_0}$.}
\WCupdate{This is the projection of \Cref{def:stochastic-quadratic-critical-lr-general}, so for SGD the two critical LRs coincide.}
\Cref{prop:agm-critical-lr-equivalence} justifies computing this threshold from strict homogeneous stability.
Below the threshold, strict decay implies bounded affine moments.
Above it, differences of affine trajectories driven by the same batches rule out boundedness for every start.
The proof is in \Cref{app:proof-agm-critical-lr-equivalence}.
The two stable sets can differ at the endpoint, but their suprema agree.

\WCupdate{Zero-curvature directions must be dropped. For example, take SGD with $\mH_{\batch_t}=\operatorname{diag}(1,0)$ and residual gradient $\vr_t=(0,\xi_t)^\top$, where the $\xi_t$ are independent uniform signs. From zero initial error, the second coordinate satisfies}
\[
 \varepsilon_{t,2}=-\eta\sum_{s=0}^{t-1}\xi_s,
 \qquad \Ebb[\varepsilon_{t,2}^2]=t\eta^2.
\]
\WCupdate{If the zero-curvature coordinate were kept, no positive LR would give bounded second moments. The projection onto the positive eigenspace keeps only the first coordinate, and its critical LR is $2$.}

\paragraph{Mean-square and almost-sure stability.}\label{app:mean-square-vs-almost-sure}
Consider the homogeneous recursion $\vvareps_{t+1}=\mM_t\vvareps_t$ with i.i.d.\ $\mM_t$ of finite second moment. Recall the strict notion of \Cref{app:setting}. The recursion is strictly mean-square stable if $\Ebb[\|\vvareps_t\|^2]\to0$ for every deterministic initial state, and this holds if and only if the second-moment map below has spectral radius less than one.
The main-text definition (\Cref{def:mean-square-stability}) asks only for a bounded second moment.
The AGM critical LR in \Cref{def:agm-critical-lr} uses this same notion.
\Cref{prop:agm-critical-lr-equivalence} shows that its threshold agrees with the strict homogeneous threshold on the same positive-mean subspace.
Boundedness at the endpoint may depend on affine forcing.
The spectral analysis below and in \Cref{fh:sec:appendix} uses strict homogeneous stability.
\WCupdate{Exponential mean-square decay requires constants $K<\infty$ and $0<\varrho<1$ such that}
\[
    \Ebb[\|\vvareps_t\|^2]
    \leq K \varrho^t \|\vvareps_0\|^2.
\]
Since $\mM_t$ is independent of $\vvareps_t$, the second moment $\Ebb[\vvareps_t\vvareps_t^\top]$ evolves as
$\Ebb[\vvareps_{t+1}\vvareps_{t+1}^\top]=\mathcal{T}(\Ebb[\vvareps_t\vvareps_t^\top])$, where
\[
    \mathcal{T}(\mX)
    =
    \Ebb[\mM_t \mX \mM_t^\top].
\]
Its matrix representation is the $\Ebb[\mM_t\otimes\mM_t]$ of \Cref{prop:moment-recursion}, and strict mean-square stability is the spectral condition $\rho(\mathcal{T})<1$.
This characterization makes mean-square stability useful for analyzing SGD
\citep{wu2018sgdselects,ma2021linear,wuwangsu2022alignment,mulayoff2024exact}.
Almost-sure stability requires
$\|\vvareps_t\|\to 0$ with probability one.
Its exponential version is governed by the top Lyapunov exponent of the random matrix product,
\[
    \limsup_{t\to\infty}
    \frac{1}{t}
    \log \|\mM_{t-1}\cdots\mM_0\|
    <0
    \qquad \text{almost surely},
\]
under the usual integrability conditions
\citep{furstenberg1960products,bougerol2012products}.

The two notions are not equivalent.
\WCupdate{Exponential mean-square decay implies almost-sure exponential stability by Markov's inequality and the Borel--Cantelli lemma.}
The converse is false because rare expanding trajectories can dominate the second moment without changing the behavior of a typical sample path.
For example, consider the scalar recursion
$\varepsilon_{t+1}=a_t\varepsilon_t$, where the $a_t$ are i.i.d.\ and
\[
    a_t =
    \begin{cases}
        1/2, & \text{with probability } 0.99,\\
        10,  & \text{with probability } 0.01.
    \end{cases}
\]
Here
\begin{align*}
 \Ebb[\log a_t]&=0.99\log\tfrac12+0.01\log 10\approx-0.66<0,\\
 \Ebb[a_t^2]&=0.99\cdot\tfrac14+0.01\cdot100=1.2475>1.
\end{align*}
Since $\log|\varepsilon_t|=\log|\varepsilon_0|+\sum_{s=0}^{t-1}\log a_s$, the strong law of large numbers gives $\tfrac1t\log|\varepsilon_t|\to\Ebb[\log a_t]<0$ almost surely, so $|\varepsilon_t|\to 0$ almost surely, whereas $\Ebb[|\varepsilon_t|^2]=\Ebb[a_t^2]^{\,t}\,|\varepsilon_0|^2$ by independence, so
$\Ebb[|\varepsilon_t|^2]$ grows exponentially.
The second moment can therefore reveal growth that an almost-sure criterion does not detect.
We use mean-square stability because it controls the expected perturbation energy and admits a closed second-moment recursion.
The experiments use a single-trajectory diagnostic, whose threshold need not coincide with this population threshold.

\subsection{Stochastic Power Iteration for EoC}\label{app:proof-of-min-eigen}
We state the accuracy of the stochastic power iteration of
\Cref{sec:method-eoc}. The theorem separates the error due to the finite
horizon, the curvature noise, and the bisection resolution.

\paragraph{The system.}
We use the effective curvature of \Cref{app:critical}, writing the population preconditioned Hessian as $\mD_{t_0}$, the $\mD$ of that section:
\begin{align*}
 \mA&\coloneqq\mD_{t_0}+\wdc\mI,
 &\mA_{\batch_t}&\coloneqq
 \mP^{-1/2}\mH_{\batch_t}\mP^{-1/2}+\wdc\mI,\\
 \mDelta_t&\coloneqq\mA_{\batch_t}-\mA.
\end{align*}
Thus the deterministic weight-decay term is included in both curvatures and
cancels from the noise $\mDelta_t$. For SGD without weight decay,
$\mA=\mH_{t_0}$, so the same procedure estimates the minimum eigenvalue of the raw population Hessian.
Let $\vu_{\min}$ be a unit eigenvector of $\mA$ for its smallest eigenvalue.
The shift to estimate is
\begin{equation}\label{eq:pi-target}
 \lambda_\star\coloneqq\max\{0,-\lambda_{\min}(\mA)\},
\end{equation}
the smallest nonnegative shift making $\mA+\lambda\mI$ positive semidefinite.

A query at shift $\lambda\in[0,\Lambda]$ starts from the same fixed unit
vector $\vx_0$ and uses the inner stepsize $\etapi$ for $T_{\mathrm p}$ steps:
\begin{align}
 \vx_{t+1}
 &=\bigl[\mI-\etapi(\mA_{\batch_t}+\lambda\mI)\bigr]\vx_t\notag\\
 &=\bigl[\mM(\lambda)-\etapi\mDelta_t\bigr]\vx_t,
 \qquad t=0,\ldots,T_{\mathrm p}-1,
 \label{eq:pi-iteration}\\
 \mM(\lambda)&\coloneqq\mI-\etapi(\mA+\lambda\mI).
 \notag
\end{align}
Here $\etapi$ is the power-iteration stepsize, distinct from the training LR
$\eta$, and $T_{\mathrm p}$ is its horizon, distinct from the local-simulation
horizon $T$. A query explodes if
$\max_{0\le t\le T_{\mathrm p}}\|\vx_t\|>B$, where $B>1$ is the cutoff.
Starting from $[0,\Lambda]$, \Cref{alg:min_eig_stochastic_bisect} queries the
\WCupdate{midpoint of the current search interval. An exploding query replaces the lower}
endpoint and a non-exploding query replaces the upper endpoint. After $N$
queries the output $\widehat\lambda$ is the upper endpoint, and
$\widehat\lambda_{\min}=-\widehat\lambda$.

Within each query, let $\mathcal F_t$ contain all observations of earlier
queries and the current observations $\mDelta_0,\ldots,\mDelta_{t-1}$.
The queried shift and $\vx_t$ are therefore $\mathcal F_t$-measurable.
All vector norms below are Euclidean and matrix norms are spectral.

\begin{assumption}[Bounded curvature noise]\label{ass:pi-noise}
For each query and each step $t$, the symmetric noise $\mDelta_t$ is
$\mathcal F_{t+1}$-measurable and satisfies
\begin{equation}\label{eq:pi-noise}
 \Ebb[\mDelta_t\mid\mathcal F_t]=\zeroB,
 \qquad \|\mDelta_t\|\le\sigma\quad\text{almost surely},
\end{equation}
for a constant $\sigma>0$.
\end{assumption}
Fresh independent mini-batches, as in \Cref{app:setting}, give the conditional
mean-zero property. The almost-sure norm bound is an additional assumption
used by the concentration inequalities in the proof.

\begin{assumption}[\WCupdate{Search interval and initial overlap}]\label{ass:pi-bracket}
The population curvature satisfies $\lambda_{\min}(\mA)<0$ and
$\lambda_\star\le\Lambda$. The fixed initialization satisfies
\[
 \|\vx_0\|=1,
 \qquad |\vu_{\min}^{\top}\vx_0|\ge\varepsilon,
 \qquad 0<\varepsilon\le1.
\]
\end{assumption}
\WCupdate{The search interval must contain the target. The overlap supplies a nonzero initial}
component in the direction that grows below $\lambda_\star$.
Initializing with the checkpoint momentum does not by itself ensure this
condition.

\begin{assumption}[Power-iteration stepsize]\label{ass:pi-step}
The inner stepsize satisfies
\begin{equation}\label{eq:pi-lr-condition}
 \etapi>0,
 \qquad \etapi\bigl(\lambda_{\max}(\mA)+\Lambda\bigr)\le1.
\end{equation}
\end{assumption}
This makes the mean transition $\mM(\lambda)$ positive semidefinite for
every queried shift. Above $\lambda_\star$ its norm is
$1-\etapi(\lambda-\lambda_\star)$, which controls the probability of a
false explosion. The explosion argument below $\lambda_\star$ does not
need this stepsize condition.

The following result applies to any fixed finite horizon. It imposes no
upper bound on that horizon.
\begin{theorem}[Accuracy of the stochastic power iteration]
\label{thm:lambda_hat_bound}
Let \Cref{ass:pi-noise,ass:pi-bracket,ass:pi-step} hold. Fix a cutoff $B>1$,
positive integers $N$ and $T_{\mathrm p}$, and $0<\delta<1$. Define
\begin{align}
 g_-&\coloneqq
 \frac{4\etapi\sigma^2B^2}{\varepsilon^2}\log\frac N\delta,
 &g_+&\coloneqq
 2\etapi\sigma^2\max\left\{1,\frac{\log(N/\delta)}{\log B}\right\},
 \notag\\
 e_{\mathrm p}&\coloneqq
 \frac{(2B/\varepsilon)^{1/T_{\mathrm p}}-1}{\etapi}.
 \label{eq:pi-gaps}
\end{align}
Then the bisection output satisfies, with probability at least $1-\delta$,
\begin{equation}\label{eq:final_lambda_bound}
 \lambda_\star-\max\{g_-,e_{\mathrm p}\}
 \le\widehat\lambda
 \le\lambda_\star+g_++\frac{\Lambda}{2^N}.
\end{equation}
The finite-horizon term $e_{\mathrm{p}}$ obeys $e_{\mathrm p}\le g_-$ if and only if
\begin{equation}\label{eq:pi-Tmin}
 T_{\mathrm p}\ge
 \left\lceil\frac{\log(2B/\varepsilon)}
 {\log(1+\etapi g_-)}\right\rceil.
\end{equation}
\end{theorem}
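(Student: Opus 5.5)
The argument has two per-query bounds and one union bound over the $N$ bisection queries. Set the per-query failure budget to $\delta/N$; this is where the $\log(N/\delta)$ factors come from. We show two things. \emph{(a) No false stability below the target:} a query at shift $\lambda\le\lambda_\star-\max\{g_-,e_{\mathrm p}\}$ explodes with probability at least $1-\delta/N$. \emph{(b) No false explosion above the target:} a query at shift $\lambda\ge\lambda_\star+g_+$ stays stable with probability at least $1-\delta/N$. On the intersection of these events, bisection keeps the invariant that the lower endpoint is below $\lambda_\star+g_+$ and the upper endpoint is above $\lambda_\star-\max\{g_-,e_{\mathrm p}\}$. Initially $[0,\Lambda]\ni\lambda_\star$ by \Cref{ass:pi-bracket}. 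Since the final interval has length $\Lambda2^{-N}$, the output $\widehat\lambda$ (the upper endpoint) satisfies \eqref{eq:final_lambda_bound}. Because each query's shift is $\mathcal F_t$-measurable and its noise is fresh, the per-query bounds hold conditionally on the past, and the union bound applies directly.

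For (b), take $\lambda\ge\lambda_\star$. \Cref{ass:pi-step} gives $0\preceq\mM(\lambda)$ and $\|\mM(\lambda)\|=1-\etapi(\lambda-\lambda_\star)=:1-\etapi\gamma$. We track $Y_t=\log\|\vx_t\|^2$. Expanding $\|(\mM-\etapi\mDelta_t)\vx_t\|^2$ gives
\[
\|\vx_{t+1}\|^2\le\|\vx_t\|^2\bigl[(1-\etapi\gamma)^2+2\xi_t+\etapi^2\sigma^2\bigr],
\]
where $\xi_t=-\etapi\vx_t^\top\mM\mDelta_t\vx_t/\|\vx_t\|^2$ is a conditionally mean-zero increment bounded by $\etapi\sigma$. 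Taking logs and using $\log(1+u)\le u$, the drift per step is at most $-2\etapi\gamma+\etapi^2\sigma^2$ and the martingale increments are bounded. We then apply a maximal exponential-supermartingale (Freedman/Ville) inequality: $\exp(\theta Y_t-\text{drift})$ is a supermartingale, so $\Pr(\max_t Y_t\ge 2\log B)\le e^{-2\theta\log B}$ for the admissible $\theta$. Requiring this to be at most $\delta/N$ forces $\gamma\gtrsim\etapi\sigma^2\max\{1,\log(N/\delta)/\log B\}$, which is $g_+$. The maximum over $t$ costs nothing here, since Ville's inequality is uniform in time, and it imposes no horizon restriction.

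For (a), take $\lambda<\lambda_\star$ and project onto $\vu_{\min}$: $a_t=\vu_{\min}^\top\vx_t$. Then $a_{t+1}=(1+\etapi\kappa)a_t-\etapi\vu_{\min}^\top\mDelta_t\vx_t$ with $\kappa=\lambda_\star-\lambda$. Dividing by $(1+\etapi\kappa)^{t+1}$ gives a martingale $a_t/(1+\etapi\kappa)^t$ whose increments are bounded by $\etapi\sigma\|\vx_t\|(1+\etapi\kappa)^{-t-1}$. We argue on the event that no explosion has occurred yet, so $\|\vx_t\|\le B$. On that event the quadratic variation is at most $\etapi^2\sigma^2B^2\sum_t(1+\etapi\kappa)^{-2t-2}\lesssim\etapi\sigma^2B^2/\kappa$. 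Azuma/Freedman then shows the martingale stays above $\varepsilon/2$ in magnitude with probability at least $1-\delta/N$ provided $\kappa\ge 4\etapi\sigma^2B^2\varepsilon^{-2}\log(N/\delta)=g_-$. Consequently, if no explosion happens by time $T_{\mathrm p}$, then $\|\vx_{T_{\mathrm p}}\|\ge|a_{T_{\mathrm p}}|\ge(\varepsilon/2)(1+\etapi\kappa)^{T_{\mathrm p}}>B$ once $\kappa\ge e_{\mathrm p}$, which is a contradiction. So the query explodes. The stopping at the first exit above $B$ is what makes the bound $\|\vx_t\|\le B$ legitimate, and we apply it via optional stopping of the stopped martingale. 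The final equivalence \eqref{eq:pi-Tmin} is algebra: $e_{\mathrm p}\le g_-$ iff $(2B/\varepsilon)^{1/T_{\mathrm p}}\le1+\etapi g_-$, and taking logs gives the condition.

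The main obstacle is (b). Controlling the maximum of the norm over the whole horizon with a bound independent of $T_{\mathrm p}$ requires the right supermartingale. I would first try $\|\vx_t\|^{2\theta}$ with $\theta=\max\{1,\log(N/\delta)/\log B\}$-scaled exponents and check the drift via $\Ebb[(1+u)^\theta]$ bounds. The constants in $g_+$ suggest a choice of $\theta$ of this form.
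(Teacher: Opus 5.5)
Your proposal is correct and follows the paper's proof essentially step for step. Both use a stopped, rescaled bottom-eigenvector martingale with Azuma--Hoeffding below $\lambda_\star$, where your thresholds match $g_-$ and $e_{\mathrm p}$. Above $\lambda_\star$, both use the power $\|\vx_t\|^{2p}$ as a nonnegative supermartingale, via Hoeffding's lemma and Ville's inequality. Both then apply a conditional union bound with budget $\delta/N$ per query to the bisection invariant.

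One slip in (b) needs fixing. The one-step log-drift is $-2a+a^2+h^2$ with $a=\etapi\gamma$ and $h=\etapi\sigma$, not $-2a+h^2$. The paper absorbs $a^2\le a$, which is valid because $0\le a\le 1$ under \Cref{ass:pi-step}, to get drift $-(a-h^2)$. It then takes $p=(a-h^2)/(2h^2)$ and obtains crossing probability $B^{-(a-h^2)/h^2}$. This is at most $\delta/N$ once $\gamma\ge g_+$, where the $1$ in the $\max\{1,\cdot\}$ guarantees $a\ge 2h^2$.
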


\paragraph{Interpretation of the bound.}
The lower error is the larger of the noise term $g_-$ and the finite-horizon
term $e_{\mathrm p}$. For fixed $B$, $\varepsilon$, and $\etapi$,
\[
 e_{\mathrm p}
 =\frac{\log(2B/\varepsilon)}{\etapi T_{\mathrm p}}
   +O(T_{\mathrm p}^{-2})
 \qquad\text{as }T_{\mathrm p}\to\infty.
\]
This is the margin needed for the initial component of size $\varepsilon$
to grow past $B$ within the allotted steps. Above $\lambda_\star$, the
error consists of the noise term $g_+$ and the bisection resolution
$\Lambda2^{-N}$. The displayed noise terms do not decrease with the horizon
when the stepsize, cutoff, overlap, confidence, and number of queries are
fixed. A longer horizon therefore reduces the finite-horizon term but does
not make this bound vanish at a fixed stepsize.
\WCupdate{The theorem assumes the search-interval condition and the overlap and does not verify them.}
It also assumes that the population minimum is negative. It bounds the
estimation error under that assumption and does not certify the sign of the
population minimum from the sign of the returned estimate.
Its probability guarantee is for each fixed choice of $T_{\mathrm p}$.
The proof is in \Cref{app:proof-power-iteration}.

\clearpage
\subsection{Compensated Local Simulation for EoS}\label{fh:sec:main}
\HWupdate{Our goal is to estimate the critical LR $\lrstar$ used to define EoS. The detector runs a compensated local model for only $T$ steps. Compensation changes the system being tested, whose critical LR we denote by $\lrP$, and a finite simulation can accept LRs above this threshold. \Cref{fh:thm:main} bounds the resulting error of the expected detector by accounting for these two effects. We first specify the systems and the detector, then introduce the quantities needed to bound each error.}

\paragraph{Curvature at the checkpoint.}

We continue with the checkpoint, frozen preconditioner and local model of \Cref{app:setting}. Recall the effective curvature and its mini-batch version from \Cref{app:proof-of-min-eigen},
\begin{equation}\label{fh:eq:effective}
 \mA\coloneqq\mD_{t_0}+\wdc\mI,\qquad
 \mA_{\batch_t}\coloneqq\mP^{-1/2}\mH_{\batch_t}\mP^{-1/2}+\wdc\mI,\qquad
 \mD_{t_0}\coloneqq\mP^{-1/2}\mH_{t_0}\mP^{-1/2},
\end{equation}
and $\mDelta_t\coloneqq\mA_{\batch_t}-\mA$ is the deviation of the mini-batch effective curvature from its mean. It is symmetric with $\Ebb\mDelta_t=0$. Let $\Pi^+$ be the orthogonal projector onto the strictly positive eigenspace of $\mA$, of dimension $p\ge1$, and choose orthogonal coordinates adapted to it, so that
\[
 \mA=\operatorname{diag}(\mA^+,\mA^-),\qquad\mA^+\succ0,\qquad\mA^-\preceq0,
\]
\WCupdate{where $\mA^+$ is the $p\times p$ block on the positive eigenspace and $\mA^-$ the block on the discarded directions, absent when $p=d$. For a symmetric $d\times d$ matrix $\mM$ written in these coordinates, $\mM^+$ denotes its $p\times p$ block on the positive eigenspace and $\mM^-$ its discarded block, so that $\Pi^+\mM\Pi^+=\operatorname{diag}(\mM^+,0)$ is $\mM^+$ padded with zeros. In particular $\mDelta_t^+$ is the block of the noise on the positive eigenspace, of the same size as $\mA^+$, and $\mDelta_t^\perp\coloneqq\mDelta_t-\Pi^+\mDelta_t\Pi^+$ collects the discarded and cross blocks. We use $L\coloneqq\lambda_{\max}(\mA)=\lambda_{\max}(\mA^+)$ throughout. Thus the compensated mean has largest eigenvalue $L+\extraWD$. The compensation shifts the effective curvature to $\mA_{\mathsf P}\coloneqq\mA+\extraWD\mI$.}

\paragraph{The compensated local simulation.}
The detector runs the local AGM update of \Cref{app:setting} on the compensated loss $G_t$. We retain $\vvareps_t$ for the displacement from the checkpoint, so the simulated parameter is $\param_{t_0}+\vvareps_t$. A run at LR $\lr>0$ from a prescribed displacement $\vvareps_0$ and the checkpoint momentum $\vm_0=\vm_{t_0}$ follows
\begin{equation}\label{fh:eq:compensated-model}
\begin{aligned}
 \vm_{t+1}&=\beta_1\vm_t+(1-\beta_1)\bigl[\vg_{\batch_t}+(\mH_{\batch_t}+\extraWD\mP)\vvareps_t\bigr],\\
 \vvareps_{t+1}&=(1-\lr\wdc)\vvareps_t-\lr\mP^{-1}\vm_{t+1}-\lr\wdc\param_{t_0}.
\end{aligned}
\end{equation}

\paragraph{The two critical LRs.}
The homogeneous differences of two runs driven by the same mini-batches satisfy a momentum recurrence. In preconditioned position coordinates this recurrence is
\[
 \vx_{t+1}=\bigl[(1+b)\mI-\alpha(\mC+\mDelta_t)\bigr]\vx_t-b\vx_{t-1},
 \qquad b\coloneqq\beta_1(1-\lr\wdc),\quad \alpha\coloneqq\lr(1-\beta_1).
\]
\WCupdate{For the compensated system, the mean and noise are $(\mC,\mDelta_t)=(\mA_{\mathsf P},\mDelta_t)$. For the projected system, they are $(\mA^+,\mDelta_t^+)$, on the $p$-dimensional positive eigenspace. The derivation and relation to the augmented state of \Cref{app:setting} are given in \Cref{fh:sec:criterion}.}
\begin{definition}[Critical LRs]\label{fh:def:stable}
For either homogeneous system, strict mean-square stability means that $\Ebb\|\vx_t\|^2+\Ebb\|\vx_{t-1}\|^2\to0$ for every deterministic initial pair. The projected critical LR $\lrstar$ and the compensated critical LR $\lrP$ are the respective suprema of the strictly stable LRs.
\end{definition}
By \Cref{prop:agm-critical-lr-equivalence}, $\lrstar$ is the projected critical LR defined by bounded affine moments in \Cref{def:agm-critical-lr}.
The same argument identifies $\lrP$ with the bounded affine threshold of the compensated system when $\mA_{\mathsf P}\succ0$.
By \Cref{fh:thm:decay-exact}, the strictly stable rates form the intervals $(0,\lrstar)$ and $(0,\lrP)$, respectively.
Their second-moment maps are denoted $\Tcal_\lr$ and $\Tcal_\lr^{\mathsf P}$, as defined in \eqref{fh:eq:F}.

\paragraph{The detector.}

\WCupdate{Fix a search window $\mathcal I=[\lr-w,\lr+w]\subset(0,\infty)$ around the training LR $\lr$, an integer horizon $T\ge1$ and a cutoff $M>1$. When $\wdc>0$, assume $\mathcal I\subset(0,1/\wdc]$, so each candidate $\lr'\in\mathcal I$ has effective momentum $b=\beta_1(1-\lr'\wdc)\in[0,\beta_1]$. The expected detector, the idealized version of \Cref{alg:eos_local_Detection_AGM} analyzed here, returns}
\begin{equation}\label{fh:eq:compensated-detector}
 \lrhat(M,T)\coloneqq\sup\Bigl\{\lr'\in\mathcal I:\ \max_{0\le t\le T}\Ebb\norm{\vvareps_t(\lr')}^2\le M^2\ \text{for every }\norm{\vvareps_0}=1\Bigr\},
\end{equation}
\WCupdate{where $\vvareps_t(\lr')$ follows \eqref{fh:eq:compensated-model} with prescribed momentum $\vm_0=\vm_{t_0}$, and the expectation is over mini-batches. Differences between this expected detector and its implementation are listed in \Cref{sec:method-eos}.}
\WCupdate{The detector accepts $\lr'\in\mathcal I$ if $\lr'$ satisfies the condition inside the braces in \eqref{fh:eq:compensated-detector}, so $\lrhat(M,T)$ is the supremum of the accepted LRs.}

\begin{assumption}[Frozen compensated model]\label{fh:ass:comparison}
\Cref{ass:minibatch} holds. Fix $0\le\beta_1<1$, $\wdc\ge0$ and $\extraWD>0$. The compensation satisfies $\mA_{\mathsf P}=\mA+\extraWD\mI\succ0$. All parameters and the batch law remain fixed while $\lr$ varies.
\end{assumption}
The preconditioner and batch law are frozen at the checkpoint. The Hessian and gradient from the same mini-batch may be correlated, and no small-noise assumption is imposed. Positive definiteness of $\mA_{\mathsf P}$ requires compensation to remove all negative and zero curvature directions.
\WCupdate{If the compensation is estimated from data, the theorem applies after fixing that estimate only when the resulting mean is positive definite and the simulation batches still satisfy \Cref{ass:minibatch}. In particular, after the checkpoint and compensation are fixed, those batches must remain i.i.d.\ and independent of the prescribed initial state. The estimated compensation coefficient alone does not verify positive definiteness.}

\begin{assumption}[Acceptance at the compensated critical LR]
\label{fh:ass:transient}
The compensated critical LR $\lrP$ belongs to the window $\mathcal I$ and is accepted by the detector \eqref{fh:eq:compensated-detector}: $\max_{0\le t\le T}\Ebb\norm{\vvareps_t(\lrP)}^2\le M^2$ for every $\norm{\vvareps_0}=1$, with the prescribed initial momentum.
\end{assumption}
The boundary condition $\rho(\Tcal_{\lrP}^{\mathsf P})=1$ does not imply acceptance.
Transient growth, checkpoint momentum, and affine forcing can exceed the cutoff.
\Cref{fh:prop:cutoff} gives an explicit sufficient condition on $M$.
Acceptance of $\lrP$ is used for the upper bound below. The lower bound only needs $\lrP\in\mathcal I$ and a nonempty accepted set.

\HWupdate{\paragraph{The two sources of error.}
The detector output $\lrhat(M,T)$ estimates the threshold $\lrP$ of the compensated system. To compare it with the target $\lrstar$, we separate the error in inverse LR as
\[
 \frac1{\lrhat(M,T)}-\frac1{\lrstar}
 =\underbrace{\left(\frac1{\lrhat(M,T)}-\frac1{\lrP}\right)}_{\text{finite simulation}}
 +\underbrace{\left(\frac1{\lrP}-\frac1{\lrstar}\right)}_{\text{compensation}}.
\]
The first term comes from observing only $T$ steps. The second comes from replacing projection onto the positive eigenspace with compensation, and remains even if the compensated threshold is known exactly. We compare inverse LRs because $(1-b)/\alpha=1/\lr+\beta_1\wdc/(1-\beta_1)$ in the stability criterion below.}

\paragraph{The quantity that determines stability.}
Unqualified vector norms are Euclidean, matrix norms are spectral, and $\|\cdot\|_F$ is the Frobenius norm. For a positive definite mean $\mC$ with its corresponding centered noise, and $-1<b<1$, let
\[
 c_b\coloneqq\frac{1-b}{1+b},\qquad k_b\coloneqq\frac{b}{1-b^2}.
\]
For a symmetric test matrix, $[\mC,\mX]\coloneqq\mC\mX-\mX\mC$ is the commutator. Define
\begin{align}
 q_{\mC,b}(\mX)
 &\coloneqq\frac{\Ebb\tr(\mX\mDelta_t\mX\mDelta_t)+c_b\tr(\mX\mC\mX\mC)-k_b\norm{[\mC,\mX]}_F^2}{2\tr(\mC\mX^2)},
 \qquad 0\ne\mX\succeq0,
 \label{fh:eq:quotient}\\*
 r_{\mC}(b)&\coloneqq\max_{\mX\succeq0,\ \norm{\mX}_F=1}q_{\mC,b}(\mX).
 \label{fh:eq:variational}
\end{align}
The noise is $\mDelta_t^+$ when $\mC=\mA^+$ and $\mDelta_t$ when $\mC=\mA_{\mathsf P}$. The maximum is attained on the compact set of positive semidefinite matrices of unit Frobenius norm. The exact criterion in \Cref{fh:thm:exact} shows that the corresponding recurrence is strictly mean-square stable precisely when $(1-b)/\alpha>r_{\mC}(b)$.

\HWupdate{\paragraph{Curvatures in the error bound.}
The maximizers of the stability quotient determine which curvatures enter the comparison. Set $\bstar\coloneqq\beta_1(1-\lrstar\wdc)$ and $\bstarP\coloneqq\beta_1(1-\lrP\wdc)$, and choose
\begin{equation}\label{fh:eq:xstar}
\begin{aligned}
 \mX_\star&\in\argmax_{\mX\succeq0,\ \norm{\mX}_F=1}q_{\mA^+,\bstar}(\mX),
 &\mustar&\coloneqq\tr(\mA^+\mX_\star^2),\\
 \XstarP&\in\argmax_{\mX\succeq0,\ \norm{\mX}_F=1}q_{\mA_{\mathsf P},\bstarP}(\mX),
 &\mustarP&\coloneqq\tr\bigl(\mA_{\mathsf P}(\XstarP)^2\bigr).
\end{aligned}
\end{equation}
The first maximization is over $p\times p$ matrices and the second over $d\times d$ matrices. Since $\tr(\mX_\star^2)=\tr((\XstarP)^2)=1$, these are weighted averages of curvature, with
\[
 \lambda_{\min}(\mA^+)\le\mustar\le L,\qquad
 \lambda_{\min}(\mA_{\mathsf P})\le\mustarP\le L+\extraWD.
\]
In particular, $\mustar$ need not equal the largest curvature $L$. Noise in directions with smaller mean curvature can also limit stability.

\paragraph{Compensation error.}
Compensation shifts the curvature on the positive eigenspace and retains the directions removed by projection. We account for these effects separately. The shift is controlled by the function
\begin{equation}\label{fh:eq:Phi}
 \Phi(s)\coloneqq\frac{\cstar\extraWD}{2}
 -\frac{\extraWD}{s+\extraWD}\Bigl(r_{\mA^+}(\bstar)-\frac{\cstar s}{2}\Bigr),
 \qquad 0\le s\le L,
\end{equation}
where $\cstar\coloneqq(1-\bstar)/(1+\bstar)$. This function is increasing and satisfies $\Phi(L)\le\cstar\extraWD/2$. The second term can exceed the first, so $\Phi(s)$ need not be positive.

The compensated system also retains noise in the discarded directions and between the two subspaces. We compare its noise contribution with that of the projected system through
\begin{equation}\label{fh:eq:delta}
 \dcov\coloneqq\max_{\mX\succeq0,\ \norm{\mX}_F=1}
 \frac{\Ebb\tr(\mX\mDelta_t\mX\mDelta_t)-\Ebb\tr(\mX^+\mDelta_t^+\mX^+\mDelta_t^+)}{2\tr(\mA_{\mathsf P}\mX^2)}\ge0,
\end{equation}
where $\mX^+$ is the block on the positive eigenspace. This quantity is finite and vanishes when the discarded and cross blocks of the noise vanish. \Cref{fh:lem:deltabound} bounds it using their second moments. Together, $\Phi$ and $\dcov$ bound the difference between $1/\lrP$ and $1/\lrstar$ (\Cref{fh:prop:compensation}).

\paragraph{Finite simulation error.}
An LR above $\lrP$ can satisfy the expected detector's displacement bound over only $T$ steps. Under \Cref{fh:ass:comparison}, every accepted LR $\lr'$ satisfies $\rho(\Tcal_{\lr'}^{\mathsf P})\le\rhoT$ by \Cref{fh:lem:detector-radius}, where
\begin{equation}\label{fh:eq:radius}
 \rhoT\coloneqq\left[\frac{M(1+\beta_1)}{1-\beta_1}\right]^{2/T}.
\end{equation}
This bound approaches one as $(1+\log M)/T\to0$, limiting the instability that can go undetected. \Cref{fh:prop:horizon-true} converts it into a bound on inverse LR. With $\alpha_\star^{\mathsf P}\coloneqq(1-\beta_1)\lrP$, the resulting error term is
\begin{equation}\label{fh:eq:HT}
 H_T\coloneqq(\rhoT-1)\left[
 \frac{2\bstarP}{(\rhoT+\bstarP)\alpha_\star^{\mathsf P}}
 +\frac{(1-\bstarP/\rhoT)^2}{2\mustarP(\alpha_\star^{\mathsf P})^2}
 \right].
\end{equation}
Combining the compensation and finite simulation bounds gives the following result.

\begin{theorem}[Accuracy of the expected EoS detector]\label{fh:thm:main}\label{fh:thm:complete}
Under \Cref{fh:ass:comparison,fh:ass:transient}, the expected detector satisfies
\begin{equation}\label{fh:eq:main}
 \min\{\Phi(\mustar),0\}-H_T
 \;\le\;\frac1{\lrhat(M,T)}-\frac1{\lrstar}
 \;\le\;\max\{\Phi(L)+\dcov,0\}.
\end{equation}
For fixed model parameters, $H_T=O((1+\log M)/T)$ as $(1+\log M)/T\to0$. If $\beta_1\wdc=0$, the minimum and maximum with zero can be omitted from \eqref{fh:eq:main}.
\end{theorem}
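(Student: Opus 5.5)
The plan is to use the decomposition displayed just before the theorem,
\[
 \frac1{\lrhat(M,T)}-\frac1{\lrstar}
 =\Bigl(\frac1{\lrhat(M,T)}-\frac1{\lrP}\Bigr)+\Bigl(\frac1{\lrP}-\frac1{\lrstar}\Bigr),
\]
and bound the two brackets separately. Everything is phrased through the exact criterion of \Cref{fh:thm:exact}: the momentum recurrence with mean $\mC$ is strictly mean-square stable iff $(1-b)/\alpha>r_{\mC}(b)$. By \Cref{fh:thm:decay-exact} the stable LRs form an interval, so each critical LR is the unique root of $(1-b(\lr))/\alpha(\lr)=r_{\mC}(b(\lr))$, that is, $1/\lr+\beta_1\wdc/(1-\beta_1)=r_{\mC}(\beta_1(1-\lr\wdc))$. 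When $\beta_1\wdc=0$ the left side is exactly $1/\lr$ and $b$ does not depend on $\lr$. This is why the clipping at zero can then be dropped. In general the $\lr$-dependence of $b$ only enters monotonically, and one-sided comparisons survive only after taking $\max$ or $\min$ with $0$.

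\emph{Compensation term (\Cref{fh:prop:compensation}).} For the upper bound on $1/\lrP-1/\lrstar$, I evaluate $q_{\mA_{\mathsf P},b}$ at a maximizer $\XstarP$. I split $\Ebb\tr(\mX\mDelta_t\mX\mDelta_t)$ into its $\mX^+$ block part plus the remainder, and bound the remainder by $\dcov\cdot2\tr(\mA_{\mathsf P}\mX^2)$. For the block part, I use $\mA_{\mathsf P}^+=\mA^++\extraWD\mI$; the commutator term is unchanged because a scalar shift commutes. The numerator terms $c_b\tr(\mX\mC\mX\mC)$ grow by $2c_b\extraWD\tr(\mC\mX^2)+c_b\extraWD^2\tr(\mX^2)$, and the denominator grows by $2\extraWD\tr(\mX^2)$. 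Writing $s=\tr(\mA^+\mX^2)$ and using $q_{\mA^+}\le r_{\mA^+}$, a direct algebraic rearrangement yields $r_{\mA_{\mathsf P}}\le r_{\mA^+}+\Phi(s)+\dcov$. Monotonicity of $\Phi$ then gives $\Phi(L)$. For the lower bound, I test $q_{\mA_{\mathsf P}}$ at $\mX_\star$ padded by zeros, for which the discarded noise contributes nonnegatively. This gives $r_{\mA_{\mathsf P}}\ge r_{\mA^+}+\Phi(\mustar)$. Converting these $r$-comparisons into comparisons of the roots uses that the left side $1/\lr+\mathrm{const}$ is decreasing in $\lr$, while $r_{\mC}(b)$ is monotone in $b$. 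This is the step where the $\min/\max$ with $0$ enter, since $\bstar\ne\bstarP$ in general.

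\emph{Finite-horizon term.} Accepted LRs $\lr'$ satisfy $\rho(\Tcal^{\mathsf P}_{\lr'})\le\rhoT$ by \Cref{fh:lem:detector-radius}. Acceptance of $\lrP$ (\Cref{fh:ass:transient}) gives $\lrhat\ge\lrP$, so the finite-simulation bracket is $\le0$, which together with the above yields the right inequality of \eqref{fh:eq:main}. For the left inequality, \Cref{fh:prop:horizon-true} converts the radius bound into $1/\lrhat\ge1/\lrP-H_T$. The idea is to perturb the criterion to the rescaled recurrence $\vx_t\mapsto\rhoT^{-t/2}\vx_t$, which replaces $(1+b)$ and $b$ by $\rhoT$-modified coefficients. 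Evaluating the quotient at $\XstarP$ with curvature $\mustarP$ then produces the two terms of $H_T$. Expanding $\rhoT-1=O((1+\log M)/T)$ gives the rate.

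\emph{Main obstacle.} I expect the hardest step to be the compensation comparison, specifically making the algebra yield exactly $\Phi$ while handling the $b$-mismatch between $\bstar$ and $\bstarP$ when $\beta_1\wdc>0$. My first attempt would be to compare both sides at fixed $b=\bstar$, and then absorb the difference using monotonicity of the root in $b$. This costs only the clipping at $0$.
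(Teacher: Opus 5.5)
Your architecture is the paper's: split at $1/\lrP$, get $\lrhat(M,T)\ge\lrP$ from acceptance of $\lrP$, and get the lower side from \Cref{fh:prop:horizon-true}. For the compensation term, compare the two quotient maxima at the common momentum $\bstar$, then transfer to the roots through the monotonicity of $r$ in $b$; that transfer is exactly where the clipping enters. Your lower half of the compensation bound also matches the paper: pad $\mX_\star$ by zeros, where the discarded and cross noise contributes exactly zero, then apply the shift identity.

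The gap is in the upper half. The maximum $r_{\mA_{\mathsf P}}(\bstar)$ runs over \emph{all} positive semidefinite $d\times d$ test matrices. After you strip the discarded and cross noise (correctly charged to $\dcov$), the remaining quotient still involves the discarded and cross blocks of $\mX$. They appear in the mean term, the commutator and the denominator, where they meet $\mA^-+\extraWD\mI$. The shift identity from $\mA^+$ and the bound $q_{\mA^+,\bstar}\le r_{\mA^+}(\bstar)$ only apply to $p\times p$ test matrices, so $s=\tr(\mA^+\mX^2)$ is not even defined for a general $\mX$. Reading the shift instead as $\mA_{\mathsf P}=\mA+\extraWD\mI$ on the whole space does not help. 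Take $\mX=\vv\vv^\top$ with $\mA\vv=\lambda_{\min}(\mA)\vv$. The unshifted numerator is $\cstar\lambda_{\min}(\mA)^2>0$, while the unshifted curvature $\tr(\mA\mX^2)=\lambda_{\min}(\mA)$ is negative. So the rearranged inequality you need fails, and $\Phi$ would be evaluated outside $[0,L]$. As written, your algebra only bounds the quotient at block-supported test matrices $\operatorname{diag}(\mX,0)$. That controls $r_{\mA^++\extraWD\mI}(\bstar)$, not $r_{\mA_{\mathsf P}}(\bstar)$.

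The missing step is to show that the discarded directions, once their noise is removed, cannot raise the quotient above $r_{\mA^++\extraWD\mI}(\bstar)$. The paper does this in three moves:
\begin{itemize}
\item The truncated samples $\mA_{\mathsf P}+\Pi^+\mDelta_t\Pi^+$ are block diagonal, so the recurrence decouples.
\item The discarded block is noise-free with top curvature $\lambda_{\max}(\mA^-)+\extraWD\le\extraWD<L+\extraWD$. By \eqref{fh:eq:noisefree} and \eqref{fh:eq:rankonetest}, its threshold lies strictly below $r_{\mA^++\extraWD\mI}(\bstar)$.
\item Applying \Cref{fh:thm:exact} to the whole block system then bounds its quotient at every full test matrix by $r_{\mA^++\extraWD\mI}(\bstar)$.
\end{itemize}
Only after this does your shift computation on $p\times p$ matrices give $\Phi(L)$.

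A direct alternative also works. The truncated quotient is a mediant of three contributions: the $(1,1)$-block part, whose ratio is $q_{\mA^++\extraWD\mI,\bstar}(\mX_{11})$, and the noise-free cross and $(2,2)$ parts. Each of the last two has ratio at most $\cstar(L+\extraWD)/2\le r_{\mA^++\extraWD\mI}(\bstar)$, by the blockwise commutator identity behind \eqref{fh:eq:detquadraticbound}.

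Two smaller points:
\begin{itemize}
\item At $\bstar$ you need a maximizer of $q_{\mA_{\mathsf P},\bstar}$, not $\XstarP$, which maximizes at $\bstarP$.
\item Rescaling by exactly $\rhoT^{-t/2}$ only gives $\rho\le1$, not strict stability. The quantitative step needs $s>\sqrt{\rhoT}$ followed by a limit, as in \Cref{fh:prop:radius-quotient}.
\end{itemize}
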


Acceptance of $\lrP$ ensures $\lrhat(M,T)\ge\lrP$, so the finite simulation error lies in $[-H_T,0]$. This explains why $H_T$ appears only in the lower bound. Increasing $T$ reduces this contribution when $(1+\log M)/T\to0$ and \Cref{fh:ass:transient} continues to hold, but does not remove the compensation error. \Cref{fh:prop:cutoff} shows how a cutoff growing polynomially with $T$ can ensure acceptance of $\lrP$.

Weight decay makes the effective momentum depend on the LR. The minimum and maximum with zero account for this dependence when comparing the thresholds. Without momentum, \eqref{fh:eq:HT} reduces to $H_T=(\rhoT-1)/(2\mustarP(\lrP)^2)$.
The proof of \Cref{fh:thm:main} is in \Cref{fh:sec:appendix}. \Cref{fh:cor:saddleness} gives the simpler compensation and noise bound used in \Cref{thm:clr-shift-informal}.}

\begingroup
\allowdisplaybreaks[0]
\raggedbottom
\section{Proofs for the Local Linear System and the Stability Results}\label{app:proofs-moments}
This appendix proves the statements of \Cref{app:setting,app:critical} in the order they are stated.
\Cref{app:proof-linear-system} derives the augmented linear system (\Cref{prop:augmented-system}) and its compensated form.
\Cref{app:proof-moment-recursion} derives the moment recursion of a stochastic linear system (\Cref{prop:moment-recursion}).
\Cref{app:proof-moments} derives the moment recursion for SGD (\Cref{thm:linear-system-sgd}) from the two propositions.
\Cref{app:proof-corollaries} proves the two negative-curvature results (\Cref{app:agm-EoC,cor:neg-curvature-divergence}) and \Cref{app:proof-pos-curvature} the divergence above the critical LR (\Cref{cor:pos-curvature-divergence}).
\Cref{fh:thm:decay} is restated from \Cref{fh:thm:decay-exact} and proved in \Cref{fh:sec:decay}.
\Cref{app:proof-agm-critical-lr-equivalence} proves that bounded affine stability gives the same critical LR as the spectral criterion (\Cref{prop:agm-critical-lr-equivalence}).
Throughout, $\rho$ denotes the spectral radius and $\|\cdot\|$ the spectral norm. \HWupdate{We use gray text to indicate the identity, definition, or assumption used in each step of the derivation.}

\subsection{Proof of \texorpdfstring{\Cref{prop:augmented-system}}{the augmented linear system}}\label{app:proof-linear-system}
\begin{proof}[Proof of \Cref{prop:augmented-system}]
The idea is to center the displacement at $\vx_\star$, multiply each of the two updates in \eqref{eq:local-agm-dynamics} by the power of $\mP$ that appears in the corresponding block of $\vz_t$, and identify the blocks of $\mM_t(\eta)$ and $\vq_t$.
Here $\vx_\star$ is the fixed reference displacement in the proposition and need not be a stationary point.
Write $\bar\vvareps_t\coloneqq\vvareps_t-\vx_\star$ for the centered displacement, so that $\vz_t=(\eta\mP^{-1/2}\vm_t,\mP^{1/2}\bar\vvareps_t)$.
Recall the forcing in the momentum block,
\[
 \vh_t=(1-\beta_1)\eta\mP^{-1/2}(\vg_{\batch_t}+\mH_{\batch_t}\vx_\star).
\]
Recall that $\mD_{\batch_t}=\mP^{-1/2}\mH_{\batch_t}\mP^{-1/2}$ is the preconditioned mini-batch Hessian, as defined before \Cref{prop:augmented-system}.

\textbf{Step 1. The momentum block.}
Multiplying the momentum update in \eqref{eq:local-agm-dynamics} by $\eta\mP^{-1/2}$ and substituting $\vvareps_t=\bar\vvareps_t+\vx_\star$ gives the first block of $\vz_{t+1}$,
{\allowdisplaybreaks[0]
\begin{align*}
 \eta\mP^{-1/2}\vm_{t+1}
 &=\beta_1\,\eta\mP^{-1/2}\vm_t
   +(1-\beta_1)\eta\mP^{-1/2}(\vg_{\batch_t}+\mH_{\batch_t}\vvareps_t)
 && \explain{momentum update}\\
 &=\beta_1\,\eta\mP^{-1/2}\vm_t
   +(1-\beta_1)\eta\mP^{-1/2}\mH_{\batch_t}\bar\vvareps_t\\
 &\qquad+(1-\beta_1)\eta\mP^{-1/2}(\vg_{\batch_t}+\mH_{\batch_t}\vx_\star)
 && \explain{$\vvareps_t=\bar\vvareps_t+\vx_\star$}\\
 &=\beta_1\,\eta\mP^{-1/2}\vm_t
   +(1-\beta_1)\eta\,\mP^{-1/2}\mH_{\batch_t}\mP^{-1/2}\,\mP^{1/2}\bar\vvareps_t\\
 &\qquad+(1-\beta_1)\eta\mP^{-1/2}(\vg_{\batch_t}+\mH_{\batch_t}\vx_\star)
 && \explain{$\mP^{-1/2}\mP^{1/2}=\mI$}\\
 &=\beta_1\,\eta\mP^{-1/2}\vm_t
   +(1-\beta_1)\eta\mD_{\batch_t}\,\mP^{1/2}\bar\vvareps_t+\vh_t.
 && \explain{definitions of $\mD_{\batch_t}$, $\vh_t$}
\end{align*}}

\textbf{Step 2. The displacement block.}
Subtracting $\vx_\star$ from both sides of the displacement update and multiplying by $\mP^{1/2}$ gives the second block,
\begin{align*}
 \mP^{1/2}\bar\vvareps_{t+1}
 &=\mP^{1/2}\bar\vvareps_t-\eta\mP^{-1/2}\vm_{t+1}
   -\eta\wdc\mP^{1/2}(\bar\vvareps_t+\vx_\star+\param_{t_0})
 && \explain{$\vvareps_t=\bar\vvareps_t+\vx_\star$}\\
 &=(1-\eta\wdc)\mP^{1/2}\bar\vvareps_t-\eta\mP^{-1/2}\vm_{t+1}
   -\eta\wdc\mP^{1/2}(\vx_\star+\param_{t_0})
 && \explain{collect $\bar\vvareps_t$ terms}\\
 &=-\beta_1\,\eta\mP^{-1/2}\vm_t
   +\bigl((1-\eta\wdc)\mI-(1-\beta_1)\eta\mD_{\batch_t}\bigr)\mP^{1/2}\bar\vvareps_t\\
 &\qquad-\vh_t-\eta\wdc\mP^{1/2}(\vx_\star+\param_{t_0}).
 && \explain{insert Step 1}
\end{align*}
In the state $\vz_t$, the two identities read $\vz_{t+1}=\mM_t(\eta)\vz_t+\vq_t$ with $\mM_t(\eta)$ and $\vq_t$ as in \eqref{eq:agm-augmented-matrix}.

\textbf{Step 3. Compensation.}
The gradient of the compensated loss $G_t$ of \Cref{app:setting} is
\begin{align*}
 \grad G_t(\vvareps)
 &=\grad\loss_{\batch_t}(\param_{t_0})+\hess\loss_{\batch_t}(\param_{t_0})\vvareps+\extraWD\mP\vvareps
 && \explain{$\grad\tfrac{\extraWD}{2}\vvareps^\top\mP\vvareps=\extraWD\mP\vvareps$, as $\mP$ is symmetric}\\
 &=\vg_{\batch_t}+(\mH_{\batch_t}+\extraWD\mP)\vvareps,
 && \explain{definitions of $\vg_{\batch_t}$ and $\mH_{\batch_t}$}
\end{align*}
so the penalty replaces the mini-batch Hessian $\mH_{\batch_t}$ by $\mH_{\batch_t}+\extraWD\mP$ in the momentum update. In preconditioned coordinates,
\begin{align*}
 \mP^{-1/2}(\mH_{\batch_t}+\extraWD\mP)\mP^{-1/2}
 &=\mP^{-1/2}\mH_{\batch_t}\mP^{-1/2}+\extraWD\,\mP^{-1/2}\mP\mP^{-1/2}\\
 &=\mD_{\batch_t}+\extraWD\mI.
 && \explain{$\mP^{-1/2}\mP\mP^{-1/2}=\mI$}
\end{align*}
Repeating Steps 1 and 2 with $\mH_{\batch_t}+\extraWD\mP$ in place of $\mH_{\batch_t}$, and hence $\mD_{\batch_t}+\extraWD\mI$ in place of $\mD_{\batch_t}$, gives the augmented system with the transition matrix $\widetilde\mM_t(\eta)$ of \eqref{eq:comp-agm-augmented-matrix} and with $\vh_t$ formed from $\mH_{\batch_t}+\extraWD\mP$. The weight decay term is not touched by the penalty, so its coefficient remains $\wdc$.

\textbf{Step 4. Independence.}
Under \Cref{ass:minibatch}, $(\mM_t(\eta),\vq_t)$ is a function of $\batch_t$ alone, while $\vz_t$ depends only on the initial state and on $\batch_0,\ldots,\batch_{t-1}$, so $\mM_t(\eta)$ and $\vq_t$ are independent of $\vz_t$.
This completes the proof of \Cref{prop:augmented-system}.
\end{proof}

\subsection{Proof of \texorpdfstring{\Cref{prop:moment-recursion}}{the moment recursion}}\label{app:proof-moment-recursion}
\begin{proof}[Proof of \Cref{prop:moment-recursion}]
We first derive the closed recursions for the mean and the second moment. A strict spectral bound then makes this finite-dimensional affine recursion converge, including when the current transition and forcing are correlated.
Since $\vz_t$ is a function of $\vz_0$ and of $(\mM_s,\vq_s)_{s<t}$, the pair $(\mM_t,\vq_t)$ is independent of $\vz_t$. Finite second moments propagate by induction:
\begin{align*}
 \Ebb\|\vz_{t+1}\|^2
 &\le2\Ebb\|\mM_t\vz_t\|^2+2\Ebb\|\vq_t\|^2
 &&\explain{$\|\va+\vb\|^2\le2\|\va\|^2+2\|\vb\|^2$}\\
 &\le2\Ebb\|\mM_t\|^2\,\Ebb\|\vz_t\|^2+2\Ebb\|\vq_t\|^2<\infty.
 &&\explain{independence and the induction hypothesis}
\end{align*}
The initial moment is finite by assumption. The Cauchy--Schwarz inequality also gives $\Ebb[\|\mM_t\|\,\|\vq_t\|]<\infty$, so the cross terms below are well defined without independence between $\mM_t$ and $\vq_t$.

\textbf{Step 1. The mean.}
Recall that $\mM=\Ebb[\mM_t]$ and $\vz_t^{(1)}=\Ebb[\vz_t]$. Then
\begin{align*}
 \vz_{t+1}^{(1)}
 &=\Ebb[\mM_t\vz_t]+\Ebb[\vq_t]
 && \explain{expectation of $\vz_{t+1}=\mM_t\vz_t+\vq_t$}\\
 &=\mM\vz_t^{(1)}+\Ebb[\vq_t].
 && \explain{$\mM_t$ is independent of $\vz_t$}
\end{align*}

\textbf{Step 2. The second moment.}
Recall that $\vz^{(2)}_t = \vectorize(\Ebb[\vz_t\vz_t^\top])$, $\mS = \Ebb[\vq_t \otimes \mM_t + \mM_t \otimes \vq_t]$ and $\vb_{\mathrm{quad}} =\vectorize(\Ebb[\vq_t\vq_t^\top])$.
Expanding the outer product of $\vz_{t+1}=\mM_t\vz_t+\vq_t$ with itself gives
\[
 \vz_{t+1}\vz_{t+1}^\top
 =\mM_t\vz_t\vz_t^\top\mM_t^\top+\mM_t\vz_t\vq_t^\top+\vq_t\vz_t^\top\mM_t^\top+\vq_t\vq_t^\top.
\]
Vectorize term by term using $\vectorize(\mX\mY\mZ)=(\mZ^\top\otimes\mX)\vectorize(\mY)$. In particular,
\begin{align*}
 \vectorize(\mM_t\vz_t\vq_t^\top)&=(\vq_t\otimes\mM_t)\vz_t,\\
 \vectorize(\vq_t\vz_t^\top\mM_t^\top)&=(\mM_t\otimes\vq_t)\vz_t.
\end{align*}
Taking expectations gives
\begin{align*}
 \vz_{t+1}^{(2)}
 &=\Ebb\bigl[(\mM_t\otimes\mM_t)\vectorize(\vz_t\vz_t^\top)\bigr]\\
 &\qquad+\Ebb\bigl[(\vq_t\otimes\mM_t+\mM_t\otimes\vq_t)\vz_t\bigr]\\
 &\qquad+\vectorize(\Ebb[\vq_t\vq_t^\top])
 && \explain{vectorize term by term}\\
 &= \Ebb[\mM_t \otimes \mM_t] \vz_{t}^{(2)} + \mS \vz^{(1)}_t + \vb_{\mathrm{quad}}.
 && \explain{$\vz_t$ independent of $(\mM_t,\vq_t)$}
\end{align*}
Combining the two moment recursions gives the block form \eqref{eq:generic-moment-recursion}.

\textbf{Step 3. The spectral radius.}
For the block lower triangular matrix of the proposition and every scalar $z$,
\begin{align*}
    &\det
    \begin{bmatrix}
   z\mI-\mM&\zeroB\\-\mS&z\mI-\Ebb[\mM_t\otimes\mM_t]
  \end{bmatrix}\\
 &\qquad=\det(z\mI-\mM)\,\det(z\mI-\Ebb[\mM_t\otimes\mM_t]),
 && \explain{determinant of a block lower triangular matrix}
\end{align*}
so the eigenvalues of the block matrix are those of $\mM$ together with those of $\Ebb[\mM_t\otimes\mM_t]$, and its spectral radius is
$\max\{\rho(\mM),\rho(\Ebb[\mM_t\otimes\mM_t])\}$.

\textbf{Step 4. Bounded affine moments under the strict condition.}
Suppose $\rho(\Ebb[\mM_t\otimes\mM_t])<1$. By the mean-radius inequality in the second-moment characterization (\Cref{fh:lem:covariance}),
\[
 \rho(\mM)^2\le\rho(\Ebb[\mM_t\otimes\mM_t])<1.
\]
Let $\mathbf B$ be the block matrix in \eqref{eq:generic-moment-recursion}, let $\mathbf c\coloneqq(\Ebb[\vq_t],\vb_{\mathrm{quad}})^\top$ be its constant forcing, and write $\mathbf u_t\coloneqq(\vz_t^{(1)},\vz_t^{(2)})^\top$. Step 3 gives $\rho(\mathbf B)<1$. Iterating the moment recursion yields
\begin{align*}
 \mathbf u_t
 &=\mathbf B^t\mathbf u_0+\sum_{j=0}^{t-1}\mathbf B^j\mathbf c
 &&\explain{iterate $\mathbf u_{t+1}=\mathbf B\mathbf u_t+\mathbf c$}\\
 &\longrightarrow(\mI-\mathbf B)^{-1}\mathbf c.
 &&\explain{$\mathbf B^t\to0$ and the Neumann series, $\rho(\mathbf B)<1$}
\end{align*}
The initial vector $\mathbf u_0$ is finite by the second-moment assumption. The trace of the second-moment matrix is $\Ebb\|\vz_t\|^2$, so convergence of its entries implies convergence, and hence boundedness, of this trace. This proves the affine stability assertion for every initial state of finite second moment independent of the batch sequence. The cross block $\mS$ already includes any correlation between the current transition and forcing.
This completes the proof of \Cref{prop:moment-recursion}.
\end{proof}

\subsection{Proof of \texorpdfstring{\Cref{thm:linear-system-sgd}}{the moment recursion for SGD}}\label{app:proof-moments}
\begin{proof}[Proof of \Cref{thm:linear-system-sgd}]
We center the SGD recurrence at the population stationary point, which removes its mean forcing.

\textbf{Step 1. The displacement block closes.}
Take $\beta_1=0$, $\mP=\mI$ and $\wdc=0$. Write $\vx_\star\coloneqq\param_\star-\param_{t_0}$ and $\bar\vvareps_t\coloneqq\vvareps_t-\vx_\star=\param_{t_0+t}-\param_\star$. The gradient of the surrogate is $\vg_{\batch_t}+\mH_{\batch_t}\vvareps_t$, so centering its SGD update gives
\begin{align*}
 \bar\vvareps_{t+1}
 &=\bar\vvareps_t-\eta\bigl(\vg_{\batch_t}+\mH_{\batch_t}(\bar\vvareps_t+\vx_\star)\bigr)
 && \explain{$\vvareps_t=\bar\vvareps_t+\vx_\star$}\\
 &=(\mI-\eta\mH_{\batch_t})\bar\vvareps_t-\eta(\vg_{\batch_t}+\mH_{\batch_t}\vx_\star)
 && \explain{collect the centered displacement}\\
 &=\mM_t\bar\vvareps_t-\eta\vr_t.
 && \explain{definitions of $\mM_t$ and $\vr_t$}
\end{align*}
So the centered displacement is itself a stochastic linear system with transition $\mM_t$ and forcing $-\eta\vr_t$. Under \Cref{ass:minibatch} the pairs $(\mM_t,-\eta\vr_t)$ are i.i.d.\ functions of $(\mH_{\batch_t},\vg_{\batch_t})$, have finite second moments, and are independent of $\bar\vvareps_0=-\vx_\star$.

\textbf{Step 2. Apply \Cref{prop:moment-recursion}.}
Apply the proposition to $\bar\vvareps_{t+1}=\mM_t\bar\vvareps_t+(-\eta\vr_t)$. Its mean forcing vanishes,
\begin{align*}
 \Ebb[-\eta\vr_t]
 &=-\eta\bigl(\Ebb[\vg_{\batch_t}]+\Ebb[\mH_{\batch_t}]\vx_\star\bigr)
 && \explain{$\vx_\star$ is deterministic}\\
 &=-\eta\bigl(\grad\loss(\param_{t_0})+\hess\loss(\param_{t_0})(\param_\star-\param_{t_0})\bigr)
 && \explain{unbiased mini-batch gradient and Hessian}\\
 &=\vzero,
 && \explain{by \eqref{eq:existence-of-optima}}
\end{align*}
and its two constants are those of the corollary,
\begin{align*}
 \Ebb[(-\eta\vr_t)\otimes\mM_t+\mM_t\otimes(-\eta\vr_t)]
 &=-\eta\,\Ebb[\vr_t\otimes\mM_t+\mM_t\otimes\vr_t]=\mS,
 && \explain{$\otimes$ is bilinear}\\
 \vectorize\bigl(\Ebb[(\eta\vr_t)(\eta\vr_t)^\top]\bigr)
 &=\eta^2\vectorize\bigl(\Ebb[\vr_t\vr_t^\top]\bigr)=\vb_{\mathrm{quad}}.
 && \explain{definition of $\vb_{\mathrm{quad}}$}
\end{align*}
The two mean identities used above are part of \Cref{ass:minibatch}. The moments $\Ebb[\bar\vvareps_t]$ and $\vectorize(\Ebb[\bar\vvareps_t\bar\vvareps_t^\top])$ are the $\vvareps_t^{(1)}$ and $\vvareps_t^{(2)}$ of the corollary. Thus \eqref{eq:generic-moment-recursion} is \eqref{eq:linear-system-SGD}, with zero forcing in the mean block.
For the generic quadratic of \Cref{sec:stochastic-quadratic-stability}, apply the same proposition with $\mM_t=\mI-\eta\mA_t$ and $\vq_t=-\eta(\mA_t\vx_\star+\vb_t)$. Its assumptions follow from the stated iid finite-moment model, and $\Ebb[\vq_t]=-\eta(\mA\vx_\star+\vb)=0$. This proves the moment equations of \Cref{prop:stochastic-quadratic-moment-dynamics}. If its second-moment spectral radius is less than one, Step 4 of the generic proof gives finite limiting moments for every deterministic initialization, completing that proposition's affine boundedness claim.
This completes the proof of \Cref{thm:linear-system-sgd}.
\end{proof}

\subsection{Proofs of \texorpdfstring{\Cref{app:agm-EoC,cor:neg-curvature-divergence}}{the negative-curvature results}}\label{app:proof-corollaries}
\begin{proof}[Proof of \Cref{app:agm-EoC}]
The idea is that the mean AGM dynamics decouple along the eigenvectors of $\mD$ into $2\times2$ systems, and that the one along a negative effective curvature has a real eigenvalue larger than one.
Let $\mu_1,\ldots,\mu_d$ be the eigenvalues of the population preconditioned Hessian $\mD$.
The corresponding eigenvalues of the effective curvature $\mA=\mD+\wdc\mI$ are $\bar\mu_i\coloneqq\mu_i+\wdc$.
Since \eqref{eq:agm-augmented-matrix} is affine in $\mD_{\batch_t}$, the mean transition $\Ebb[\mM_t(\eta)]$ is obtained by replacing $\mD_{\batch_t}$ with its mean $\mD$, so its blocks are $\beta_1\mI$, $(1-\beta_1)\eta\mD$, $-\beta_1\mI$ and $(1-\eta\wdc)\mI-(1-\beta_1)\eta\mD$. Every block is a polynomial in $\mD$, so this transition is block diagonal in an eigenbasis of $\mD$.
Along an eigendirection with eigenvalue $\mu_i$, the mean AGM dynamics
reduce to a two-dimensional linear system with transition matrix
$\mM_{\mu_i}(\eta)$, the $2\times2$ block of $\Ebb[\mM_t(\eta)]$ along that eigendirection, obtained by replacing $\mD$ with $\mu_i$ in each block:
\begin{align*}
 \mM_{\mu_i}(\eta)
    =
    \begin{bmatrix}
   \beta_1 & (1-\beta_1)\eta\mu_i\\
   -\beta_1 & 1-\eta\wdc-(1-\beta_1)\eta\mu_i
  \end{bmatrix}.
\end{align*}
Its characteristic polynomial
evaluated at $1$ is
\begin{align*}
    \det\!\left(
        \mI-\mM_{\mu_i}(\eta)
    \right)
    &=\det
    \begin{bmatrix}
   1-\beta_1 & -(1-\beta_1)\eta\mu_i\\
   \beta_1 & \eta\wdc+(1-\beta_1)\eta\mu_i
  \end{bmatrix}\\
    &=(1-\beta_1)\bigl(\eta\wdc+(1-\beta_1)\eta\mu_i\bigr)+\beta_1(1-\beta_1)\eta\mu_i
    && \explain{$2\times2$ determinant}\\
    &=(1-\beta_1)\eta\wdc+(1-\beta_1)\eta\mu_i\bigl((1-\beta_1)+\beta_1\bigr)
    && \explain{collect the $\mu_i$ terms}\\
    &=
    (1-\beta_1)\eta\bar\mu_i.
    && \explain{$\bar\mu_i=\mu_i+\wdc$}
\end{align*}
If $\bar\mu_i<0$, this quantity is negative for every $\eta>0$. The characteristic polynomial $\lambda\mapsto\det(\lambda\mI-\mM_{\mu_i}(\eta))$ is monic of degree two, so it is positive for all large $\lambda$. Since it is negative at $\lambda=1$, the intermediate value theorem gives a real root larger than one, that is, an eigenvalue of $\mM_{\mu_i}(\eta)$ larger than one.
The homogeneous mean dynamics therefore have an expanding mode.
Choose $\vz_0$ to be a nonzero right eigenvector of the mean transition for this real eigenvalue $r>1$. Independence across steps gives $\Ebb[\vz_t]=r^t\vz_0$, and Jensen's inequality yields
\[
 \Ebb\|\vz_t\|^2\ge\|\Ebb[\vz_t]\|^2
 =r^{2t}\|\vz_0\|^2\longrightarrow\infty.
\]
Thus the homogeneous system is not mean-square stable.
This completes the proof of \Cref{app:agm-EoC}.
\end{proof}

\begin{proof}[Proof of \Cref{cor:neg-curvature-divergence}]
The idea is that the mean recursion is the full-batch recursion, along whose negative eigendirection the mean grows geometrically, and that the second moment dominates the squared mean.
Recall from \Cref{thm:linear-system-sgd} that
$\vvareps_t^{(1)}=\Ebb[\param_{t_0+t}-\param_\star]$ is the mean error and
$\mM_t=\mI-\eta\mH_{\batch_t}$ is the mini-batch
transition matrix, so that the mean recursion in \eqref{eq:linear-system-SGD} reads
$\vvareps_{t+1}^{(1)}=\Ebb[\mM_t]\vvareps_t^{(1)}$. Recall also that
$\mH_{t_0}=\nabla^2\loss(\param_{t_0})$ is the population Hessian at the
checkpoint (\Cref{app:critical}), so that
$\Ebb[\mM_t]=\mI-\eta\mH_{t_0}$ for every $t$. If
$\mH_{t_0}\vu=\lambda\vu$ with $\lambda<0$ and $\|\vu\|=1$, then
\begin{align*}
 \vu^\top\Ebb[\param_{t_0+t}-\param_\star]
 &=\vu^\top\Ebb[\mM_t]^{t}\,\Ebb[\param_{t_0}-\param_\star]
 && \explain{iterate the mean recursion for $t$ steps}\\
 &=\vu^\top(\mI-\eta\mH_{t_0})^{t}\,\Ebb[\param_{t_0}-\param_\star]
 && \explain{$\Ebb[\mM_t]=\mI-\eta\mH_{t_0}$}\\
 &=(1-\eta\lambda)^{t}
   \vu^\top\Ebb[\param_{t_0}-\param_\star].
 && \explain{$\vu^\top\mH_{t_0}=\lambda\vu^\top$, as $\mH_{t_0}$ is symmetric}
\end{align*}
Because $1-\eta\lambda>1$, a nonzero initial component along $\vu$
grows geometrically. The second moment then diverges for this
initialization:
\begin{align*}
 \Ebb\|\param_{t_0+t}-\param_\star\|^2
 &\ge\|\Ebb[\param_{t_0+t}-\param_\star]\|^2
 && \explain{by Jensen's inequality}\\
 &\ge\bigl(\vu^\top\Ebb[\param_{t_0+t}-\param_\star]\bigr)^2
 && \explain{by the Cauchy--Schwarz inequality}\\
 &=(1-\eta\lambda)^{2t}
   \bigl(\vu^\top\Ebb[\param_{t_0}-\param_\star]\bigr)^2
 && \explain{by the display above}\\
 &\to\infty.
 && \explain{$1-\eta\lambda>1$}
\end{align*}
The same computation with $\vr_t\equiv\vzero$ shows that the homogeneous system $\bar\vvareps_{t+1}=\mM_t\bar\vvareps_t$ is not mean-square stable.
The population quadratic evaluated at $\param_\star+a\vu$ is a quadratic
polynomial in $a$ whose $a^2$ coefficient is
$\tfrac12\vu^\top\mH_{t_0}\vu=\lambda/2<0$, so it is unbounded below.
This completes the proof of \Cref{cor:neg-curvature-divergence}.
\end{proof}

\subsection{Proof of \texorpdfstring{\Cref{cor:pos-curvature-divergence}}{the divergence above the critical LR}}\label{app:proof-pos-curvature}
\begin{proof}[Proof of \Cref{cor:pos-curvature-divergence}]
We use a PSD eigenmatrix of the adjoint second-moment map to identify an initialization with geometric second-moment growth.
  \WCupdate{Recall from \Cref{app:critical-lr-agm} that $\mM_t^+(\eta)$ is the transition of the projected homogeneous system, acting on the momentum and displacement restricted to the positive eigenspace, and that $\mK^+(\eta)=\Ebb[\mM_t^+(\eta)\otimes\mM_t^+(\eta)]$.}
By the unique stability threshold (\Cref{fh:thm:decay}), $\rho(\mK^+(\eta))>1$ for every $\eta>\eta_\star$.
Write this radius as $r>1$ and define the second-moment map on symmetric matrices by
\[
 \mathcal T(\mX)\coloneqq\Ebb[\mM_t^+(\eta)\mX\mM_t^+(\eta)^\top].
\]
Restriction to symmetric matrices preserves the spectral radius, so $\rho(\mathcal T)=r$ (see the discussion after \Cref{fh:lem:covariance} in \Cref{fh:sec:tools}).
The adjoint map
$\mathcal T^*(\mX)=\Ebb[\mM_t^+(\eta)^\top\mX\,\mM_t^+(\eta)]$ preserves
PSD matrices and has the same spectral radius as $\mathcal T$, so by the
Perron--Frobenius theorem for cone-preserving maps (\Cref{fh:lem:perron}) it has a nonzero PSD eigenmatrix $\mW$ with
$\mathcal T^*(\mW)=r\mW$.
Choose a deterministic initial state $\vz_0$ of the projected system such that
$\vz_0^\top\mW\vz_0>0$, for instance an eigenvector of $\mW$ with positive
eigenvalue. Its second moment $\mX_t=\Ebb[\vz_t\vz_t^\top]$ satisfies
$\mX_t=\mathcal T^t(\vz_0\vz_0^\top)$, by the recursion
$\Ebb[\vz_{t+1}\vz_{t+1}^\top]=\mathcal T(\Ebb[\vz_t\vz_t^\top])$ of the
second-moment characterization (\Cref{fh:lem:covariance}). Hence
\begin{align*}
 \Ebb\|\vz_t\|^2
 &=\operatorname{tr}(\mX_t)
 && \explain{$\|\vz_t\|^2=\operatorname{tr}(\vz_t\vz_t^\top)$; $\Ebb$ and $\operatorname{tr}$ commute}\\
 &\ge\frac{\operatorname{tr}(\mW\mX_t)}{\|\mW\|}
 && \explain{$\mX_t\succeq0$ and $\zeroB\preceq\mW\preceq\|\mW\|\mI$}\\
 &=\frac{\operatorname{tr}\bigl(\mW\,\mathcal T^t(\vz_0\vz_0^\top)\bigr)}{\|\mW\|}
 && \explain{$\mX_t=\mathcal T^t(\vz_0\vz_0^\top)$}\\
 &=\frac{\operatorname{tr}\bigl((\mathcal T^*)^t(\mW)\,\vz_0\vz_0^\top\bigr)}{\|\mW\|}
 && \explain{definition of the adjoint, applied $t$ times}\\
 &=\frac{r^t\operatorname{tr}(\mW\vz_0\vz_0^\top)}{\|\mW\|}
 && \explain{$\mathcal T^*(\mW)=r\mW$}\\
 &=\frac{r^t\vz_0^\top\mW\vz_0}{\|\mW\|}\longrightarrow\infty.
 && \explain{cyclicity of the trace; $r>1$ and $\vz_0^\top\mW\vz_0>0$}
\end{align*}
  \WCupdate{For SGD, $\beta_1=0$, $\mP=\mI$ and $\wdc=0$, the projected state reduces to the restricted displacement $\vx_t\in\operatorname{range}(\Pi^+_{t_0})$ with transition $\mM_t^+(\eta)=\mI-\eta\Pi^+_{t_0}\mH_{\batch_t}\Pi^+_{t_0}$, and the same argument gives the exploding initial displacement of the SGD case in \Cref{app:critical-lr-agm}.}
This completes the proof of \Cref{cor:pos-curvature-divergence}.
\end{proof}

\clearpage
\subsection{Proof of the AGM Critical LR Characterization}\label{app:proof-agm-critical-lr-equivalence}
\begin{proof}[Proof of \Cref{prop:agm-critical-lr-equivalence}]
We show that every LR below the spectral threshold has bounded affine moments, while no LR above it does so for every initialization.
This identifies the supremum without making an assumption about the endpoint.
Recall the projected affine system \eqref{eq:agm-projected-local-system},
\[
 \vz_{t+1}^+=\mM_t^+(\eta)\vz_t^++\vq_t^+,
 \qquad
 \mK^+(\eta)=\Ebb[\mM_t^+(\eta)\otimes\mM_t^+(\eta)].
\]
The pairs $(\mM_t^+(\eta),\vq_t^+)$ are i.i.d.\ with finite second moments by \Cref{ass:minibatch}.
The transition and forcing within a pair may be correlated.
The effective curvature $\mA$ is positive definite on the subspace onto which $\Pi^+$ projects.
By \Cref{fh:thm:decay-exact} and the change of state in \Cref{fh:sec:tools}, $\rho(\mK^+(\eta))$ is less than, equal to, or greater than one below, at, or above a threshold $\eta_{\mathrm{crit}}\in(0,\infty)$.

\paragraph{Step 1. LRs below the spectral threshold.}
If $0<\eta<\eta_{\mathrm{crit}}$, then $\rho(\mK^+(\eta))<1$.
The affine moment recursion in \Cref{prop:moment-recursion} therefore gives
\[
 \sup_{t\ge0}\Ebb\|\vz_t^+\|^2<\infty
 \qquad\text{for every deterministic }\vz_0^+.
\]
Thus every such LR belongs to the stable set in \Cref{def:agm-critical-lr}.

\paragraph{Step 2. LRs above the spectral threshold.}
Let $\eta>\eta_{\mathrm{crit}}$, so that $\rho(\mK^+(\eta))>1$.
The Perron--Frobenius argument in \Cref{app:proof-pos-curvature} then gives a deterministic initial vector $\vu$ whose homogeneous second moment diverges.
Run two affine trajectories, $\vz_t^+$ and $\widehat\vz_t^+$, with the same pairs $(\mM_t^+(\eta),\vq_t^+)$ and initial states $\vz_0^+=\vu$ and $\widehat\vz_0^+=0$.
Their difference $\vd_t\coloneqq\vz_t^+-\widehat\vz_t^+$ satisfies
\begin{align*}
 \vd_{t+1}
 &=\bigl(\mM_t^+(\eta)\vz_t^++\vq_t^+\bigr)
   -\bigl(\mM_t^+(\eta)\widehat\vz_t^++\vq_t^+\bigr)
 &&\explain{use the same batches and forcing}\\
 &=\mM_t^+(\eta)\vd_t,\qquad \vd_0=\vu.
 &&\explain{the forcing cancels pathwise}
\end{align*}
If the affine system were bounded in mean square for every deterministic start, then
\begin{align*}
 \sup_{t\ge0}\Ebb\|\vd_t\|^2
 &\le2\sup_{t\ge0}\Ebb\|\vz_t^+\|^2
     +2\sup_{t\ge0}\Ebb\|\widehat\vz_t^+\|^2
 &&\explain{$\|\va-\vb\|^2\le2\|\va\|^2+2\|\vb\|^2$}\\
 &<\infty,
 &&\explain{boundedness for the two chosen starts}
\end{align*}
contradicting the choice of $\vu$.
Thus no $\eta>\eta_{\mathrm{crit}}$ belongs to the stable set.
Together, the two steps give
\[
 (0,\eta_{\mathrm{crit}})
 \subseteq\{\eta>0:\text{\eqref{eq:agm-projected-local-system} is mean-square stable}\}
 \subseteq(0,\eta_{\mathrm{crit}}].
\]
The middle set has supremum $\eta_\star$ by \Cref{def:agm-critical-lr}, so $\eta_\star=\eta_{\mathrm{crit}}$.
The spectral characterization and the three cases in \eqref{eq:agm-critical-lr} now follow from \Cref{fh:thm:decay-exact}.
At the endpoint, forcing can change boundedness without changing this supremum.
For example, $x_{t+1}=-x_t$ has bounded squared iterates, whereas $x_{t+1}=-x_t+\xi_t$ with independent uniform signs $\xi_t$ satisfies
\begin{align*}
 \Ebb[x_{t+1}^2]
 &=\Ebb[x_t^2]-2\Ebb[x_t\xi_t]+\Ebb[\xi_t^2]\\
 &=\Ebb[x_t^2]+1
 =x_0^2+t+1.
\end{align*}
The second line uses independence, $\Ebb[\xi_t]=0$ and $\xi_t^2=1$, then iterates the recursion.
Both have second-moment spectral radius one.
This completes the proof of \Cref{prop:agm-critical-lr-equivalence}.
\end{proof}

\clearpage
\section{Proof of the Power-Iteration Bound}\label{app:proof-power-iteration}

\subsection{The Bisection Bound}\label{sec:pi-proof}
We prove \Cref{thm:lambda_hat_bound} by bounding the two possible query errors,
\WCupdate{then tracking the search interval during bisection. Below the target shift, we show that the}
bottom-eigenvector component grows past the cutoff with high probability.
Above the target, we bound the probability of ever crossing the cutoff.

Recall that $\mA=\mD_{t_0}+\wdc\mI$ is the effective curvature and $\mDelta_t=\mA_{\batch_t}-\mA$ is its centered mini-batch noise.
A query at shift $\lambda$, with inner stepsize $\etapi$ and norm cutoff $B$, follows
\[
 \vx_{t+1}=\bigl[\mM(\lambda)-\etapi\mDelta_t\bigr]\vx_t,
 \qquad \mM(\lambda)=\mI-\etapi(\mA+\lambda\mI),
 \qquad \|\vx_0\|=1.
\]
Under \Cref{ass:pi-bracket}, $\lambda_\star=-\lambda_{\min}(\mA)$.
For this proof, define the largest eigenvalue of the mean transition and the
first exit time by
\begin{align*}
 \rho(\lambda)&\coloneqq1+\etapi(\lambda_\star-\lambda),\\
 \tau&\coloneqq\inf\{t\ge0:\|\vx_t\|>B\},\qquad\inf\varnothing=\infty.
\end{align*}
A query of horizon $T_{\mathrm p}$ explodes precisely when
$\tau\le T_{\mathrm p}$. The filtration $\mathcal F_t$ includes all previous
queries and the first $t$ observations of the current query, as defined in
\Cref{app:proof-of-min-eigen}. The query bounds below hold conditionally on
the previous queries, because \Cref{ass:pi-noise} gives conditional centering
and the same noise bound under this conditioning.

The next two lemmas control the query errors below and above
$\lambda_\star$. Their proofs are given in
\Cref{sec:pi-below,sec:pi-above}.
In the first lemma, write $\rho=\rho(\lambda)$.

\begin{lemma}[Explosion below $\lambda_\star$]\label{lem:pi-explode}
Let \Cref{ass:pi-noise,ass:pi-bracket} hold, let $\etapi>0$, $B>1$, $\lambda\in[0,\lambda_\star)$, and let $T_{\mathrm p}\ge1$ be an integer with $B\rho^{-T_{\mathrm p}}<\varepsilon$. Then
\begin{equation}\label{eq:pi-explode-bound}
 \Pbb\{\tau>T_{\mathrm p}\}\le\exp\biggl(-\frac{(\varepsilon-B\rho^{-T_{\mathrm p}})^2\,(\rho^2-1)}{2\etapi^2\sigma^2B^2}\biggr).
\end{equation}
In particular, let $g>0$ and let $e_{\mathrm p}$ be the finite-horizon term of \eqref{eq:pi-gaps}. If $\lambda\le\lambda_\star-\max\{g,e_{\mathrm p}\}$, then
\begin{equation}\label{eq:pi-explode-gap}
 \Pbb\{\tau>T_{\mathrm p}\}\le\exp\Bigl(-\frac{\varepsilon^2g}{4\etapi\sigma^2B^2}\Bigr),
\end{equation}
and for $g=g_-$ the right-hand side equals $\delta/N$.
\end{lemma}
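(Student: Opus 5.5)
The plan is to track the component of the iterate along the bottom eigenvector, $a_t\coloneqq\vu_{\min}^\top\vx_t$, and show it must leave the ball of radius $B$. Flipping the sign of $\vu_{\min}$ if needed, I assume $a_0\ge\varepsilon$ (\Cref{ass:pi-bracket}). Since $\vu_{\min}$ is an eigenvector of the symmetric mean transition $\mM(\lambda)$ with eigenvalue $\rho=\rho(\lambda)>1$, the recursion projects to
\[
 a_{t+1}=\rho\,a_t-\etapi\,\vu_{\min}^\top\mDelta_t\vx_t .
\]
I normalize by the mean growth and stop at the exit time. Set $Y_t\coloneqq\rho^{-t}a_{t\wedge\tau}$, with the convention that increments vanish after $\tau$. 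Then
\[
 Y_{t+1}-Y_t=-\etapi\rho^{-(t+1)}\vu_{\min}^\top\mDelta_t\vx_t\,\mathbf 1\{t<\tau\}.
\]
By \Cref{ass:pi-noise} this increment is a martingale difference with respect to $\mathcal F_{t+1}$, conditionally on all earlier queries. On $\{t<\tau\}$ we have $\|\vx_t\|\le B$, so the increment is bounded by $c_t\coloneqq\etapi\sigma B\rho^{-(t+1)}$. This gives
\[
 \sum_{t\ge0}c_t^2\le\frac{\etapi^2\sigma^2B^2}{\rho^2-1}.
\]

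On the event $\{\tau>T_{\mathrm p}\}$, the stopped and unstopped processes agree up to time $T_{\mathrm p}$. Hence $|a_{T_{\mathrm p}}|\le\|\vx_{T_{\mathrm p}}\|\le B$, so $Y_{T_{\mathrm p}}\le B\rho^{-T_{\mathrm p}}$. Since $Y_0=a_0\ge\varepsilon$, this event forces
\[
 Y_{T_{\mathrm p}}-Y_0\le-(\varepsilon-B\rho^{-T_{\mathrm p}}),
\]
and the hypothesis $B\rho^{-T_{\mathrm p}}<\varepsilon$ makes this deviation strictly negative. The one-sided Azuma--Hoeffding inequality with the bounds $c_t$ then gives exactly \eqref{eq:pi-explode-bound}.

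For the gap version I need two elementary bounds.
\begin{itemize}
\item If $\lambda\le\lambda_\star-e_{\mathrm p}$, then $\rho\ge1+\etapi e_{\mathrm p}=(2B/\varepsilon)^{1/T_{\mathrm p}}$ by \eqref{eq:pi-gaps}. Hence $B\rho^{-T_{\mathrm p}}\le\varepsilon/2<\varepsilon$, so the hypothesis holds and the numerator is at least $\varepsilon^2/4$.
\item If $\lambda\le\lambda_\star-g$, then $\rho^2-1\ge2(\rho-1)\ge2\etapi g$.
\end{itemize}
Substituting both bounds gives exponent at least $\varepsilon^2 g/(4\etapi\sigma^2B^2)$, which is \eqref{eq:pi-explode-gap}. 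Plugging in $g=g_-$ from \eqref{eq:pi-gaps}, the exponent equals $\log(N/\delta)$, so the bound is $\delta/N$.

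The main point needing care is the martingale step. The increment bound uses $\|\vx_t\|\le B$, which holds only before the exit time. That is why I stop at $\tau$ and check that the stopped increments stay adapted and conditionally centered under the filtration that includes previous queries. Everything after that is bookkeeping.
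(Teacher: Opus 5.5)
Your proposal is correct and matches the paper's proof essentially step for step. It uses the same martingale $\rho^{-t}\vu_{\min}^\top\vx_t$ stopped at $\tau$, the same geometrically summable increment bounds fed into Azuma--Hoeffding, and the same two elementary bounds for the gap form, namely $B\rho^{-T_{\mathrm p}}\le\varepsilon/2$ and $\rho^2-1\ge2\etapi g$.

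The only slip is notational. The stopped process should be written $\rho^{-(t\wedge\tau)}\vu_{\min}^\top\vx_{t\wedge\tau}$ rather than $\rho^{-t}a_{t\wedge\tau}$, since the latter keeps changing after $\tau$. Your increment formula and the rest of the argument already use the correct object.
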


\begin{lemma}[No explosion above $\lambda_\star$]\label{lem:pi-noexplode}
Let \Cref{ass:pi-noise,ass:pi-bracket,ass:pi-step} hold, let $B>1$, and fix
$\lambda\in[\lambda_\star,\Lambda]$. With
$a\coloneqq\etapi(\lambda-\lambda_\star)$ and $h\coloneqq\etapi\sigma$, suppose $a>h^2$.
Then
\[
 \Pbb\bigl\{\exists\,t\ge0:\|\vx_t\|>B\bigr\}
 \le B^{-(a-h^2)/h^2}.
\]
If $a\ge2h^2$, the right-hand side is at most
$B^{-(\lambda-\lambda_\star)/(2\etapi\sigma^2)}$.
In particular, when $\lambda\ge\lambda_\star+g_+$, with $g_+$ from
\eqref{eq:pi-gaps}, the probability is at most $\delta/N$.
\end{lemma}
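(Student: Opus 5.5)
The statement to prove is \Cref{lem:pi-noexplode}. The plan is a supermartingale argument for a power of the norm, closed with Ville's maximal inequality.

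Set $\mN_t\coloneqq\mM(\lambda)-\etapi\mDelta_t$, so that $\vx_{t+1}=\mN_t\vx_t$. By \Cref{ass:pi-step}, $\mM(\lambda)$ is positive semidefinite for $\lambda\in[\lambda_\star,\Lambda]$. Its eigenvalues $1-\etapi(\mu+\lambda)$ then lie in $[0,\,1-\etapi(\lambda-\lambda_\star)]$, hence $\|\mM(\lambda)\|\le 1-a$.

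For a unit vector $\vu$, \Cref{ass:pi-noise} gives a conditional second-moment bound:
\[
\Ebb\bigl[\|\mN_t\vu\|^2\mid\mathcal F_t\bigr]=\|\mM\vu\|^2+\etapi^2\Ebb\bigl[\|\mDelta_t\vu\|^2\mid\mathcal F_t\bigr]\le(1-a)^2+h^2 .
\]
The cross term vanishes by conditional centering.

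Consider $Y_t\coloneqq\|\vx_t\|^{p}$ with exponent $p\coloneqq(a-h^2)/h^2$. The aim is to show that $Y_t$ is a nonnegative supermartingale. Write $R\coloneqq\|\mN_t\vu\|/1$, the one-step norm growth factor. One has $R\le 1-a+h\le 1$ almost surely, since $\|\mDelta_t\|\le\sigma$; here $a>h^2$ is used, and we may take $h\le a$ whenever needed.

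The task is to show $\Ebb[R^{p}\mid\mathcal F_t]\le1$. Write $R^2=(1-a)^2+2(1-a)\etapi\,\xi+\etapi^2\|\mDelta_t\vu\|^2$, where $\xi$ is centered. The natural tool is $x^{p/2}\le 1+\tfrac p2(x-1)+\ldots$, and concavity applies when $p\le2$. When $p>2$, I would instead expand $\log R^2$ and use the bound $\Ebb e^{\theta\xi}\le e^{\theta^2h^2/2}$ (Hoeffding for bounded, conditionally centered variables). This yields
\[
\Ebb R^{p}\le\exp\Bigl(p\bigl(-a+\tfrac{(p+1)h^2}{2}\bigr)+O(\text{higher order})\Bigr),
\]
and the choice $p=(a-h^2)/h^2$ makes the exponent nonpositive.

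The key step, and the main obstacle, is carrying out this exponent calculation cleanly enough to land exactly on $p=(a-h^2)/h^2$. I would first try the inequality $\log(1+y)\le y-y^2/2+y^3/3$ combined with $|y|\le 2a+h$, possibly absorbing constants via the $a\ge2h^2$ regime.

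Once $Y_t$ is a supermartingale with $Y_0=1$, Ville's maximal inequality gives
\[
\Pbb\{\exists t:\|\vx_t\|>B\}=\Pbb\{\sup_t Y_t\ge B^{p}\}\le B^{-p},
\]
which is the first claim.

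If $a\ge2h^2$, then $a-h^2\ge a/2$, so $p\ge a/(2h^2)=(\lambda-\lambda_\star)/(2\etapi\sigma^2)$. This gives the second bound.

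Finally, take $\lambda-\lambda_\star\ge g_+$. Then $a\ge \etapi g_+\ge 2h^2$, because $g_+\ge2\etapi\sigma^2$. The exponent is at least $\max\{1,\log(N/\delta)/\log B\}$, so $B^{-p}\le\delta/N$.
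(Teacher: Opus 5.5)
Your overall plan matches the paper's: make $\|\vx_t\|^{p}$ with $p=(a-h^2)/h^2$ a nonnegative supermartingale, close with Ville's maximal inequality, then make the two corollary deductions, which you carry out correctly. Your case $p\le2$ is also sound. The tangent-line bound for the concave map $x\mapsto x^{p/2}$, together with $\Ebb[R^2\mid\mathcal F_t]\le(1-a)^2+h^2\le1-a+h^2<1$, gives $\Ebb[R^p\mid\mathcal F_t]\le1$.

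The gap is the case $p>2$, which is the substantive one: at $\lambda=\lambda_\star+g_+$ one has $p=2\max\{1,\log(N/\delta)/\log B\}-1$. There you never establish $\Ebb[R^p\mid\mathcal F_t]\le1$. The ``$O(\text{higher order})$'' remainder is left uncontrolled. Your bound $|y|\le2a+h$ is wrong, since the cross term alone can reach $2h$. Absorbing constants in the regime $a\ge2h^2$ would not prove the first claim, which is asserted for every $a>h^2$.

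Also drop the remark $R\le1-a+h\le1$. It requires $h\le a$, which does not follow from $a>h^2$ when $h<1$. If it held, $\|\vx_t\|$ would be nonincreasing and the lemma would be trivial. Assuming it restricts you to gaps $\lambda-\lambda_\star\ge\sigma$, typically far larger than $g_+$.

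The missing idea is that you need no expansion of $\log$ beyond first order, because the quadratic noise can be bounded deterministically before exponentiating. Let $\hat\vx_t=\vx_t/\|\vx_t\|$ and $\xi_t\coloneqq-2\etapi\langle\mM(\lambda)\hat\vx_t,\mDelta_t\hat\vx_t\rangle$, which is conditionally centered with $|\xi_t|\le2h$. Then
\[
\frac{\|\vx_{t+1}\|^2}{\|\vx_t\|^2}\le(1-a)^2+\xi_t+h^2\le1+\xi_t-(a-h^2)\le\exp\bigl(\xi_t-(a-h^2)\bigr),
\]
using $\|\mM(\lambda)\|\le1-a$, $\etapi^2\|\mDelta_t\hat\vx_t\|^2\le h^2$, $a^2\le a$ (as $0\le a\le1$), and $1+y\le e^y$. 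Raising this to the power $p/2$ and applying Hoeffding's lemma to $\xi_t$ gives
\[
\Ebb\bigl[\|\vx_{t+1}\|^p\mid\mathcal F_t\bigr]\le\|\vx_t\|^p\exp\Bigl(\tfrac p2\bigl(ph^2-(a-h^2)\bigr)\Bigr).
\]
The exponent vanishes exactly at $p=(a-h^2)/h^2$, for every $a>h^2$, with no case split. The stated exponent comes from the crude drift bound $-2a+a^2\le-a$. Your heuristic $p\bigl(-a+(p+1)h^2/2\bigr)$ keeps the full drift, so it has slack, but you do not need that slack and there is no remainder left to control.
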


\begin{proof}[Proof of \Cref{thm:lambda_hat_bound}]
Recall that $g_-$ and $g_+$ in \eqref{eq:pi-gaps} are the noise margins below and above $\lambda_\star$, respectively.
The finite horizon contributes $e_{\mathrm p}=((2B/\varepsilon)^{1/T_{\mathrm p}}-1)/\etapi$.
Fix an integer $T_{\mathrm p}\ge1$.

\medskip\noindent\textbf{Step 1. Control the queries outside the error margins.}\par\nobreak\smallskip
By the gap form \eqref{eq:pi-explode-gap} of \Cref{lem:pi-explode} with
$g=g_-$, a query at
$\lambda\le\lambda_\star-\max\{g_-,e_{\mathrm p}\}$ explodes with
probability at least $1-\delta/N$. By \Cref{lem:pi-noexplode}, a query at
$\lambda\ge\lambda_\star+g_+$ does not explode with probability at least
$1-\delta/N$.

For the $k$th query, let $\mathcal G_{k-1}$ contain all observations of
the preceding queries, and let $E_k$ be the event that this query returns
the wrong outcome in either of these two regions. Conditionally on
$\mathcal G_{k-1}$, the queried shift is fixed and \Cref{ass:pi-noise}
still holds. The two regions are disjoint, so the query can incur only
one of the two errors. Therefore
\[
 \Pbb(E_k\mid\mathcal G_{k-1})\le\frac\delta N
 \qquad\text{almost surely}.
\]
Taking expectations and then a union bound gives
\begin{align*}
 \Pbb\Bigl(\bigcup_{k=1}^N E_k\Bigr)
 &\le\sum_{k=1}^N\Pbb(E_k)
       &&\explain{union bound}\\
 &=\sum_{k=1}^N\Ebb\bigl[\Pbb(E_k\mid\mathcal G_{k-1})\bigr]
       &&\explain{tower property}\\
 &\le\sum_{k=1}^N\frac\delta N=\delta.
       &&\explain{the conditional bound}
\end{align*}
Thus, with probability at least $1-\delta$, every query in either region
has the stated outcome. Queries strictly between the two regions may
return either outcome.

  \medskip\noindent\textbf{\WCupdate{Step 2. The search interval.}}\par\nobreak\smallskip
  \WCupdate{On this event, an exploding query has $\lambda<\lambda_\star+g_+$ and a non-exploding query has $\lambda>\lambda_\star-\max\{g_-,e_{\mathrm p}\}$. Every lower endpoint of the search interval is either $0$ or an exploding query, hence at most $\lambda_\star+g_+$. Every upper endpoint is either $\Lambda$, which is at least $\lambda_\star$ by \Cref{ass:pi-bracket}, or a non-exploding query, hence at least $\lambda_\star-\max\{g_-,e_{\mathrm p}\}$. After $N$ halvings the search interval has length $\Lambda/2^N$, so}
\[
 \lambda_\star-\max\{g_-,e_{\mathrm p}\}\le\widehat\lambda=\lambda_{\mathrm{lo}}+\frac{\Lambda}{2^N}\le\lambda_\star+g_++\frac{\Lambda}{2^N},
\]
which is \eqref{eq:final_lambda_bound}.

\medskip\noindent\textbf{Step 3. The crossover.}\par\nobreak\smallskip
By the definition of $e_{\mathrm p}$,
\begin{align*}
 e_{\mathrm p}\le g_-
 &\iff (2B/\varepsilon)^{1/T_{\mathrm p}}\le1+\etapi g_-\\
 &\iff \frac{\log(2B/\varepsilon)}{T_{\mathrm p}}\le\log(1+\etapi g_-)\\
 &\iff T_{\mathrm p}\ge\frac{\log(2B/\varepsilon)}{\log(1+\etapi g_-)}.
\end{align*}
The second equivalence takes logarithms of positive quantities.
Since $T_{\mathrm p}$ is an integer, this is equivalent to the ceiling condition in \eqref{eq:pi-Tmin}. This completes the proof of \Cref{thm:lambda_hat_bound}.
\end{proof}

\subsection{Proof of the Explosion Bound}\label{sec:pi-below}
Fix a query at a shift $\lambda\in[0,\lambda_\star)$. Recall that $\rho=\rho(\lambda)=1+\etapi(\lambda_\star-\lambda)>1$ is the largest eigenvalue of $\mM(\lambda)$, attained at $\vu_{\min}$, so that $\vu_{\min}^\top\mM(\lambda)=\rho\,\vu_{\min}^\top$, and that $\tau$ is the first time at which $\|\vx_t\|$ exceeds the cutoff $B$. Choose the sign of $\vu_{\min}$ so that $\vu_{\min}^\top\vx_0\ge\varepsilon$, which \Cref{ass:pi-bracket} allows.

\begin{proof}[Proof of \Cref{lem:pi-explode}]
The idea is to divide the bottom-eigenvector component by its mean growth
factor $\rho^t$ and stop it at $\tau$. The resulting martingale has increment
bounds whose squares form a summable geometric sequence. The
Azuma--Hoeffding inequality then bounds the probability that this component
falls as far as the event $\tau>T_{\mathrm p}$ would require.

\medskip\noindent\textbf{Step 1. A stopped martingale with summable increment bounds.}\par\nobreak\smallskip
Set $Z_t\coloneqq\rho^{-t}\,\vu_{\min}^\top\vx_t$. By the iteration \eqref{eq:pi-iteration},
\begin{align*}
 Z_{t+1}-Z_t
 &=\rho^{-(t+1)}\,\vu_{\min}^\top\bigl(\mM(\lambda)-\etapi\mDelta_t\bigr)\vx_t-\rho^{-t}\,\vu_{\min}^\top\vx_t\\
 &=\bigl(\rho^{-(t+1)}\rho-\rho^{-t}\bigr)\vu_{\min}^\top\vx_t
   -\etapi\rho^{-(t+1)}\vu_{\min}^\top\mDelta_t\vx_t\\
 &=-\etapi\rho^{-(t+1)}\,\vu_{\min}^\top\mDelta_t\vx_t.
\end{align*}
The second equality uses
$\vu_{\min}^\top\mM(\lambda)=\rho\vu_{\min}^\top$.
Since $\vx_t$ is $\mathcal F_t$-measurable, the conditional increment is
\begin{align*}
 \Ebb[Z_{t+1}-Z_t\mid\mathcal F_t]
 &=-\etapi\rho^{-(t+1)}\vu_{\min}^\top
       \Ebb[\mDelta_t\mid\mathcal F_t]\vx_t
       &&\explain{conditional measurability}\\
 &=0.
       &&\explain{\Cref{ass:pi-noise}}
\end{align*}
The bounded noise makes the state, and hence $Z_t$, integrable at each
finite time. Thus $(Z_t)_{t\ge0}$ is a martingale with
$Z_0=\vu_{\min}^\top\vx_0\ge\varepsilon$.

We stop it at $\tau$. The event
$\{t<\tau\}=\{\max_{s\le t}\|\vx_s\|\le B\}$ is
$\mathcal F_t$-measurable. The stopped increments therefore satisfy
\begin{align*}
 Z_{(t+1)\wedge\tau}-Z_{t\wedge\tau}
 &=\mathbf 1_{\{t<\tau\}}(Z_{t+1}-Z_t),\\
 \Ebb[Z_{(t+1)\wedge\tau}-Z_{t\wedge\tau}\mid\mathcal F_t]
 &=\mathbf 1_{\{t<\tau\}}\Ebb[Z_{t+1}-Z_t\mid\mathcal F_t]\\
 &=0.
\end{align*}
Thus the stopped process is again a martingale. On $\{t<\tau\}$ we have
$\|\vx_t\|\le B$, so
\begin{align*}
 \bigl|Z_{(t+1)\wedge\tau}-Z_{t\wedge\tau}\bigr|
 &=\mathbf 1_{\{t<\tau\}}\etapi\rho^{-(t+1)}
       |\vu_{\min}^\top\mDelta_t\vx_t|\\
 &\le\mathbf 1_{\{t<\tau\}}\etapi\rho^{-(t+1)}
       \|\vu_{\min}\|\,\|\mDelta_t\|\,\|\vx_t\|
       &&\explain{Cauchy--Schwarz}\\
 &\le\etapi\sigma B\,\rho^{-(t+1)}.
       &&\explain{$\|\vx_t\|\le B$ before $\tau$}
\end{align*}
Since $\rho>1$, extending the series of positive terms and summing it gives
\begin{align*}
 \sum_{t=0}^{T_{\mathrm p}-1}\bigl(\etapi\sigma B\rho^{-(t+1)}\bigr)^2
 &=\etapi^2\sigma^2B^2\sum_{t=1}^{T_{\mathrm p}}\rho^{-2t}
       &&\explain{reindex}\\
 &\le\etapi^2\sigma^2B^2\sum_{t=1}^{\infty}\rho^{-2t}
       &&\explain{nonnegative summands}\\
 &=\etapi^2\sigma^2B^2\cdot\frac{\rho^{-2}}{1-\rho^{-2}}
       &&\explain{geometric series, $\rho>1$}\\
 &=\frac{\etapi^2\sigma^2B^2}{\rho^2-1}.
       &&\explain{multiply by $\rho^2/\rho^2$}
\end{align*}

\medskip\noindent\textbf{Step 2. The Azuma--Hoeffding bound.}\par\nobreak\smallskip
Suppose $\tau>T_{\mathrm p}$. Then $T_{\mathrm p}\wedge\tau=T_{\mathrm p}$ and $\|\vx_{T_{\mathrm p}}\|\le B$, so by the Cauchy--Schwarz inequality
\begin{align*}
 Z_{T_{\mathrm p}\wedge\tau}
 &=Z_{T_{\mathrm p}}
  =\rho^{-T_{\mathrm p}}\vu_{\min}^\top\vx_{T_{\mathrm p}}\\
 &\le\rho^{-T_{\mathrm p}}\|\vx_{T_{\mathrm p}}\|\\
 &\le B\rho^{-T_{\mathrm p}}.
\end{align*}
Since $Z_0\ge\varepsilon$, it follows that
\[
 Z_{T_{\mathrm p}\wedge\tau}-Z_0
 \le B\rho^{-T_{\mathrm p}}-\varepsilon<0.
\]
By the Azuma--Hoeffding inequality (\Cref{lem:pi-azuma}), applied to the martingale $(Z_{t\wedge\tau})$ with the increment bounds above and with $r=\varepsilon-B\rho^{-T_{\mathrm p}}>0$,
\begin{align*}
 \Pbb\{\tau>T_{\mathrm p}\}
 &\le\Pbb\bigl\{Z_{T_{\mathrm p}\wedge\tau}-Z_0
      \le-(\varepsilon-B\rho^{-T_{\mathrm p}})\bigr\}\\
 &\le\exp\biggl(-\frac{(\varepsilon-B\rho^{-T_{\mathrm p}})^2}
      {2\etapi^2\sigma^2B^2/(\rho^2-1)}\biggr),
\end{align*}
which is \eqref{eq:pi-explode-bound}.

\medskip\noindent\textbf{Step 3. The gap form.}\par\nobreak\smallskip
Suppose $\lambda\le\lambda_\star-\max\{g,e_{\mathrm p}\}$. Recall that $e_{\mathrm p}=((2B/\varepsilon)^{1/T_{\mathrm p}}-1)/\etapi$, so $1+\etapi e_{\mathrm p}=(2B/\varepsilon)^{1/T_{\mathrm p}}$. Then
\begin{align*}
 \rho&=1+\etapi(\lambda_\star-\lambda)\ge1+\etapi\max\{g,e_{\mathrm p}\},\\
 B\rho^{-T_{\mathrm p}}&\le B\,(1+\etapi e_{\mathrm p})^{-T_{\mathrm p}}\\
 &=B\cdot\frac{\varepsilon}{2B}=\frac{\varepsilon}{2},\\
 \rho^2-1&=(\rho-1)(\rho+1)\ge2(\rho-1)\ge2\etapi g.
\end{align*}
The last line uses $\rho\ge1$ and $\rho\ge1+\etapi g$.
Substituting these two bounds into the exponent of \eqref{eq:pi-explode-bound},
\begin{align*}
 \frac{(\varepsilon-B\rho^{-T_{\mathrm p}})^2(\rho^2-1)}{2\etapi^2\sigma^2B^2}
 &\ge\frac{(\varepsilon/2)^2\cdot2\etapi g}{2\etapi^2\sigma^2B^2}\\
 &=\frac{\varepsilon^2g}{4\etapi\sigma^2B^2},
\end{align*}
which gives \eqref{eq:pi-explode-gap}.
Recall $g_-=4\etapi\sigma^2B^2\log(N/\delta)/\varepsilon^2$.
Substituting this value gives
\begin{align*}
 \frac{\varepsilon^2g_-}{4\etapi\sigma^2B^2}
 &=\frac{\varepsilon^2}{4\etapi\sigma^2B^2}
   \frac{4\etapi\sigma^2B^2}{\varepsilon^2}\log\frac N\delta\\
 &=\log\frac N\delta,\\
 \Pbb\{\tau>T_{\mathrm p}\}
 &\le\exp\left(-\log\frac N\delta\right)=\frac\delta N.
\end{align*}
This completes the proof of \Cref{lem:pi-explode}.
\end{proof}
The horizon enters \eqref{eq:pi-explode-bound} through
$B\rho^{-T_{\mathrm p}}$. The definition of $e_{\mathrm p}$ ensures that
this quantity is at most $\varepsilon/2$.

\subsection{Proof of the No-Explosion Bound}\label{sec:pi-above}
Fix a query at $\lambda\in[\lambda_\star,\Lambda]$.
In the adaptive application, we condition on $\mathcal F_0$, which contains
the previous queries, so this shift is fixed throughout the proof.
Recall that the update is
$\vx_{t+1}=[\mM(\lambda)-\etapi\mDelta_t]\vx_t$, where
$\mM(\lambda)=\mI-\etapi(\mA+\lambda\mI)$ and
$\mDelta_t=\mA_{\batch_t}-\mA$ has conditional mean zero and norm at most
$\sigma$. Define the contraction
margin and the noise bound per step by
\[
 a\coloneqq\etapi(\lambda-\lambda_\star),
 \qquad h\coloneqq\etapi\sigma.
\]
The margin lies in $[0,1]$. Indeed, since
$\lambda_\star=-\lambda_{\min}(\mA)$,
\begin{align*}
 0\le a
 &=\etapi\bigl(\lambda+\lambda_{\min}(\mA)\bigr)\\
 &\le\etapi\bigl(\Lambda+\lambda_{\max}(\mA)\bigr)\\
 &\le1. &&\explain{\Cref{ass:pi-step}}
\end{align*}
The same assumption bounds the eigenvalues of the mean transition:
\[
 0\le1-\etapi\bigl(\lambda_{\max}(\mA)+\Lambda\bigr)
 \le\lambda_i\bigl(\mM(\lambda)\bigr)
 \le1-\etapi\bigl(\lambda_{\min}(\mA)+\lambda\bigr)=1-a.
\]
The upper endpoint is attained at $\vu_{\min}$, so
\begin{equation}\label{eq:pi-kappa}
 \|\mM(\lambda)\|=1-a.
\end{equation}
We use this contraction to control the chance that curvature noise pushes
the trajectory past the cutoff.

\begin{proof}[Proof of \Cref{lem:pi-noexplode}]
We choose a positive power of the norm whose conditional expectation cannot
increase. Hoeffding's lemma controls the curvature noise in one step, and
Ville's maximal inequality then controls the probability of ever crossing
the cutoff.

\medskip\noindent\textbf{Step 1. Expand the one-step change in the norm.}\par\nobreak\smallskip
For $\vx_t\ne\zeroB$, set $\hat\vx_t\coloneqq\vx_t/\|\vx_t\|$.
Define
\[
 \xi_t\coloneqq-2\etapi
 \langle\mM(\lambda)\hat\vx_t,\mDelta_t\hat\vx_t\rangle.
\]
If $\vx_t=\zeroB$, set both $\hat\vx_t$ and $\xi_t$ to zero.
Then $\vx_{t+1}=\zeroB$ and the norm bounds below remain valid.
For $\vx_t\ne\zeroB$, expanding the update gives
\begin{align*}
 \frac{\|\vx_{t+1}\|^2}{\|\vx_t\|^2}
 &=\|\mM(\lambda)\hat\vx_t-\etapi\mDelta_t\hat\vx_t\|^2
       &&\explain{the update}\\
 &=\|\mM(\lambda)\hat\vx_t\|^2
       -2\etapi\langle\mM(\lambda)\hat\vx_t,
                           \mDelta_t\hat\vx_t\rangle\notag\\
 &\qquad+\etapi^2\|\mDelta_t\hat\vx_t\|^2
       &&\explain{expand the square}\\
 &=\|\mM(\lambda)\hat\vx_t\|^2+\xi_t
       +\etapi^2\|\mDelta_t\hat\vx_t\|^2
       &&\explain{the definition of $\xi_t$}\\
 &\le(1-a)^2+\xi_t+h^2
       &&\explain{\eqref{eq:pi-kappa} and $\|\mDelta_t\|\le\sigma$}\\
 &=1-2a+a^2+\xi_t+h^2
       &&\explain{expand $(1-a)^2$}\\
 &\le1+\xi_t-(a-h^2)
       &&\explain{$0\le a\le1$}\\
 &\le\exp\bigl(\xi_t-(a-h^2)\bigr)
 &&\explain{$1+y\le e^y$}
\end{align*}

The conditional mean of $\xi_t$ is zero. The queried shift and
$\hat\vx_t$ are $\mathcal F_t$-measurable, so \Cref{ass:pi-noise} gives
\begin{align*}
 \Ebb[\xi_t\mid\mathcal F_t]
 &=-2\etapi\Ebb\bigl[
   \langle\mM(\lambda)\hat\vx_t,\mDelta_t\hat\vx_t\rangle
   \mid\mathcal F_t\bigr]\\
 &=-2\etapi\left\langle
   \mM(\lambda)\hat\vx_t,
   \Ebb[\mDelta_t\mid\mathcal F_t]\hat\vx_t\right\rangle
       &&\explain{conditional measurability}\\
 &=0.
       &&\explain{\Cref{ass:pi-noise}}
\end{align*}
Its magnitude is bounded by $2h$:
\begin{align*}
 |\xi_t|
 &\le2\etapi\|\mM(\lambda)\hat\vx_t\|
                 \|\mDelta_t\hat\vx_t\|
       &&\explain{Cauchy--Schwarz}\\
 &\le2\etapi(1-a)\sigma
       &&\explain{$\|\hat\vx_t\|\le1$}\\
 &\le2\etapi\sigma=2h.
       &&\explain{$0\le a\le1$}
\end{align*}

\medskip\noindent\textbf{Step 2. Construct a nonnegative supermartingale.}\par\nobreak\smallskip
Since $a>h^2$ and $h=\etapi\sigma>0$, define
\[
 p\coloneqq\frac{a-h^2}{2h^2}>0,
 \qquad Y_t\coloneqq\|\vx_t\|^{2p}.
\]
The preceding norm-ratio bound implies
\[
 Y_{t+1}\le Y_t\exp\bigl(p\xi_t-p(a-h^2)\bigr).
\]
When $\vx_t=\zeroB$, the next state is also zero, so this inequality holds
with both sides zero without dividing by $\|\vx_t\|$.
By Hoeffding's lemma (\Cref{lem:pi-hoeffding}), applied conditionally to
$\xi_t$ with $|\xi_t|\le2h$,
\begin{align}
 \Ebb[Y_{t+1}\mid\mathcal F_t]
 &\le\Ebb\bigl[Y_t e^{p\xi_t-p(a-h^2)}\mid\mathcal F_t\bigr]
       &&\explain{the one-step bound}\notag\\
 &=Y_t e^{-p(a-h^2)}\Ebb[e^{p\xi_t}\mid\mathcal F_t]
       &&\explain{$Y_t$ is $\mathcal F_t$-measurable}\notag\\
 &\le Y_t\exp\left(-p(a-h^2)+\frac{p^2(2h)^2}{2}\right)
       &&\explain{Hoeffding's lemma}\notag\\
 &=Y_t\exp\left(-\frac{(a-h^2)^2}{2h^2}
                    +\frac{(a-h^2)^2}{2h^2}\right)
       &&\explain{the definition of $p$}\notag\\
 &=Y_t.\label{eq:pi-lognorm}
 \end{align}
Because $Y_0=1$, induction gives $\Ebb[Y_t]\le1$ for every finite $t$.
Thus $(Y_t)_{t\ge0}$ is an integrable nonnegative supermartingale.

\medskip\noindent\textbf{Step 3. Bound the probability of crossing the cutoff.}\par\nobreak\smallskip
Since $p>0$, crossing $B$ in norm is the same as crossing $B^{2p}$ for
$Y_t$. Ville's maximal inequality (\Cref{lem:pi-ville}) gives
\begin{align*}
 \Pbb\bigl\{\exists\,t\ge0:\|\vx_t\|>B\bigr\}
 &=\Pbb\bigl\{\exists\,t\ge0:Y_t>B^{2p}\bigr\}\\
 &\le\Pbb\Bigl\{\sup_{t\ge0}Y_t\ge B^{2p}\Bigr\}\\
 &\le\frac{\Ebb[Y_0]}{B^{2p}}
       &&\explain{Ville's maximal inequality}\\
 &=B^{-2p}
       &&\explain{$Y_0=\|\vx_0\|^{2p}=1$}\\
 &=B^{-(a-h^2)/h^2}.
       &&\explain{the definition of $p$}
\end{align*}
If $a\ge2h^2$, then
\begin{align*}
 \frac{a-h^2}{h^2}
 &\ge\frac{a}{2h^2}\\
 &=\frac{\etapi(\lambda-\lambda_\star)}{2\etapi^2\sigma^2}
  =\frac{\lambda-\lambda_\star}{2\etapi\sigma^2}.
\end{align*}
Since $B>1$, the probability bound is therefore at most
$B^{-(\lambda-\lambda_\star)/(2\etapi\sigma^2)}$.
Finally, recall
$g_+=2\etapi\sigma^2\max\{1,\log(N/\delta)/\log B\}$.
If $\lambda-\lambda_\star\ge g_+$, its two defining lower bounds give
\begin{align*}
 a&=\etapi(\lambda-\lambda_\star)
       \ge2\etapi^2\sigma^2=2h^2,\\
 \frac{\lambda-\lambda_\star}{2\etapi\sigma^2}
   &\ge\frac{\log(N/\delta)}{\log B}.
\end{align*}
Consequently,
\begin{align*}
 B^{-(\lambda-\lambda_\star)/(2\etapi\sigma^2)}
 &\le B^{-\log(N/\delta)/\log B}\\
 &=\exp\bigl(-\log(N/\delta)\bigr)=\frac\delta N.
\end{align*}
This completes the proof of \Cref{lem:pi-noexplode}.
\end{proof}

\subsection{Auxiliary lemmas}\label{sec:pi-tools}
We state the three standard concentration results used above. Throughout,
$(\mathcal F_t)_{t\ge0}$ is a filtration.

\begin{lemma}[Hoeffding's lemma, conditional form; \citet{hoeffding1963probability}, see also Lemma~2.2 of \citet{boucheron2013concentration}]\label{lem:pi-hoeffding}
Let $D$ be a random variable and $\Gcal$ a sigma-algebra such that $\Ebb[D\mid\Gcal]=0$ and $|D|\le a$ almost surely for a constant $a\ge0$. Then for every $\theta\in\Rbb$,
\[
 \Ebb\bigl[e^{\theta D}\mid\Gcal\bigr]\le e^{\theta^2a^2/2}.
\]
\end{lemma}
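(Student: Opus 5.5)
The plan is to reduce the conditional statement to a pathwise convexity inequality plus a deterministic calculus bound, so that no regular conditional distributions are needed. The case $a=0$ is trivial: then $D=0$ almost surely and both sides equal one. So assume $a>0$ and fix $\theta\in\Rbb$.

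\textbf{Step 1: pathwise chord bound.} Since $x\mapsto e^{\theta x}$ is convex and $D\in[-a,a]$ almost surely, writing $D$ as a convex combination of the endpoints gives, almost surely,
\[
 e^{\theta D}\le\frac{a-D}{2a}\,e^{-\theta a}+\frac{a+D}{2a}\,e^{\theta a}.
\]
The right-hand side is affine in $D$ with deterministic coefficients. Taking $\Ebb[\cdot\mid\Gcal]$, using monotonicity and linearity of conditional expectation together with $\Ebb[D\mid\Gcal]=0$, yields almost surely
\[
 \Ebb[e^{\theta D}\mid\Gcal]\le\tfrac12 e^{-\theta a}+\tfrac12 e^{\theta a}=\cosh(\theta a).
\]
Integrability is not an issue, because $e^{\theta D}\le e^{|\theta|a}$ is bounded.

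\textbf{Step 2: deterministic bound $\cosh(s)\le e^{s^2/2}$.} Compare Taylor series termwise:
\[
 \cosh s=\sum_{k\ge0}\frac{s^{2k}}{(2k)!},\qquad e^{s^2/2}=\sum_{k\ge0}\frac{s^{2k}}{2^k k!},
\]
and $(2k)!\ge 2^k k!$ because $(2k)!/k!=\prod_{j=1}^{k}(k+j)\ge 2^k$. Setting $s=\theta a$ gives the claim.

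Alternatively, one can follow the standard route for a general interval $[\alpha,\beta]$: define $\psi(\theta)=\log\bigl(pe^{\theta\beta}+(1-p)e^{\theta\alpha}\bigr)$, note $\psi(0)=\psi'(0)=0$ and $\psi''\le(\beta-\alpha)^2/4$ as the variance of a two-point law, and apply Taylor's theorem. With $\beta-\alpha=2a$ this again gives $\theta^2a^2/2$.

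\textbf{Main obstacle.} The only delicate point is the conditioning. It is handled by the fact that the convexity bound holds pathwise with deterministic coefficients, so the conditional mean-zero hypothesis enters only through linearity. The rest is elementary.
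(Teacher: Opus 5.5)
Your proof is correct. The paper gives no argument for this lemma; it only cites Hoeffding (1963) and Lemma~2.2 of Boucheron et al.

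The route you sketch as an alternative is essentially the one in Boucheron et al.\ They bound $\psi''$ by the variance of an exponentially tilted law supported on an interval, then use $\psi(0)=\psi'(0)=0$ and Taylor's theorem. They apply this to the law of the variable itself, not to the two-point law produced by a chord bound.

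Your main argument instead combines Hoeffding's convexity step, specialized to $[-a,a]$, with the termwise comparison $\cosh s\le e^{s^2/2}$. This gives a clean justification of the \emph{conditional} form, which the textbook statement (for an unconditional expectation) leaves implicit. Because the chord bound holds pathwise with deterministic coefficients, the hypothesis $\Ebb[D\mid\Gcal]=0$ enters only through linearity and monotonicity of conditional expectation. You therefore never need to apply the unconditional lemma to a regular conditional distribution. That is exactly the form the paper uses in the no-explosion bound, where the lemma is applied to $\xi_t$ with $|\xi_t|\le 2h$ given $\mathcal F_t$.

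The cost is that the $\cosh$ comparison is tailored to intervals centered at zero. The tilted-variance route gives the exponent $\theta^2(\beta-\alpha)^2/8$ for any $[\alpha,\beta]$, but the paper only needs the symmetric case.
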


\begin{lemma}[Ville's maximal inequality for nonnegative supermartingales; \citet{ville1939etude}, Theorem~4.8.4 of \citet{durrett2019probability}]\label{lem:pi-ville}
Let $(Y_t)_{t\ge0}$ be a nonnegative supermartingale with respect to $(\mathcal F_t)_{t\ge0}$. Then for every $c>0$,
\[
 \Pbb\Bigl\{\sup_{t\ge0}Y_t\ge c\Bigr\}\le\frac{\Ebb[Y_0]}{c}.
\]
\end{lemma}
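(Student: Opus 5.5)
One way to prove Ville's maximal inequality is the standard optional-stopping argument applied to a bounded horizon, followed by a passage to the limit. Fix $c>0$ and a finite horizon $n$. Define the stopping time $\sigma\coloneqq\inf\{t\ge0:Y_t\ge c\}$ with $\inf\varnothing=\infty$, which is $(\mathcal F_t)$-measurable because each $Y_t$ is adapted. The stopped process $(Y_{t\wedge\sigma})_{t\ge0}$ is again a nonnegative supermartingale: its increments are $\mathbf 1_{\{t<\sigma\}}(Y_{t+1}-Y_t)$, and $\{t<\sigma\}\in\mathcal F_t$. This is the same observation used for the stopped martingale in the proof of \Cref{lem:pi-explode}. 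It follows that $\Ebb[Y_{n\wedge\sigma}]\le\Ebb[Y_0]$.

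Next, bound the stopped value from below. On the event $\{\max_{0\le t\le n}Y_t\ge c\}$ we have $\sigma\le n$, so $Y_{n\wedge\sigma}=Y_\sigma\ge c$. Off this event, nonnegativity gives $Y_{n\wedge\sigma}\ge0$. Combining the two cases,
\[
 c\,\Pbb\Bigl\{\max_{0\le t\le n}Y_t\ge c\Bigr\}\le\Ebb[Y_{n\wedge\sigma}]\le\Ebb[Y_0].
\]

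Finally, let $n\to\infty$. The events $\{\max_{t\le n}Y_t\ge c\}$ increase to $\{\exists\,t:Y_t\ge c\}$, so continuity of probability from below gives $\Pbb\{\exists\,t:Y_t\ge c\}\le\Ebb[Y_0]/c$.

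The one delicate point is that the statement uses $\sup_tY_t\ge c$, and the supremum can equal $c$ without being attained. To handle this, apply the bound at level $c'<c$: we have $\{\sup_tY_t\ge c\}\subseteq\{\exists\,t:Y_t\ge c'\}$, so $\Pbb\{\sup_tY_t\ge c\}\le\Ebb[Y_0]/c'$. Letting $c'\uparrow c$ gives the stated inequality. Integrability of the $Y_t$ is part of the supermartingale definition, so no further conditions are needed.
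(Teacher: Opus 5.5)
Your proof is correct. The paper does not prove this lemma; it quotes it from the literature as a standard tool.

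A common textbook derivation applies optional stopping for nonnegative supermartingales at the unbounded first-passage time. That route needs the supermartingale convergence theorem to define $Y_\infty$ on the event that the level is never reached. Your route avoids this. You stop only at the bounded times $n\wedge\sigma$, where the inequality follows from the one-step bound on the stopped increments. You then let $n\to\infty$ using continuity of probability from below. So your argument is fully elementary and self-contained.

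You also handle the case where $\sup_t Y_t=c$ is not attained, via $c'\uparrow c$. Any first-passage argument has to deal with this, since it only controls $\{\exists\,t: Y_t\ge c\}$. The application in \Cref{lem:pi-noexplode} actually only needs your intermediate bound, applied to $\{\exists\,t: Y_t>B^{2p}\}$. The final limiting step is needed only for the lemma as literally stated.

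One wording fix: what you need is that $\sigma$ is a stopping time. That holds because $\{\sigma\le t\}=\bigcup_{s\le t}\{Y_s\ge c\}\in\mathcal F_t$, which is stronger than $\sigma$ merely being measurable.
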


\noindent
\begin{minipage}{\linewidth}
\begin{lemma}[Azuma--Hoeffding inequality; \citet{azuma1967weighted,hoeffding1963probability}]\label{lem:pi-azuma}
Let $(D_t)_{t\ge0}$ be random variables such that $D_t$ is $\mathcal F_{t+1}$-measurable, $\Ebb[D_t\mid\mathcal F_t]=0$, and $|D_t|\le a_t$ almost surely for constants $a_t\ge0$. Let $S_t\coloneqq\sum_{s<t}D_s$.
For every integer $T_{\mathrm p}\ge1$ with $\sum_{t<T_{\mathrm p}}a_t^2>0$ and every $r>0$,
 \[
 \Pbb\{S_{T_{\mathrm p}}\le-r\}\le\exp\Bigl(-\frac{r^2}{2\sum_{t<T_{\mathrm p}}a_t^2}\Bigr).
 \]
\end{lemma}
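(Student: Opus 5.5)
The plan is the standard Chernoff (exponential Markov) argument, with the conditional Hoeffding lemma (\Cref{lem:pi-hoeffding}) supplying the one-step moment generating function bound. Write $V\coloneqq\sum_{t<T_{\mathrm p}}a_t^2>0$. Fix $\theta>0$. Since $\{S_{T_{\mathrm p}}\le-r\}=\{e^{-\theta S_{T_{\mathrm p}}}\ge e^{\theta r}\}$, Markov's inequality gives
\[
 \Pbb\{S_{T_{\mathrm p}}\le-r\}\le e^{-\theta r}\,\Ebb\bigl[e^{-\theta S_{T_{\mathrm p}}}\bigr].
\]
Every quantity here is finite, because $|S_t|\le\sum_{s<t}a_s$ almost surely, so all the exponentials are bounded.

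The main step is to bound $\Ebb[e^{-\theta S_{T_{\mathrm p}}}]$ by peeling off one increment at a time. First note that $S_t$ is $\mathcal F_t$-measurable. Indeed, each $D_s$ with $s<t$ is $\mathcal F_{s+1}$-measurable, and $\mathcal F_{s+1}\subseteq\mathcal F_t$ because $(\mathcal F_t)$ is a filtration. Hence, by the tower property and pulling out the nonnegative $\mathcal F_t$-measurable factor,
\[
 \Ebb\bigl[e^{-\theta S_{t+1}}\bigr]
 =\Ebb\Bigl[e^{-\theta S_t}\,\Ebb\bigl[e^{-\theta D_t}\mid\mathcal F_t\bigr]\Bigr]
 \le e^{\theta^2a_t^2/2}\,\Ebb\bigl[e^{-\theta S_t}\bigr].
\]
The inequality is \Cref{lem:pi-hoeffding} applied with $D=D_t$, $\Gcal=\mathcal F_t$, bound $a_t$, and parameter $-\theta$; its hypotheses $\Ebb[D_t\mid\mathcal F_t]=0$ and $|D_t|\le a_t$ are exactly those of the statement. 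Starting from $S_0=0$ and inducting on $t$ yields
\[
 \Ebb\bigl[e^{-\theta S_{T_{\mathrm p}}}\bigr]\le\exp\Bigl(\tfrac{\theta^2}{2}V\Bigr).
\]

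Combining the two displays gives $\Pbb\{S_{T_{\mathrm p}}\le-r\}\le\exp(-\theta r+\theta^2V/2)$ for every $\theta>0$. The exponent is minimized at $\theta=r/V$, which is positive since $r>0$ and $V>0$. Substituting this value gives $\exp(-r^2/(2V))$, which is the claimed bound.

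I do not expect a genuine obstacle. The only points needing care are the measurability of $S_t$ with respect to $\mathcal F_t$, which justifies pulling $e^{-\theta S_t}$ out of the conditional expectation, and the use of $V>0$ in the choice of $\theta$. If one prefers not to rely on the conditional form of Hoeffding's lemma, it can be proved by the same convexity argument as the unconditional version: bound $e^{-\theta x}$ on $[-a_t,a_t]$ by the chord through the endpoints, take conditional expectations so the linear term vanishes, and bound the resulting log-MGF by $\theta^2a_t^2/2$.
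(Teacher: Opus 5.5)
Your proof is correct: it is the standard Chernoff argument. The conditional Hoeffding lemma (\Cref{lem:pi-hoeffding}) bounds the conditional moment generating function of each increment, and you handle the $\mathcal F_t$-measurability of $S_t$ and the choice $\theta=r/V>0$ properly. The paper does not prove this lemma; it cites it as the classical Azuma--Hoeffding inequality, and your argument is the standard proof of that result.
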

\end{minipage}\par

\clearpage
\section{Proof of the Compensated Finite-Horizon Threshold Bound}\label{fh:sec:appendix}
\HWupdate{In this appendix, we give the detailed proof of \Cref{fh:thm:main}.} The proof compares the projected critical LR, the compensated critical LR, and the finite-horizon detector output. \Cref{fh:sec:criterion} gives the stability results used in this comparison. \Cref{fh:sec:assembly} then proves the main bound from one compensation bound and one horizon bound. We prove these two bounds in \Cref{fh:sec:comparison,fh:sec:horizon}. The proofs of the stability results and the auxiliary matrix facts follow in \Cref{fh:sec:criterion-proofs,fh:sec:tools}.

\subsection{Stability Criterion}\label{fh:sec:criterion}

We first analyze a generic momentum recurrence. Let the mean be a symmetric positive definite $\mC\in\Sbb_n$, let the parameters satisfy $-1<b<1$ and $\alpha>0$, and let the noise matrices be i.i.d.\ and symmetric $\mDelta_t$ with $\Ebb\mDelta_t=0$, $\Ebb\norm{\mDelta_t}_F^2<\infty$, independent of the deterministic initial pair $(\vx_0,\vx_{-1})$:
\begin{equation}\label{fh:eq:genericrecurrence}
 \vx_{t+1}=\bigl[(1+b)\mI-\alpha(\mC+\mDelta_t)\bigr]\vx_t-b\,\vx_{t-1}.
\end{equation}
The history state $\vy_t\coloneqq(\vx_t,\vx_{t-1})^\top$ evolves by $\vy_{t+1}=\mF_t\vy_t$ with
\begin{equation}\label{fh:eq:F}
 \mF_t\coloneqq
 \begin{pmatrix}
 (1+b)\mI-\alpha(\mC+\mDelta_t)&-b\,\mI\\
 \mI&0
 \end{pmatrix},\qquad
 \Tcal(\mS)\coloneqq\Ebb[\mF_t\mS\mF_t^\top],
\end{equation}
where $\Tcal$, the \emph{second-moment map}, sends the second moment of $\vy_t$ to that of $\vy_{t+1}$.
Strict mean-square stability means that $\Ebb\|\vy_t\|^2\to0$ for every deterministic initial pair.

We write $\Sbb_n$ and $\Sbb_n^+$ for the symmetric and positive semidefinite $n\times n$ matrices, respectively, and $\langle\mX,\mY\rangle=\tr(\mX\mY)$ for their Frobenius inner product. The largest eigenvalue of the generic mean is $L_{\mC}\coloneqq\lambda_{\max}(\mC)$. This distinguishes it from the paper's $L=\lambda_{\max}(\mA)$.

The stability quotient $q$ gives both an exact test for stability and a quantitative bound when the moment spectral radius may exceed one. The proofs are in \Cref{fh:sec:criterion-proofs}.

Recall the quotient and its maximum from \eqref{fh:eq:quotient}--\eqref{fh:eq:variational}:
\begin{align*}
 q_{\mC,b}(\mX)
 &=\frac{\Ebb\tr(\mX\mDelta_t\mX\mDelta_t)
       +c_b\tr(\mX\mC\mX\mC)-k_b\norm{[\mC,\mX]}_F^2}
      {2\tr(\mC\mX^2)},\\
 r_{\mC}(b)&=\max_{\mX\succeq0,\ \norm{\mX}_F=1}q_{\mC,b}(\mX),
 \qquad c_b=\frac{1-b}{1+b},\quad k_b=\frac{b}{1-b^2}.
\end{align*}
The first result compares the normalized inverse coefficient $(1-b)/\alpha$ with the largest value of this quotient.

\begin{theorem}[Exact criterion]\label{fh:thm:exact}
For the momentum recurrence \eqref{fh:eq:genericrecurrence} with $\mC\succ0$, $-1<b<1$, $\alpha>0$, and i.i.d.\ symmetric centered noise of finite second moment, strict mean-square stability holds if and only if
\begin{equation}\label{fh:eq:pdtest}
 \frac{1-b}{\alpha}>r_{\mC}(b).
\end{equation}
Moreover, $r_{\mC}(b)\ge c_bL_{\mC}/2>0$. For fixed $b$, the stable coefficients form the interval $0<\alpha<(1-b)/r_{\mC}(b)$.
\end{theorem}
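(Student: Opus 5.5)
The plan is to analyze the second-moment map $\Tcal$ of \eqref{fh:eq:F} on $\Sbb_{2n}$ through its $n\times n$ blocks, and to turn the spectral condition $\rho(\Tcal)<1$ into the variational test \eqref{fh:eq:pdtest}. By \Cref{fh:lem:covariance}, strict mean-square stability is equivalent to $\rho(\Tcal)<1$. $\Tcal$ preserves $\Sbb_{2n}^+$, so by \Cref{fh:lem:perron} its spectral radius $r$ has a PSD eigenmatrix. The same holds for the adjoint $\Tcal^*(\mS)=\Ebb[\mF_t^\top\mS\mF_t]$.

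\textbf{Step 1: split off the noise.} Write $\Tcal=\Tcal_0+\alpha^2\mathcal N$, where $\Tcal_0(\mS)=\bar\mF\mS\bar\mF^\top$ is the deterministic heavy-ball map. The noise map is
\[
\mathcal N(\mS)=\begin{pmatrix}\Ebb[\mDelta_t\mS_{11}\mDelta_t]&0\\0&0\end{pmatrix},
\]
which sees only the top-left block of $\mS$; the cross terms vanish because $\Ebb\mDelta_t=0$. Since $\mC\succ0$ and $-1<b<1$, the deterministic heavy-ball map is stable exactly when $\alpha L_{\mC}<2(1+b)$. In that regime $\rho(\Tcal_0)<1$ and the resolvent $(\mathcal I-\Tcal_0)^{-1}$ is a positive map. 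For a forcing $\mathrm{diag}(\mY,0)$ with $\mY\succeq0$, define the linear map $\mathcal G$ on $\Sbb_n$ by letting $\mathcal G(\mY)$ be the $(1,1)$ block of $(\mathcal I-\Tcal_0)^{-1}\mathrm{diag}(\mY,0)$. Then $\mathcal G(\mY)$ is the stationary covariance of the heavy-ball iteration driven by white noise of covariance $\mY$.

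A standard small-gain or Perron–Frobenius argument for positive maps gives
\[
\rho(\Tcal)<1\iff\rho(\Tcal_0)<1\ \text{and}\ \rho\bigl(\alpha^2\,\mathcal N_{11}\circ\mathcal G\bigr)<1,
\]
where $\mathcal N_{11}(\mX)=\Ebb[\mDelta_t\mX\mDelta_t]$.

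\textbf{Step 2: reduce to an $n\times n$ quadratic form.} Rather than invert $\mathcal G$, I would work with the stationarity equation $\Tcal(\mS)=\mS$ in blocks. The relations
\[
\mS_{21}=\mS_{11}\text{-shifted},\qquad \mS_{22}=\mS_{11},\qquad \mS_{12}=\bigl[(1+b)\mI-\alpha\mC\bigr]\mS_{11}-b\,\mS_{12}^\top
\]
eliminate everything except $\mX:=\mS_{11}$ and the antisymmetric part of $\mS_{12}$. The antisymmetric part solves a Sylvester equation in $\mC$. Pairing the remaining $(1,1)$ equation with the test matrix and using cyclicity of the trace, the symmetric contributions should collapse to
\[
\tfrac{2\alpha(1-b)}{1+b}\dots\quad\text{and}\quad c_b\tr(\mX\mC\mX\mC),
\]
and the Sylvester part should produce $-k_b\norm{[\mC,\mX]}_F^2$. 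The target identity has the form
\[
\langle \mX,\ (\Tcal-\mathcal I)(\cdot)\rangle
=\alpha^2\Bigl(\Ebb\tr(\mX\mDelta_t\mX\mDelta_t)+c_b\tr(\mX\mC\mX\mC)-k_b\norm{[\mC,\mX]}_F^2\Bigr)-2\alpha(1-b)\tr(\mC\mX^2),
\]
up to a positive factor, evaluated on the Perron eigenmatrix. Its sign is the sign of $q_{\mC,b}(\mX)-(1-b)/\alpha$. I expect this algebraic reduction, and in particular the exact bookkeeping of the commutator term when $\mC$ and $\mX$ do not commute, to be the main obstacle. I would first carry it out in the eigenbasis of $\mC$ entrywise, where the Sylvester equation is diagonal with factors $(1+b)^2-\alpha^2\dots$ depending on $c_i,c_j$. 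This should make the weights $c_b$ and $k_b$ visible, and I would then reassemble them as traces.

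\textbf{Step 3: conclude the criterion.} For necessity, suppose $\rho(\Tcal)\ge1$. Take the PSD Perron eigenmatrix at radius $r\ge1$ and let $\mX$ be its normalized $(1,1)$ block. The identity of Step 2 then gives $q_{\mC,b}(\mX)\ge(1-b)/\alpha$, so \eqref{fh:eq:pdtest} fails. For sufficiency, suppose \eqref{fh:eq:pdtest} fails, so some PSD $\mX$ has $q_{\mC,b}(\mX)\ge(1-b)/\alpha$. Lift $\mX$ to a PSD $\mS$ using the block relations of Step 2. Then $\langle\mathcal I,\Tcal^k(\mS)\rangle$ does not decay, which contradicts $\rho(\Tcal)<1$.

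The case $\alpha L_{\mC}\ge2(1+b)$ is already excluded by \eqref{fh:eq:pdtest} through the bound proved next. For the lower bound on $r_{\mC}(b)$, take $\mX=\vu\vu^\top$ with $\vu$ a top eigenvector of $\mC$. The commutator vanishes, the noise term is nonnegative, and the quotient is at least $c_bL_{\mC}^2/(2L_{\mC})=c_bL_{\mC}/2$.

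Finally, for fixed $b$, $r_{\mC}(b)$ does not depend on $\alpha$. Condition \eqref{fh:eq:pdtest} therefore reads $\alpha<(1-b)/r_{\mC}(b)$, which gives the stated interval of stable coefficients.
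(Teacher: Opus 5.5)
Your Step 1 reduction matches the paper's: the feedback criterion plus restriction to the position block. The identity you aim for in Step 2 is also correct. Eliminating the cross block gives $\alpha^2\inner{\mX}{(\Vcal-\Dcal)\mX}$. The cross block is explicit, $(1-b^2)\mS_{12}=\mE\mX-b\mX\mE$ with $\mE=(1+b)\mI-\alpha\mC$, so no Sylvester equation arises. Here $\Vcal=\mathcal N_{11}$, $\Dcal^{-1}=\alpha^2\mathcal G$, and
\[
\Dcal(\mX)=\tfrac{1-b}{\alpha}(\mC\mX+\mX\mC)-c_b\mC\mX\mC+k_b[\mC,[\mC,\mX]].
\]

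The genuine gap is in Step 3, in the paragraph that starts from a PSD $\mX$ with $q_{\mC,b}(\mX)\ge(1-b)/\alpha$ and concludes instability. Nothing forces $\inner{\mI}{\Tcal^k(\mS)}$ to stay away from zero.
\begin{itemize}
\item The lift built from your block relations is $\mS=(\Ical-\Tcal_0)^{-1}\operatorname{diag}(\alpha^2\Dcal\mX,0)$. It need not be PSD, since $\Dcal\mX$ need not be: only $\Dcal^{-1}$ preserves the PSD cone.
\item The lift satisfies $\Tcal(\mS)-\mS=\operatorname{diag}(\alpha^2(\Vcal-\Dcal)\mX,0)$. This matrix has nonnegative pairing with $\mX$, but it is not PSD.
\item $\Tcal$ is not self-adjoint, so one test matrix on which a quadratic form has the wrong sign carries no spectral information.
\end{itemize}
The missing idea is self-adjointness after the reduction. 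Under mean stability, $\Dcal$ is a self-adjoint positive definite operator on $\Sbb_n$ with the Frobenius inner product; this uses $c_b\tr(\mX\mC\mX\mC)-k_b\norm{[\mC,\mX]}_F^2\le c_bL_{\mC}\tr(\mC\mX^2)$. $\Vcal$ is self-adjoint because the noise is symmetric. Hence $\Dcal^{-1}\Vcal$ is similar to $\Dcal^{-1/2}\Vcal\Dcal^{-1/2}$, whose spectral radius is at least the Rayleigh ratio $\inner{\mX}{\Vcal\mX}/\inner{\mX}{\Dcal\mX}$ for every nonzero $\mX$. One bad $\mX$ therefore gives $\rho(\Dcal^{-1}\Vcal)\ge1$, and your Step 1 then gives instability. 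This needs the explicit operator $\Dcal$, so that self-adjointness and positivity can be checked, not just its quadratic form on a Perron eigenmatrix.

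The paragraph that starts from $\rho(\Tcal)\ge1$ has a smaller, fixable problem. Your Step 2 relations come from the fixed-point equation $\Tcal(\mS)=\mS$. The Perron eigenmatrix at a radius $r>1$ instead satisfies $\Tcal(\mS)=r\mS$, so $\mS_{22}=\mS_{11}/r$ and $r\mS_{12}=\mE\mS_{11}-b\mS_{12}^\top$. The identity you derived therefore does not apply to its $(1,1)$ block.

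The fix is to apply Perron--Frobenius to the reduced map $\Dcal^{-1}\Vcal=\alpha^2\mathcal G\circ\mathcal N_{11}$ rather than to $\Tcal$. This map has the same nonzero spectrum as your $\alpha^2\mathcal N_{11}\circ\mathcal G$. It preserves PSD matrices because $\mathcal G$ is a block of a convergent series of PSD terms. If mean stability holds but strict stability fails, Step 1 gives $\rho(\Dcal^{-1}\Vcal)\ge1$. An eigenmatrix $\mW\succeq0$ with $\Vcal\mW=\rho\Dcal\mW$ then yields $\inner{\mW}{\Vcal\mW}\ge\inner{\mW}{\Dcal\mW}$, that is, $q_{\mC,b}(\mW/\norm{\mW}_F)\ge(1-b)/\alpha$.

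This, together with the self-adjoint similarity above, is how the paper proves the equivalence. Your rank-one lower bound and the interval statement are fine.
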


\begin{proposition}[A spectral-radius bound controls the stability quotient]
\label{fh:prop:radius-quotient}
For the same recurrence, suppose additionally that $0\le b<1$. Let $R\ge1$ satisfy $\rho(\Tcal)\le R$. For every $\mX\succeq0$ with $\norm{\mX}_F=1$, set $\mu\coloneqq\tr(\mC\mX^2)>0$. Then
\begin{equation}\label{fh:eq:radius-quotient}
 q_{\mC,b}(\mX)-\frac{1-b}{\alpha}
 \le(R-1)\left[
   \frac{2b}{(R+b)\alpha}
   +\frac{(1-b/R)^2}{2\mu\alpha^2}
 \right].
\end{equation}
\end{proposition}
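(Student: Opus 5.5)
The idea is to rescale time so that the bound $\rho(\Tcal)\le R$ turns into strict mean-square stability of a recurrence of the same form \eqref{fh:eq:genericrecurrence}. Then \Cref{fh:thm:exact} applies to that recurrence, and its conclusion is translated back into a bound on $q_{\mC,b}(\mX)$.

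\textbf{Step 1: rescaled recurrence.} Fix any $r>R$ and set $\vz_t\coloneqq r^{-t/2}\vx_t$. The history state of $\vz$ evolves by $r^{-1/2}\mF_t$ up to a fixed diagonal change of coordinates $\operatorname{diag}(\mI,r^{-1/2}\mI)$. Its second-moment map is therefore similar to $\Tcal/r$, which has spectral radius $\rho(\Tcal)/r\le R/r<1$. By the characterization behind \Cref{fh:lem:covariance}, the $\vz$-recurrence is strictly mean-square stable. It reads
\[
 \vz_{t+1}=r^{-1/2}\bigl[(1+b)\mI-\alpha(\mC+\mDelta_t)\bigr]\vz_t-\tfrac{b}{r}\vz_{t-1}.
\]
I rewrite it in the form \eqref{fh:eq:genericrecurrence} with
\[
 b'\coloneqq b/r,\qquad \alpha'\coloneqq\alpha/\sqrt r,\qquad \mC'\coloneqq\mC+s\mI,
\]
the same noise $\mDelta_t$, and $s$ fixed by matching the identity coefficient. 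This gives
\[
 s=(\sqrt r-1)(1-b/\sqrt r)/\alpha .
\]
Since $0\le b<1$ and $r>1$, we have $s\ge0$, so $\mC'\succ0$ and $0\le b'<1$. The noise is centered, i.i.d.\ and of finite second moment, so all hypotheses of \Cref{fh:thm:exact} hold.

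\textbf{Step 2: apply the exact criterion.} \Cref{fh:thm:exact} gives $(1-b')/\alpha'>r_{\mC'}(b')\ge q_{\mC',b'}(\mX)$ for the given unit-Frobenius $\mX\succeq0$. I expand $q_{\mC',b'}$ using
\[
 [\mC',\mX]=[\mC,\mX],\qquad \tr(\mX\mC'\mX\mC')=\tr(\mX\mC\mX\mC)+2s\mu+s^2,\qquad \tr(\mC'\mX^2)=\mu+s .
\]
The resulting inequality involves the same three quantities that appear in $q_{\mC,b}(\mX)$: $\Ebb\tr(\mX\mDelta_t\mX\mDelta_t)$, $\tr(\mX\mC\mX\mC)$ and $\norm{[\mC,\mX]}_F^2$, but with coefficients $c_{b'}$ and $k_{b'}$ and denominator $2(\mu+s)$.

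\textbf{Step 3: compare with $q_{\mC,b}$ and let $r\downarrow R$.} Multiply through by $2(\mu+s)$ and subtract the corresponding expression for $q_{\mC,b}(\mX)-(1-b)/\alpha$. The differences in the coefficients enter as follows.
\begin{itemize}
\item The commutator coefficient changes from $k_b$ to $k_{b'}$. Since $b'\le b$ and $b\ge0$, we have $k_{b'}\le k_b$, so that change has a favorable sign and can be dropped.
\item The coefficient of $\tr(\mX\mC\mX\mC)$ changes from $c_b$ to $c_{b'}$. This term I bound using $\tr(\mX\mC\mX\mC)\le 2\mu\cdot$ (the quotient numerator), or more simply via the terms it generates with $s$.
\end{itemize}
What remains should be a function of $s$, $\mu$ and $\alpha$ only. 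Substituting the explicit $s$ should produce exactly
\[
 (r-1)\Bigl[\frac{2b}{(r+b)\alpha}+\frac{(1-b/r)^2}{2\mu\alpha^2}\Bigr].
\]
The second term comes from the $s^2$ piece, using $(\sqrt r-1)^2(1-b/\sqrt r)^2\le (r-1)^2(1-b/r)^2/(\dots)$-type inequalities. The first term comes from the change in $(1-b)/\alpha$ versus $(1-b')/\alpha'$ together with $c_{b'}$. Finally, I let $r\downarrow R$. Both sides are continuous in $r$ and the inequality was strict for each $r>R$, which yields \eqref{fh:eq:radius-quotient}.

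\textbf{Main obstacle.} The obstacle is Step 3's algebra. One must isolate which terms to drop by sign (this is where $b\ge0$ is needed) and verify that the leftover terms are dominated by the displayed bracket. If the direct substitution does not close, I would first try bounding the comparison linearly in $s$ and $s^2$ separately, keeping $\mu$ in the denominator. I would then check the two limiting cases $b=0$ and $\mu\to\infty$ against the claimed formula to pin down the constants.
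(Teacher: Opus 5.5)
Your Steps 1 and 2 are the paper's argument. Your $\sqrt r$ is the paper's $s$, and your shift $(\sqrt r-1)(1-b/\sqrt r)/\alpha$ is its $u=(s-1)(s-b)/(s\alpha)$. The paper likewise applies \Cref{fh:thm:exact} to the rescaled, shifted recurrence and expands the shifted quotient via \Cref{fh:lem:shift}.

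The weak point is Step 3, where your treatment of the $c_b\to c_{b'}$ change points the wrong way. Write $N_b$ for the numerator of $q_{\mC,b}(\mX)$. Then
\[
 N_b=N_{b'}-(c_{b'}-c_b)\tr(\mX\mC\mX\mC)-(k_b-k_{b'})\norm{[\mC,\mX]}_F^2 .
\]
To bound $N_b$ from above, both correction terms must be nonnegative, so you need a \emph{lower} bound on $\tr(\mX\mC\mX\mC)$. The upper bound you float for it only gives a lower bound on $q_{\mC,b}(\mX)$, and the argument cannot close that way. The needed fact is simple. The map $b\mapsto c_b=(1-b)/(1+b)$ is decreasing, so $0\le b'=b/r\le b$ gives $c_{b'}\ge c_b$. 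Also $\tr(\mX\mC\mX\mC)=\norm{\mC^{1/2}\mX\mC^{1/2}}_F^2\ge0$. So the $c$-change is favorable for the same reason as the commutator change, and both drop at once. Together they are the pointwise momentum inequality \eqref{fh:eq:momentum-monotonicity}, $q_{\mC,b'}(\mX)\ge q_{\mC,b}(\mX)$, which the paper invokes at this point; this is where $b\ge0$ is used.

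After that, the leftover really depends only on $r$, $b$, $\mu$, $\alpha$, but your prediction of its form is off. Let $\sigma=\sqrt r$, let $u$ be your shift, and note $c_{b'}=(r-b)/(r+b)$. Multiplying the criterion by $\mu+u$ gives
\[
 \mu\Bigl(q_{\mC,b}(\mX)-\tfrac{1-b}{\alpha}\Bigr)
 <\mu\Bigl[\tfrac{\sigma-b/\sigma-(1-b)}{\alpha}-c_{b'}u\Bigr]
 +u\Bigl[\tfrac{\sigma-b/\sigma}{\alpha}-\tfrac{c_{b'}u}{2}\Bigr].
\]
The first bracket equals $2b(r-1)/((r+b)\alpha)$. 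The quantity $u/\mu$ times the second bracket equals $(r-1)(r-b^2)(r-b)/(2r(r+b)\mu\alpha^2)$. So direct substitution does not give $(1-b/r)^2$ ``exactly''. You need one more step, $(r-b^2)(r-b)/(r(r+b))\le(1-b/r)^2$, which is equivalent to $b^2(r-1)\ge0$. With that and $r\downarrow R$, your proof is complete and matches the paper's; the paper takes the limit first and then applies this inequality.
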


The test matrix can be chosen freely and need not maximize $q$. At $R=1$, the result gives $q_{\mC,b}(\mX)\le(1-b)/\alpha$. For $R>1$, it controls the amount by which this inequality can fail. When $b=0$, the bound reduces to $(R-1)/(2\mu\alpha^2)$. In \Cref{fh:sec:horizon}, the detector supplies $R=\rhoT$, and the critical test matrix converts the left side into a bound on the inverse LR.

\paragraph{Changing the momentum.}
First compare two momentum values while keeping the mean, the noise and the test matrix fixed. Recall $c_b=(1-b)/(1+b)$ and $k_b=b/(1-b^2)$. For $-1<b'\le b<1$, we have $c_{b'}\ge c_b$ and $k_{b'}\le k_b$. The noise term cancels when we subtract the two quotients, so for every nonzero PSD $\mX$,
\begin{equation}\label{fh:eq:momentum-monotonicity}
\begin{aligned}
 q_{\mC,b'}(\mX)-q_{\mC,b}(\mX)
 &=\frac{(c_{b'}-c_b)\tr(\mX\mC\mX\mC)
 +(k_b-k_{b'})\norm{[\mC,\mX]}_F^2}{2\tr(\mC\mX^2)}\\
 &\ge0.
\end{aligned}
\end{equation}
Both coefficients and both quadratic terms in the numerator are nonnegative. Thus $q_{\mC,b'}(\mX)\ge q_{\mC,b}(\mX)$ for every admissible test matrix $\mX$.

\paragraph{Shifting the mean.}
A scalar shift changes only the mean-dependent terms of the quotient. The following identity will be used in both the quantitative stability bound and the compensation comparison. Its proof is in \Cref{fh:sec:quantitative-proof}.

\begin{lemma}[The shift identity]\label{fh:lem:shift}
Let $\mC\succ0$ on $\Rbb^n$ with noise $\mDelta_t$ and momentum $b\in(-1,1)$, and let $h\ge0$. The shifted mean $\mC+h\mI$ carries the same noise. For every $\mX\in\Sbb_n^+$ with $\norm{\mX}_F=1$ and $\mu\coloneqq\tr(\mC\mX^2)$,
\begin{equation}\label{fh:eq:shiftidentity}
 q_{\mC+h\mI,b}(\mX)=\frac{\mu\,q_{\mC,b}(\mX)+c_bh(\mu+h/2)}{\mu+h}.
\end{equation}
\end{lemma}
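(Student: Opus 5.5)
This is a direct computation: expand each term of $q_{\mC+h\mI,b}(\mX)$ in powers of $h$ and regroup. The noise term $\Ebb\tr(\mX\mDelta_t\mX\mDelta_t)$ does not involve the mean, so it is unchanged under the shift.

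For the commutator term, $[\mC+h\mI,\mX]=[\mC,\mX]+h[\mI,\mX]=[\mC,\mX]$, because the identity commutes with everything. So the $k_b$ term is also unchanged.

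For the denominator, $2\tr((\mC+h\mI)\mX^2)=2\tr(\mC\mX^2)+2h\tr(\mX^2)=2(\mu+h)$, using $\norm{\mX}_F^2=\tr(\mX^2)=1$.

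The remaining term is $c_b\tr(\mX(\mC+h\mI)\mX(\mC+h\mI))$. Expanding it and using cyclicity of the trace gives
\[
 \tr(\mX\mC\mX\mC)+2h\tr(\mC\mX^2)+h^2\tr(\mX^2)=\tr(\mX\mC\mX\mC)+2h\mu+h^2 .
\]

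Combining these, the shifted numerator equals the unshifted numerator plus $c_b(2h\mu+h^2)$. The unshifted numerator is $2\mu\,q_{\mC,b}(\mX)$ by the definition of $q$, and $\mu>0$ because $\mC\succ0$ and $\mX\ne0$. Dividing the shifted numerator by $2(\mu+h)$ gives
\[
 q_{\mC+h\mI,b}(\mX)=\frac{2\mu q_{\mC,b}(\mX)+c_b(2h\mu+h^2)}{2(\mu+h)}=\frac{\mu\,q_{\mC,b}(\mX)+c_bh(\mu+h/2)}{\mu+h},
\]
which is \eqref{fh:eq:shiftidentity}.

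There is no real obstacle. The only points needing care are that the commutator is shift-invariant, that the unit Frobenius normalization turns $\tr(\mX^2)$ into $1$, and that $\mu>0$ so the quotient is well defined. The shifted mean $\mC+h\mI$ is still positive definite for $h\ge0$, so its quotient is defined as well.
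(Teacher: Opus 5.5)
Your proposal is correct and follows the paper's proof essentially line for line. You observe that the noise and commutator terms are unchanged under the scalar shift. You expand the mean term as $\tr(\mX\mC\mX\mC)+2h\mu+h^2$ and the denominator as $2(\mu+h)$. You then identify the unshifted numerator with $2\mu\,q_{\mC,b}(\mX)$ and simplify, exactly as the paper does.
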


\paragraph{Application to AGMs.}

Consider two affine runs $(\vvareps_t^{(1)},\vm_t^{(1)})$ and $(\vvareps_t^{(2)},\vm_t^{(2)})$ driven by the same mini-batches. Define their preconditioned differences by
\[
 \vx_t\coloneqq\mP^{1/2}(\vvareps_t^{(1)}-\vvareps_t^{(2)}),\qquad
 \vn_t\coloneqq\mP^{-1/2}(\vm_t^{(1)}-\vm_t^{(2)}).
\]
Subtracting the two copies of \eqref{fh:eq:compensated-model} cancels the common batch gradient and checkpoint drift. Thus
\begin{equation}\label{fh:eq:model}
\begin{aligned}
 \vn_{t+1}&=\beta_1\vn_t+(1-\beta_1)(\mA_{\mathsf P}-\wdc\mI+\mDelta_t)\vx_t,\\
 \vx_{t+1}&=(1-\lr\wdc)\vx_t-\lr\vn_{t+1}.
\end{aligned}
\end{equation}
The curvature term follows from $\mP^{-1/2}(\mH_{\batch_t}+\extraWD\mP)\mP^{-1/2}=\mA_{\batch_t}-\wdc\mI+\extraWD\mI$.
These are the homogeneous difference dynamics of the compensated augmented system in \eqref{eq:comp-agm-augmented-matrix}.
Its state $(\lr\vn_t,\vx_t)$ is written here in the coordinates $(\vx_t,\vn_t)$.
To obtain a recurrence for the positions, set
\[
 \alpha\coloneqq\lr(1-\beta_1),\qquad b\coloneqq\beta_1(1-\lr\wdc).
\]
For $t\ge1$, the preceding position update gives $-\lr\vn_t=\vx_t-(1-\lr\wdc)\vx_{t-1}$. Substituting this into \eqref{fh:eq:model},
\begin{align*}
 \vx_{t+1}
 &=(1-\lr\wdc)\vx_t-\lr\beta_1\vn_t
   -\alpha(\mA_{\mathsf P}-\wdc\mI+\mDelta_t)\vx_t\\
 &=(1-\lr\wdc)\vx_t
   +\beta_1\bigl[\vx_t-(1-\lr\wdc)\vx_{t-1}\bigr]
   -\alpha(\mA_{\mathsf P}+\mDelta_t)\vx_t+\alpha\wdc\vx_t\\
 &=\bigl[(1+\beta_1-\lr\wdc+\alpha\wdc)\mI
          -\alpha(\mA_{\mathsf P}+\mDelta_t)\bigr]\vx_t-b\vx_{t-1}.
\end{align*}
The scalar coefficient simplifies because
\[
 1+\beta_1-\lr\wdc+\alpha\wdc
 =1+\beta_1-\lr\wdc+\lr(1-\beta_1)\wdc
 =1+\beta_1(1-\lr\wdc)=1+b.
\]
Thus the positions satisfy
\begin{equation}\label{fh:eq:recurrence}
 \vx_{t+1}=\bigl[(1+b)\mI-\alpha(\mA_{\mathsf P}+\mDelta_t)\bigr]\vx_t-b\,\vx_{t-1}.
\end{equation}
The effective curvature already includes weight decay. Its remaining contribution is absorbed into the \emph{effective momentum} $b$. The LR enters through $\alpha$ and $b$. The elimination is valid for every $\lr$, including $\lr\wdc=1$, where $b=0$ and the recurrence has first order.

The generic criterion applies to two positive definite means.
The compensated system \eqref{fh:eq:recurrence} uses $\mC=\mA_{\mathsf P}$ and noise $\mDelta_t$ in dimension $d$.
\WCupdate{The projected system uses $\mC=\mA^+$ and noise $\mDelta_t^+$ in dimension $p$, the dimension of the positive eigenspace.}
It is the homogeneous difference recurrence associated with the projected affine system in \Cref{def:agm-critical-lr}.
In both systems, $(b,\alpha)=(\beta_1(1-\lr\wdc),\lr(1-\beta_1))$, so that
\begin{equation}\label{fh:eq:normalized}
 \frac{1-b}{\alpha}=\frac{1-\beta_1+\beta_1\lr\wdc}{\lr(1-\beta_1)}=\frac1\lr+\frac{\beta_1\wdc}{1-\beta_1}.
\end{equation}
The normalized inverse LR $(1-b)/\alpha$ and $1/\lr$ differ by a constant.

For the two AGM instances, we write $\Tcal_\lr$ for the projected system and $\Tcal_\lr^{\mathsf P}$ for the compensated system. For either instance, this map has the same spectral radius as the corresponding second-moment operator for the augmented position--momentum state in \Cref{app:setting}: the two states are related by an invertible linear change when $\lr\wdc\ne1$, and both spectral radii depend continuously on $\lr$ (\Cref{fh:sec:tools}).

We now vary the LR along the AGM parameter curve, keeping $\mC$, the noise law, $\beta_1$ and $\wdc$ fixed.
Here $0\le\beta_1<1$ and $\wdc\ge0$. Let $\lr_{\max}\coloneqq(1+\beta_1)/(\beta_1\wdc)$ when $\beta_1\wdc>0$, and $\lr_{\max}\coloneqq\infty$ otherwise. This is the upper endpoint of the range where $b>-1$. Define
\begin{equation}\label{fh:eq:rootfunction}
 G_{\mC}(\lr)\coloneqq\frac1\lr+\frac{\beta_1\wdc}{1-\beta_1}
 -r_{\mC}\bigl(\beta_1(1-\lr\wdc)\bigr),
 \qquad 0<\lr<\lr_{\max}.
\end{equation}
The next corollary shows that this function crosses zero once, so the fixed-parameter criterion also gives a unique critical LR along the AGM parameter curve.

\begin{corollary}[AGM critical LR]\label{fh:thm:decay-exact}
For an AGM instance of \eqref{fh:eq:genericrecurrence} with $\mC\succ0$, the centered i.i.d.\ symmetric noise of finite second moment, and the parameter curve above, $G_{\mC}$ has a unique root $\lr_{\mathrm{crit}}$. The strictly mean-square stable LRs are $(0,\lr_{\mathrm{crit}})$. The second-moment spectral radius equals one at $\lr_{\mathrm{crit}}$ and is strictly greater than one for every $\lr>\lr_{\mathrm{crit}}$. For $0<\lr_2<\lr_1<\lr_{\max}$,
\begin{equation}\label{fh:eq:increment}
 G_{\mC}(\lr_2)-G_{\mC}(\lr_1)\ge\frac1{\lr_2}-\frac1{\lr_1}.
\end{equation}
\end{corollary}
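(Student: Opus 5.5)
The plan is to reduce everything to the exact criterion of \Cref{fh:thm:exact}, which says that at a fixed LR the recurrence is strictly mean-square stable if and only if $G_{\mC}(\lr)>0$. By \eqref{fh:eq:normalized}, the left side of \eqref{fh:eq:pdtest} is $(1-b)/\alpha=1/\lr+\beta_1\wdc/(1-\beta_1)$, so that criterion is exactly positivity of $G_{\mC}$. The whole corollary then follows from monotonicity and continuity properties of $G_{\mC}$ along the parameter curve, together with the identification of $\rho(\Tcal)$ relative to one.

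\textbf{Step 1: the increment inequality \eqref{fh:eq:increment}.} Take $\lr_2<\lr_1$. Then $b_2=\beta_1(1-\lr_2\wdc)\ge b_1=\beta_1(1-\lr_1\wdc)$. The momentum monotonicity \eqref{fh:eq:momentum-monotonicity} gives $q_{\mC,b_1}(\mX)\ge q_{\mC,b_2}(\mX)$ for every admissible $\mX$ (the smaller momentum gives the larger quotient). Taking the maximum over $\mX$ yields $r_{\mC}(b_1)\ge r_{\mC}(b_2)$. Hence
\[
G_{\mC}(\lr_2)-G_{\mC}(\lr_1)=\frac1{\lr_2}-\frac1{\lr_1}+r_{\mC}(b_1)-r_{\mC}(b_2)\ge\frac1{\lr_2}-\frac1{\lr_1}>0,
\]
so $G_{\mC}$ is strictly decreasing. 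For $b$ outside $[0,1)$ when $\lr\wdc>1$, I must check that \eqref{fh:eq:momentum-monotonicity} still holds. It does: it was stated for $-1<b'\le b<1$, and $\lr<\lr_{\max}$ keeps $b>-1$.

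\textbf{Step 2: a unique root.} First, continuity. The function $b\mapsto r_{\mC}(b)$ is a maximum over a compact set of functions that are jointly continuous in $(b,\mX)$. The denominator $2\tr(\mC\mX^2)\ge 2\lambda_{\min}(\mC)>0$ on $\norm{\mX}_F=1$, and $c_b,k_b$ are continuous on $(-1,1)$. So $r_{\mC}$ is continuous, and therefore $G_{\mC}$ is continuous. Second, the behavior at the left endpoint. As $\lr\to0^+$, $r_{\mC}(b)$ stays bounded, since $b\to\beta_1<1$, while $1/\lr\to\infty$; hence $G_{\mC}\to+\infty$. Third, eventual negativity. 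If $\beta_1\wdc=0$, then $b$ is constant and $G_{\mC}(\lr)\to\beta_1\wdc/(1-\beta_1)-r_{\mC}(b)$. This limit is negative when $\wdc=0$, since $r_{\mC}(b)>0$ by \Cref{fh:thm:exact}. If $\beta_1=0$ then $b=0$ and the limit is $-r_{\mC}(0)<0$. If $\beta_1\wdc>0$, then as $\lr\to\lr_{\max}$ we have $b\to-1$, so $c_b\to\infty$ and $r_{\mC}(b)\ge c_bL_{\mC}/2\to\infty$, giving $G_{\mC}\to-\infty$. The intermediate value theorem and strict monotonicity then give a unique root $\lr_{\mathrm{crit}}$, with $G_{\mC}>0$ exactly on $(0,\lr_{\mathrm{crit}})$. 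By \Cref{fh:thm:exact} this interval is the set of strictly stable LRs.

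\textbf{Step 3: the spectral radius at and above $\lr_{\mathrm{crit}}$.} Strict mean-square stability is equivalent to $\rho(\Tcal)<1$ (\Cref{fh:lem:covariance}). Thus $\rho\ge1$ for $\lr\ge\lr_{\mathrm{crit}}$, and $\rho<1$ below. Continuity of $\rho$ in $\lr$ (\Cref{fh:sec:tools}) then forces $\rho=1$ at $\lr_{\mathrm{crit}}$.

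The main obstacle is to show that $\rho>1$ strictly for every $\lr>\lr_{\mathrm{crit}}$, ruling out plateaus where $\rho=1$. I would use \Cref{fh:prop:radius-quotient} with $R=1$, which requires $b\ge0$. If $\rho(\Tcal)\le1$ at some $\lr>\lr_{\mathrm{crit}}$, that proposition gives $q_{\mC,b}(\mX)\le(1-b)/\alpha$ for all $\mX$. Hence $r_{\mC}(b)\le(1-b)/\alpha$, that is, $G_{\mC}(\lr)\ge0$, contradicting strict decrease past the root. For $b<0$, which occurs when $\lr\wdc>1$, I would instead argue by a perturbation of $\alpha$. If $\rho(\Tcal)=1$ at some $\lr>\lr_{\mathrm{crit}}$, then the Perron eigenmatrix of \Cref{fh:lem:perron} should yield equality in the exact criterion, i.e.\ $G_{\mC}(\lr)=0$, again a contradiction. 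This is the step I expect to need the most care, and it may require re-running the criterion proof at the boundary.
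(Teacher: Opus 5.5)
Your Steps 1 and 2 follow the paper's argument, and so do the identification $\rho(\Tcal)=1$ at $\lr_{\mathrm{crit}}$ and the case $b\ge0$ handled by \Cref{fh:prop:radius-quotient} with $R=1$. The gap is the case $\beta_1\wdc>0$. There the effective momentum $b=\beta_1(1-\lr\wdc)$ is negative on $(1/\wdc,\lr_{\max})$. Since $\lr_{\mathrm{crit}}$ may lie below $1/\wdc$, you must show $\rho(\Tcal)>1$ on that interval, and your sketch does not.

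Perturbing $\alpha$ at fixed $b$ only reproduces the interval statement of \Cref{fh:thm:exact}. The Perron route would require rerunning \Cref{fh:prop:feedback} and \Cref{fh:lem:positionblock} at $\rho(\Tcal)=1$, where they are unavailable. Both need strict mean stability $\rho(\mF)<1$ so that $\Kcal$ and $\Dcal$ exist, but \eqref{fh:eq:meanradius} only gives $\rho(\mF)\le1$. The feedback criterion is also an equivalence of strict inequalities only.

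The missing idea is to avoid the boundary by rescaling the state. Suppose $\rho(\Tcal)\le1$ and fix any $s>1$. Then the matrices $\mF_t/s$ have moment radius at most $s^{-2}<1$. Moreover, $s^{-t}\vx_t$ satisfies the generic recurrence with step $\alpha/s$, momentum $b/s^2\in(-1,1)$, the same noise, and mean $\mC+u\mI$. Here $u=(s-1)(s-b)/(s\alpha)>0$ because $s>1>b$, whatever the sign of $b$. \Cref{fh:thm:exact} then gives $q_{\mC+u\mI,\,b/s^2}(\mX)<(s-b/s)/\alpha$ for every admissible $\mX$.

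Now let $s\downarrow1$ and use continuity of the quotient in $(u,b)$. This replaces the momentum monotonicity that forces $b\ge0$ in \Cref{fh:prop:radius-quotient}. The limit yields $r_{\mC}(b)\le(1-b)/\alpha$, that is, $G_{\mC}(\lr)\ge0$, contradicting $G_{\mC}(\lr)<0$. This covers every $b\in(-1,1)$ at once, so your case split on the sign of $b$ becomes unnecessary.

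You also never treat $\lr\ge\lr_{\max}$ when $\beta_1\wdc>0$, where $b\le-1$. On this range $G_{\mC}$ is undefined and \Cref{fh:thm:exact} does not apply. So your claims that $(0,\lr_{\mathrm{crit}})$ is the full set of strictly stable LRs, and that $\rho(\Tcal)>1$ for every $\lr>\lr_{\mathrm{crit}}$, are unproved there.

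The fix is elementary. Along an eigenvalue $a_i>0$ of $\mC$, the mean characteristic polynomial $x^2-(1+b-\alpha a_i)x+b$ has roots with product $b$. When $b<-1$, one root therefore has modulus greater than one. At $b=-1$ the polynomial is $x^2+\alpha a_i x-1$, which is negative at $x=-1$ and so has a root below $-1$. In both cases the mean-radius inequality \eqref{fh:eq:meanradius} gives $\rho(\Tcal)\ge\rho(\mF)^2>1$. This rules out strict stability and supplies the strict inequality on this range.
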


The corollary gives the projected and compensated thresholds $\lrstar$ and $\lrP$ from \Cref{fh:def:stable}, with critical effective momenta in $(-1,1)$.
These agree with the corresponding thresholds defined by bounded affine moments by \Cref{prop:agm-critical-lr-equivalence} and its argument for $\mA_{\mathsf P}\succ0$.

At the unique root, write
$b_{\mathrm{crit}}\coloneqq\beta_1(1-\lr_{\mathrm{crit}}\wdc)$.
The normalized inverse rate equals the quotient maximum:
\begin{equation}\label{fh:eq:criticalidentity}
 r_{\mC}(b_{\mathrm{crit}})
 =\frac{1-b_{\mathrm{crit}}}{(1-\beta_1)\lr_{\mathrm{crit}}}
 =\frac1{\lr_{\mathrm{crit}}}+\frac{\beta_1\wdc}{1-\beta_1}.
\end{equation}
The offset $\beta_1\wdc/(1-\beta_1)$ is the same for the projected and compensated systems.

\paragraph{Relation to the exact SGD threshold.}
When $b=0$, we have $c_0=1$ and $k_0=0$. Centering of the noise gives
\[
 q_{\mC,0}(\mX)
 =\frac{\Ebb\tr\!\bigl(\mX(\mC+\mDelta_t)\mX(\mC+\mDelta_t)\bigr)}
        {2\tr(\mC\mX^2)}.
\]
Under the sampling and positive-semidefinite sample-Hessian assumptions of \citet[Theorem~5]{mulayoff2024exact}, maximizing this quotient recovers their exact SGD threshold on the positive-curvature subspace. Our criterion extends it to momentum and allows indefinite curvature samples with a positive definite mean. For the normalized momentum update used here, $\alpha=\lr(1-\beta_1)$. Thus, without weight decay,
\[
 \lr_{\mathrm{crit}}=\frac{1}{r_{\mC}(0)}\quad\text{for SGD},
 \qquad
 \lr_{\mathrm{crit}}=\frac{1}{r_{\mC}(\beta_1)}\quad\text{with momentum}.
\]
Momentum changes the deterministic-curvature coefficient and adds a commutator term, so it does not universally rescale the SGD threshold.
When $\beta_1\wdc>0$, the threshold is the unique root of $G_{\mC}$ in \eqref{fh:eq:rootfunction}.

\paragraph{The noise-free threshold.}
Set $\mDelta_t=0$ while retaining the same mean $\mC$, whose largest eigenvalue is $L_{\mC}$. The noise term in $q_{\mC,b}$ then vanishes. The remaining numerator is at most $c_bL_{\mC}\tr(\mC\mX^2)$ by \eqref{fh:eq:detquadraticbound}, so division by its denominator gives $q_{\mC,b}(\mX)\le c_bL_{\mC}/2$ for every admissible $\mX$. Equality holds at $\mX=\vu\vu^\top$ for a unit top eigenvector of $\mC$. Together with the rank-one lower bound,
\begin{equation}\label{fh:eq:noisefree}
 r_{\mC}(b)=\frac{c_bL_{\mC}}{2}\ \text{without noise},\qquad
 r_{\mC}(b)\ge\frac{c_bL_{\mC}}{2}\ \text{for arbitrary noise}.
\end{equation}
Thus noise cannot increase the critical LR. For the noise-free AGM system, the equality in the stability criterion becomes
\begin{align*}
 \alpha L_{\mC}&=2(1+b),\\
 \lr(1-\beta_1)L_{\mC}&=2(1+\beta_1)-2\beta_1\wdc\lr.
\end{align*}
Solving for the LR gives
\begin{equation}\label{fh:eq:detthreshold-generic}
 \lr_{\mathrm{det}}=\frac{2(1+\beta_1)}{(1-\beta_1)L_{\mC}+2\beta_1\wdc},
\end{equation}
the noise-free critical LR of this AGM system.
\subsection{Proof of the Threshold Bound}\label{fh:sec:assembly}
The detector runs the compensated system, whereas EoS is defined by the projected system. We therefore compare three LRs: the detector output $\lrhat(M,T)$, the compensated critical rate $\lrP$, and the projected critical rate $\lrstar$. Their inverse difference splits into two terms:
\begin{equation}\label{fh:eq:split}
 \frac1{\lrhat(M,T)}-\frac1{\lrstar}
 =\underbrace{\Bigl(\frac1{\lrhat(M,T)}-\frac1{\lrP}\Bigr)}_{\text{finite-horizon error}}
 +\underbrace{\Bigl(\frac1{\lrP}-\frac1{\lrstar}\Bigr)}_{\text{compensation error}}.
\end{equation}
The first comes from observing a run for only $T$ steps. The second remains even with an unlimited horizon, because compensation changes the system being tested. The theorem bounds both terms without assuming that the Hessian noise is small.

\paragraph{Proof from the two error bounds.}
The following propositions control the two terms in \eqref{fh:eq:split}. They are proved in \Cref{fh:sec:comparison,fh:sec:horizon}, respectively.
Recall that $\mustar$ is the curvature at a test matrix maximizing the projected stability quotient.
The function $\Phi$ gives the scalar-shift correction, and $\dcov$ is the maximum normalized noise mismatch between the projected and compensated systems.
Their formulas are given in \eqref{fh:eq:xstar}, \eqref{fh:eq:Phi}, and \eqref{fh:eq:delta}, respectively.

\begin{proposition}[Compensation error]\label{fh:prop:compensation}
Under \Cref{fh:ass:comparison},
\begin{equation}\label{fh:eq:compensation}
 \min\{\Phi(\mustar),0\}
 \;\le\;\frac1{\lrP}-\frac1{\lrstar}
 \;\le\;\max\{\Phi(L)+\dcov,0\}.
\end{equation}
If $\beta_1\wdc=0$, the clipping can be omitted:
$\Phi(\mustar)\le1/\lrP-1/\lrstar\le\Phi(L)+\dcov$.
\end{proposition}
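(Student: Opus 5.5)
The plan is to convert both critical LRs into quotient maxima through the critical identity \eqref{fh:eq:criticalidentity}. Applied to the projected system ($\mC=\mA^+$) and to the compensated system ($\mC=\mA_{\mathsf P}$), it gives
\[
 \frac1{\lrP}-\frac1{\lrstar}=r_{\mA_{\mathsf P}}(\bstarP)-r_{\mA^+}(\bstar),
\]
because the offset $\beta_1\wdc/(1-\beta_1)$ is the same for both systems. I will first treat the case $\beta_1\wdc=0$, where $\bstar=\bstarP=b$ is fixed. In that case both bounds become statements about quotients at the same momentum. After that I will handle the LR-dependent momentum.

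\emph{Lower bound.} Let $\mX_\star$ be the projected maximizer from \eqref{fh:eq:xstar}, and pad it with zeros to a $d\times d$ matrix $\mX$. Then $\mX\succeq0$ and $\norm{\mX}_F=1$.
\begin{itemize}
\item The noise term of $\mX$ uses only the positive block $\mDelta_t^+$.
\item The commutator with $\mA_{\mathsf P}$ equals the padded commutator with $\mA^++\extraWD\mI$.
\end{itemize}
Hence $q_{\mA_{\mathsf P},b}(\mX)=q_{\mA^++\extraWD\mI,b}(\mX_\star)$. The shift identity (\Cref{fh:lem:shift}) with $h=\extraWD$, $\mu=\mustar$ and $q_{\mA^+,b}(\mX_\star)=r_{\mA^+}(b)$ then gives, after simplification,
\[
 q_{\mA_{\mathsf P},b}(\mX)-r_{\mA^+}(b)=\Phi(\mustar).
\]
Since $r_{\mA_{\mathsf P}}(b)\ge q_{\mA_{\mathsf P},b}(\mX)$, this yields $r_{\mA_{\mathsf P}}(b)-r_{\mA^+}(b)\ge\Phi(\mustar)$.

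\emph{Upper bound.} Take the compensated maximizer $\XstarP$ and let $\mX^+$ be its positive block.
\begin{itemize}
\item \emph{Noise.} By the definition \eqref{fh:eq:delta} of $\dcov$, the noise term of $\XstarP$ is at most the noise term of $\mX^+$ against $\mDelta_t^+$, plus $2\dcov\tr(\mA_{\mathsf P}(\XstarP)^2)$.
\item \emph{Deterministic part.} Use $[\mA_{\mathsf P},\mX]=[\mA,\mX]$, and note that on the discarded block $\mA^-+\extraWD\mI\preceq\extraWD\mI$. From these I will show
\[
 c_b\tr(\mX\mA_{\mathsf P}\mX\mA_{\mathsf P})-k_b\norm{[\mA,\mX]}_F^2
 \le\bigl(\mustarP\,r_{\mA^+}(b)+2\dcov\mustarP\bigr)\cdot\ldots
\]
More concretely, the goal is that the numerator of $q_{\mA_{\mathsf P},b}(\XstarP)$ is at most $\mu\,r_{\mA^+}(b)+c_b\extraWD(\mu+\extraWD/2)$ times the normalization implicit in the shift identity, up to the $\dcov$ term.
\item \emph{Conclusion.} Dividing by $2\tr(\mA_{\mathsf P}(\XstarP)^2)$ then gives $q-r_{\mA^+}(b)\le\Phi(\mu)+\dcov$ for some weighted curvature $\mu\le L$. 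Because $\Phi$ is increasing, this is at most $\Phi(L)+\dcov$.
\end{itemize}

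I expect the deterministic part of the upper bound to be the main obstacle. The difficulty is showing that discarded and cross-block entries of $\XstarP$ cannot raise the deterministic part beyond what the shift identity allows on the positive block. I would try the comparison directly: eigenvalues of the shifted discarded block are at most $\extraWD$, and those of the shifted positive block are at least $\extraWD$. So moving mass from discarded and cross blocks into the positive block should not decrease the ratio $[c_b\tr(\mX\mC\mX\mC)+\text{noise}]/\tr(\mC\mX^2)$. The commutator term enters with a negative sign, so dropping it can only loosen the bound in the needed direction. If this monotonicity fails, I would fall back to bounding the cross contribution crudely and absorbing it into $\dcov$.

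\emph{General case $\beta_1\wdc>0$.} Here $\bstar\ne\bstarP$ in general, which is why the bounds are clipped at zero. I will argue by cases on the order of $\lrP$ and $\lrstar$. Suppose, say, that $1/\lrP-1/\lrstar>0$. Then $\lrP<\lrstar$, so $\bstarP>\bstar$. By momentum monotonicity \eqref{fh:eq:momentum-monotonicity}, the compensated quotient evaluated at $\bstar$ is at least its value at $\bstarP$. The root function $G$ and the increment inequality \eqref{fh:eq:increment} then transfer the fixed-momentum bound at $\bstar$ to the actual roots. If instead the sign is opposite to the bound being proved, the bound holds trivially because of the clipping with $0$. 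The lower bound is handled symmetrically with $\Phi(\mustar)$.
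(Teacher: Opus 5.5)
Your lower bound and your handling of the LR-dependent momentum are correct and match the paper. Padding $\mX_\star$ and applying the shift identity gives $r_{\mA_{\mathsf P}}(\bstar)-r_{\mA^+}(\bstar)\ge\Phi(\mustar)$. The case split on the order of $\lrP$ and $\lrstar$ then uses that $r_{\mA_{\mathsf P}}$ is nonincreasing in the momentum, and together with the critical identity this transfers the fixed-momentum bounds at $\bstar$ to the critical LRs. For that transfer you need the upper bound on $r_{\mA_{\mathsf P}}(\bstar)$. So prove it for every full-space test matrix at momentum $\bstar$, not at $\XstarP$, which maximizes the quotient at $\bstarP$.

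The genuine gap is the deterministic part of the upper bound, which you leave open. After your $\dcov$ step, the remaining claim is
\[
 \frac{\Ebb\tr(\mX^+\mDelta_t^+\mX^+\mDelta_t^+)+c_b\tr(\mX\mA_{\mathsf P}\mX\mA_{\mathsf P})-k_b\norm{[\mA_{\mathsf P},\mX]}_F^2}{2\tr(\mA_{\mathsf P}\mX^2)}\le r_{\mA^++\extraWD\mI}(b)
 \qquad\text{for all }0\ne\mX\in\Sbb_d^+ .
\]
The shift-identity bound $r_{\mA^++\extraWD\mI}(b)-r_{\mA^+}(b)\le\Phi(L)$ would then finish the argument. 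Your fallback cannot supply this claim. $\dcov$ measures only the noise mismatch, so deterministic cross-block terms cannot be absorbed into it. Your commutator heuristic is also unsafe, for two reasons. The commutator must be kept on the positive block, because $r_{\mA^++\extraWD\mI}(b)$ contains it. And when $\beta_1\wdc>0$ the momentum $\bstar$ can be negative, in which case $-k_b\norm{\cdot}_F^2\ge0$ and cannot be dropped anywhere.

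The paper closes this step dynamically. With samples $\mA_{\mathsf P}+\Pi^+\mDelta_t\Pi^+$ the recurrence is block diagonal, so it is strictly stable if and only if both blocks are. The exact criterion (\Cref{fh:thm:exact}) then identifies the maximum of the left side with $\max\{r_{\mA^++\extraWD\mI}(b),c_b(\lambda_{\max}(\mA^-)+\extraWD)/2\}=r_{\mA^++\extraWD\mI}(b)$.

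Your direct comparison can also be completed. Write $\mX$ in blocks $(\mX^+,\mY,\mZ)$; the numerator and denominator then split additively over the three blocks.
\begin{itemize}
\item The $\mX^+$ part has ratio $q_{\mA^++\extraWD\mI,b}(\mX^+)$.
\item The $\mZ$ part has ratio at most $c_b\extraWD/2$ by \eqref{fh:eq:detquadraticbound}.
\item In eigenbases with eigenvalues $a_i$ of $\mA^++\extraWD\mI$ and $e_j$ of $\mA^-+\extraWD\mI$, the entry $Y_{ij}$ contributes $2Y_{ij}^2[c_b(a_i^2+e_j^2)/2-(c_b/2+k_b)(a_i-e_j)^2]$ to the numerator and $2Y_{ij}^2(a_i+e_j)$ to the denominator. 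Since $c_b/2+k_b>0$, this ratio is at most $c_b\max\{a_i,e_j\}/2$.
\end{itemize}
Every one of these ratios is at most $c_b(L+\extraWD)/2\le r_{\mA^++\extraWD\mI}(b)$ by \eqref{fh:eq:rankonetest}, which gives the claim.
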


\begin{proposition}[Finite-horizon error]\label{fh:prop:horizon-true}
Under \Cref{fh:ass:comparison}, let $\lr\in\mathcal I$ be accepted by the detector. If $\lr\le\lrP$, then $1/\lr\ge1/\lrP$. Otherwise $\bstarP\ge0$ and
\begin{equation}\label{fh:eq:horizontrue}
 \frac1\lr\ge\frac1{\lrP}-H_T,
\end{equation}
with $H_T$ given by \eqref{fh:eq:HT}. In particular, \eqref{fh:eq:horizontrue} holds for every accepted LR whenever $\lrP\in\mathcal I$.
\end{proposition}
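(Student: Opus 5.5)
The statement to prove is \Cref{fh:prop:horizon-true}: an LR accepted by the expected detector cannot sit far above the compensated critical LR $\lrP$. The plan is to turn acceptance into a spectral-radius bound, and then turn that bound into a bound on the inverse LR using the quotient at the critical maximizer.

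The case $\lr\le\lrP$ is immediate. So assume $\lr>\lrP$. Then $\lr\wdc\ge\lrP\wdc$, hence $b\coloneqq\beta_1(1-\lr\wdc)\le\bstarP$. Since $\mathcal I\subset(0,1/\wdc]$ when $\wdc>0$, we also have $b\ge0$ and therefore $\bstarP\ge0$.

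\textbf{Step 1 (acceptance gives a radius bound).} By \Cref{fh:lem:detector-radius}, acceptance of $\lr$ gives $\rho(\Tcal^{\mathsf P}_\lr)\le\rhoT$. This is where the detector's finite-horizon information enters.

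\textbf{Step 2 (apply the radius--quotient proposition).} Apply \Cref{fh:prop:radius-quotient} to the compensated recurrence, with mean $\mC=\mA_{\mathsf P}$, parameters $(b,\alpha)=(\beta_1(1-\lr\wdc),\lr(1-\beta_1))$, $R=\rhoT\ge1$, and test matrix $\mX=\XstarP$. Then $\mu=\mustarP$. Using \eqref{fh:eq:normalized}, this yields
\[
 q_{\mA_{\mathsf P},b}(\XstarP)-\frac1\lr-\frac{\beta_1\wdc}{1-\beta_1}\le(\rhoT-1)\Bigl[\frac{2b}{(\rhoT+b)\alpha}+\frac{(1-b/\rhoT)^2}{2\mustarP\alpha^2}\Bigr].
\]

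\textbf{Step 3 (lower-bound the left side).} Since $b\le\bstarP$, the momentum monotonicity \eqref{fh:eq:momentum-monotonicity} gives $q_{\mA_{\mathsf P},b}(\XstarP)\ge q_{\mA_{\mathsf P},\bstarP}(\XstarP)$. The right-hand side equals $r_{\mA_{\mathsf P}}(\bstarP)=1/\lrP+\beta_1\wdc/(1-\beta_1)$ by the critical identity \eqref{fh:eq:criticalidentity}. The offsets cancel, so the left side is at least $1/\lrP-1/\lr$.

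\textbf{Step 4 (replace $\lr$-dependent quantities by critical ones).} This is the main obstacle: the bracket must be bounded by the one in $H_T$, which is written with $\bstarP$ and $\alpha_\star^{\mathsf P}$. Since $\alpha=(1-\beta_1)\lr>\alpha_\star^{\mathsf P}$, both $1/\alpha$ and $1/\alpha^2$ are smaller than at the critical values. Because $b\ge0$, the map $b\mapsto 2b/(\rhoT+b)$ is increasing, so $b\le\bstarP$ bounds the first term by its critical value. The factor $(1-b/\rhoT)^2$, however, decreases in $b$, so it is larger at $b$ than at $\bstarP$ and the direct comparison fails. I would try first to absorb it jointly with the $1/\alpha^2$ factor, using $1-b=(1-\beta_1)+\beta_1\lr\wdc$ together with $\alpha=(1-\beta_1)\lr$. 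If that does not close, the fallback is to bound the bracket by the critical-value expression plus a remainder of order $(\rhoT-1)^2$. That remainder is $O(((1+\log M)/T)^2)$, so it can be folded into $H_T$ at the stated order when $\beta_1\wdc>0$. When $\beta_1\wdc=0$, $b$ equals $\bstarP$ exactly and the comparison is exact.

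Combining Steps 2–4 gives $1/\lrP-1/\lr\le H_T$, which is \eqref{fh:eq:horizontrue}. If $\lrP\in\mathcal I$, every accepted $\lr$ falls into one of the two cases, so the bound holds for all of them.
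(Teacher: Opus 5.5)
Your plan is the paper's proof step for step. The piece you left open in Step 4 closes by your first option, which is exactly what the paper does. Using $1-b=1-\beta_1+\beta_1\lr\wdc$,
\[
 \frac{1-b/\rhoT}{\alpha}
 =\frac{1-\beta_1/\rhoT}{(1-\beta_1)\lr}+\frac{\beta_1\wdc}{(1-\beta_1)\rhoT}.
\]
The first term decreases in $\lr$ because $\rhoT>1>\beta_1$, and the second does not depend on $\lr$. So for $\lr>\lrP$ the ratio is at most its value at $\lrP$, namely $(1-\bstarP/\rhoT)/\alpha_\star^{\mathsf P}$. Both sides are positive, so squaring gives the second term of $H_T$.

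Drop the fallback. It would only yield $1/\lr\ge1/\lrP-H_T-O((\rhoT-1)^2)$, not the stated inequality with the specific $H_T$ of \eqref{fh:eq:HT}.

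For the final clause, also note that $\lrP\in\mathcal I$ forces $\bstarP\ge0$ and hence $H_T\ge0$. This is what lets the case $\lr\le\lrP$ satisfy \eqref{fh:eq:horizontrue}.
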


\begin{proof}[Proof of \Cref{fh:thm:main}]
We bound the two inverse-rate differences in \eqref{fh:eq:split} separately and then add them. Recall that $\lrhat(M,T)$ is the supremum of the accepted rates in the search window. By \Cref{fh:ass:transient}, $\lrP$ belongs to that window and is accepted, so $\lrhat(M,T)\ge\lrP$.

By the finite-horizon bound (\Cref{fh:prop:horizon-true}), every accepted rate satisfies $1/\lr\ge1/\lrP-H_T$. We now pass this inequality to the supremum without assuming that the accepted rates form an interval. Every accepted $\lr$ is at most $\lrhat(M,T)$, so its reciprocal is at least $1/\lrhat(M,T)$. Conversely, by the definition of a supremum, there is a sequence of accepted rates $\lr_j\to\lrhat(M,T)$. The window is bounded and bounded away from zero, so
\begin{align*}
 \inf_{\substack{\lr\in\mathcal I\\\lr\text{ accepted}}}\frac1\lr
 &\ge\frac1{\lrhat(M,T)},\\
 \inf_{\substack{\lr\in\mathcal I\\\lr\text{ accepted}}}\frac1\lr
 &\le\lim_{j\to\infty}\frac1{\lr_j}
 =\frac1{\lrhat(M,T)}.
\end{align*}
The two inequalities identify the infimum. Applying the bound for each accepted rate and then using $\lrhat(M,T)\ge\lrP$ gives
\begin{align*}
 \frac1{\lrhat(M,T)}
 &=\inf_{\substack{\lr\in\mathcal I\\\lr\text{ accepted}}}\frac1\lr
 \ge\frac1{\lrP}-H_T,\\
 -H_T&\le\frac1{\lrhat(M,T)}-\frac1{\lrP}\le0.
\end{align*}
The compensation bound (\Cref{fh:prop:compensation}) controls the remaining difference. Thus
\begin{align*}
 \frac1{\lrhat(M,T)}-\frac1{\lrstar}
 &=\left(\frac1{\lrhat(M,T)}-\frac1{\lrP}\right)
   +\left(\frac1{\lrP}-\frac1{\lrstar}\right),\\
 \min\{\Phi(\mustar),0\}-H_T
 &\le\frac1{\lrhat(M,T)}-\frac1{\lrstar}
 \le\max\{\Phi(L)+\dcov,0\}.
\end{align*}
When $\beta_1\wdc=0$, the compensation bound has no clipping, so neither does the sum. When $\beta_1=0$, the definitions give $\bstarP=0$ and $\alpha_\star^{\mathsf P}=\lrP$. The first term in the bracket defining $H_T$ vanishes, and the second gives $H_T=(\rhoT-1)/(2\mustarP(\lrP)^2)$. Finally, with $x\coloneqq(2/T)\log(M(1+\beta_1)/(1-\beta_1))$,
\[
 \rhoT-1=e^x-1\le x e^x=O\bigl((1+\log M)/T\bigr)
 \quad\text{as }(1+\log M)/T\to0.
\]
The search window gives $0\le\bstarP<1$, and $\rhoT\ge1$. Thus the bracket in \eqref{fh:eq:HT} has the fixed upper bound
\begin{align*}
 &\frac{2\bstarP}{(\rhoT+\bstarP)\alpha_\star^{\mathsf P}}
  +\frac{(1-\bstarP/\rhoT)^2}{2\mustarP(\alpha_\star^{\mathsf P})^2}\\
 &\quad\le\frac{2\bstarP}{(1+\bstarP)\alpha_\star^{\mathsf P}}
  +\frac1{2\mustarP(\alpha_\star^{\mathsf P})^2}.
\end{align*}
The denominator in each term is positive and fixed with the model. Multiplying by the preceding bound on $\rhoT-1$ gives the stated order of $H_T$ and completes the proof of \Cref{fh:thm:main}.
\end{proof}
\subsection{Compensation Error}\label{fh:sec:comparison}

This section proves \Cref{fh:prop:compensation} by comparing the two systems at the same momentum and the same test matrix. A scalar shift of the mean gives an exact identity. Adding the discarded directions does not change the threshold if their noise is removed, and restoring that noise costs at most $\dcov$.

Recall that $\mA=\mD_{t_0}+\wdc\mI$ is the effective curvature and $\mDelta_t=\mA_{\batch_t}-\mA$ is its centered noise.
\WCupdate{Its block on the positive eigenspace is $\mA^+\succ0$, its compensated mean is $\mA_{\mathsf P}=\mA+\extraWD\mI\succ0$, and $L=\lambda_{\max}(\mA^+)$.}
The quotient and its maximum from \eqref{fh:eq:quotient}--\eqref{fh:eq:variational} are
\begin{align*}
 q_{\mC,b}(\mX)
 &=\frac{\Ebb\tr(\mX\mDelta_t\mX\mDelta_t)
       +c_b\tr(\mX\mC\mX\mC)-k_b\norm{[\mC,\mX]}_F^2}
      {2\tr(\mC\mX^2)},\\
 r_{\mC}(b)&=\max_{\mX\succeq0,\ \norm{\mX}_F=1}q_{\mC,b}(\mX),
 \qquad c_b=\frac{1-b}{1+b},\quad k_b=\frac{b}{1-b^2}.
\end{align*}
The noise in this formula is $\mDelta_t^+$ for the projected mean $\mA^+$ and the full $\mDelta_t$ for the compensated mean $\mA_{\mathsf P}$. The shift identity (\Cref{fh:lem:shift}) was established in \Cref{fh:sec:criterion} and applies here with the corresponding noise law fixed.

\begin{proof}[Proof of \Cref{fh:prop:compensation}]
We compare first at $\bstar$, then at the systems' own critical momenta using \eqref{fh:eq:momentum-monotonicity} and its consequence for $r_{\mC}$ proved in \Cref{fh:sec:decay}.

Recall that $\bstar=\beta_1(1-\lrstar\wdc)$ is the projected critical momentum and $\cstar=(1-\bstar)/(1+\bstar)$. The matrix $\mX_\star$ maximizes $q_{\mA^+,\bstar}$, with $\norm{\mX_\star}_F=1$ and $\mustar=\tr(\mA^+\mX_\star^2)$. All threshold functions in Steps~1--2 are evaluated at the same momentum $\bstar$.

  \medskip\noindent\textbf{\WCupdate{Step 1. Shift the mean on the positive eigenspace.}}\par\nobreak\smallskip
  \WCupdate{Fix $\mX\succeq0$ with $\norm{\mX}_F=1$ on the positive eigenspace, and write $\mu\coloneqq\tr(\mA^+\mX^2)$, as in the shift identity. Since $0\prec\mA^+\preceq L\mI$ and $\mX^2\succeq0$,}
\[
 0<\mu=\tr(\mA^+\mX^2)\le L\tr(\mX^2)=L.
\]
Apply \eqref{fh:eq:shiftidentity} with $\mC=\mA^+$, $b=\bstar$ and $h=\extraWD$. Subtracting $r_{\mA^+}(\bstar)$ and collecting its coefficients gives
\begin{align*}
 &q_{\mA^++\extraWD\mI,\bstar}(\mX)-r_{\mA^+}(\bstar)\\
 &\quad=\frac{\mu\,q_{\mA^+,\bstar}(\mX)+\cstar\extraWD(\mu+\extraWD/2)}{\mu+\extraWD}
        -r_{\mA^+}(\bstar)\\
 &\quad=\frac{\mu\bigl(q_{\mA^+,\bstar}(\mX)-r_{\mA^+}(\bstar)\bigr)}{\mu+\extraWD}
        +\frac{\extraWD\bigl[\cstar(\mu+\extraWD/2)-r_{\mA^+}(\bstar)\bigr]}{\mu+\extraWD}.
\end{align*}
Recall that $\Phi$ is the compensation term defined in \eqref{fh:eq:Phi}. At the curvature $\mu=\tr(\mA^+\mX^2)$, its definition gives
\[
 \Phi(\mu)=\frac{\cstar\extraWD}{2}
 -\frac{\extraWD}{\mu+\extraWD}
  \left(r_{\mA^+}(\bstar)-\frac{\cstar\mu}{2}\right),
 \qquad \cstar=\frac{1-\bstar}{1+\bstar}.
\]
To identify the second term above with $\Phi(\mu)$, expand its numerator as follows:
\begin{align*}
 &\frac{\extraWD\bigl[\cstar(\mu+\extraWD/2)-r_{\mA^+}(\bstar)\bigr]}{\mu+\extraWD}\\
 &\quad=\frac{\extraWD\bigl[\cstar(\mu+\extraWD)/2+\cstar\mu/2-r_{\mA^+}(\bstar)\bigr]}{\mu+\extraWD}\\
 &\quad=\frac{\cstar\extraWD}{2}
        -\frac{\extraWD}{\mu+\extraWD}\left(r_{\mA^+}(\bstar)-\frac{\cstar\mu}{2}\right)\\
 &\quad=\Phi(\mu).
\end{align*}
Consequently,
\begin{equation}\label{fh:eq:shiftcompare}
\begin{aligned}
 &q_{\mA^++\extraWD\mI,\bstar}(\mX)-r_{\mA^+}(\bstar)\\
 &\quad=\frac{\mu}{\mu+\extraWD}
 \bigl(q_{\mA^+,\bstar}(\mX)-r_{\mA^+}(\bstar)\bigr)+\Phi(\mu).
\end{aligned}
\end{equation}
For $0\le s\le L$, the rank-one bound \eqref{fh:eq:rankonetest} yields
\[
 r_{\mA^+}(\bstar)-\frac{\cstar s}{2}
 \ge\frac{\cstar(L-s)}{2}\ge0,
 \qquad\text{hence}\qquad
 \Phi(s)\le\frac{\cstar\extraWD}{2}.
\]
To compare $\Phi(\mu)$ with $\Phi(L)$, differentiate its equivalent form
$\Phi(s)=\extraWD[\cstar(s+\extraWD/2)-r_{\mA^+}(\bstar)]/(s+\extraWD)$:
\begin{align*}
 \Phi'(s)
 &=\extraWD\frac{\cstar(s+\extraWD)-\bigl[\cstar(s+\extraWD/2)-r_{\mA^+}(\bstar)\bigr]}{(s+\extraWD)^2}\\
 &=\frac{\extraWD\bigl[r_{\mA^+}(\bstar)+\cstar\extraWD/2\bigr]}{(s+\extraWD)^2}>0.
\end{align*}
The numerator is positive because $\extraWD>0$, $\cstar>0$, and $r_{\mA^+}(\bstar)\ge\cstar L/2>0$.

Since $q_{\mA^+,\bstar}(\mX)\le r_{\mA^+}(\bstar)$, the first term in \eqref{fh:eq:shiftcompare} is nonpositive. Moreover, $\Phi$ is increasing and $\mu\le L$. Thus, for every admissible $\mX$,
\begin{align*}
 q_{\mA^++\extraWD\mI,\bstar}(\mX)-r_{\mA^+}(\bstar)
 &\le\Phi(\mu)\\
 &\le\Phi(L).
\end{align*}
Writing out the corresponding maximum gives
\begin{align*}
 &r_{\mA^++\extraWD\mI}(\bstar)-r_{\mA^+}(\bstar)\\
 &\quad=\max_{\mX\in\Sbb_p^+,\ \norm{\mX}_F=1}
       \bigl[q_{\mA^++\extraWD\mI,\bstar}(\mX)-r_{\mA^+}(\bstar)\bigr]\\
 &\quad\le\max_{\mX\in\Sbb_p^+,\ \norm{\mX}_F=1}\Phi\bigl(\tr(\mA^+\mX^2)\bigr)\\
 &\quad\le\Phi(L).
\end{align*}
The first equality moves the constant $r_{\mA^+}(\bstar)$ inside the maximum. Each subsequent inequality uses a bound valid at every matrix in this same set.

For the lower bound, the maximizing matrix $\mX_\star$ satisfies $q_{\mA^+,\bstar}(\mX_\star)=r_{\mA^+}(\bstar)$. Thus
\begin{align*}
 &r_{\mA^++\extraWD\mI}(\bstar)-r_{\mA^+}(\bstar)\\
 &\quad\ge q_{\mA^++\extraWD\mI,\bstar}(\mX_\star)-r_{\mA^+}(\bstar)\\
 &\quad=\frac{\mustar}{\mustar+\extraWD}
       \bigl[q_{\mA^+,\bstar}(\mX_\star)-r_{\mA^+}(\bstar)\bigr]+\Phi(\mustar)\\
 &\quad=\Phi(\mustar).
\end{align*}
The inequality tests the shifted maximum at $\mX_\star$, the next line uses \eqref{fh:eq:shiftcompare}, and the last line uses its maximizing property. We have proved
\[
 \Phi(\mustar)\le r_{\mA^++\extraWD\mI}(\bstar)-r_{\mA^+}(\bstar)\le\Phi(L).
\]

\medskip\noindent\textbf{Step 2. Restore the discarded directions and their noise.}\par\nobreak\smallskip
  \WCupdate{Fix a matrix $\mX\in\Sbb_p^+$ on the positive eigenspace with $\norm{\mX}_F=1$. Its embedding $\operatorname{diag}(\mX,0)$ is positive semidefinite and has the same Frobenius norm, since}
\begin{align*}
 \operatorname{diag}(\mX,0)^2
 &=\operatorname{diag}(\mX^2,0),\\
 \norm{\operatorname{diag}(\mX,0)}_F^2
 &=\tr\bigl(\operatorname{diag}(\mX^2,0)\bigr)
 =\tr(\mX^2)=1.
\end{align*}
We check the four terms of its quotient. For the noise term, block multiplication gives
\begin{align*}
 \operatorname{diag}(\mX,0)\mDelta_t\operatorname{diag}(\mX,0)
 &=\operatorname{diag}(\mX\mDelta_t^+\mX,0),\\
 \tr\bigl(\operatorname{diag}(\mX,0)\mDelta_t
             \operatorname{diag}(\mX,0)\mDelta_t\bigr)
 &=\tr(\mX\mDelta_t^+\mX\mDelta_t^+).
\end{align*}
The equality holds for each sample, and therefore also after taking expectations. For the mean term and the denominator, the block diagonal form of $\mA_{\mathsf P}$ gives
\begin{align*}
 \operatorname{diag}(\mX,0)\mA_{\mathsf P}
 &=\operatorname{diag}\bigl(\mX(\mA^++\extraWD\mI),0\bigr),\\
 \tr\bigl(\operatorname{diag}(\mX,0)\mA_{\mathsf P}
             \operatorname{diag}(\mX,0)\mA_{\mathsf P}\bigr)
 &=\tr\bigl(\mX(\mA^++\extraWD\mI)
                  \mX(\mA^++\extraWD\mI)\bigr),\\
 \tr\bigl(\mA_{\mathsf P}\operatorname{diag}(\mX,0)^2\bigr)
 &=\tr\bigl((\mA^++\extraWD\mI)\mX^2\bigr).
\end{align*}
  \WCupdate{The commutator also reduces to the block on the positive eigenspace,}
\begin{align*}
 [\mA_{\mathsf P},\operatorname{diag}(\mX,0)]
 &=\operatorname{diag}\bigl([\mA^++\extraWD\mI,\mX],0\bigr),\\
 \norm{[\mA_{\mathsf P},\operatorname{diag}(\mX,0)]}_F^2
 &=\norm{[\mA^++\extraWD\mI,\mX]}_F^2.
\end{align*}
Substituting these equalities into the quotient proves
\begin{equation}\label{fh:eq:embedding}
 q_{\mA_{\mathsf P},\bstar}\bigl(\operatorname{diag}(\mX,0)\bigr)
 =q_{\mA^++\extraWD\mI,\bstar}(\mX).
\end{equation}
The full maximization includes every embedded test matrix. In particular,
\begin{align*}
 r_{\mA_{\mathsf P}}(\bstar)
 &=\max_{\substack{\mX\in\Sbb_d^+\\\norm{\mX}_F=1}}
       q_{\mA_{\mathsf P},\bstar}(\mX)\\
 &\ge\max_{\substack{\mX\in\Sbb_p^+\\\norm{\mX}_F=1}}
       q_{\mA_{\mathsf P},\bstar}\bigl(\operatorname{diag}(\mX,0)\bigr)\\
 &=\max_{\substack{\mX\in\Sbb_p^+\\\norm{\mX}_F=1}}
       q_{\mA^++\extraWD\mI,\bstar}(\mX)\\
 &=r_{\mA^++\extraWD\mI}(\bstar).
\end{align*}
  \WCupdate{The inequality restricts the test matrices to those embedded from the positive eigenspace, and the next equality uses \eqref{fh:eq:embedding}.}

For the upper bound, first remove the discarded and cross blocks of the noise. The resulting samples are
\[
 \mA_{\mathsf P}+\Pi^+\mDelta_t\Pi^+
    =
    \begin{pmatrix}
   \mA^++\extraWD\mI+\mDelta_t^+&0\\
   0&\mA^-+\extraWD\mI
  \end{pmatrix}.
\]
When $p<d$, this recurrence is strictly mean-square stable if and only if both block recurrences are stable. Sufficiency follows by adding their expected squared state norms, and necessity follows by initializing only one block. Both block means are positive definite because $\mA_{\mathsf P}\succ0$.

At the fixed momentum $\bstar$, let the coefficient $\alpha>0$ vary. By the exact criterion (\Cref{fh:thm:exact}) and the noise-free threshold \eqref{fh:eq:noisefree}, both blocks are strictly stable if and only if
\[
 \begin{cases}
  (1-\bstar)/\alpha>r_{\mA^++\extraWD\mI}(\bstar),\\
  (1-\bstar)/\alpha>\cstar\bigl(\lambda_{\max}(\mA^-)+\extraWD\bigr)/2
 \end{cases}
\]
These two inequalities are equivalent to
\[
 \frac{1-\bstar}{\alpha}>
 \max\left\{r_{\mA^++\extraWD\mI}(\bstar),
 \frac{\cstar}{2}\bigl(\lambda_{\max}(\mA^-)+\extraWD\bigr)\right\}.
\]
Thus the maximum on the right is the threshold function of this block diagonal system. Since $\lambda_{\max}(\mA^-)\le0<L$, the rank-one bound \eqref{fh:eq:rankonetest} gives
\begin{align*}
 \frac{\cstar}{2}\bigl(\lambda_{\max}(\mA^-)+\extraWD\bigr)
 &<\frac{\cstar}{2}(L+\extraWD)\\
 &\le r_{\mA^++\extraWD\mI}(\bstar).
\end{align*}
The threshold is therefore $r_{\mA^++\extraWD\mI}(\bstar)$. When $p=d$, there is no discarded block and this equality holds directly.

Now fix a full-space test matrix $\mX\in\Sbb_d^+$ with $\norm{\mX}_F=1$. Before the omitted noise is restored, its noise term is
\begin{align*}
 &\Ebb\tr\bigl(\mX(\Pi^+\mDelta_t\Pi^+)
                    \mX(\Pi^+\mDelta_t\Pi^+)\bigr)\\
 &\qquad=\Ebb\tr\bigl((\Pi^+\mX\Pi^+)\mDelta_t
                         (\Pi^+\mX\Pi^+)\mDelta_t\bigr)\\
 &\qquad=\Ebb\tr(\mX^+\mDelta_t^+\mX^+\mDelta_t^+).
\end{align*}
The first equality uses cyclicity of the trace and $(\Pi^+)^2=\Pi^+$, and the second follows by block multiplication. The mean, commutator and denominator are unchanged. Subtracting the change in the noise term therefore recovers the quotient before restoration:
\begin{align*}
 &q_{\mA_{\mathsf P},\bstar}(\mX)
  -\frac{\Ebb\tr(\mX\mDelta_t\mX\mDelta_t)
          -\Ebb\tr(\mX^+\mDelta_t^+\mX^+\mDelta_t^+)}
         {2\tr(\mA_{\mathsf P}\mX^2)}\\
 &\qquad=\frac{\Ebb\tr(\mX^+\mDelta_t^+\mX^+\mDelta_t^+)
    +\cstar\tr(\mX\mA_{\mathsf P}\mX\mA_{\mathsf P})
    -k_{\bstar}\norm{[\mA_{\mathsf P},\mX]}_F^2}
   {2\tr(\mA_{\mathsf P}\mX^2)}\\
 &\qquad\le r_{\mA^++\extraWD\mI}(\bstar).
\end{align*}
The denominator is positive because $\mA_{\mathsf P}\succ0$ and $\mX\ne0$. Recall that $\dcov$ is the largest normalized change in the noise term over these full-space test matrices:
\[
 \dcov=\max_{\substack{\mX\in\Sbb_d^+\\\norm{\mX}_F=1}}
 \frac{\Ebb\tr(\mX\mDelta_t\mX\mDelta_t)
       -\Ebb\tr(\mX^+\mDelta_t^+\mX^+\mDelta_t^+)}
      {2\tr(\mA_{\mathsf P}\mX^2)}.
\]
Adding back the subtracted fraction and bounding it by this maximum, as defined in \eqref{fh:eq:delta}, gives
\begin{align*}
 q_{\mA_{\mathsf P},\bstar}(\mX)
 &\le r_{\mA^++\extraWD\mI}(\bstar)
   +\frac{\Ebb\tr(\mX\mDelta_t\mX\mDelta_t)
          -\Ebb\tr(\mX^+\mDelta_t^+\mX^+\mDelta_t^+)}
         {2\tr(\mA_{\mathsf P}\mX^2)}\\
 &\le r_{\mA^++\extraWD\mI}(\bstar)+\dcov.
\end{align*}
This holds for every full-space test matrix. Maximizing both sides,
\begin{equation}\label{fh:eq:commonmean}
\begin{aligned}
 r_{\mA_{\mathsf P}}(\bstar)
 &=\max_{\substack{\mX\in\Sbb_d^+\\\norm{\mX}_F=1}}
     q_{\mA_{\mathsf P},\bstar}(\mX)\\
 &\le\max_{\substack{\mX\in\Sbb_d^+\\\norm{\mX}_F=1}}
     \bigl[r_{\mA^++\extraWD\mI}(\bstar)+\dcov\bigr]\\
 &=r_{\mA^++\extraWD\mI}(\bstar)+\dcov.
\end{aligned}
\end{equation}
We can now combine the lower bound from the embedded test matrices and the upper bound \eqref{fh:eq:commonmean} with Step~1:
\begin{align*}
 r_{\mA_{\mathsf P}}(\bstar)-r_{\mA^+}(\bstar)
 &\ge r_{\mA^++\extraWD\mI}(\bstar)-r_{\mA^+}(\bstar)\\
 &\ge\Phi(\mustar),\\[2pt]
 r_{\mA_{\mathsf P}}(\bstar)-r_{\mA^+}(\bstar)
 &\le r_{\mA^++\extraWD\mI}(\bstar)-r_{\mA^+}(\bstar)+\dcov\\
 &\le\Phi(L)+\dcov.
\end{align*}
Equivalently,
\begin{equation}\label{fh:eq:comparison}
 \Phi(\mustar)
 \le r_{\mA_{\mathsf P}}(\bstar)-r_{\mA^+}(\bstar)
 \le\Phi(L)+\dcov.
\end{equation}

\medskip\noindent\textbf{Step 3. Compare the critical LRs.}\par\nobreak\smallskip
Recall that $\bstarP=\beta_1(1-\lrP\wdc)$ and $\bstar=\beta_1(1-\lrstar\wdc)$ are the momenta at the two critical LRs. The critical-rate identity \eqref{fh:eq:criticalidentity} expresses each quotient maximum as the inverse critical rate plus the same weight-decay offset:
\begin{align*}
 r_{\mA_{\mathsf P}}(\bstarP)
 &=\frac1{\lrP}+\frac{\beta_1\wdc}{1-\beta_1},\\
 r_{\mA^+}(\bstar)
 &=\frac1{\lrstar}+\frac{\beta_1\wdc}{1-\beta_1}.
\end{align*}
Subtracting the same offset from both thresholds,
\begin{align*}
 \frac1{\lrP}-\frac1{\lrstar}
 &=\left[r_{\mA_{\mathsf P}}(\bstarP)-\frac{\beta_1\wdc}{1-\beta_1}\right]
   -\left[r_{\mA^+}(\bstar)-\frac{\beta_1\wdc}{1-\beta_1}\right]\\
 &=r_{\mA_{\mathsf P}}(\bstarP)-r_{\mA^+}(\bstar).
\end{align*}
The two critical momenta differ by
\begin{align*}
 \bstarP-\bstar
 &=\beta_1(1-\lrP\wdc)-\beta_1(1-\lrstar\wdc)\\
 &=\beta_1\wdc(\lrstar-\lrP).
\end{align*}
Also, both critical LRs are positive, so
\[
 \frac1{\lrP}-\frac1{\lrstar}
 =\frac{\lrstar-\lrP}{\lrP\lrstar}.
\]
These two identities determine the signs in the following cases.

If $\lrP\le\lrstar$, then $\bstarP\ge\bstar$ and $1/\lrP-1/\lrstar\ge0$. Since $r_{\mA_{\mathsf P}}$ is nonincreasing in its momentum argument, $r_{\mA_{\mathsf P}}(\bstarP)\le r_{\mA_{\mathsf P}}(\bstar)$. Combining this with the upper bound in \eqref{fh:eq:comparison},
\begin{align*}
 \frac1{\lrP}-\frac1{\lrstar}
 &=r_{\mA_{\mathsf P}}(\bstarP)-r_{\mA^+}(\bstar)\\
 &\le r_{\mA_{\mathsf P}}(\bstar)-r_{\mA^+}(\bstar)\\
 &\le\Phi(L)+\dcov.
\end{align*}
In this case both clipped bounds follow from
\begin{align*}
 \min\{\Phi(\mustar),0\}&\le0\le\frac1{\lrP}-\frac1{\lrstar},\\
 \frac1{\lrP}-\frac1{\lrstar}&\le\Phi(L)+\dcov\le\max\{\Phi(L)+\dcov,0\}.
\end{align*}

If $\lrP\ge\lrstar$, then $\bstarP\le\bstar$ and $1/\lrP-1/\lrstar\le0$. Monotonicity now gives $r_{\mA_{\mathsf P}}(\bstarP)\ge r_{\mA_{\mathsf P}}(\bstar)$. Combining this with the lower bound in \eqref{fh:eq:comparison},
\begin{align*}
 \frac1{\lrP}-\frac1{\lrstar}
 &=r_{\mA_{\mathsf P}}(\bstarP)-r_{\mA^+}(\bstar)\\
 &\ge r_{\mA_{\mathsf P}}(\bstar)-r_{\mA^+}(\bstar)\\
 &\ge\Phi(\mustar).
\end{align*}
The clipped bounds now follow from
\begin{align*}
 \min\{\Phi(\mustar),0\}&\le\Phi(\mustar)\le\frac1{\lrP}-\frac1{\lrstar},\\
 \frac1{\lrP}-\frac1{\lrstar}&\le0\le\max\{\Phi(L)+\dcov,0\}.
\end{align*}
This proves \eqref{fh:eq:compensation} in both cases.

Finally, if $\beta_1\wdc=0$, then
\[
 \bstarP=\beta_1-\beta_1\wdc\lrP=\beta_1
 =\beta_1-\beta_1\wdc\lrstar=\bstar.
\]
The critical identities and \eqref{fh:eq:comparison} therefore give
\begin{align*}
 \Phi(\mustar)
 &\le r_{\mA_{\mathsf P}}(\bstar)-r_{\mA^+}(\bstar)\\
 &=r_{\mA_{\mathsf P}}(\bstarP)-r_{\mA^+}(\bstar)\\
 &=\frac1{\lrP}-\frac1{\lrstar}\\
 &\le\Phi(L)+\dcov.
\end{align*}
Hence no clipping is needed in this case. This completes the proof of \Cref{fh:prop:compensation}.
\end{proof}

\paragraph{A scalar example.}
Let $\beta_1=\wdc=0$, $d=p=1$, $\mA=1$, $\Ebb\mDelta_t^2=1$, and $\extraWD=0.01$.
For the homogeneous scalar SGD recurrence with mean $a>0$ and centered noise of variance $\sigma^2$, the factor by which one step multiplies the second moment is
\begin{align*}
 \Ebb\bigl(1-\lr(a+\Delta_t)\bigr)^2
 &=1-2\lr a+\lr^2\Ebb(a+\Delta_t)^2\\
 &=1-2\lr a+\lr^2(a^2+\sigma^2).
\end{align*}
For $\lr>0$, this factor is below one if and only if
\[
 -2\lr a+\lr^2(a^2+\sigma^2)<0
 \quad\Longleftrightarrow\quad
 \lr<\frac{2a}{a^2+\sigma^2}.
\]
The inverse critical rate is therefore $(a^2+\sigma^2)/(2a)$. In the projected system $a=1$ and $\sigma^2=1$, while compensation replaces $a$ by $1+\extraWD$ and leaves the variance unchanged. Hence
\begin{align*}
 \frac1{\lrstar}&=1,\\
 \frac1{\lrP}-\frac1{\lrstar}
 &=\frac{(1+\extraWD)^2+1}{2(1+\extraWD)}-1\\
 &=\frac{\extraWD^2}{2(1+\extraWD)}\approx4.95\cdot10^{-5}.
\end{align*}
Here $\mustar=L=1$ and $\dcov=0$. Since $b_\star=0$, $c_\star=1$ and $r_{\mA^+}(b_\star)=1$, the definition of $\Phi$ gives
\[
 \Phi(1)=\frac{\extraWD}{2}-\frac{\extraWD}{1+\extraWD}\left(1-\frac12\right)
 =\frac{\extraWD^2}{2(1+\extraWD)}.
\]
Both compensation endpoints therefore equal the inverse-rate difference. The comparison is exact because the shared noise is compared directly rather than bounded separately in each system.

\paragraph{The SGD case.}
For SGD without weight decay, $\beta_1=\wdc=0$ and $\mP=\mI$, so $\mA=\mH_{t_0}$. Here $\bstarP=0$, $\alpha_\star^{\mathsf P}=\lrP$ and $\cstar=1$. The critical identity \eqref{fh:eq:criticalidentity} gives $r_{\mA^+}(0)=1/\lrstar$. Substituting these values into the definitions of $\rhoT$, $H_T$ and $\Phi$ gives
\begin{align*}
 \rhoT&=M^{2/T},\qquad
 H_T=\frac{M^{2/T}-1}{2\mustarP(\lrP)^2},\\
 \Phi(s)&=\frac{\extraWD}{2}-\frac{\extraWD}{s+\extraWD}
       \left(\frac1{\lrstar}-\frac s2\right).
\end{align*}
The theorem applies without clipping. The direct LR bound derived in \Cref{fh:sec:horizon} is $\lrhat(M,T)\le\lrP+(M^{2/T}-1)/(2\mustarP)$.

The covariance mismatch can be bounded by second moments of the discarded and cross blocks of the noise. Recall that $\mDelta_t^\perp=\mDelta_t-\Pi^+\mDelta_t\Pi^+$ contains precisely those discarded and cross blocks.

\begin{lemma}[A bound on the covariance mismatch]\label{fh:lem:deltabound}
Under \Cref{fh:ass:comparison}, let $v_0,v_E\ge0$ satisfy $\Ebb(\mDelta_t^+)^2\preceq v_0(\mA^++\extraWD\mI)$ and $\Ebb(\mDelta_t^\perp)^2\preceq v_E\mA_{\mathsf P}$. Then
\[
 \dcov\le\sqrt{v_0v_E}+\frac{v_E}{2}.
\]
\end{lemma}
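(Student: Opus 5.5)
Fix $\mX\succeq0$ with $\norm{\mX}_F=1$. The plan is to expand the noise term in the numerator of $\dcov$ through the block decomposition $\mDelta_t=\Pi^+\mDelta_t\Pi^++\mDelta_t^\perp$. Write $\mE\coloneqq\Pi^+\mDelta_t\Pi^+$, which is $\operatorname{diag}(\mDelta_t^+,0)$, and $\mF\coloneqq\mDelta_t^\perp$. By bilinearity and cyclicity,
\[
 \tr(\mX\mDelta_t\mX\mDelta_t)=\tr(\mX\mE\mX\mE)+2\tr(\mX\mE\mX\mF)+\tr(\mX\mF\mX\mF).
\]
The first term equals $\tr(\mX^+\mDelta_t^+\mX^+\mDelta_t^+)$, by the same cyclicity and $(\Pi^+)^2=\Pi^+$ argument used in Step~2 of the proof of \Cref{fh:prop:compensation}. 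It therefore cancels against the subtracted term. What remains to bound is $\Ebb[2\tr(\mX\mE\mX\mF)+\tr(\mX\mF\mX\mF)]$ divided by $2\tr(\mA_{\mathsf P}\mX^2)$.

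For the pure term I would use Cauchy--Schwarz in the Frobenius inner product. Since $\tr(\mX\mF\mX\mF)=\langle \mX^{1/2}\mF\mX^{1/2},\mX^{1/2}\mF\mX^{1/2}\rangle$ is nonnegative, the task is an upper bound. The step is
\[
 \tr(\mX\mF\mX\mF)=\langle \mF\mX,\mX\mF\rangle\le\norm{\mF\mX}_F\norm{\mX\mF}_F=\tr(\mX\mF^2\mX)=\tr(\mF^2\mX^2).
\]
Taking expectations and using $\Ebb\mF^2\preceq v_E\mA_{\mathsf P}$ with $\mX^2\succeq0$ gives $\Ebb\tr(\mX\mF\mX\mF)\le v_E\tr(\mA_{\mathsf P}\mX^2)$. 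After dividing, this contributes $v_E/2$.

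For the cross term the bound is $|\tr(\mX\mE\mX\mF)|\le\norm{\mE\mX}_F\norm{\mX\mF}_F$, followed by Cauchy--Schwarz over the expectation:
\[
 \Ebb|\tr(\mX\mE\mX\mF)|\le\sqrt{\Ebb\tr(\mE^2\mX^2)}\sqrt{\Ebb\tr(\mF^2\mX^2)}.
\]
The second factor is at most $\sqrt{v_E\tr(\mA_{\mathsf P}\mX^2)}$. For the first factor, $\mE^2=\operatorname{diag}((\mDelta_t^+)^2,0)$, so $\Ebb\mE^2\preceq v_0\operatorname{diag}(\mA^++\extraWD\mI,0)$. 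Because $\mA_{\mathsf P}=\operatorname{diag}(\mA^++\extraWD\mI,\mA^-+\extraWD\mI)$ with a positive definite lower block by \Cref{fh:ass:comparison}, this gives $\Ebb\mE^2\preceq v_0\mA_{\mathsf P}$. The first factor is then at most $\sqrt{v_0\tr(\mA_{\mathsf P}\mX^2)}$. Multiplying by $2$ and dividing by $2\tr(\mA_{\mathsf P}\mX^2)$ yields $\sqrt{v_0v_E}$. Adding the two contributions and maximizing over $\mX$ gives $\dcov\le\sqrt{v_0v_E}+v_E/2$.

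The main obstacle is checking that each trace inequality has the correct orientation, since $\mF$ is not positive semidefinite and the traces are not automatically nonnegative. The identities $\langle\mF\mX,\mX\mF\rangle=\tr(\mX\mF\mX\mF)$ and $\norm{\mF\mX}_F^2=\tr(\mX\mF^2\mX)=\tr(\mF^2\mX^2)$ use only the symmetry of $\mF$ and $\mX$, so these steps should go through. The remaining point is the passage from the Loewner bounds on $\Ebb\mE^2$ and $\Ebb\mF^2$ to trace bounds. This follows from $\tr(\mY\mX^2)\ge0$ for every $\mY\succeq0$.
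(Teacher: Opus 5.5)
Your proposal is correct and follows the paper's proof essentially step for step. Both arguments split $\mDelta_t=\Pi^+\mDelta_t\Pi^++\mDelta_t^\perp$ and cancel the projected term by block multiplication. Both bound the cross term with Frobenius and then $L^2$ Cauchy--Schwarz, bound the discarded term by $\norm{\mX\mDelta_t^\perp}_F^2$, and lift $\Ebb(\Pi^+\mDelta_t\Pi^+)^2\preceq v_0\mA_{\mathsf P}$ using the lower block of $\mA_{\mathsf P}$. Bounding the absolute value of the cross term, rather than the term itself, is a harmless difference.
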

\begin{proof}[Proof of \Cref{fh:lem:deltabound}]
We bound the noise difference that defines $\dcov$, keeping its denominator $2\tr(\mA_{\mathsf P}\mX^2)$ unchanged. Within this proof write
\[
 \mDelta_t^{\circ}\coloneqq\Pi^+\mDelta_t\Pi^+
 =\operatorname{diag}(\mDelta_t^+,0),\qquad
 \mDelta_t=\mDelta_t^{\circ}+\mDelta_t^\perp.
\]
\medskip\noindent\textbf{Step 1. Bound the separate noise terms.}\par\nobreak\smallskip
  \WCupdate{The moment bound on $\mDelta_t^+$ in the hypothesis extends to the full space:}
\begin{align*}
 \Ebb(\mDelta_t^{\circ})^2
 &=\operatorname{diag}\bigl(\Ebb(\mDelta_t^+)^2,0\bigr)\\
 &\preceq v_0\operatorname{diag}(\mA^++\extraWD\mI,0)\\
 &\preceq v_0\mA_{\mathsf P}.
\end{align*}
The last inequality uses the positive definiteness of the discarded block when it is present, and is an equality when $p=d$.
Fix $0\ne\mX\in\Sbb_d^+$.
Since $\mDelta_t^\circ=\operatorname{diag}(\mDelta_t^+,0)$, block multiplication gives
\[
 \Ebb\tr(\mX^+\mDelta_t^+\mX^+\mDelta_t^+)
 =\Ebb\tr(\mX\mDelta_t^{\circ}\mX\mDelta_t^{\circ}).
\]
We first estimate the cross term and the discarded-noise term separately. For any symmetric $\mDelta$, symmetry and cyclicity of the trace give
\begin{align*}
 \norm{\mDelta\mX}_F^2
 &=\tr\bigl((\mDelta\mX)^\top\mDelta\mX\bigr)
       &&\explain{Frobenius norm}\\
 &=\tr(\mX\mDelta^2\mX)
       &&\explain{symmetry}\\
 &=\tr(\mX^2\mDelta^2),
       &&\explain{cyclicity of the trace}\\
 \norm{\mX\mDelta}_F^2
 &=\tr\bigl((\mX\mDelta)^\top\mX\mDelta\bigr)
       &&\explain{Frobenius norm}\\
 &=\tr(\mDelta\mX^2\mDelta)
       &&\explain{symmetry}\\
 &=\tr(\mX^2\mDelta^2).
       &&\explain{cyclicity of the trace}
\end{align*}
The two moment bounds therefore imply
\begin{align*}
 \Ebb\norm{\mDelta_t^\circ\mX}_F^2
 &=\tr\bigl(\mX^2\Ebb(\mDelta_t^\circ)^2\bigr)
       &&\explain{the norm identity above}\\
 &\le v_0\tr(\mA_{\mathsf P}\mX^2),
    &&\explain{\WCupdate{the moment bound on $\mDelta_t^+$}}\\
 \Ebb\norm{\mX\mDelta_t^\perp}_F^2
 &=\tr\bigl(\mX^2\Ebb(\mDelta_t^\perp)^2\bigr)
       &&\explain{the norm identity above}\\
 &\le v_E\tr(\mA_{\mathsf P}\mX^2).
       &&\explain{the discarded-noise moment bound}
\end{align*}
Both inequalities use the same trace-order fact. For $\mQ\succeq0$,
\[
 \tr(\mX^2\mQ)=\tr(\mX\mQ\mX)\ge0.
\]
Apply this to the differences between the right and left matrices in the two moment bounds.

For the cross term, the Cauchy--Schwarz inequality in the Frobenius inner product and then in $L^2$ gives
\begin{align*}
 &\Ebb\tr(\mX\mDelta_t^\circ\mX\mDelta_t^\perp)\\
 &\quad=\Ebb\langle\mDelta_t^\circ\mX,\mX\mDelta_t^\perp\rangle_F
       &&\explain{symmetry of $\mX$ and $\mDelta_t^\circ$}\\
 &\quad\le\Ebb\bigl[\norm{\mDelta_t^\circ\mX}_F\norm{\mX\mDelta_t^\perp}_F\bigr]
       &&\explain{Frobenius Cauchy--Schwarz}\\
 &\quad\le\sqrt{\Ebb\norm{\mDelta_t^\circ\mX}_F^2\,
                       \Ebb\norm{\mX\mDelta_t^\perp}_F^2}
       &&\explain{Cauchy--Schwarz in $L^2$}\\
 &\quad\le\sqrt{v_0v_E}\tr(\mA_{\mathsf P}\mX^2).
       &&\explain{the two moment bounds}
\end{align*}
For the discarded-noise term, the two Frobenius norms are equal, so
\begin{align*}
 &\Ebb\tr(\mX\mDelta_t^\perp\mX\mDelta_t^\perp)\\
 &\quad=\Ebb\langle\mDelta_t^\perp\mX,\mX\mDelta_t^\perp\rangle_F\\
 &\quad\le\Ebb\bigl[\norm{\mDelta_t^\perp\mX}_F\norm{\mX\mDelta_t^\perp}_F\bigr]
       &&\explain{Frobenius Cauchy--Schwarz}\\
 &\quad=\Ebb\norm{\mX\mDelta_t^\perp}_F^2
       &&\explain{the norm identity above}\\
 &\quad\le v_E\tr(\mA_{\mathsf P}\mX^2).
       &&\explain{the discarded-noise moment bound}
\end{align*}
\medskip\noindent\textbf{Step 2. Combine the terms and normalize.}\par\nobreak\smallskip
Expand $\mDelta_t=\mDelta_t^\circ+\mDelta_t^\perp$ in the numerator. The two cross terms have the same trace by cyclicity, and the preceding bounds give
\begin{align*}
 &\Ebb\tr(\mX\mDelta_t\mX\mDelta_t)-\Ebb\tr(\mX\mDelta_t^{\circ}\mX\mDelta_t^{\circ})\\
 &\qquad=2\,\Ebb\tr(\mX\mDelta_t^{\circ}\mX\mDelta_t^\perp)+\Ebb\tr(\mX\mDelta_t^\perp\mX\mDelta_t^\perp)\\
 &\qquad\le\bigl(2\sqrt{v_0v_E}+v_E\bigr)\tr(\mA_{\mathsf P}\mX^2).
\end{align*}
  \WCupdate{For every nonzero PSD $\mX$, the denominator $2\tr(\mA_{\mathsf P}\mX^2)$ is positive. Dividing by it and using the block trace identity above,}
\[
 \frac{\Ebb\tr(\mX\mDelta_t\mX\mDelta_t)
       -\Ebb\tr(\mX^+\mDelta_t^+\mX^+\mDelta_t^+)}
      {2\tr(\mA_{\mathsf P}\mX^2)}
 \le\frac{2\sqrt{v_0v_E}+v_E}{2}
 =\sqrt{v_0v_E}+\frac{v_E}{2}.
\]
The right side is independent of $\mX$, so it also bounds the maximum $\dcov$. This completes the proof of \Cref{fh:lem:deltabound}.
\end{proof}
Since $\mA^+\preceq\mA^++\extraWD\mI$, one may take $v_0$ to be the noise level of the projected system, the smallest $v$ with $\Ebb(\mDelta_t^+)^2\preceq v\mA^+$. The lemma controls the noise in the discarded and cross blocks relative to the compensated effective curvature. A bound on the compensation coefficient alone does not control these quantities, as the discarded-noise example below shows.

\paragraph{Dependence on saddleness.}
The next bound makes the dependence on the compensation coefficient explicit. It bounds the Hessian noise in each system by its second moment relative to the corresponding positive definite mean.

\begin{corollary}[Compensation and saddleness]\label{fh:cor:saddleness}
Under \Cref{fh:ass:comparison}, let $v\ge0$ satisfy
\begin{equation}\label{fh:eq:noise-level}
 \Ebb(\mDelta_t^+)^2\preceq v\mA^+,
 \qquad
 \Ebb\mDelta_t^2\preceq v\mA_{\mathsf P}.
\end{equation}
Then the projected and compensated critical LRs satisfy
\begin{equation}\label{fh:eq:saddleness-compensation}
 \left|\frac1{\lrP}-\frac1{\lrstar}\right|
 \le \frac{1-\beta_1}{2(1+\beta_1)}\extraWD+\frac v2.
\end{equation}
  \WCupdate{If \Cref{fh:ass:transient} also holds, with the search window specified in \Cref{fh:sec:main}, then the expected detector satisfies}
\begin{equation}\label{fh:eq:saddleness-detector}
 \left|\frac1{\lrhat(M,T)}-\frac1{\lrstar}\right|
 \le \frac{1-\beta_1}{2(1+\beta_1)}\extraWD+\frac v2+H_T,
\end{equation}
where $H_T$ is the finite-horizon term in \eqref{fh:eq:HT}.
\end{corollary}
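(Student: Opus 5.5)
The plan is to bypass the $\Phi$/$\dcov$ bound of \Cref{fh:prop:compensation} and instead sandwich each critical LR between its noise-free value and that value shifted by $v/2$ in inverse LR. Comparing the two noise-free thresholds then gives the compensation term exactly. Using $\Phi(L)\le\cstar\extraWD/2$ directly is not enough when $\beta_1\wdc>0$: there $\bstar\le\beta_1$ gives $\cstar\ge(1-\beta_1)/(1+\beta_1)$, which is the wrong direction.

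\emph{Step 1: two-sided bound on $r_{\mC}(b)$.} Take a generic positive definite mean $\mC$ with noise satisfying $\Ebb\mDelta_t^2\preceq v\mC$. The two instances are $(\mA^+,\mDelta_t^+)$ and $(\mA_{\mathsf P},\mDelta_t)$, with $L_{\mA^+}=L$ and $L_{\mA_{\mathsf P}}=L+\extraWD$. For every admissible $\mX$, Frobenius Cauchy--Schwarz and the norm identity from the proof of \Cref{fh:lem:deltabound} give
\[
\Ebb\tr(\mX\mDelta_t\mX\mDelta_t)\le\Ebb\norm{\mX\mDelta_t}_F^2=\tr(\mX^2\Ebb\mDelta_t^2)\le v\tr(\mC\mX^2).
\]
So the noise part of $q_{\mC,b}$ is at most $v/2$. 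The noise-free part is at most $c_bL_{\mC}/2$ by \eqref{fh:eq:noisefree}. Combined with the rank-one lower bound, this gives
\[
\frac{c_bL_{\mC}}{2}\le r_{\mC}(b)\le\frac{c_bL_{\mC}}{2}+\frac v2\qquad\text{for every }b\in(-1,1).
\]

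\emph{Step 2: sandwich the critical LR.} Along the AGM curve $b=\beta_1(1-\lr\wdc)$, define
\[
G^{0}(\lr)\coloneqq\frac1\lr+\frac{\beta_1\wdc}{1-\beta_1}-\frac{c_{b}L_{\mC}}2 .
\]
Step 1 gives $G^0-v/2\le G_{\mC}\le G^0$. By \eqref{fh:eq:detthreshold-generic}, $G^0$ is the noise-free threshold function, with unique root $\lr_{\det}(L_{\mC})$. I would then show that a function $G$ decreasing at least as fast as $1/\lr$, in the sense of \eqref{fh:eq:increment}, has a root whose inverse moves by at most $\epsilon$ when $G$ is shifted down by at most $\epsilon$. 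This is a short intermediate-value argument: \eqref{fh:eq:increment} holds for $G_{\mC}$ by \Cref{fh:thm:decay-exact}, and it holds for $G^0$ because $c_b$ is increasing in $\lr$, so $-c_bL_{\mC}/2$ is nonincreasing in $\lr$. The conclusion is
\[
\frac1{\lr_{\det}(L_{\mC})}\le\frac1{\lr_{\mathrm{crit}}(\mC)}\le\frac1{\lr_{\det}(L_{\mC})}+\frac v2 .
\]
The step I expect to be the main obstacle is making this root-displacement argument rigorous. The difficulty is the $\lr$-dependence of the momentum and the restriction $\lr<\lr_{\max}$. I would argue on the inverse-LR variable $s=1/\lr$. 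In that variable, $G(s)-s$ is nondecreasing in $s$ by \eqref{fh:eq:increment}, so evaluating at $s_0+\epsilon$ gives $G\ge G(s_0)+\epsilon$, and this pins the root.

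\emph{Step 3: compare the two systems.} From \eqref{fh:eq:detthreshold-generic},
\[
\frac1{\lr_{\det}(L_{\mC})}=\frac{(1-\beta_1)L_{\mC}+2\beta_1\wdc}{2(1+\beta_1)},
\]
which is affine in $L_{\mC}$. Going from $L$ to $L+\extraWD$ therefore changes it by exactly $(1-\beta_1)\extraWD/(2(1+\beta_1))$. Subtracting the two sandwiches from Step 2, for $\lrstar$ with $\mC=\mA^+$ and for $\lrP$ with $\mC=\mA_{\mathsf P}$, yields \eqref{fh:eq:saddleness-compensation}.

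\emph{Step 4: the detector bound.} For \eqref{fh:eq:saddleness-detector}, I would reuse the proof of \Cref{fh:thm:main}. That proof gives $-H_T\le1/\lrhat(M,T)-1/\lrP\le0$ from \Cref{fh:prop:horizon-true} and \Cref{fh:ass:transient}. The triangle inequality with \eqref{fh:eq:saddleness-compensation} then finishes the argument.
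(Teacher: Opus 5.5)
Your proposal is correct and follows the paper's proof essentially step for step. It uses the same two-sided bound $c_bL_{\mC}/2\le r_{\mC}(b)\le c_bL_{\mC}/2+v/2$, the same sandwich $0\le 1/\lr_{\mathrm{crit}}-1/\lr_{\mathrm{det}}\le v/2$ for each system, the affine dependence of $1/\lr_{\mathrm{det}}$ on $L_{\mC}$, and the triangle inequality with $H_T$. The only difference is cosmetic: the paper obtains the sandwich from $\lr_{\mathrm{crit}}\le\lr_{\mathrm{det}}$, monotonicity of $r_{\mC}$ in $b$, and the critical-rate identity, while you use a root-displacement argument with \eqref{fh:eq:increment} in the variable $1/\lr$; the domain concern you flagged does not arise, since $\lr_{\mathrm{det}}<\lr_{\max}$.
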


A finite $v$ always exists because both means are positive definite and the Hessian noise has finite second moments. The corollary does not assume that $v$ is small. Its second condition uses $\mA_{\mathsf P}$, so $v$ can depend on the compensation coefficient and need not stay bounded as that coefficient decreases.

Recall that the effective saddleness in \eqref{eq:agm-saddleness} is
\[
 \nu
 =\frac{\max\{0,-\lambda_{\min}(\mA)\}}{L},
 \qquad L=\lambda_{\max}(\mA)>0.
\]
When this saddleness is positive, choosing $\extraWD=c\nu L$ with $c>1$ makes $\mA_{\mathsf P}$ positive definite.
More generally, whenever a valid compensation satisfies $\extraWD\le c\nu L$, \eqref{fh:eq:saddleness-compensation} gives
\[
 \left|\frac1{\lrP}-\frac1{\lrstar}\right|
 \le \frac{c(1-\beta_1)}{2(1+\beta_1)}
       \nu L+\frac v2.
\]
Thus saddleness controls the curvature shift, while $v$ accounts for Hessian noise. If the compensation is chosen from an estimated eigenvalue, an upper bound on $\extraWD$ must also retain the estimation error. Small saddleness alone does not control $v$.

\begin{proof}[Proof of \Cref{fh:cor:saddleness}]
We first bound the change in a critical LR caused by adding noise to a positive definite mean. Applying this bound to the projected and compensated means leaves an explicit curvature shift and at most $v/2$ from the difference of the two noise contributions.

\medskip\noindent\textbf{Step 1. Bound the noise contribution to the quotient.}\par\nobreak\smallskip
Consider the AGM recurrence with a generic mean $\mC\succ0$ and symmetric centered noise satisfying $\Ebb\mDelta_t^2\preceq v\mC$. Recall that $L_{\mC}=\lambda_{\max}(\mC)$, $c_b=(1-b)/(1+b)$, and $r_{\mC}(b)$ is the maximum of the stability quotient over nonzero positive semidefinite test matrices. For such a matrix $\mX$, the noise numerator is nonnegative because
\[
 \Ebb\tr(\mX\mDelta_t\mX\mDelta_t)
 =\Ebb\norm{\mX^{1/2}\mDelta_t\mX^{1/2}}_F^2\ge0.
\]
For the upper bound, apply the Cauchy--Schwarz inequality in the Frobenius inner product and use the symmetry of $\mX$ and $\mDelta_t$:
\begin{align*}
 \Ebb\tr(\mX\mDelta_t\mX\mDelta_t)
 &=\Ebb\langle\mDelta_t\mX,\mX\mDelta_t\rangle_F
       &&\explain{the Frobenius inner product}\\
 &\le\Ebb\bigl[\norm{\mDelta_t\mX}_F\norm{\mX\mDelta_t}_F\bigr]
       &&\explain{Cauchy--Schwarz}\\
 &=\Ebb\tr(\mX^2\mDelta_t^2)
       &&\explain{the two norms are equal}\\
 &=\tr\bigl(\mX^2\Ebb\mDelta_t^2\bigr)
       &&\explain{linearity of expectation}\\
 &\le v\tr(\mC\mX^2).
       &&\explain{$\Ebb\mDelta_t^2\preceq v\mC$}
\end{align*}
The quotient divides this numerator by $2\tr(\mC\mX^2)>0$. Its noise contribution therefore lies between zero and $v/2$ for every admissible test matrix. The maximum without noise equals $c_bL_{\mC}/2$ by \eqref{fh:eq:noisefree}. Taking maxima gives, for every $-1<b<1$,
\[
 \frac{c_bL_{\mC}}2
 \le r_{\mC}(b)
 \le\frac{c_bL_{\mC}}2+\frac v2.
\]

\medskip\noindent\textbf{Step 2. Compare with the critical LR without noise.}\par\nobreak\smallskip
Let $\lr_{\mathrm{crit}}$ be the critical rate for this noisy recurrence. Recall from \eqref{fh:eq:detthreshold-generic} that the same mean without noise has critical rate
\[
 \lr_{\mathrm{det}}
 =\frac{2(1+\beta_1)}{(1-\beta_1)L_{\mC}+2\beta_1\wdc}.
\]
By the exact stability criterion (\Cref{fh:thm:exact}), adding the nonnegative noise term can only remove strictly stable LRs of the homogeneous recurrence.
Hence $\lr_{\mathrm{crit}}\le\lr_{\mathrm{det}}$. Their effective momenta are
\[
 b_{\mathrm{crit}}\coloneqq\beta_1(1-\lr_{\mathrm{crit}}\wdc),
 \qquad
 b_{\mathrm{det}}\coloneqq\beta_1(1-\lr_{\mathrm{det}}\wdc).
\]
Both lie in $(-1,1)$ by \Cref{fh:thm:decay-exact}, and
\begin{align*}
 b_{\mathrm{crit}}-b_{\mathrm{det}}
 &=\beta_1\wdc(\lr_{\mathrm{det}}-\lr_{\mathrm{crit}})\\
 &\ge0.
\end{align*}
Recall that the critical-rate identity \eqref{fh:eq:criticalidentity} adds the same weight-decay offset to the inverse rate in both systems. Since $r_{\mC}$ is nonincreasing in its momentum argument,
\begin{align*}
 \frac1{\lr_{\mathrm{crit}}}+\frac{\beta_1\wdc}{1-\beta_1}
 &=r_{\mC}(b_{\mathrm{crit}})
       &&\explain{critical-rate identity}\\
 &\le r_{\mC}(b_{\mathrm{det}})
       &&\explain{monotonicity in $b$}\\
 &\le\frac{c_{b_{\mathrm{det}}}L_{\mC}}2+\frac v2
       &&\explain{Step~1}\\
 &=\frac1{\lr_{\mathrm{det}}}+\frac{\beta_1\wdc}{1-\beta_1}+\frac v2.
       &&\explain{boundary without noise}
\end{align*}
Subtracting the common offset and using $\lr_{\mathrm{crit}}\le\lr_{\mathrm{det}}$ gives
\[
 0\le\frac1{\lr_{\mathrm{crit}}}-\frac1{\lr_{\mathrm{det}}}\le\frac v2.
\]
This argument uses monotonicity on the full interval $(-1,1)$ and therefore also covers negative effective critical momenta.

\medskip\noindent\textbf{Step 3. Apply the bound to the two systems.}\par\nobreak\smallskip
The projected system has mean $\mA^+$, noise $\mDelta_t^+$ and largest mean eigenvalue $L$. The compensated system has mean $\mA_{\mathsf P}$, noise $\mDelta_t$ and largest mean eigenvalue $L+\extraWD$. The two inequalities in \eqref{fh:eq:noise-level} allow us to apply Step~2 with the same $v$:
\begin{align*}
 0&\le\frac1{\lrstar}
   -\frac{(1-\beta_1)L+2\beta_1\wdc}{2(1+\beta_1)}
   \le\frac v2,\\
 0&\le\frac1{\lrP}
   -\frac{(1-\beta_1)(L+\extraWD)+2\beta_1\wdc}{2(1+\beta_1)}
   \le\frac v2.
\end{align*}
The difference between the two displayed noise contributions lies in $[-v/2,v/2]$. The deterministic terms differ by
\begin{align*}
 &\frac{(1-\beta_1)(L+\extraWD)+2\beta_1\wdc}{2(1+\beta_1)}
  -\frac{(1-\beta_1)L+2\beta_1\wdc}{2(1+\beta_1)}\\
 &\qquad=\frac{(1-\beta_1)(L+\extraWD)-(1-\beta_1)L}{2(1+\beta_1)}
       &&\explain{cancel the weight-decay terms}\\
 &\qquad=\frac{1-\beta_1}{2(1+\beta_1)}\extraWD.
\end{align*}
To subtract the two noise contributions, use the upper endpoint of one interval and the lower endpoint of the other:
\begin{align*}
 &\frac1{\lrP}-\frac1{\lrstar}
      -\frac{1-\beta_1}{2(1+\beta_1)}\extraWD\\
 &\quad=\left[\frac1{\lrP}
     -\frac{(1-\beta_1)(L+\extraWD)+2\beta_1\wdc}{2(1+\beta_1)}\right]\\
 &\qquad-\left[\frac1{\lrstar}
     -\frac{(1-\beta_1)L+2\beta_1\wdc}{2(1+\beta_1)}\right]\\
 &\quad\le\frac v2-0=\frac v2.
       &&\explain{each bracket lies in $[0,v/2]$}
\end{align*}
Using the opposite endpoints in the same identity gives
\[
 \frac1{\lrP}-\frac1{\lrstar}
      -\frac{1-\beta_1}{2(1+\beta_1)}\extraWD
 \ge0-\frac v2=-\frac v2.
\]
Since the curvature shift is nonnegative, the triangle inequality gives
\begin{align*}
 \left|\frac1{\lrP}-\frac1{\lrstar}\right|
 &\le\frac{1-\beta_1}{2(1+\beta_1)}\extraWD
   +\left|\frac1{\lrP}-\frac1{\lrstar}
          -\frac{1-\beta_1}{2(1+\beta_1)}\extraWD\right|\\
 &\le\frac{1-\beta_1}{2(1+\beta_1)}\extraWD+\frac v2.
\end{align*}
This proves \eqref{fh:eq:saddleness-compensation}.

  \WCupdate{For the expected detector, \Cref{fh:prop:horizon-true} bounds the inverse of every accepted rate below by $1/\lrP-H_T$. Taking their infimum gives the same lower bound for $1/\lrhat(M,T)$. Acceptance of $\lrP$ in \Cref{fh:ass:transient} gives $\lrhat(M,T)\ge\lrP$. Together these imply}
\[
 -H_T\le\frac1{\lrhat(M,T)}-\frac1{\lrP}\le0.
\]
Split the inverse-rate error at $\lrP$ and apply the triangle inequality:
\begin{align*}
 \left|\frac1{\lrhat(M,T)}-\frac1{\lrstar}\right|
 &=\left|\left(\frac1{\lrhat(M,T)}-\frac1{\lrP}\right)
          +\left(\frac1{\lrP}-\frac1{\lrstar}\right)\right|\\
 &\le\left|\frac1{\lrhat(M,T)}-\frac1{\lrP}\right|
      +\left|\frac1{\lrP}-\frac1{\lrstar}\right|\\
 &\le H_T+\frac{1-\beta_1}{2(1+\beta_1)}\extraWD+\frac v2.
\end{align*}
This proves \eqref{fh:eq:saddleness-detector} and completes the proof of \Cref{fh:cor:saddleness}.
\end{proof}

\paragraph{Small saddleness alone is insufficient.}
For $0<s\le1$, take $\mP=\mI_2$, $\beta_1=\wdc=0$,
\[
 \mH_{\batch_t}=\operatorname{diag}(1,-s+\sqrt{2s}\,\xi_t),\qquad
 \Pr(\xi_t=\pm1)=\tfrac12,\qquad \extraWD=2s.
\]
\WCupdate{The saddleness is $s$. The first coordinate, which spans the positive eigenspace, is noiseless, while the compensated second coordinate has mean $s$ and noise variance $2s$. Here $\mA_{\mathsf P}=\operatorname{diag}(1+2s,s)$ and $\mDelta_t^+=0$. To evaluate the covariance mismatch, recall that $\dcov$ maximizes the normalized noise difference in \eqref{fh:eq:delta}. For a symmetric $2\times2$ test matrix with entries $X_{ij}$,}
\begin{align*}
 \Ebb\tr(\mX\mDelta_t\mX\mDelta_t)&=2sX_{22}^2,\\
 \tr(\mA_{\mathsf P}\mX^2)
 &=(1+2s)(X_{11}^2+X_{12}^2)+s(X_{12}^2+X_{22}^2),\\
 \frac{\Ebb\tr(\mX\mDelta_t\mX\mDelta_t)}{2\tr(\mA_{\mathsf P}\mX^2)}
 &=\frac{sX_{22}^2}{(1+2s)(X_{11}^2+X_{12}^2)+s(X_{12}^2+X_{22}^2)}\le1.
\end{align*}
Equality holds at the admissible matrix $\mX=\operatorname{diag}(0,1)$, so $\dcov=1$. The scalar threshold above and the definition of $\Phi$ now give
\begin{align*}
 \frac1{\lrstar}&=\frac12,\qquad \Phi(1)=s,\qquad \dcov=1,\\
 \frac1{\lrP}&=\max\left\{\frac{1+2s}{2},1+\frac s2\right\}
 =1+\frac s2.
\end{align*}
The last equality uses $s\le1$. The inverse-rate difference tends to $1/2$ as $s\to0$. The discarded noise prevents the compensation error from vanishing, despite vanishing saddleness.
\subsection{Finite-Horizon Error}\label{fh:sec:horizon}

The homogeneous compensated recurrence, with mean $\mA_{\mathsf P}=\mA+\extraWD\mI$ and centered noise $\mDelta_t=\mA_{\batch_t}-\mA$, is
\[
 \begin{aligned}
 \vx_{t+1}&=\bigl[(1+b)\mI-\alpha(\mA_{\mathsf P}+\mDelta_t)\bigr]\vx_t-b\vx_{t-1},\\
 b&=\beta_1(1-\lr\wdc),\qquad \alpha=\lr(1-\beta_1),
 \end{aligned}
\]
Its second-moment map on the history state $\vy_t=(\vx_t,\vx_{t-1})^\top$ is $\Tcal_\lr^{\mathsf P}$. The mean and noise law stay fixed as the candidate LR $\lr$ varies.

Acceptance means $\Ebb\norm{\vvareps_t}^2\le M^2$ for every unit start and $0\le t\le T$ in the original affine run.
We convert this condition into a moment-radius bound, then apply \Cref{fh:prop:radius-quotient}.
Recall the uniform radius bound from \eqref{fh:eq:radius},
\[
 \rhoT=\left[\frac{M(1+\beta_1)}{1-\beta_1}\right]^{2/T}.
\]
It depends on $M$ and $T$, not on the candidate LR. The last proposition gives a sufficient cutoff for the original affine run to be accepted at $\lrP$.

\subsubsection{Bounding the Moment Radius}

\begin{lemma}[The expected-displacement criterion bounds the moment radius]
\label{fh:lem:detector-radius}
Under \Cref{fh:ass:comparison}, every LR $\lr\in\mathcal I$ accepted by the detector \eqref{fh:eq:compensated-detector} satisfies
\begin{equation}\label{fh:eq:acceptedradius}
 \rho(\Tcal_{\lr}^{\mathsf P})\le
 \left[\frac{M(1+\beta_1|1-\lr\wdc|)}{1-\beta_1}\right]^{2/T}
 \le\rhoT .
\end{equation}
\end{lemma}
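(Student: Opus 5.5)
The plan is to pass from the acceptance condition, which controls only finitely many affine trajectories, to the homogeneous difference dynamics. From there we go to a bound on the second-moment map iterated $T$ times, and finally to its spectral radius. The second inequality in \eqref{fh:eq:acceptedradius} is immediate: $\lr\in\mathcal I\subset(0,1/\wdc]$ gives $|1-\lr\wdc|\le1$. So the work is in the first inequality.

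\textbf{Step 1 (homogeneous differences).} Fix an accepted $\lr$ and a unit vector $\vu$. Run the affine model \eqref{fh:eq:compensated-model} from $\vvareps_0=\vu$ and from $\vvareps_0=-\vu$, both with the checkpoint momentum $\vm_{t_0}$ and the same mini-batches. By the computation leading to \eqref{fh:eq:model}, half their difference follows the homogeneous compensated recurrence. Its initial position is $\vu$ and its initial momentum difference is zero. Acceptance of both runs and $\|\va-\vb\|^2\le2\|\va\|^2+2\|\vb\|^2$ then give $\Ebb\|\vx_t\|^2\le M^2$ for $0\le t\le T$, for every such start. Preconditioned versus Euclidean norms are handled as in \Cref{fh:rem:weighted}.

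\textbf{Step 2 (history state and the factor $(1+\beta_1|1-\lr\wdc|)/(1-\beta_1)$).} The second-moment map $\Tcal_\lr^{\mathsf P}$ acts on the history state $\vy_t=(\vx_t,\vx_{t-1})$. I would instead work with the position--momentum state $(\vx_t,\lr\vn_t)$, which is related to it by an invertible linear change, and bound the conversion between the two. The relation $-\lr\vn_t=\vx_t-(1-\lr\wdc)\vx_{t-1}$ together with the geometric unrolling of $\vn_t=\beta_1\vn_{t-1}+(1-\beta_1)(\cdots)$ should produce exactly the constant $(1+\beta_1|1-\lr\wdc|)/(1-\beta_1)$. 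This yields a constant $K\coloneqq[M(1+\beta_1|1-\lr\wdc|)/(1-\beta_1)]^2$ with $\tr\bigl((\Tcal_\lr^{\mathsf P})^T(\vy\vy^\top)\bigr)\le K\|\vy\|^2$ for the relevant starts.

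\textbf{Step 3 (radius).} $(\Tcal_\lr^{\mathsf P})^T$ is positive on the PSD cone, and every PSD matrix is a sum of rank-one terms. Hence $\tr\bigl((\Tcal_\lr^{\mathsf P})^T(\mS)\bigr)\le K\tr\mS$ for all $\mS\succeq0$, with no dimension factor. Iterating gives $\tr\bigl((\Tcal_\lr^{\mathsf P})^{kT}(\mS)\bigr)\le K^k\tr\mS$. By Gelfand's formula on the cone (\Cref{fh:lem:perron}), $\rho(\Tcal_\lr^{\mathsf P})^T\le K$, that is, $\rho\le K^{1/T}$.

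\textbf{Main obstacle.} Step 1 controls only starts with zero initial momentum difference, whereas Step 3 needs all history states. To close this gap I would first try the Perron eigenmatrix route used in \Cref{app:proof-pos-curvature}. Take the PSD eigenmatrix $\mW$ of the adjoint map, with eigenvalue $\rho$. Then $\Ebb\|\vy_T\|^2\ge\rho^T\vy_0^\top\mW\vy_0/\|\mW\|$ for any reachable start $\vy_0$ with $\vy_0^\top\mW\vy_0>0$. Combining this with the upper bound from Step 1 would give the radius bound directly. The difficulty is showing that the reachable starts $(\vu,\vu/(1-\lr\wdc))$ and their one-step images cannot all lie in the kernel of $\mW$. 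If that fails, I would instead restart the argument at time $1$, using $\vy_1=(\vx_1,\vx_0)$, whose second moment ranges over enough directions.
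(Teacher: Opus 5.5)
Your Step~1 matches the paper's first step. The proof then stops where the real work begins, and neither of your routes closes the gap.

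\textbf{The trace route (Steps 2--3).} Step~3 needs $\operatorname{tr}\bigl((\Tcal_\lr^{\mathsf P})^T(\vy\vy^\top)\bigr)\le K\norm{\vy}^2$ for \emph{every} initial state, including states with a nonzero momentum difference. Acceptance only controls starts with zero momentum difference. The ``geometric unrolling'' in Step~2 is an assertion rather than an argument, both for extending the bound to all starts and for the stated constant. Even granting all starts, a Euclidean bound on the whole state loses a constant: for $\beta_1=\wdc=0$ the displacement bounds give $\Ebb\norm{\vvareps_T}^2+\Ebb\norm{\vvareps_{T-1}}^2\le 2M^2$, not $M^2$.

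\textbf{The restart fallback.} Restarting at time $1$ does not help. For the adjoint eigenmatrix, $\Ebb[\vy_1^\top\mW\vy_1]=\rho\,\vy_0^\top\mW\vy_0$, so images of $\mW$-null starts are again $\mW$-null. Used for the trace route instead, the time-$1$ second moments need not span enough directions; without noise they are rank one on an $n$-dimensional subspace. Note also that your history start $(\vu,\vu/(1-\lr\wdc))$ is undefined at $\lr\wdc=1$, which the window allows.

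\textbf{The missing idea.} What you need is a block estimate on the Perron eigenmatrix, carried out in the original position--momentum coordinates $\vz_t=(\vvareps_t,\vp_t)$ with $\vp_t=\tfrac{\lr}{2}\mP^{-1}(\vm_t^+-\vm_t^-)$. Working there also removes your Euclidean-versus-preconditioned issue, since the two moment maps are similar.
\begin{itemize}
\item In these coordinates the momentum column of the transition is deterministic: $\widehat\mF_t(0,\vv)^\top=\beta_1(-\vv,\vv)^\top$.
\item Test $\Ebb[\widehat\mF_t^\top\mW\widehat\mF_t]=\rho\mW$ on $(0,\vv)$ and use the triangle inequality for the seminorm $\vz\mapsto(\vz^\top\mW\vz)^{1/2}$. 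This gives $\rho\,\vv^\top\mW_{22}\vv\le\beta_1^2\bigl[(\vv^\top\mW_{11}\vv)^{1/2}+(\vv^\top\mW_{22}\vv)^{1/2}\bigr]^2$.
\item Hence, for $\rho\ge1$ (the case $\rho<1$ is trivial), $(\vv^\top\mW_{22}\vv)^{1/2}\le\frac{\beta_1}{1-\beta_1}(\vv^\top\mW_{11}\vv)^{1/2}$.
\item This forces $\mW_{11}\ne0$. A top eigenvector of $\mW_{11}$ is then a zero-momentum start covered by the detector, with $\vz_0^\top\mW\vz_0=1$ after normalizing $\norm{\mW_{11}}=1$.
\item The same estimate yields $(\vz^\top\mW\vz)^{1/2}\le\norm{\vvareps}+\frac{\beta_1}{1-\beta_1}\norm{\vp}$.
\item Combine this with $\vp_T=(1-\lr\wdc)\vvareps_{T-1}-\vvareps_T$ and Minkowski's inequality. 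You get $\rho^{T/2}\le M+\frac{\beta_1}{1-\beta_1}(2-\lr\wdc)M=\frac{M(1+b)}{1-\beta_1}$ with $b=\beta_1(1-\lr\wdc)$.
\end{itemize}
This is exactly where the constant in the statement comes from. It arises from the eigenmatrix block bound, not from unrolling the momentum.
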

\begin{proof}[Proof of \Cref{fh:lem:detector-radius}]
Fix an accepted LR $\lr$ and write $\rho\coloneqq\rho(\Tcal_\lr^{\mathsf P})$. If $\rho<1$, the claim follows because the proposed upper bound is greater than one. We therefore assume $\rho\ge1$. Our goal is to prove
\[
 \rho^{T/2}\le\frac{M(1+b)}{1-\beta_1},
 \qquad b=\beta_1(1-\lr\wdc).
\]
The detector bounds displacement, whereas $\rho$ describes growth of the full second moment. We connect the two by choosing a nonnegative quadratic form whose expectation grows by the factor $\rho$ at each step. The three steps below remove the affine forcing, construct this quadratic form, and bound its value at time $T$ using the detector.

\medskip\noindent\textbf{Step 1. Bound the homogeneous trajectory.}\par\nobreak\smallskip
For any unit vector $\vv$, run the original affine simulation \eqref{fh:eq:compensated-model} twice, from initial displacements $\vv$ and $-\vv$, using the same mini-batches and the same checkpoint momentum. Denote their positions and momenta by $(\vvareps_t^+,\vm_t^+)$ and $(\vvareps_t^-,\vm_t^-)$. Within this proof, set
\[
 \vvareps_t\coloneqq\frac{\vvareps_t^+-\vvareps_t^-}{2},
 \qquad
 \vp_t\coloneqq\frac{\lr}{2}\mP^{-1}(\vm_t^+-\vm_t^-).
\]
Here $\vp_t$ is the momentum difference scaled into displacement units. Thus $\vvareps_0=\vv$ and $\vp_0=0$. Subtracting the two momentum updates cancels their common batch gradient. Subtracting the two position updates also cancels the checkpoint weight-decay drift. The resulting equations are
\begin{align*}
 \vp_{t+1}
 &=\beta_1\vp_t+\alpha(\mP^{-1}\mH_{\batch_t}+\extraWD\mI)\vvareps_t,\\
 \vvareps_{t+1}
 &=(1-\lr\wdc)\vvareps_t-\vp_{t+1},
 \qquad \alpha=\lr(1-\beta_1).
\end{align*}
Here $\mH_{\batch_t}$ is the raw mini-batch Hessian.
Weight decay remains in the separate position factor $1-\lr\wdc$.
These are the same homogeneous compensated dynamics as before, expressed in the detector's original coordinates.

Both affine runs pass the detector because their initial displacements have unit norm. Consequently, for every $0\le t\le T$,
\begin{align*}
 \Ebb\norm{\vvareps_t}^2
 &=\frac14\Ebb\norm{\vvareps_t^+-\vvareps_t^-}^2\\
 &\le\frac12\Ebb\bigl(\norm{\vvareps_t^+}^2+\norm{\vvareps_t^-}^2\bigr)\\
 &\le\frac12(M^2+M^2)=M^2.
\end{align*}
The second line uses $\norm{\va-\vb}^2\le2\norm\va^2+2\norm\vb^2$. To control the momentum at time $T$, use the preceding position update:
\[
 \vp_T=(1-\lr\wdc)\vvareps_{T-1}-\vvareps_T.
\]
The search window guarantees $0\le1-\lr\wdc\le1$, and $T\ge1$. Minkowski's inequality therefore gives
\begin{align*}
 (\Ebb\norm{\vp_T}^2)^{1/2}
 &\le(1-\lr\wdc)(\Ebb\norm{\vvareps_{T-1}}^2)^{1/2}
       +(\Ebb\norm{\vvareps_T}^2)^{1/2}\\
 &\le(1-\lr\wdc)M+M\\
 &=(2-\lr\wdc)M.
\end{align*}
We have now bounded both components of the homogeneous state, for every unit initial displacement and zero initial momentum.

\medskip\noindent\textbf{Step 2. Measure growth with a quadratic form.}\par\nobreak\smallskip
Write $\vz_t\coloneqq(\vvareps_t,\vp_t)^\top$. The two updates in Step~1 give
\begin{equation}\label{fh:eq:Fhat}
\begin{aligned}
 \vz_{t+1}&=\widehat\mF_t\vz_t,\\
 \widehat\mF_t&\coloneqq
 \begin{pmatrix}
 (1-\lr\wdc)\mI-\alpha(\mP^{-1}\mH_{\batch_t}+\extraWD\mI)&-\beta_1\mI\\
 \alpha(\mP^{-1}\mH_{\batch_t}+\extraWD\mI)&\beta_1\mI
 \end{pmatrix}.
\end{aligned}
\end{equation}
This state is related to the existing position--momentum state of \eqref{fh:eq:model} by $\vx_t=\mP^{1/2}\vvareps_t$ and $\vn_t=\lr^{-1}\mP^{1/2}\vp_t$. This change is invertible for every $\lr>0$. The moment map for $\widehat\mF_t$ thus has spectral radius $\rho$, by the coordinate argument in \Cref{fh:sec:tools}, which also covers $\lr\wdc=1$.

We seek a matrix $\mW\succeq0$, $\mW\ne0$, such that
\[
 \Ebb[\widehat\mF_t^\top\mW\widehat\mF_t]=\rho\mW.
\]
This identity has a direct purpose: it makes the expected value of $\vz_t^\top\mW\vz_t$ multiply by $\rho$ after one step. Such a matrix exists by the Perron--Frobenius theorem for PSD-preserving maps (\Cref{fh:lem:perron}), applied to the adjoint moment map $\mW\mapsto\Ebb[\widehat\mF_t^\top\mW\widehat\mF_t]$. The adjoint has the same spectral radius as the moment map and preserves PSD matrices.

Let $\mW_{11}$ and $\mW_{22}$ be the two diagonal blocks of $\mW$, corresponding to displacement and scaled momentum. We next bound their relative size, so that Step~1 can control this quadratic form. Since $\mW\succeq0$, the Euclidean triangle inequality gives, for any vectors $\vu,\vv$,
\begin{align*}
    \left[
      \begin{pmatrix}\vu\\\vv
      \end{pmatrix}^{\!\top}
      \mW
      \begin{pmatrix}\vu\\\vv
    \end{pmatrix}\right]^{1/2}
    &=\left\|\mW^{1/2}
    \begin{pmatrix}\vu\\0
    \end{pmatrix}
    +\mW^{1/2}
    \begin{pmatrix}0\\\vv
    \end{pmatrix}\right\|\\
 &\le(\vu^\top\mW_{11}\vu)^{1/2}
      +(\vv^\top\mW_{22}\vv)^{1/2}.
\end{align*}
The momentum column of $\widehat\mF_t$ is deterministic: $\widehat\mF_t(0,\vv)^\top=\beta_1(-\vv,\vv)^\top$. Testing the eigenmatrix identity on $(0,\vv)$ therefore gives
\begin{align*}
 \rho\,\vv^\top\mW_{22}\vv
    &=
    \begin{pmatrix}0\\\vv
    \end{pmatrix}^{\!\top}
   \Ebb[\widehat\mF_t^\top\mW\widehat\mF_t]
    \begin{pmatrix}0\\\vv
    \end{pmatrix}\\
    &=\beta_1^2
    \begin{pmatrix}-\vv\\\vv
    \end{pmatrix}^{\!\top}
    \mW
    \begin{pmatrix}-\vv\\\vv
    \end{pmatrix}\\
 &\le\beta_1^2\left[(\vv^\top\mW_{11}\vv)^{1/2}
                    +(\vv^\top\mW_{22}\vv)^{1/2}\right]^2.
\end{align*}

Taking square roots and moving the $\mW_{22}$ term to the left gives, for every $\vv$,
\[
 (\sqrt\rho-\beta_1)(\vv^\top\mW_{22}\vv)^{1/2}
 \le\beta_1(\vv^\top\mW_{11}\vv)^{1/2}.
\]
The displacement block $\mW_{11}$ is nonzero. Otherwise $\sqrt\rho-\beta_1>0$ would force $\mW_{22}=0$. The triangle inequality above would then give $\vz^\top\mW\vz=0$ for every $\vz$, contradicting $\mW\ne0$.

Rescale $\mW$ so that $\norm{\mW_{11}}=1$, where the norm is the spectral norm. The eigenmatrix identity is unchanged by this positive scalar rescaling. Dividing the preceding inequality by $\sqrt\rho-\beta_1>0$ and maximizing over unit $\vv$ gives
\begin{align*}
 \sqrt{\norm{\mW_{22}}}
 &=\max_{\norm\vv=1}(\vv^\top\mW_{22}\vv)^{1/2}\\
 &\le\frac{\beta_1}{\sqrt\rho-\beta_1}
       \max_{\norm\vv=1}(\vv^\top\mW_{11}\vv)^{1/2}\\
 &=\frac{\beta_1}{\sqrt\rho-\beta_1}
 \le\frac{\beta_1}{1-\beta_1}.
\end{align*}
The last line uses $\norm{\mW_{11}}=1$ and $\rho\ge1$.
Applying the same triangle inequality to an arbitrary state $\vz=(\vvareps,\vp)^\top$ now yields
\begin{equation}\label{fh:eq:seminorm}
\begin{aligned}
 (\vz^\top\mW\vz)^{1/2}
 &\le(\vvareps^\top\mW_{11}\vvareps)^{1/2}
      +(\vp^\top\mW_{22}\vp)^{1/2}\\
 &\le\norm{\vvareps}+\sqrt{\norm{\mW_{22}}}\,\norm{\vp}\\
 &\le\norm{\vvareps}+\frac{\beta_1}{1-\beta_1}\norm{\vp}.
\end{aligned}
\end{equation}
\medskip\noindent\textbf{Step 3. Compare exact growth with the detector bounds.}\par\nobreak\smallskip
Choose a unit top eigenvector $\vv$ of $\mW_{11}$. Our normalization gives $\vv^\top\mW_{11}\vv=1$. Use the homogeneous trajectory from Step~1 with this initial displacement and zero initial momentum. Its initial quadratic form is
\[
 \vz_0^\top\mW\vz_0
    =
    \begin{pmatrix}\vv\\0
    \end{pmatrix}^{\!\top}
    \mW
    \begin{pmatrix}\vv\\0
    \end{pmatrix}
 =\vv^\top\mW_{11}\vv=1.
\]
This choice is allowed because acceptance covers every unit initial displacement.

The batch at time $t$ is independent of $\vz_t$. The tower property and the eigenmatrix identity give
\begin{align*}
 \Ebb[\vz_{t+1}^\top\mW\vz_{t+1}]
 &=\Ebb[\vz_t^\top\widehat\mF_t^\top\mW\widehat\mF_t\vz_t]\\
 &=\Ebb\left[\vz_t^\top
       \Ebb[\widehat\mF_t^\top\mW\widehat\mF_t\mid\vz_t]\vz_t\right]\\
 &=\rho\,\Ebb[\vz_t^\top\mW\vz_t].
\end{align*}
Iterating from the initial value one gives $\Ebb[\vz_T^\top\mW\vz_T]=\rho^T$. On the other hand, \eqref{fh:eq:seminorm}, Minkowski's inequality and the bounds from Step~1 imply
\begin{align*}
 \rho^{T/2}
 &=\bigl(\Ebb[\vz_T^\top\mW\vz_T]\bigr)^{1/2}\\
 &\le\left[\Ebb\left(\norm{\vvareps_T}
               +\frac{\beta_1}{1-\beta_1}\norm{\vp_T}\right)^2\right]^{1/2}\\
 &\le(\Ebb\norm{\vvareps_T}^2)^{1/2}
      +\frac{\beta_1}{1-\beta_1}(\Ebb\norm{\vp_T}^2)^{1/2}\\
 &\le M+\frac{\beta_1}{1-\beta_1}(2-\lr\wdc)M\\
 &=\frac{M\bigl[1-\beta_1+2\beta_1-\beta_1\lr\wdc\bigr]}{1-\beta_1}\\
 &=\frac{M(1+b)}{1-\beta_1}.
\end{align*}
Finally, $b=\beta_1(1-\lr\wdc)=\beta_1|1-\lr\wdc|$ and $b\le\beta_1$ on the search window. Taking the power $2/T$,
\begin{align*}
 \rho
 &\le\left[\frac{M(1+b)}{1-\beta_1}\right]^{2/T}\\
 &=\left[\frac{M(1+\beta_1|1-\lr\wdc|)}{1-\beta_1}\right]^{2/T}\\
 &\le\left[\frac{M(1+\beta_1)}{1-\beta_1}\right]^{2/T}
 =\rhoT.
\end{align*}
This completes the proof of \Cref{fh:lem:detector-radius}.
\end{proof}

\begin{remark}[Fixed weighted displacement norms]\label{fh:rem:weighted}
For any fixed $\mW_0\succ0$, \Cref{fh:lem:detector-radius} holds with the same constant when both the unit initialization and the displacement cutoff use $\norm{\vvareps}_{\mW_0}^2=\vvareps^\top\mW_0\vvareps$: replacing $(\vvareps,\vp)$ by $(\mW_0^{1/2}\vvareps,\mW_0^{1/2}\vp)$ preserves the scalar $\beta_1$ and $1-\lr\wdc$ blocks in the proof and changes the moment map by a similarity. This includes the fixed preconditioner norm used by the implemented detector.
\end{remark}

\subsubsection{Bounding the Inverse LR}

\begin{proof}[Proof of \Cref{fh:prop:horizon-true}]
We apply the quantitative stability bound from \Cref{fh:prop:radius-quotient} to the compensated system. Its critical test matrix then converts the quotient difference into the desired inverse-rate difference.

Fix an accepted LR $\lr\in\mathcal I$. If $\lr\le\lrP$, then $1/\lr\ge1/\lrP$, as claimed. Assume $\lr>\lrP$ from now on. Recall the candidate and critical coefficients,
\[
 \begin{aligned}
 b&=\beta_1(1-\lr\wdc),
 &\alpha&=(1-\beta_1)\lr,\\
 \bstarP&=\beta_1(1-\lrP\wdc),
 &\alpha_\star^{\mathsf P}&=(1-\beta_1)\lrP.
 \end{aligned}
\]
The search window and $\lr>\lrP$ imply
\[
 0\le b\le\bstarP\le\beta_1<1,
 \qquad \alpha>\alpha_\star^{\mathsf P}>0.
\]
The matrix $\XstarP\succeq0$ maximizes the compensated quotient at its critical rate, with $\norm{\XstarP}_F=1$. Its curvature is $\mustarP=\tr(\mA_{\mathsf P}(\XstarP)^2)>0$.

\medskip\noindent\textbf{Step 1. Use acceptance to bound the quotient.}\par\nobreak\smallskip
Lemma~\ref{fh:lem:detector-radius} gives $\rho(\Tcal_\lr^{\mathsf P})\le\rhoT$. Apply \Cref{fh:prop:radius-quotient} with the compensated mean $\mC=\mA_{\mathsf P}$, its full noise $\mDelta_t$, radius bound $R=\rhoT$, and test matrix $\mX=\XstarP$. This gives
\begin{equation}\label{fh:eq:accepted-quotient}
 q_{\mA_{\mathsf P},b}(\XstarP)-\frac{1-b}{\alpha}
 \le(\rhoT-1)\left[
       \frac{2b}{(\rhoT+b)\alpha}
       +\frac{(1-b/\rhoT)^2}{2\mustarP\alpha^2}
       \right].
\end{equation}

\medskip\noindent\textbf{Step 2. Convert the quotient into an inverse-rate difference.}\par\nobreak\smallskip
At the critical rate, the quotient maximum equals the normalized inverse LR. Recall \eqref{fh:eq:criticalidentity} and the AGM parameter curve:
\begin{align*}
 q_{\mA_{\mathsf P},\bstarP}(\XstarP)
 &=\frac{1-\bstarP}{\alpha_\star^{\mathsf P}}
 =\frac1{\lrP}+\frac{\beta_1\wdc}{1-\beta_1},\\
 \frac{1-b}{\alpha}
 &=\frac{1-\beta_1+\beta_1\lr\wdc}{(1-\beta_1)\lr}
 =\frac1\lr+\frac{\beta_1\wdc}{1-\beta_1}.
\end{align*}
Subtracting cancels the same weight-decay offset:
\begin{equation}\label{fh:eq:shiftconst}
\begin{aligned}
 q_{\mA_{\mathsf P},\bstarP}(\XstarP)-\frac{1-b}{\alpha}
 &=\left(\frac1{\lrP}+\frac{\beta_1\wdc}{1-\beta_1}\right)
   -\left(\frac1\lr+\frac{\beta_1\wdc}{1-\beta_1}\right)\\
 &=\frac1{\lrP}-\frac1\lr.
\end{aligned}
\end{equation}
Because $b\le\bstarP$, the pointwise momentum inequality \eqref{fh:eq:momentum-monotonicity} gives $q_{\mA_{\mathsf P},\bstarP}(\XstarP)\le q_{\mA_{\mathsf P},b}(\XstarP)$. Hence
\begin{align*}
 \frac1{\lrP}-\frac1\lr
 &=q_{\mA_{\mathsf P},\bstarP}(\XstarP)-\frac{1-b}{\alpha}\\
 &\le q_{\mA_{\mathsf P},b}(\XstarP)-\frac{1-b}{\alpha}\\
 &\le(\rhoT-1)\left[
       \frac{2b}{(\rhoT+b)\alpha}
       +\frac1{2\mustarP}\left(\frac{1-b/\rhoT}{\alpha}\right)^2
       \right].
\end{align*}
The last line uses \eqref{fh:eq:accepted-quotient}. It remains to replace the candidate coefficients by the fixed critical coefficients.

\medskip\noindent\textbf{Step 3. Obtain the uniform bound $H_T$.}\par\nobreak\smallskip
For the first term, $0\le b\le\bstarP$ gives
\[
 \frac{\bstarP}{\rhoT+\bstarP}-\frac b{\rhoT+b}
 =\frac{\rhoT(\bstarP-b)}{(\rhoT+\bstarP)(\rhoT+b)}\ge0.
\]
Combining this with $\alpha>\alpha_\star^{\mathsf P}>0$,
\[
 \frac{b}{(\rhoT+b)\alpha}
 \le\frac{\bstarP}{(\rhoT+\bstarP)\alpha}
 \le\frac{\bstarP}{(\rhoT+\bstarP)\alpha_\star^{\mathsf P}}.
\]
For the second term, we compare the entire ratio $(1-b/\rhoT)/\alpha$. Substituting the AGM expressions for $b$ and $\alpha$,
\begin{align*}
 \frac{1-b/\rhoT}{\alpha}
 &=\frac{1-\beta_1(1-\lr\wdc)/\rhoT}{(1-\beta_1)\lr}\\
 &=\frac1{1-\beta_1}
       \left[\frac{1-\beta_1/\rhoT}{\lr}
             +\frac{\beta_1\wdc}{\rhoT}\right]\\
 &\le\frac1{1-\beta_1}
       \left[\frac{1-\beta_1/\rhoT}{\lrP}
             +\frac{\beta_1\wdc}{\rhoT}\right]\\
 &=\frac{1-\beta_1(1-\lrP\wdc)/\rhoT}{(1-\beta_1)\lrP}
 =\frac{1-\bstarP/\rhoT}{\alpha_\star^{\mathsf P}}.
\end{align*}
The inequality uses $\lr>\lrP$ and $1-\beta_1/\rhoT>0$. Both ratios are positive, so the inequality remains valid after squaring. Substituting these two comparisons gives exactly the definition of $H_T$ from \eqref{fh:eq:HT}:
\[
 \frac1{\lrP}-\frac1\lr
 \le(\rhoT-1)\left[
 \frac{2\bstarP}{(\rhoT+\bstarP)\alpha_\star^{\mathsf P}}
 +\frac{(1-\bstarP/\rhoT)^2}{2\mustarP(\alpha_\star^{\mathsf P})^2}
 \right]=H_T.
\]
\begin{samepage}
For $\beta_1=0$, $b=0$ and $\alpha=\lr$, so the same quotient bound \eqref{fh:eq:accepted-quotient} also gives the direct LR bound:
\begin{align*}
 \frac1{\lrP}-\frac1\lr
 &\le\frac{\rhoT-1}{2\mustarP\lr^2},\\
 \lr^2-\lrP\lr
 &\le\frac{(\rhoT-1)\lrP}{2\mustarP},\\
 \lr
 &\le\frac{\lrP}{2}\left[1+\sqrt{1+\frac{2(\rhoT-1)}{\mustarP\lrP}}\right]\\
 &\le\lrP+\frac{\rhoT-1}{2\mustarP}.
\end{align*}
The second line multiplies the first by $\lrP\lr^2>0$. The third takes the positive root of the resulting quadratic, and the last uses $\sqrt{1+x}\le1+x/2$ for $x\ge0$.
\end{samepage}
This completes the proof of \Cref{fh:prop:horizon-true}.
\end{proof}

For a scalar system without momentum ($\beta_1=0$), with compensated mean $a$ and $\lr>\lrP$, the quantitative bound \eqref{fh:eq:radius-quotient} is an equality when $R$ is the actual radius $\rho(\Tcal_\lr^{\mathsf P})=\Ebb(1-\lr(a+\mDelta_t))^2$. The inverse LR bound then loses only the gap between this radius and $\rhoT$ in \Cref{fh:lem:detector-radius}, and the final relaxation $1/\lr\le1/\lrP$. With momentum, the momentum comparisons can introduce a first-order loss when the noise is small.

\subsubsection{Choosing the Cutoff}

At $\lrP$, $\rho(\Tcal^{\mathsf P}_{\lrP})=1$ by \Cref{fh:thm:decay-exact}, but this does not ensure bounded affine moments or acceptance at a finite cutoff.
For example, without noise, forcing or weight decay, the top eigendirection at $\alpha\lambda_{\max}(\mA_{\mathsf P})=2(1+\beta_1)$ obeys $x_{t+1}=-(1+\beta_1)x_t-\beta_1x_{t-1}$.
From a unit displacement with zero momentum ($x_{-1}=x_0=1$), its solution is
\[
 \begin{aligned}
 x_t&=\frac{1+\beta_1}{1-\beta_1}(-1)^t-\frac{2\beta_1}{1-\beta_1}(-\beta_1)^t,\\
 |x_t|&\longrightarrow\frac{1+\beta_1}{1-\beta_1},
 \end{aligned}
\]
At $\beta_1=0.9$, the limiting amplitude is $19$, so any smaller cutoff $M$ rejects $\lrP$ for sufficiently large $T$ despite bounded displacement.
The affine detector may also fail its cutoff because of batch-gradient forcing, weight-decay drift or checkpoint momentum.

Acceptance at $\lrP$ gives $\lrhat(M,T)\ge\lrP$, which supplies the upper bound in \Cref{fh:thm:main}.
The lower bound only needs a nonempty accepted set and $\lrP\in\mathcal I$, which ensures nonnegative effective momentum at $\lrP$.
\Cref{fh:prop:cutoff} gives a sufficient cutoff that grows polynomially with $T$.

We now fix $\lr=\lrP$ and return to the original affine simulation, with its prescribed checkpoint momentum and forcing. The state below is the state of this single run, not a difference of coupled runs or a rescaled trajectory. Use the original-coordinate state from the proof of \Cref{fh:lem:detector-radius}, $\vz_t=(\vvareps_t,\lrP\mP^{-1}\vm_t)^\top$. With $\alpha=\lrP(1-\beta_1)$, it satisfies
\begin{equation}\label{fh:eq:affine}
 \begin{aligned}
 \vz_{t+1}&=\widehat\mF_t\vz_t+\vf_t,\\
    \vf_t&\coloneqq
    \begin{pmatrix}-\alpha\mP^{-1}\vg_{\batch_t}-\lrP\wdc\param_{t_0}\\ \alpha\mP^{-1}\vg_{\batch_t}
    \end{pmatrix},
 \end{aligned}
\end{equation}
with the homogeneous transition matrix $\widehat\mF_t$ from \eqref{fh:eq:Fhat}, evaluated at $\lr=\lrP$. Thus all quantities in this cutoff calculation use the critical LR. Write $\widehat\mF_{t:s}\coloneqq\widehat\mF_{t-1}\cdots\widehat\mF_s$ for the propagator from time $s$ to time $t$, with $\widehat\mF_{t:t}=\mI$, and define
\begin{equation}\label{fh:eq:cutoffconstants}
 \begin{aligned}
 K_T&\coloneqq\max_{0\le t\le T}\bigl\|\Ebb[\widehat\mF_{t:0}^\top\widehat\mF_{t:0}]\bigr\|,\\
 R_0&\coloneqq\bigl(1+\norm{\lrP\mP^{-1}\vm_{t_0}}^2\bigr)^{1/2},\\
 \sigma_f&\coloneqq\bigl(\Ebb\norm{\vf_t}^2\bigr)^{1/2}.
 \end{aligned}
\end{equation}
$K_T\ge1$ is the worst-case second-moment growth of the homogeneous run over the horizon, $\Ebb\norm{\widehat\mF_{t:0}\vz}^2\le K_T\norm\vz^2$ for $t\le T$. The quantity $R_0$ is the norm of the initial state for a unit displacement with the checkpoint momentum. The size of the forcing is
\[
 \sigma_f^2
 =\Ebb\norm{\alpha\mP^{-1}\vg_{\batch_t}+\lrP\wdc\param_{t_0}}^2
  +\alpha^2\Ebb\norm{\mP^{-1}\vg_{\batch_t}}^2.
\]

\begin{proposition}[An explicit cutoff for acceptance at the compensated critical LR]\label{fh:prop:cutoff}
Under \Cref{fh:ass:comparison}, suppose $\lrP\in\mathcal I$. If
\begin{equation}\label{fh:eq:cutoff}
 M\ge\sqrt{K_T}\,\bigl(R_0+T\sigma_f\bigr),
\end{equation}
then $\lrP$ is accepted by the detector \eqref{fh:eq:compensated-detector}, that is, \Cref{fh:ass:transient} holds.
\end{proposition}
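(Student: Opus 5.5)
The plan is to unroll the affine recursion \eqref{fh:eq:affine} at $\lr=\lrP$ by variation of constants and bound each piece in $L^2$ using Minkowski's inequality. For a deterministic unit displacement $\vvareps_0$ with the checkpoint momentum, the initial state is $\vz_0=(\vvareps_0,\lrP\mP^{-1}\vm_{t_0})^\top$, with $\norm{\vz_0}=R_0$. Iterating gives, for $0\le t\le T$,
\[
 \vz_t=\widehat\mF_{t:0}\vz_0+\sum_{s=0}^{t-1}\widehat\mF_{t:s+1}\vf_s .
\]
Since $\norm{\vvareps_t}\le\norm{\vz_t}$, it suffices to show $(\Ebb\norm{\vz_t}^2)^{1/2}\le\sqrt{K_T}(R_0+T\sigma_f)$ for every $t\le T$. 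The proposition then follows, since $\max_{t\le T}\Ebb\norm{\vvareps_t}^2\le M^2$ for every unit start is exactly the acceptance condition of \Cref{fh:ass:transient}.

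For the homogeneous term, note that $\vz_0$ is deterministic. Hence $\Ebb\norm{\widehat\mF_{t:0}\vz_0}^2=\vz_0^\top\Ebb[\widehat\mF_{t:0}^\top\widehat\mF_{t:0}]\vz_0\le K_T R_0^2$, directly from the definition of $K_T$ in \eqref{fh:eq:cutoffconstants}.

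For the forcing term $\widehat\mF_{t:s+1}\vf_s$, the key point is that $\vf_s$ depends only on batch $s$. The propagator $\widehat\mF_{t:s+1}$ depends only on batches $s+1,\dots,t-1$, so the two are independent. By the i.i.d.\ assumption (\Cref{ass:minibatch}), $\widehat\mF_{t:s+1}$ has the same law as $\widehat\mF_{t-s-1:0}$. Conditioning on $\vf_s$ then gives
\[
 \Ebb\norm{\widehat\mF_{t:s+1}\vf_s}^2=\Ebb\bigl[\vf_s^\top\Ebb[\widehat\mF_{t-s-1:0}^\top\widehat\mF_{t-s-1:0}]\vf_s\bigr]\le K_T\sigma_f^2,
\]
because $t-s-1\le T$. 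Note that $\vf_s$ may be correlated with $\widehat\mF_s$, since both come from the same batch. This causes no problem, because $\widehat\mF_s$ is not part of the propagator $\widehat\mF_{t:s+1}$ applied to $\vf_s$.

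Minkowski's inequality over the at most $T$ summands then yields $(\Ebb\norm{\vz_t}^2)^{1/2}\le\sqrt{K_T}R_0+t\sqrt{K_T}\sigma_f\le\sqrt{K_T}(R_0+T\sigma_f)\le M$. The only step needing real care is this independence and stationarity argument for the forcing terms. Crude triangle-inequality bounds suffice everywhere else, so no cancellation between forcing terms is needed. I also need $\sigma_f<\infty$, which follows from finite second moments of $\vg_{\batch_t}$. Finally, $K_T\ge1$ because $\widehat\mF_{0:0}=\mI$, so the bound is meaningful even for $t=0$.
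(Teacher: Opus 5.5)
Your proposal is correct and follows the paper's proof essentially step for step. Both arguments unroll the affine recursion at $\lrP$, bound the homogeneous term by $K_TR_0^2$, and bound each forcing term by $K_T\sigma_f^2$ using independence of $\vf_j$ from $\widehat\mF_{t:j+1}$ together with the fact that $\widehat\mF_{t:j+1}$ has the same law as $\widehat\mF_{t-j-1:0}$; they then conclude with Minkowski's inequality and $\norm{\vvareps_t}\le\norm{\vz_t}$.
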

\begin{proof}[Proof of \Cref{fh:prop:cutoff}]
We use $K_T$, the maximal homogeneous second-moment growth over the horizon, to control the initial state and each forcing term separately. Recall that $\norm{\vz_0}=R_0$ for every unit initial displacement with the prescribed checkpoint momentum, and $\Ebb\norm{\vf_j}^2=\sigma_f^2$. Unrolling the affine recursion \eqref{fh:eq:affine},
\[
 \vz_t=\widehat\mF_{t:0}\vz_0+\sum_{j=0}^{t-1}\widehat\mF_{t:j+1}\vf_j.
\]
The propagator $\widehat\mF_{t:j+1}$ depends on the batches $j+1,\ldots,t-1$ and $\vf_j$ on batch $j$, so the two are independent. By the tower property,
\begin{align*}
 \Ebb\norm{\widehat\mF_{t:j+1}\vf_j}^2
 &=\Ebb\bigl[\vf_j^\top\,\Ebb[\widehat\mF_{t:j+1}^\top\widehat\mF_{t:j+1}]\,\vf_j\bigr]\\
 &\le\bigl\|\Ebb[\widehat\mF_{t:j+1}^\top\widehat\mF_{t:j+1}]\bigr\|\,\sigma_f^2\\
 &=\bigl\|\Ebb[\widehat\mF_{t-j-1:0}^\top\widehat\mF_{t-j-1:0}]\bigr\|\,\sigma_f^2\\
 &\le K_T\sigma_f^2,
\end{align*}
where the equality holds because the batches are i.i.d., so that $\widehat\mF_{t:j+1}$ has the law of $\widehat\mF_{t-j-1:0}$. Likewise $\Ebb\norm{\widehat\mF_{t:0}\vz_0}^2\le K_T\norm{\vz_0}^2=K_TR_0^2$ for a unit displacement. Minkowski's inequality in $L^2$ gives, for $t\le T$,
\begin{align*}
 \bigl(\Ebb\norm{\vz_t}^2\bigr)^{1/2}
 &\le\bigl(\Ebb\norm{\widehat\mF_{t:0}\vz_0}^2\bigr)^{1/2}
       +\sum_{j=0}^{t-1}\bigl(\Ebb\norm{\widehat\mF_{t:j+1}\vf_j}^2\bigr)^{1/2}\\
 &\le\sqrt{K_T}\,(R_0+t\sigma_f)\\
 &\le M.
\end{align*}
Since $\norm{\vvareps_t}\le\norm{\vz_t}$, $\Ebb\norm{\vvareps_t}^2\le M^2$ for all $0\le t\le T$ and every unit $\vvareps_0$, which is the acceptance of $\lrP$. This completes the proof of \Cref{fh:prop:cutoff}.
\end{proof}
The condition is not a smallness condition on the noise: $K_T$, $R_0$ and $\sigma_f$ enter with their actual values, and $K_T$ is finite for every $T$. To control its growth, write
\[
 \mathcal U(\mS)\coloneqq\Ebb[\widehat\mF_t\mS\widehat\mF_t^\top],
 \qquad \mS\in\Sbb_{2d},
\]
for the homogeneous moment map at $\lrP$. The critical-rate result (\Cref{fh:thm:decay-exact}) and the change of coordinates above give $\rho(\mathcal U)=1$. The Jordan decomposition therefore bounds the powers of its adjoint, applied to any fixed matrix, by a polynomial. In particular, for some fixed $C'>1$ and integer $0\le k<\dim\Sbb_{2d}$,
\[
 \norm{(\mathcal U^*)^t(\mI_{2d})}_F\le C'(1+t)^k.
\]
Iterating the adjoint map uses the same independence across steps as the forward second-moment recursion. Consequently,
\begin{align*}
 \Ebb[\widehat\mF_{t:0}^\top\widehat\mF_{t:0}]
 &=(\mathcal U^*)^t(\mI_{2d}), &&\explain{iterate the adjoint moment map}\\
 \bigl\|\Ebb[\widehat\mF_{t:0}^\top\widehat\mF_{t:0}]\bigr\|
 &\le\norm{(\mathcal U^*)^t(\mI_{2d})}_F\\
 &\le C'(1+t)^k, &&\explain{Jordan power bound}\\
 K_T&\le C'(1+T)^k. &&\explain{maximize over $0\le t\le T$}
\end{align*}
The resulting cutoff satisfies
\begin{align*}
 M_T&\coloneqq\sqrt{C'(1+T)^k}\,(R_0+T\sigma_f)\\
 &\ge\sqrt{K_T}\,(R_0+T\sigma_f), &&\explain{the sufficient cutoff above}\\
 \frac{1+\log M_T}{T}
 &=O\!\left(\frac{\log T}{T}\right)\longrightarrow0. &&\explain{fixed model and polynomial cutoff}
\end{align*}
Thus $M_T>1$ is admissible at every horizon and yields $H_T=O(\log T/T)$ for a fixed model. The factor $T\sigma_f$ comes from Minkowski's inequality and does not assert linear growth of the actual run.
\subsection{Proofs of the Stability Criterion}\label{fh:sec:criterion-proofs}
We prove the exact criterion, its quantitative form, and the AGM critical-rate statement from \Cref{fh:sec:criterion}. For the exact and quantitative criteria, we use the generic recurrence
\[
 \vx_{t+1}=\bigl[(1+b)\mI-\alpha(\mC+\mDelta_t)\bigr]\vx_t-b\vx_{t-1},
 \qquad \mC\succ0,\quad -1<b<1,\quad\alpha>0.
\]
The AGM proof also treats the remaining LRs, for which $b\le-1$.
Recall that $\Tcal(\mS)=\Ebb[\mF_t\mS\mF_t^\top]$ is its second-moment map, with the history transition $\mF_t$ from \eqref{fh:eq:F}. The quotient $q_{\mC,b}$ and its maximum $r_{\mC}(b)$ are recalled in \Cref{fh:sec:criterion}.

\subsubsection{Proof of the Exact Criterion}\label{fh:sec:exact-proof}

\paragraph{A general random linear recurrence}

Consider the random linear recurrence
\begin{equation}\label{fh:eq:generalrecurrence}
 \vy_{t+1}=\mE_t\vy_t,\qquad \vy_t\in\Rbb^m,
\end{equation}
where the matrices $\mE_t$ are i.i.d.\ with finite second moments and the initial state $\vy_0$ is deterministic. We seek a criterion for $\Ebb\norm{\vy_t}^2\to0$ for every initial state. Separate the mean matrix from its centered perturbation,
\begin{equation}\label{fh:eq:generalsplit}
\begin{aligned}
 \mF&\coloneqq\Ebb\mE_t,\qquad
 \mN_t\coloneqq\mE_t-\mF,\\
 \Tcal_0(\mS)&\coloneqq\mF\mS\mF^\top,\qquad
 \Ncal(\mS)\coloneqq\Ebb[\mN_t\mS\mN_t^\top].
\end{aligned}
\end{equation}
Both $\Tcal_0$ and $\Ncal$ preserve positive semidefinite matrices. The second moment $\mSigma_t\coloneqq\Ebb[\vy_t\vy_t^\top]$ evolves as
\begin{equation}\label{fh:eq:generalT}
\begin{aligned}
 \mSigma_{t+1}
 &=\Ebb[(\mF+\mN_t)\mSigma_t(\mF+\mN_t)^\top]\\
 &=\Tcal_0(\mSigma_t)+\Ncal(\mSigma_t). && \explain{$\Ebb\mN_t=0$}
\end{aligned}
\end{equation}
The first equality uses the independence of $\vy_t$ and $\mE_t$.
Write $\Tcal\coloneqq\Tcal_0+\Ncal$ for this second-moment map.

Suppose first that the mean dynamics are stable, $\rho(\mF)<1$. A perturbation with second moment $\mQ$ injected once produces the later second moments $\mF^j\mQ(\mF^\top)^j$. Their sum is
\begin{equation}\label{fh:eq:K}
 \Kcal(\mQ)\coloneqq\sum_{j\ge0}\mF^j\mQ(\mF^\top)^j,
 \qquad \Kcal=(\Ical-\Tcal_0)^{-1}.
\end{equation}
Here $\Ical$ is the identity operator on symmetric matrices. The series converges and preserves PSD matrices. Thus $\Kcal(\mQ)$ is the cumulative response of the mean dynamics to one injection, and $\Kcal\Ncal$ first forms the second moment of a noise term and then accumulates its response. The next proposition characterizes stability through this composition.

\begin{proposition}[Feedback criterion]\label{fh:prop:feedback}
The recurrence \eqref{fh:eq:generalrecurrence} is strictly mean-square stable if and only if $\rho(\mF)<1$ and $\rho(\Kcal\Ncal)<1$.
\end{proposition}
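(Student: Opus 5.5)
The plan is to work entirely at the level of the second-moment map $\Tcal=\Tcal_0+\Ncal$ on $\Sbb_m$, using that strict mean-square stability is equivalent to $\rho(\Tcal)<1$ (\Cref{fh:lem:covariance}), and that $\Tcal_0$, $\Ncal$ and $\Tcal$ all preserve the PSD cone. The proposition then reduces to a standard splitting statement for cone-preserving operators: for $\Tcal=\Tcal_0+\Ncal$ with both parts positive, $\rho(\Tcal)<1$ if and only if $\rho(\Tcal_0)<1$ and $\rho((\Ical-\Tcal_0)^{-1}\Ncal)<1$. I will also use $\rho(\Tcal_0)=\rho(\mF)^2$, since $\Tcal_0$ is represented by $\mF\otimes\mF$. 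Hence $\rho(\Tcal_0)<1$ is equivalent to $\rho(\mF)<1$, which is the condition under which $\Kcal$ exists and is PSD-preserving.

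\textbf{Necessity.} Assume $\rho(\Tcal)<1$. The mean-radius inequality in \Cref{fh:lem:covariance} gives $\rho(\mF)^2\le\rho(\Tcal)<1$, so $\Kcal$ is well defined. Suppose, for contradiction, that $\rho(\Kcal\Ncal)\ge1$. Since $\Kcal\Ncal$ is PSD-preserving, the Perron--Frobenius lemma (\Cref{fh:lem:perron}) yields a nonzero $\mS\succeq0$ with $\Kcal\Ncal(\mS)=r\mS$ for some $r\ge1$. Applying $\Ical-\Tcal_0$ gives $\Ncal(\mS)=r(\mS-\Tcal_0(\mS))$, and therefore
\[
 \Tcal(\mS)=\Tcal_0(\mS)+r\mS-r\Tcal_0(\mS)=r\mS-(r-1)\Tcal_0(\mS)\succeq\mS-(r-1)\Tcal_0(\mS).
\]
This is not yet clean, so instead I will use the resolvent identity. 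For $r\ge1$, $r\mS=\Kcal\Ncal\mS$ implies $\mS=(\Ical-\Tcal_0)^{-1}\Ncal(\mS/r)\preceq(\Ical-\Tcal_0)^{-1}\Ncal\mS$. Iterating shows that $\sum_j\Tcal^j\mS$ cannot converge. The cleaner route is the adjoint: take a PSD eigenvector $\mW$ of $(\Kcal\Ncal)^*=\Ncal^*\Kcal^*$ with eigenvalue $r\ge1$, and set $\mV=\Kcal^*\mW\succeq0$. Then $\mV=\mW+\Tcal_0^*\mV$ and $\Ncal^*\mV=r\mW$, so
\[
 \Tcal^*\mV=\Tcal_0^*\mV+r\mW=\mV+(r-1)\mW\succeq\mV.
\]
It follows that $\langle\mV,\Tcal^t(\vy_0\vy_0^\top)\rangle\ge\vy_0^\top\mV\vy_0$ for all $t$. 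Choosing $\vy_0$ with $\vy_0^\top\mV\vy_0>0$ (possible because $\mV\succeq\mW\ne0$), the second moment cannot tend to zero, which contradicts stability.

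\textbf{Sufficiency.} Assume $\rho(\mF)<1$ and $\rho(\Kcal\Ncal)<1$. I will write $\Ical-\Tcal=(\Ical-\Tcal_0)(\Ical-\Kcal\Ncal)$, so $\Ical-\Tcal$ is invertible with inverse $(\Ical-\Kcal\Ncal)^{-1}\Kcal=\sum_k(\Kcal\Ncal)^k\Kcal$, which is PSD-preserving. Now suppose, for contradiction, that $\rho(\Tcal)\ge1$. Perron--Frobenius gives a nonzero $\mS\succeq0$ with $\Tcal\mS=r\mS$ for some $r\ge1$. The same factorization at level $r$, namely $r\Ical-\Tcal=(r\Ical-\Tcal_0)(\Ical-(r\Ical-\Tcal_0)^{-1}\Ncal)$, together with the monotonicity $(r\Ical-\Tcal_0)^{-1}\preceq\Kcal$ on the cone for $r\ge1$ (compare the Neumann series term by term), yields a PSD eigen-relation $\mS=(r\Ical-\Tcal_0)^{-1}\Ncal\mS$. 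Hence $\rho((r\Ical-\Tcal_0)^{-1}\Ncal)\ge1$, and by cone monotonicity of the spectral radius, $\rho(\Kcal\Ncal)\ge1$, a contradiction. Therefore $\rho(\Tcal)<1$.

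\textbf{Main obstacle.} The delicate step is the spectral-radius monotonicity for cone-preserving maps. I will need both the comparison $0\le\mathcal A\le\mathcal B$ on the PSD cone implies $\rho(\mathcal A)\le\rho(\mathcal B)$, and the existence of PSD Perron eigenvectors, while keeping track of eigenvalues exactly equal to one. For monotonicity I will use Gelfand's formula with a norm that is monotone on the cone, such as the trace norm restricted to PSD inputs, since every symmetric matrix is a difference of two PSD matrices.
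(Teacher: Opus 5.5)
Your proof is correct, but it argues by contradiction through Perron--Frobenius eigenmatrices, whereas the paper constructs explicit Lyapunov certificates. After getting $\rho(\mF)<1$ and the factorization $(\Ical-\Tcal_0)(\Ical-\Kcal\Ncal)=\Ical-\Tcal$, which you also write down, the paper handles each direction with a certificate. For necessity it takes $\mW=\sum_{j\ge0}\Tcal^j(\mI)$, so that $\mW-\Kcal\Ncal(\mW)=\Kcal(\mW-\Tcal(\mW))=\Kcal(\mI)\succeq\mI$. For sufficiency it takes $\mW=\sum_{j\ge0}(\Kcal\Ncal)^j\Kcal(\mI)$, so that $\mW-\Tcal(\mW)=(\Ical-\Tcal_0)\Kcal(\mI)=\mI$. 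The Lyapunov lemma for PSD-preserving maps (\Cref{fh:lem:positive}) then turns each certificate into the strict spectral bound. That route is constructive and symmetric in the two directions, and it needs neither \Cref{fh:lem:perron} nor any comparison of spectral radii. Your route proves $\rho(\Tcal)\ge1\Leftrightarrow\rho(\Kcal\Ncal)\ge1$ directly. A PSD eigenmatrix $\mW$ of the adjoint $\Ncal^*\Kcal^*$ gives $\Tcal^*(\Kcal^*\mW)=\Kcal^*\mW+(r-1)\mW\succeq\Kcal^*\mW$. A PSD eigenmatrix $\mS$ of $\Tcal$ gives $\mS=(r\Ical-\Tcal_0)^{-1}\Ncal\mS$. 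This costs two applications of Perron--Frobenius, one of them to an adjoint map, which relies on self-duality of the PSD cone. In exchange, it exhibits an explicit initial state $\vy_0$ whose second moment stays bounded away from zero. Two cleanups are worth making. First, delete the abandoned attempts at the start of the necessity part: the claim that iterating shows $\sum_j\Tcal^j\mS$ cannot converge is not justified there, and the adjoint argument makes it unnecessary. Second, the step you flag as the main obstacle goes away. From $\mS=(r\Ical-\Tcal_0)^{-1}\Ncal\mS\preceq\Kcal\Ncal\mS$, order preservation gives $(\Kcal\Ncal)^k\mS\succeq\mS\ne0$ for every $k$. This contradicts $(\Kcal\Ncal)^k\to0$ directly, with no need for Gelfand's formula or a general monotonicity theorem for spectral radii.
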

\begin{proof}[Proof of \Cref{fh:prop:feedback}]
We transfer a Lyapunov inequality between $\Tcal$ and $\Kcal\Ncal$. The inverse $\Kcal$ accounts for the mean dynamics in this transfer.

\medskip\noindent\textbf{Step 1. Separate the mean dynamics.}\par\nobreak\smallskip
Strict mean-square stability implies $\rho(\mF)<1$ by the second-moment characterization (\Cref{fh:lem:covariance}). Assume $\rho(\mF)<1$ from now on, so $\Kcal$ exists. Then
\begin{align*}
 (\Ical-\Tcal_0)(\Ical-\Kcal\Ncal)
 &=\Ical-\Tcal_0-\Ncal && \explain{$(\Ical-\Tcal_0)\Kcal=\Ical$}\\
 &=\Ical-\Tcal. && \explain{$\Tcal=\Tcal_0+\Ncal$}
\end{align*}

\medskip\noindent\textbf{Step 2. Necessity.}\par\nobreak\smallskip
If $\rho(\Tcal)<1$, its Neumann series defines
\[
 \mW\coloneqq(\Ical-\Tcal)^{-1}(\mI_m)
 =\sum_{j\ge0}\Tcal^j(\mI_m)\succeq\mI_m.
\]
The factorization in Step~1 gives
\begin{align*}
 \mW-\Kcal\Ncal(\mW)
 &=\Kcal\bigl(\mW-\Tcal(\mW)\bigr)\\
 &=\Kcal(\mI_m)\succeq\mI_m.
\end{align*}
The first equality multiplies the factorization by $\Kcal$. The second line
uses the Neumann series for $\mW$ and $\Kcal$.
The Lyapunov criterion for PSD-preserving maps (\Cref{fh:lem:positive}) therefore gives $\rho(\Kcal\Ncal)<1$.

\medskip\noindent\textbf{Step 3. Sufficiency.}\par\nobreak\smallskip
Conversely, if $\rho(\Kcal\Ncal)<1$, put
\[
 \mW\coloneqq(\Ical-\Kcal\Ncal)^{-1}\Kcal(\mI_m)
 =\sum_{j\ge0}(\Kcal\Ncal)^j\Kcal(\mI_m)\succeq\mI_m.
\]
Now the same factorization gives
\begin{align*}
 \mW-\Tcal(\mW)
 &=(\Ical-\Tcal_0)\bigl(\mW-\Kcal\Ncal(\mW)\bigr) && \explain{Step~1}\\
 &=(\Ical-\Tcal_0)\Kcal(\mI_m)=\mI_m. && \explain{definition of $\mW$}
\end{align*}
The Lyapunov criterion gives $\rho(\Tcal)<1$, which is strict mean-square stability by \Cref{fh:lem:covariance}. This completes the proof of \Cref{fh:prop:feedback}.
\end{proof}

\paragraph{The momentum recurrence}

We now apply the general criterion to the recurrence from \eqref{fh:eq:genericrecurrence}. Its mean curvature is $\mC\succ0$, its parameters satisfy $-1<b<1$ and $\alpha>0$, and its symmetric noise $\mDelta_t$ is i.i.d.\ with zero mean and finite second moments. The initial pair $(\vx_0,\vx_{-1})$ is deterministic. Recall the recurrence and its history state,
\[
 \vx_{t+1}=\bigl[(1+b)\mI-\alpha(\mC+\mDelta_t)\bigr]\vx_t-b\vx_{t-1},
 \qquad \vy_t\coloneqq(\vx_t,\vx_{t-1})^\top.
\]
Write $\mE\coloneqq(1+b)\mI-\alpha\mC$. The random and mean history matrices, and the injection into the position block, are
\begin{equation}\label{fh:eq:history}
  \mF_t\coloneqq
  \begin{pmatrix}\mE-\alpha\mDelta_t&-b\mI\\\mI&0
  \end{pmatrix},\qquad
  \mF\coloneqq\Ebb\mF_t=
  \begin{pmatrix}\mE&-b\mI\\\mI&0
  \end{pmatrix},\qquad
  \mJ\coloneqq
  \begin{pmatrix}\mI\\0
  \end{pmatrix}.
\end{equation}
This is the general recurrence with state dimension $m=2n$ and random matrix $\mE_t=\mF_t$. We use $\Tcal_0$, $\Ncal$ and $\Kcal$ for this history system from now on. The matrix $\mJ$ embeds a position vector as $(\vx,0)^\top$, while $\mJ^\top\mS\mJ$ extracts the position block of a history second moment.
Let $a_i>0$ be the eigenvalues of $\mC$. Recall that $L_{\mC}=\lambda_{\max}(\mC)$. The characteristic polynomials of $\mF$ along these eigendirections are $x^2-(1+b-\alpha a_i)x+b$. The Jury stability test (\Cref{fh:lem:quadratic}) requires
\[
 |b|<1,\qquad \alpha a_i>0,\qquad 2(1+b)-\alpha a_i>0.
\]
The first two conditions already hold. Thus the mean dynamics are stable if and only if
\begin{equation}\label{fh:eq:meanstable}
 \alpha L_{\mC}<2(1+b).
\end{equation}

The centered perturbation is $\mN_t=\mF_t-\mF=-\alpha\mJ\mDelta_t\mJ^\top$, so it acts only on the position. Define its second-moment map on position matrices by
\begin{equation}\label{fh:eq:operators}
 \Vcal(\mX)\coloneqq\Ebb[\mDelta_t\mX\mDelta_t],\qquad \mX\in\Sbb_n.
\end{equation}
Then the full noise map from \eqref{fh:eq:generalsplit} is
\begin{equation}\label{fh:eq:Nspecial}
 \Ncal(\mS)=\alpha^2\mJ\Vcal(\mJ^\top\mS\mJ)\mJ^\top.
\end{equation}
Under the mean stability condition $\alpha L_{\mC}<2(1+b)$, $\Kcal$ propagates an injection through the mean history matrix $\mF$ above, as in \eqref{fh:eq:K}. Since $\Ncal$ uses only the position block, reducing $\Kcal\Ncal$ requires the position response to a position injection. The following lemma computes that response. Recall $c_b\coloneqq(1-b)/(1+b)$ and $k_b\coloneqq b/(1-b^2)$.

\begin{lemma}[Position block of the cumulative response]\label{fh:lem:positionblock}
Assume \eqref{fh:eq:meanstable}, and define the operator on $\Sbb_n$
\begin{equation}\label{fh:eq:Doperator}
 \Dcal(\mX)\coloneqq\frac{1-b}{\alpha}(\mC\mX+\mX\mC)
 -c_b\mC\mX\mC+k_b[\mC,[\mC,\mX]].
\end{equation}
Then $\Dcal$ is self-adjoint and positive definite. For every $\mQ\in\Sbb_n$,
\begin{equation}\label{fh:eq:responseinverse}
 \mJ^\top\Kcal(\mJ\mQ\mJ^\top)\mJ
 =\alpha^{-2}\Dcal^{-1}(\mQ).
\end{equation}
Consequently, $\Dcal^{-1}$ and $\Dcal^{-1}\Vcal$ preserve PSD matrices, and $\Kcal\Ncal$ and $\Dcal^{-1}\Vcal$ have the same nonzero eigenvalues. In particular, their spectral radii are equal.
\end{lemma}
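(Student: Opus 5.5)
To prove \Cref{fh:lem:positionblock}, I would compute $\Kcal(\mJ\mQ\mJ^\top)$ directly as the solution of a discrete Lyapunov equation and extract its blocks. Since $\rho(\mF)<1$ under \eqref{fh:eq:meanstable}, $\mS\coloneqq\Kcal(\mJ\mQ\mJ^\top)$ is the unique symmetric solution of $\mS-\mF\mS\mF^\top=\mJ\mQ\mJ^\top$. Write $\mS=\begin{pmatrix}\mS_{11}&\mS_{12}\\ \mS_{12}^\top&\mS_{22}\end{pmatrix}$ and expand $\mF\mS\mF^\top$ with $\mF=\begin{pmatrix}\mE&-b\mI\\ \mI&0\end{pmatrix}$. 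The $(2,2)$ block gives $\mS_{22}=\mS_{11}$. The $(2,1)$ block gives $\mS_{12}^\top=\mS_{11}\mE^\top-b\mS_{12}^\top$, so $(1+b)\mS_{12}^\top=\mS_{11}\mE$. Here I use the symmetry of $\mE=(1+b)\mI-\alpha\mC$, and I would verify the transpose conventions carefully. The $(1,1)$ block gives
\[
 \mS_{11}-\mE\mS_{11}\mE+b(\mE\mS_{12}+\mS_{12}^\top\mE)-b^2\mS_{11}=\mQ.
\]
Substituting $\mS_{12}=\mE\mS_{11}/(1+b)$ produces a closed linear equation for $\mS_{11}$ of the form $\mS_{11}-\mE\mS_{11}\mE+\tfrac{b}{1+b}(\mE^2\mS_{11}+\mS_{11}\mE^2)-b^2\mS_{11}=\mQ$.

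The main computational step is to show that the left operator equals $\alpha^2\Dcal$. I would expand with $\mE=(1+b)\mI-\alpha\mC$. The constant terms cancel: $1-(1+b)^2+2b(1+b)-b^2=0$. The terms linear in $\alpha$ give $\alpha(1-b)(\mC\mX+\mX\mC)$. The quadratic terms involve $-\alpha^2\mC\mX\mC+\tfrac{b\alpha^2}{1+b}(\mC^2\mX+\mX\mC^2)$. I would rewrite these using $\mC^2\mX+\mX\mC^2=[\mC,[\mC,\mX]]+2\mC\mX\mC$. This matches the stated coefficients $c_b$ and $k_b$ only if the scalar arithmetic works out, and checking it is the step I expect to need the most care. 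If direct substitution gives different coefficients, I would instead diagonalize $\mC$ and solve the scalar $2\times2$ Lyapunov problem entrywise, where $\mX_{ij}$ evolves under the pair of eigenvalues $(a_i,a_j)$. That computation yields a rational function of $a_i,a_j$, and I would match it against $\Dcal$ written in the same basis, using $[\mC,[\mC,\mX]]_{ij}=(a_i-a_j)^2\mX_{ij}$.

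For the remaining claims, I would use this eigenbasis. There $\Dcal$ acts diagonally on the basis $\vu_i\vu_j^\top+\vu_j\vu_i^\top$ with symbol
\[
 d_{ij}=\tfrac{1-b}{\alpha}(a_i+a_j)-c_ba_ia_j+k_b(a_i-a_j)^2,
\]
so $\Dcal$ is self-adjoint. Positive definiteness follows because $d_{ij}=\alpha^{-2}$ times the reciprocal of the scalar cumulative response, which is a convergent series of nonnegative terms, positive at least from the $j=0$ term, under mean stability. Alternatively, one can check $d_{ij}>0$ directly from $\alpha a<2(1+b)$. Since $\Kcal$ preserves PSD matrices and extracting a block preserves PSD, $\Dcal^{-1}=\alpha^2\mJ^\top\Kcal(\mJ\cdot\mJ^\top)\mJ$ preserves PSD, and so does $\Dcal^{-1}\Vcal$.

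Finally, $\Kcal\Ncal=\alpha^2\,\Kcal\mJ(\cdot)\mJ^\top\circ\Vcal\circ\mJ^\top(\cdot)\mJ$ has the factored form $AB$ with $B=\Vcal(\mJ^\top\cdot\mJ)$ and $A=\alpha^2\Kcal(\mJ\cdot\mJ^\top)$. The operators $AB$ and $BA$ share nonzero eigenvalues. Moreover $BA=\alpha^2\Vcal\,\mJ^\top\Kcal(\mJ\cdot\mJ^\top)\mJ=\Vcal\Dcal^{-1}$, which is similar to $\Dcal^{-1}\Vcal$. Hence the nonzero spectra and spectral radii agree.
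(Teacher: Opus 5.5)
Your plan is the paper's: solve the Stein equation for $\Kcal(\mJ\mQ\mJ^\top)$ blockwise and read off $\Dcal$. Your block-extraction argument for PSD preservation and your $AB$/$BA$ argument for the nonzero spectra are correct. The elimination step, however, is wrong, and the error is structural rather than arithmetic.

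The $(2,1)$ block of $\mF\mS\mF^\top$ is $\mS_{11}\mE-b\,\mS_{12}$, not $\mS_{11}\mE-b\,\mS_{12}^\top$. The correct equation is therefore $\mS_{12}^\top+b\,\mS_{12}=\mS_{11}\mE$. It is coupled with its transpose $\mS_{12}+b\,\mS_{12}^\top=\mE\mS_{11}$, which is the $(1,2)$ block, and the cross block is not symmetric. Solving the pair gives
\[
 (1-b^2)\mS_{12}=\mE\mS_{11}-b\,\mS_{11}\mE,
 \qquad
 \mE\mS_{12}+\mS_{12}^\top\mE=\frac{\mE^2\mS_{11}+\mS_{11}\mE^2-2b\,\mE\mS_{11}\mE}{1-b^2}.
\]
Your formula $\mS_{12}=\mE\mS_{11}/(1+b)$ holds only when $\mS_{11}$ commutes with $\mC$.

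As a result, your closed equation misses the term $\frac{b^2}{1-b^2}[\mE,[\mE,\mS_{11}]]=\frac{b^2\alpha^2}{1-b^2}[\mC,[\mC,\mS_{11}]]$. If you carry out your expansion, the constant, linear and $c_b$ terms come out right. The commutator coefficient, however, is $b/(1+b)$ instead of $k_b=b/(1-b^2)$, so you do not obtain $\alpha^2\Dcal$. For example, take $b=\tfrac12$, $\alpha=1$, $(a_1,a_2)=(1,2)$. The true response coefficient of the off-diagonal basis matrix is $2/3=1/(\alpha^2d_{12})$, whereas your operator gives $6/7$.

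Your eigenbasis fallback would succeed. Work with the decoupled $2\times2$ equations $\mS^{(ij)}=\mF_i\mS^{(ij)}\mF_j^\top+Q_{ij}\operatorname{diag}(1,0)$, where $\mF_l=\begin{pmatrix}\theta_l&-b\\1&0\end{pmatrix}$ and $\theta_l=1+b-\alpha a_l$. There, $(\mS_{12})_{ij}$ and $(\mS_{12})_{ji}$ are separate unknowns, so the transpose slip cannot occur. The direct route also works once you solve the two cross-block equations together, which is what the paper does.

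Your first positivity argument also fails off the diagonal. For $i\neq j$, the response coefficient $\sum_{k\ge0}(\mF_i^k)_{11}(\mF_j^k)_{11}$ is not a series of nonnegative terms. At $b=0$ it equals $\sum_k(\theta_i\theta_j)^k$, and $\theta_i\theta_j<0$ is allowed. Positivity of the sum is therefore exactly what needs proof. Use your alternative instead. Writing $a_ia_j=\tfrac12(a_i^2+a_j^2)-\tfrac12(a_i-a_j)^2$ gives
\[
 d_{ij}=\sum_{l\in\{i,j\}}\frac{(1-b)a_l}{2\alpha(1+b)}\bigl(2(1+b)-\alpha a_l\bigr)+\frac{1+b^2}{2(1-b^2)}(a_i-a_j)^2>0
\]
under \eqref{fh:eq:meanstable}. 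This is the eigenbasis version of the paper's bound \eqref{fh:eq:detquadraticbound}.
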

\begin{proof}[Proof of \Cref{fh:lem:positionblock}]
We solve the Lyapunov equation for a single position injection by eliminating its delayed and cross blocks. This identifies $\Dcal$, while the response series proves that its inverse preserves PSD matrices.

\medskip\noindent\textbf{Step 1. Eliminate the cross and delayed blocks.}\par\nobreak\smallskip
Fix $\mQ\in\Sbb_n$. Since \eqref{fh:eq:meanstable} gives $\rho(\mF)<1$, the cumulative response is
\begin{equation}\label{fh:eq:auxiliaryresponse}
 \mSigma_{\mQ}\coloneqq\Kcal(\mJ\mQ\mJ^\top)
 =\sum_{j\ge0}\mF^j\mJ\mQ\mJ^\top(\mF^\top)^j
    =
    \begin{pmatrix}\mX&\mY\\\mY^\top&\mW
    \end{pmatrix}.
\end{equation}
Here $\mX$ is the cumulative position response, $\mW$ the cumulative delayed-position response, and $\mY$ their cross block. These blocks depend linearly on the injection $\mQ$.
Its Lyapunov equation $\mSigma_{\mQ}=\mJ\mQ\mJ^\top+\mF\mSigma_{\mQ}\mF^\top$ reads
\begin{align*}
 \mX&=\mQ+\mE\mX\mE-b(\mE\mY+\mY^\top\mE)+b^2\mW,\\
 \mY&=\mE\mX-b\mY^\top,\qquad \mW=\mX.
\end{align*}
The cross block $\mY$ need not be symmetric. Its equation and its transpose give
\begin{align*}
 \mY+b\mY^\top&=\mE\mX,\qquad \mY^\top+b\mY=\mX\mE,\\
 (1-b^2)\mY&=\mE\mX-b\mX\mE,\\
 \mE\mY+\mY^\top\mE
 &=\frac{\mE^2\mX+\mX\mE^2-2b\mE\mX\mE}{1-b^2}.
\end{align*}
The second line subtracts $b$ times the transposed equation from the first.
The last line substitutes the resulting expressions for $\mY$ and $\mY^\top$.
Substituting these identities and $\mW=\mX$ into the first block equation,
\begin{align*}
 \mQ&=(1-b^2)\mX-\mE\mX\mE+b(\mE\mY+\mY^\top\mE)\\
 &=(1-b^2)\mX-\frac{1+b^2}{1-b^2}\mE\mX\mE
   +\frac{b}{1-b^2}(\mE^2\mX+\mX\mE^2).
\end{align*}

\medskip\noindent\textbf{Step 2. Identify the position operator.}\par\nobreak\smallskip
Expand $\mE=(1+b)\mI-\alpha\mC$ in the last display. The coefficients of $\mX$ and $\mC\mX+\mX\mC$ simplify, respectively, to
\begin{align*}
 (1-b^2)-\frac{(1+b)^2(1+b^2-2b)}{1-b^2}
 &=(1-b^2)-(1-b^2)=0,\\
 \frac{\alpha(1+b)(1+b^2-2b)}{1-b^2}
 &=\alpha(1-b).
\end{align*}
The quadratic terms are
\begin{align*}
 &-\frac{(1+b^2)\alpha^2}{1-b^2}\mC\mX\mC
   +\frac{b\alpha^2}{1-b^2}(\mC^2\mX+\mX\mC^2)\\
 &\quad=-\frac{(1-b)^2\alpha^2}{1-b^2}\mC\mX\mC
   +\frac{b\alpha^2}{1-b^2}[\mC,[\mC,\mX]]\\
 &\quad=-\alpha^2c_b\mC\mX\mC+\alpha^2k_b[\mC,[\mC,\mX]].
\end{align*}
The second line uses $\mC^2\mX+\mX\mC^2=[\mC,[\mC,\mX]]+2\mC\mX\mC$. Combining the linear and quadratic terms gives
\begin{equation}\label{fh:eq:elimination}
 \mQ=\alpha^2\Dcal(\mX).
\end{equation}

\medskip\noindent\textbf{Step 3. Prove positive definiteness.}\par\nobreak\smallskip
Each term in \eqref{fh:eq:Doperator} is self-adjoint under the Frobenius inner product. For every symmetric $\mX$, the commutator identity gives
\begin{equation}\label{fh:eq:detquadraticbound}
\begin{aligned}
 &c_b\tr(\mX\mC\mX\mC)-k_b\norm{[\mC,\mX]}_F^2\\
 &\quad=c_b\tr(\mC^2\mX^2)
   -\Bigl(\frac{c_b}{2}+k_b\Bigr)\norm{[\mC,\mX]}_F^2\\
 &\quad\le c_b\tr(\mC^2\mX^2)
 \le c_bL_{\mC}\tr(\mC\mX^2).
\end{aligned}
\end{equation}
Here $c_b/2+k_b=(1+b^2)/(2(1-b^2))>0$ for every $-1<b<1$, and $\mC^2\preceq L_{\mC}\mC$. Therefore, for $\mX\ne0$,
\begin{align*}
 \inner\mX{\Dcal\mX}
 &\ge\left(\frac{2(1-b)}{\alpha}-c_bL_{\mC}\right)\tr(\mC\mX^2)\\
 &>0.
\end{align*}
The first inequality follows from \eqref{fh:eq:Doperator} and
\eqref{fh:eq:detquadraticbound}. The second uses
$\alpha L_{\mC}<2(1+b)$ and $\mC\succ0$.
Thus $\Dcal\succ0$. Solving $\mQ=\alpha^2\Dcal(\mX)$ from \eqref{fh:eq:elimination} gives $\mX=\alpha^{-2}\Dcal^{-1}(\mQ)$, the position response in \eqref{fh:eq:responseinverse}. If $\mQ\succeq0$, every summand of the response series \eqref{fh:eq:auxiliaryresponse} is PSD, so $\mX\succeq0$. Hence $\Dcal^{-1}$ preserves PSD matrices. This property follows from the response series, not from positive definiteness of $\Dcal$ as an operator. The composition $\Dcal^{-1}\Vcal$ also preserves PSD matrices.

\medskip\noindent\textbf{Step 4. Reduce the spectrum to the position block.}\par\nobreak\smallskip
The full noise map injects $\alpha^2\Vcal(\mJ^\top\mS\mJ)$ into the position, and the response of a position injection is $\alpha^{-2}\Dcal^{-1}$. Combining these two identities, \eqref{fh:eq:Nspecial} and \eqref{fh:eq:responseinverse}, gives
\begin{equation}\label{fh:eq:actualresponse}
\begin{aligned}
 \Kcal\Ncal(\mS)&=\alpha^2\Kcal\bigl(\mJ\Vcal(\mJ^\top\mS\mJ)\mJ^\top\bigr),\\
 \mJ^\top\Kcal\Ncal(\mS)\mJ&=\Dcal^{-1}\Vcal(\mJ^\top\mS\mJ).
\end{aligned}
\end{equation}
If $\Kcal\Ncal(\mS)=\lambda\mS$ with $\lambda\ne0$, compress to $\mX\coloneqq\mJ^\top\mS\mJ$. Then
\[
 \Dcal^{-1}\Vcal(\mX)=\lambda\mX.
\]
For $\mS\ne0$ this $\mX$ is nonzero, since otherwise the first line of \eqref{fh:eq:actualresponse} would give $\lambda\mS=0$. Conversely, suppose $\Dcal^{-1}\Vcal(\mX)=\lambda\mX$ with $\lambda\ne0$ and $\mX\ne0$. Set $\mS\coloneqq\alpha^2\Kcal(\mJ\Vcal(\mX)\mJ^\top)$. Then
\begin{align*}
 \mJ^\top\mS\mJ&=\Dcal^{-1}\Vcal(\mX)=\lambda\mX\ne0,\\
 \Kcal\Ncal(\mS)
 &=\alpha^2\Kcal\bigl(\mJ\Vcal(\lambda\mX)\mJ^\top\bigr)=\lambda\mS.
\end{align*}
The first line uses \eqref{fh:eq:responseinverse}. The second uses
\eqref{fh:eq:actualresponse} and linearity.
The nonzero eigenvalues coincide, so the spectral radii are equal. This completes the proof of \Cref{fh:lem:positionblock}.
\end{proof}

We can now express stability through the quotient from \eqref{fh:eq:quotient}. For the same mean $\mC$, noise $\mDelta_t$ and fixed momentum $b$, recall
\[
 q_{\mC,b}(\mX)
 =\frac{\Ebb\tr(\mX\mDelta_t\mX\mDelta_t)+c_b\tr(\mX\mC\mX\mC)-k_b\norm{[\mC,\mX]}_F^2}{2\tr(\mC\mX^2)},
 \qquad 0\ne\mX\succeq0.
\]
Here $c_b=(1-b)/(1+b)$ and $k_b=b/(1-b^2)$, and $r_{\mC}(b)$ is the maximum over $\mX\succeq0$ with $\norm{\mX}_F=1$. The first numerator term is $\inner\mX{\Vcal\mX}$, the noise contribution. The other two come from the mean dynamics through $\Dcal$. Thus
\begin{equation}\label{fh:eq:quadraticform}
 \inner\mX{(\Dcal-\Vcal)\mX}
 =2\tr(\mC\mX^2)\left(\frac{1-b}{\alpha}-q_{\mC,b}(\mX)\right).
\end{equation}
This identity holds for every nonzero PSD $\mX$. It follows from $\inner\mX{\Vcal\mX}=\Ebb\tr(\mX\mDelta_t\mX\mDelta_t)$ and $\inner\mX{[\mC,[\mC,\mX]]}=\norm{[\mC,\mX]}_F^2$.

\begin{proof}[Proof of \Cref{fh:thm:exact}]
The general criterion reduces stability to $\Dcal^{-1}\Vcal$. A PSD eigenmatrix at its spectral radius then reduces the test to the quotient in \eqref{fh:eq:quadraticform}.

\medskip\noindent\textbf{Step 1. Reduce to the position operator.}\par\nobreak\smallskip
For a unit top eigenvector $\vu$ of $\mC$, test the quotient with $\vu\vu^\top$:
\begin{equation}\label{fh:eq:rankonetest}
 r_{\mC}(b)\ge q_{\mC,b}(\vu\vu^\top)
 =\frac{c_bL_{\mC}}{2}+\frac{\Ebb(\vu^\top\mDelta_t\vu)^2}{2L_{\mC}}
 \ge\frac{c_bL_{\mC}}{2}>0.
\end{equation}
Thus \eqref{fh:eq:pdtest} implies $\alpha L_{\mC}<2(1+b)$. Strict mean-square stability also implies this mean stability condition. Under it, \Cref{fh:prop:feedback,fh:lem:positionblock} give
\begin{equation}\label{fh:eq:feedbackiff}
 \text{strict mean-square stability}\quad\Longleftrightarrow\quad
 \rho(\Dcal^{-1}\Vcal)<1.
\end{equation}

\medskip\noindent\textbf{Step 2. Test the quadratic form on PSD matrices.}\par\nobreak\smallskip
Write $\rho\coloneqq\rho(\Dcal^{-1}\Vcal)$. If $\rho>0$, the Perron--Frobenius theorem for PSD-preserving maps (\Cref{fh:lem:perron}) gives $\mW\succeq0$, $\mW\ne0$, such that
\[
 \Vcal(\mW)=\rho\Dcal(\mW),\qquad
 \rho=\frac{\inner\mW{\Vcal\mW}}{\inner\mW{\Dcal\mW}}.
\]
Consequently, positivity of $\inner\mX{(\Dcal-\Vcal)\mX}$ for every nonzero PSD $\mX$ implies $\rho<1$. This implication also holds when $\rho=0$.

Conversely, suppose $\rho<1$. We proved in \Cref{fh:lem:positionblock} that $\Dcal$ is a positive definite self-adjoint operator on $\Sbb_n$, so it has a positive square root $\Dcal^{1/2}$. The operator $\Vcal$ is self-adjoint because the noise matrices are symmetric. The similarity
\[
 \Dcal^{1/2}(\Dcal^{-1}\Vcal)\Dcal^{-1/2}
 =\Dcal^{-1/2}\Vcal\Dcal^{-1/2}
\]
therefore transforms $\Dcal^{-1}\Vcal$ into a self-adjoint operator with the same spectral radius $\rho$. Its largest eigenvalue is at most $\rho$. Applying its Rayleigh bound to $\Dcal^{1/2}\mX$, for every symmetric $\mX\ne0$,
\begin{align*}
 \inner\mX{\Vcal\mX}
 &=\inner{\Dcal^{1/2}\mX}{(\Dcal^{-1/2}\Vcal\Dcal^{-1/2})\Dcal^{1/2}\mX}\\
 &\le\rho\norm{\Dcal^{1/2}\mX}_F^2
 =\rho\inner\mX{\Dcal\mX}, && \explain{Rayleigh bound}\\
 \inner\mX{(\Dcal-\Vcal)\mX}&\ge(1-\rho)\inner\mX{\Dcal\mX}>0. && \explain{$\rho<1$ and $\Dcal\succ0$}
\end{align*}
This proves the equivalence between stability and positivity of the quadratic form on nonzero PSD matrices.

\medskip\noindent\textbf{Step 3. Take the maximum of the quotient.}\par\nobreak\smallskip
The identity \eqref{fh:eq:quadraticform} expresses this quadratic form as the positive factor $2\tr(\mC\mX^2)$ times $(1-b)/\alpha-q_{\mC,b}(\mX)$. Its positivity is therefore equivalent to
\[
 q_{\mC,b}(\mX)<\frac{1-b}{\alpha}\qquad
 \text{for every }\mX\succeq0\text{ with }\norm{\mX}_F=1.
\]
The maximum is attained on this compact set, since its denominator $2\tr(\mC\mX^2)$ is at least $2\lambda_{\min}(\mC)>0$. The condition is therefore \eqref{fh:eq:pdtest}. For fixed $b$, its right side is independent of $\alpha$, giving the stated interval. This completes the proof of \Cref{fh:thm:exact}.
\end{proof}

\subsubsection{Proof of the Quantitative Bound}\label{fh:sec:quantitative-proof}
The proof rescales the recurrence to make it strictly stable and then applies the exact criterion. We first prove the shift identity (\Cref{fh:lem:shift}), which compares the rescaled mean with the original one.

\begin{proof}[Proof of \Cref{fh:lem:shift}]
The noise term is unchanged because the two systems use the same $\mDelta_t$. We compute the other terms separately. First,
\begin{align*}
 [\mC+h\mI,\mX]
 &=(\mC+h\mI)\mX-\mX(\mC+h\mI)\\
 &=\mC\mX+h\mX-\mX\mC-h\mX\\
 &=[\mC,\mX]. &&\explain{the scalar-shift terms cancel}
\end{align*}
Next, expanding both factors of the shifted mean gives
\begin{align*}
 &\tr\bigl(\mX(\mC+h\mI)\mX(\mC+h\mI)\bigr)\\
 &\quad=\tr(\mX\mC\mX\mC)
       +h\tr(\mX\mC\mX)+h\tr(\mX^2\mC)+h^2\tr(\mX^2)\\
 &\quad=\tr(\mX\mC\mX\mC)+2h\tr(\mC\mX^2)+h^2\tr(\mX^2)\\
 &\quad=\tr(\mX\mC\mX\mC)+2h\mu+h^2.
\end{align*}
The second equality uses cyclicity of the trace. The last substitutes
$\tr(\mC\mX^2)=\mu$ and $\tr(\mX^2)=1$.
The denominator changes according to
\begin{align*}
 \tr\bigl((\mC+h\mI)\mX^2\bigr)
 &=\tr(\mC\mX^2)+h\tr(\mX^2)\\
 &=\mu+h.
\end{align*}
The original quotient satisfies
\[
 2\mu\,q_{\mC,b}(\mX)
 =\Ebb\tr(\mX\mDelta_t\mX\mDelta_t)
  +c_b\tr(\mX\mC\mX\mC)-k_b\norm{[\mC,\mX]}_F^2.
\]
Thus the shifted numerator adds only $c_b(2h\mu+h^2)$ to this expression. Using the denominator computed above,
\begin{align*}
 q_{\mC+h\mI,b}(\mX)
 &=\frac{2\mu\,q_{\mC,b}(\mX)+c_b(2h\mu+h^2)}{2(\mu+h)}\\
 &=\frac{\mu\,q_{\mC,b}(\mX)+c_bh(\mu+h/2)}{\mu+h}.
\end{align*}
The last equality divides numerator and denominator by $2$.
This is \eqref{fh:eq:shiftidentity} and completes the proof of \Cref{fh:lem:shift}.
\end{proof}

\begin{proof}[Proof of \Cref{fh:prop:radius-quotient}]
Fix a PSD test matrix $\mX$ with $\norm{\mX}_F=1$, and recall $\mu=\tr(\mC\mX^2)>0$. We divide the trajectories by a geometric sequence to apply the exact criterion, compare the resulting quotient with $q_{\mC,b}(\mX)$, and bound the difference by scalar algebra.

\medskip\noindent\textbf{Step 1. Divide out enough growth to obtain stability.}\par\nobreak\smallskip
By assumption, $\rho(\Tcal)\le R$. Choose any real number $s>\sqrt{R}\ge1$. Dividing the state at time $t$ by $s^t$ divides its one-step transition matrix by $s$ and its second-moment map by $s^2$. Indeed, for the existing history state $\vy_t=(\vx_t,\vx_{t-1})^\top$ and transition matrix $\mF_t$,
\begin{align*}
 s^{-(t+1)}\vy_{t+1}
 &=\frac{\mF_t}{s}\,(s^{-t}\vy_t),\\
 \Ebb\left[\frac{\mF_t}{s}\mS\frac{\mF_t^\top}{s}\right]
 &=\frac1{s^2}\Ebb[\mF_t\mS\mF_t^\top]
 =\frac1{s^2}\Tcal(\mS).
\end{align*}
The last identity holds for every symmetric matrix $\mS$. Hence the new moment map has spectral radius
\[
 \rho\!\left(\frac1{s^2}\Tcal\right)
 =\frac{\rho(\Tcal)}{s^2}
 \le\frac{R}{s^2}<1.
\]
By the second-moment characterization (\Cref{fh:lem:covariance}), the divided state is strictly mean-square stable.

We now express this division in the position recurrence so that the exact criterion (\Cref{fh:thm:exact}) applies. Set $\vx'_t\coloneqq s^{-t}\vx_t$. Its history state satisfies
\[
    \begin{pmatrix}\vx'_t\\\vx'_{t-1}
    \end{pmatrix}
    =
    \begin{pmatrix}\mI&0\\0&s\mI
    \end{pmatrix}
    (s^{-t}\vy_t).
\]
This fixed invertible change of coordinates preserves strict mean-square stability. Thus the recurrence for $\vx'_t$ is strictly stable for every initial pair. To find that recurrence, substitute $\vx_t=s^t\vx'_t$ and $\vx_{t-1}=s^{t-1}\vx'_{t-1}$ into the original update:
\begin{align*}
 \vx'_{t+1}
 &=s^{-(t+1)}\left(
     \bigl[(1+b)\mI-\alpha(\mC+\mDelta_t)\bigr]\vx_t
      -b\vx_{t-1}\right)\\
 &=s^{-(t+1)}\left(
     \bigl[(1+b)\mI-\alpha(\mC+\mDelta_t)\bigr]s^t\vx'_t
      -bs^{t-1}\vx'_{t-1}\right)\\
 &=\left[\frac{1+b}{s}\mI
       -\frac\alpha s(\mC+\mDelta_t)\right]\vx'_t
       -\frac b{s^2}\vx'_{t-1}.
\end{align*}
The new momentum coefficient is $b/s^2$. The generic recurrence uses the identity coefficient $1+b/s^2$, whereas the last display has $(1+b)/s$. We absorb their difference into a scalar shift of the mean. Define
\begin{align*}
 u&\coloneqq\frac s\alpha
       \left(1+\frac b{s^2}-\frac{1+b}{s}\right)\\
  &=\frac{s^2+b-s(1+b)}{s\alpha}\\
  &=\frac{(s-1)(s-b)}{s\alpha}\ge0.
\end{align*}
Here both factors in the numerator are positive because $s>1>b$. By construction,
\[
 \left(1+\frac b{s^2}\right)\mI
   -\frac\alpha s(\mC+u\mI+\mDelta_t)
 =\frac{1+b}{s}\mI-\frac\alpha s(\mC+\mDelta_t).
\]
Consequently, the divided positions obey
\[
 \vx'_{t+1}
 =\left[\left(1+\frac b{s^2}\right)\mI
       -\frac\alpha s(\mC+u\mI+\mDelta_t)\right]\vx'_t
       -\frac b{s^2}\vx'_{t-1}.
\]
This is the generic recurrence with mean $\mC+u\mI\succ0$, momentum $b/s^2\in[0,1)$, coefficient $\alpha/s>0$, and the same noise $\mDelta_t$. These parameters satisfy the assumptions of \Cref{fh:thm:exact}.

\medskip\noindent\textbf{Step 2. Compare the quotient at the same test matrix.}\par\nobreak\smallskip
Keep the matrix $\mX$ from the statement fixed. The divided recurrence is strictly stable, so the exact criterion (\Cref{fh:thm:exact}) gives
\begin{equation}\label{fh:eq:testedcriterion}
\begin{aligned}
 \frac{s-b/s}{\alpha}
 &=\frac{1-b/s^2}{\alpha/s}\\
 &>r_{\mC+u\mI}(b/s^2)\\
 &\ge q_{\mC+u\mI,\,b/s^2}(\mX).
\end{aligned}
\end{equation}
The last inequality holds because $\mX\succeq0$ and $\norm{\mX}_F=1$ make it an admissible test matrix. We compare this last quotient with $q_{\mC,b}(\mX)$ by undoing the mean shift and comparing the momenta.

First, recall that $\mu=\tr(\mC\mX^2)$ and $\tr(\mX^2)=1$, so
\[
 \tr\bigl((\mC+u\mI)\mX^2\bigr)
 =\tr(\mC\mX^2)+u\tr(\mX^2)=\mu+u.
\]
The shift identity (\Cref{fh:lem:shift}), with the same noise in both quotients, gives
\[
 q_{\mC+u\mI,\,b/s^2}(\mX)
 =\frac{\mu\,q_{\mC,b/s^2}(\mX)+c_{b/s^2}u(\mu+u/2)}{\mu+u}.
\]
Second, $b\ge0$ and $s>1$ imply $b/s^2\le b$. The pointwise momentum inequality \eqref{fh:eq:momentum-monotonicity} therefore gives
\[
 q_{\mC,b/s^2}(\mX)\ge q_{\mC,b}(\mX).
\]
Substituting the shift identity into \eqref{fh:eq:testedcriterion}, multiplying by $\mu+u>0$, and then applying this comparison,
\begin{equation}\label{fh:eq:tested2}
\begin{aligned}
 \frac{(s-b/s)(\mu+u)}{\alpha}
 &\ge\mu\,q_{\mC,b/s^2}(\mX)+c_{b/s^2}u(\mu+u/2)\\
 &\ge\mu\,q_{\mC,b}(\mX)+c_{b/s^2}u(\mu+u/2).
\end{aligned}
\end{equation}
We have weakened the strict first inequality to a non-strict one. The right side now contains the quotient of the original recurrence at the original momentum.

\medskip\noindent\textbf{Step 3. Bound the quotient difference.}\par\nobreak\smallskip
Moving all other terms in \eqref{fh:eq:tested2} to the right gives
\begin{align*}
 \mu\left(q_{\mC,b}(\mX)-\frac{1-b}{\alpha}\right)
 &\le\frac{(s-b/s)(\mu+u)}{\alpha}
       -\frac{\mu(1-b)}{\alpha}
       -c_{b/s^2}u\left(\mu+\frac u2\right)\\
 &=\mu\left[\frac{s-b/s-(1-b)}{\alpha}-c_{b/s^2}u\right]
       +u\left[\frac{s-b/s}{\alpha}-\frac{c_{b/s^2}u}{2}\right].
\end{align*}
The first numerator simplifies as
\[
 s-\frac bs-(1-b)
 =(s-1)+b\left(1-\frac1s\right)
 =(s-1)\left(1+\frac bs\right).
\]
Dividing by $\mu>0$, we obtain
\[
 q_{\mC,b}(\mX)-\frac{1-b}{\alpha}
 \le\left[\frac{(s-1)(1+b/s)}{\alpha}-c_{b/s^2}u\right]
       +\frac u{\mu}
          \left[\frac{s-b/s}{\alpha}-\frac{c_{b/s^2}u}{2}\right].
\]
We simplify these two terms separately. Recall the scalar shift from Step~1 and the definition of $c_b$:
\[
 u=\frac{(s-1)(s-b)}{s\alpha},
 \qquad c_{b/s^2}=\frac{1-b/s^2}{1+b/s^2}
                    =\frac{s^2-b}{s^2+b}.
\]
For the first term, substituting these expressions and combining denominators gives
\begin{align*}
 \frac{(s-1)(1+b/s)}{\alpha}-c_{b/s^2}u
 &=\frac{s-1}{s\alpha}
       \left[(s+b)-\frac{(s^2-b)(s-b)}{s^2+b}\right]\\
 &=\frac{(s-1)\bigl[(s+b)(s^2+b)-(s^2-b)(s-b)\bigr]}
        {s\alpha(s^2+b)}\\
 &=\frac{(s-1)(2bs^2+2bs)}{s\alpha(s^2+b)}\\
 &=\frac{2b(s-1)(s+1)}{\alpha(s^2+b)}
 =\frac{2b(s^2-1)}{(s^2+b)\alpha}.
\end{align*}
For the second term, first substitute $u$ inside the bracket, then factor its numerator:
\begin{align*}
 &\frac u{\mu}\left[\frac{s-b/s}{\alpha}-\frac{c_{b/s^2}u}{2}\right]\\
 &\quad=\frac{u(s^2-b)}{\mu s\alpha}
         \left[1-\frac{(s-1)(s-b)}{2(s^2+b)}\right]\\
 &\quad=\frac{u(s^2-b)\bigl[2(s^2+b)-(s-1)(s-b)\bigr]}
              {2\mu s\alpha(s^2+b)}\\
 &\quad=\frac{u(s^2-b)\bigl[s^2+(b+1)s+b\bigr]}
              {2\mu s\alpha(s^2+b)}\\
 &\quad=\frac{u(s^2-b)(s+1)(s+b)}{2\mu s\alpha(s^2+b)}\\
 &\quad=\frac{(s-1)(s-b)(s^2-b)(s+1)(s+b)}
              {2\mu s^2\alpha^2(s^2+b)}\\
 &\quad=\frac{(s^2-1)(s^2-b^2)(s^2-b)}
              {2s^2(s^2+b)\mu\alpha^2}.
\end{align*}
The penultimate line substitutes the remaining factor $u$. The last line groups $(s-1)(s+1)$ and $(s-b)(s+b)$. Adding the two terms yields, for every $s>\sqrt{R}$,
\[
 q_{\mC,b}(\mX)-\frac{1-b}{\alpha}
 \le(s^2-1)\left[
       \frac{2b}{(s^2+b)\alpha}
       +\frac{(s^2-b^2)(s^2-b)}{2s^2(s^2+b)\mu\alpha^2}
       \right].
\]
The left side does not depend on $s$, and the right side is continuous at $s=\sqrt{R}$, with positive denominators. Letting $s\downarrow\sqrt{R}$ therefore gives
\begin{equation}\label{fh:eq:exactform}
 q_{\mC,b}(\mX)-\frac{1-b}{\alpha}
 \le(R-1)\left[
 \frac{2b}{(R+b)\alpha}
 +\frac{(R-b^2)(R-b)}{2R(R+b)\mu\alpha^2}
 \right].
\end{equation}
We retain this sharper form and now simplify its second coefficient.

\medskip\noindent\textbf{Step 4. Simplify the second coefficient.}\par\nobreak\smallskip
The second coefficient in \eqref{fh:eq:exactform} is easier to compare after bounding it by a square. To justify this bound, subtract its two relevant factors:
\begin{align*}
 \frac{R-b}{R}-\frac{R-b^2}{R+b}
 &=\frac{(R-b)(R+b)-R(R-b^2)}{R(R+b)}\\
 &=\frac{R^2-b^2-R^2+R b^2}{R(R+b)}\\
 &=\frac{b^2(R-1)}{R(R+b)}\ge0.
\end{align*}
Multiplying by $(R-b)/R>0$ gives
\[
 \frac{(R-b^2)(R-b)}{R(R+b)}
 \le\left(\frac{R-b}{R}\right)^2
 =\left(1-\frac b{R}\right)^2.
\]
Thus \eqref{fh:eq:exactform} implies
\[
 q_{\mC,b}(\mX)-\frac{1-b}{\alpha}
 \le(R-1)\left[
       \frac{2b}{(R+b)\alpha}
       +\frac1{2\mu}\left(\frac{1-b/R}{\alpha}\right)^2
       \right].
\]

This is \eqref{fh:eq:radius-quotient} and completes the proof of \Cref{fh:prop:radius-quotient}.
\end{proof}

\subsubsection{Proof of the AGM Critical LR}\label{fh:sec:decay}
Recall from \Cref{fh:sec:criterion} that $b=\beta_1(1-\lr\wdc)$, $\alpha=\lr(1-\beta_1)$, and
\[
 G_{\mC}(\lr)=\frac1\lr+\frac{\beta_1\wdc}{1-\beta_1}
 -r_{\mC}\bigl(\beta_1(1-\lr\wdc)\bigr).
\]
We prove that this function has one zero, which is the endpoint of the stable LRs, and that the moment spectral radius is strictly greater than one above it.

\begin{proof}[Proof of \Cref{fh:thm:decay-exact}]
To rule out $\rho(\Tcal)\le1$ above the root, we rescale a trajectory by $s^{-t}$ and let $s\downarrow1$.

\medskip\noindent\textbf{Step 1. Apply the fixed-parameter criterion.}\par\nobreak\smallskip
For $\lr<\lr_{\max}$, \Cref{fh:thm:exact} gives strict stability if and only if $G_{\mC}(\lr)>0$. If $\lr_{\max}<\infty$ and $\lr\ge\lr_{\max}$, then $b\le-1$. The roots of the mean recurrence along each eigendirection have product $b$, so one root has modulus at least one. Strict mean-square stability then fails by the mean-radius inequality \eqref{fh:eq:meanradius}. At $\lr\wdc=1$, the effective momentum is simply $b=0$, already covered by the criterion.

\medskip\noindent\textbf{Step 2. Prove monotonicity and continuity.}\par\nobreak\smallskip
Recall the pointwise momentum comparison \eqref{fh:eq:momentum-monotonicity}: for $-1<b'\le b<1$, $q_{\mC,b'}(\mX)\ge q_{\mC,b}(\mX)$ for every nonzero PSD $\mX$. We apply this comparison before taking maxima.

The maximization defining $r_{\mC}$ uses the same set $\{\mX\succeq0:\norm{\mX}_F=1\}$ for every $b$. Choose a maximizer $\mX_b$ at $b$, which exists by \eqref{fh:eq:variational}. It satisfies $\mX_b\succeq0$, $\norm{\mX_b}_F=1$, and $q_{\mC,b}(\mX_b)=r_{\mC}(b)$. Therefore,
\begin{align*}
 r_{\mC}(b')
 &=\max_{\mX\succeq0,\ \norm{\mX}_F=1}q_{\mC,b'}(\mX) && \explain{definition of $r_{\mC}(b')$}\\
 &\ge q_{\mC,b'}(\mX_b) && \explain{$\mX_b$ is also admissible at $b'$}\\
 &\ge q_{\mC,b}(\mX_b) && \explain{the pointwise inequality above}\\
 &=r_{\mC}(b). && \explain{$\mX_b$ maximizes the quotient at $b$}
\end{align*}
Hence $b'\le b$ implies $r_{\mC}(b')\ge r_{\mC}(b)$, which is the claimed nonincreasing dependence on $b$.

Return now to the AGM parameter curve and set $b_i\coloneqq\beta_1(1-\lr_i\wdc)$. Since $\lr_2<\lr_1$, we have $b_2\ge b_1$ and therefore $r_{\mC}(b_1)\ge r_{\mC}(b_2)$. Recall that $G_{\mC}(\lr)$ subtracts $r_{\mC}(b)$ from $1/\lr+\beta_1\wdc/(1-\beta_1)$. The offset $\beta_1\wdc/(1-\beta_1)$ is constant along this curve, giving
\begin{align*}
 G_{\mC}(\lr_2)-G_{\mC}(\lr_1)
 &=\frac1{\lr_2}-\frac1{\lr_1}+r_{\mC}(b_1)-r_{\mC}(b_2)\\
 &\ge\frac1{\lr_2}-\frac1{\lr_1}>0. && \explain{monotonicity of $r_{\mC}$}
\end{align*}
This is \eqref{fh:eq:increment}. The quotient is jointly continuous in $b\in(-1,1)$ and $\mX$ on the compact PSD unit sphere, with denominator at least $2\lambda_{\min}(\mC)$. Its maximum $r_{\mC}(b)$, and hence $G_{\mC}$, is continuous.

\medskip\noindent\textbf{Step 3. Check the endpoints.}\par\nobreak\smallskip
As $\lr\downarrow0$, $b\to\beta_1<1$ and
\[
 G_{\mC}(\lr)=\frac1\lr+\frac{\beta_1\wdc}{1-\beta_1}-r_{\mC}(b)
 \longrightarrow+\infty.
\]
If $\beta_1\wdc>0$, then $b\downarrow-1$ as $\lr\uparrow\lr_{\max}$. The rank-one lower bound gives
\[
 r_{\mC}(b)\ge\frac{1-b}{1+b}\frac{L_{\mC}}2\longrightarrow\infty,
 \qquad G_{\mC}(\lr)\longrightarrow-\infty.
\]
If $\beta_1\wdc=0$, then $b=\beta_1$ is fixed and
\[
 G_{\mC}(\lr)=\frac1\lr-r_{\mC}(\beta_1)
 \longrightarrow-r_{\mC}(\beta_1)<0\qquad(\lr\to\infty).
\]
Continuity and strict monotonicity now give one root, with $G_{\mC}>0$ precisely below it.

\medskip\noindent\textbf{Step 4. Locate the second-moment spectral radius.}\par\nobreak\smallskip
First fix $\lr\in(\lr_{\mathrm{crit}},\lr_{\max})$, so that $-1<b<1$ and $G_{\mC}(\lr)<0$. Suppose for a contradiction that its second-moment map satisfies $\rho(\Tcal)\le1$. For every $s>1$, dividing the state at time $t$ by $s^t$ gives
\[
 \rho(s^{-2}\Tcal)=s^{-2}\rho(\Tcal)\le s^{-2}<1.
\]
Recall the position calculation in the proof of \Cref{fh:prop:radius-quotient}. The divided recurrence has coefficient $\alpha/s$, momentum $b/s^2$, unchanged noise, and mean $\mC+u\mI$, where
\[
 u\coloneqq\frac{(s-1)(s-b)}{s\alpha}>0,
 \qquad
 1+\frac b{s^2}-\frac\alpha s u=\frac{1+b}{s}.
\]
That calculation needs only $-1<b<1$: for every $s>1$ we have $-1<b/s^2<1$, $\alpha/s>0$, and $u>0$ because $s>b$. Thus the shifted mean remains positive definite even when $b<0$. The divided recurrence is strictly stable, so the exact criterion gives, for every $\mX\succeq0$ with $\norm{\mX}_F=1$,
\begin{align*}
 q_{\mC+u\mI,\,b/s^2}(\mX)
 &\le r_{\mC+u\mI}(b/s^2)\\
 &<\frac{1-b/s^2}{\alpha/s}
 =\frac{s-b/s}{\alpha}.
\end{align*}
As $s\downarrow1$, we have $u\to0$ and $b/s^2\to b$. The quotient is continuous in these parameters, with a positive denominator. Hence
\begin{align*}
 q_{\mC,b}(\mX)&\le\frac{1-b}{\alpha}
 &&\explain{take the limit for each fixed $\mX$}\\
 r_{\mC}(b)&=\max_{\mX\succeq0,\ \norm{\mX}_F=1}q_{\mC,b}(\mX)
 \le\frac{1-b}{\alpha},\\
 G_{\mC}(\lr)&=\frac{1-b}{\alpha}-r_{\mC}(b)\ge0.
\end{align*}
This contradicts $G_{\mC}(\lr)<0$. Thus $\rho(\Tcal)>1$ throughout this interval.

If $\lr_{\max}<\infty$ and $\lr>\lr_{\max}$, then $b<-1$. The two roots along any eigendirection of the mean have product $b$, so at least one has modulus greater than one. At $\lr=\lr_{\max}$, we have $b=-1$. For an eigenvalue $a_i>0$ of $\mC$, the characteristic polynomial satisfies
\[
 p_i(x)=x^2+\alpha a_i x-1,\qquad
 p_i(-1)=-\alpha a_i<0,\qquad
 \lim_{x\to-\infty}p_i(x)=+\infty.
\]
It therefore has a root below $-1$. In both cases, the mean-radius inequality \eqref{fh:eq:meanradius} implies $\rho(\Tcal)>1$.

Finally, the entries of the second-moment map are continuous in $\lr$, so its spectral radius is continuous. The stable rates approaching $\lr_{\mathrm{crit}}$ from below give $\rho(\Tcal)\le1$ at the critical rate. Since $G_{\mC}(\lr_{\mathrm{crit}})=0$, the exact criterion excludes strict stability there and gives $\rho(\Tcal)\ge1$. Thus the radius equals one at the critical rate. This completes the proof of \Cref{fh:thm:decay-exact}.
\end{proof}
\subsection{Auxiliary Results}\label{fh:sec:tools}\label{fh:sec:linear}

The proofs use four finite-dimensional lemmas on second-moment maps and on maps that preserve positive semidefinite matrices. We also use the Cauchy--Schwarz and Minkowski inequalities, Jensen's inequality, the Neumann series for $(s\Ical-\mathcal U)^{-1}$ when $|s|>\rho(\mathcal U)$, the Rayleigh quotient characterization of the largest eigenvalue of a self-adjoint operator, and continuity of polynomial roots in their coefficients. The cutoff argument uses the Jordan decomposition: the powers of a finite-dimensional operator with spectral radius one grow at most polynomially.

We distinguish two notions of positivity. A map $\mathcal U:\Sbb_n\to\Sbb_n$ \emph{preserves PSD matrices} if $\mX\succeq0$ implies $\mathcal U(\mX)\succeq0$. A self-adjoint operator $\mathcal D$ on $\Sbb_n$ is \emph{positive definite}, written $\mathcal D\succ0$, if $\inner \mX{\mathcal D \mX}>0$ for every nonzero symmetric $\mX$. Preservation of PSD matrices does not imply positive definiteness as an operator, and the converse fails as well. Second moments are uncentered throughout: $\mSigma_t=\Ebb[\vy_t\vy_t^\top]$, so that $\tr \mSigma_t=\Ebb\norm{\vy_t}^2$, and no zero-mean assumption is imposed on the state.

\begin{samepage}
\begin{lemma}[Second-moment characterization]\label{fh:lem:covariance}
Let $\vy_{t+1}=\mM_t\vy_t$, where the $\mM_t$ are i.i.d.\ with finite second moments and $\vy_0$ is deterministic. Define $\Tcal(\mS)\coloneqq\Ebb[\mM_t\mS \mM_t^\top]$ on $\Sbb_n$.
Then $\Ebb\norm{\vy_t}^2\to0$ for every initial state if and only if $\rho(\Tcal)<1$. Strict mean-square stability also implies $\rho(\Ebb \mM_t)<1$, and in general
\begin{equation}\label{fh:eq:meanradius}
 \rho(\Ebb\mM_t)^2\le\rho(\Tcal).
\end{equation}
\end{lemma}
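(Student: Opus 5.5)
The plan is to work directly with the uncentered second moment $\mSigma_t\coloneqq\Ebb[\vy_t\vy_t^\top]$. Because $\vy_t$ depends only on $\vy_0$ and $\mM_0,\ldots,\mM_{t-1}$, it is independent of $\mM_t$. Conditioning on $\vy_t$ then gives $\mSigma_{t+1}=\Tcal(\mSigma_t)$, and by induction $\mSigma_t=\Tcal^t(\vy_0\vy_0^\top)$, with $\Ebb\norm{\vy_t}^2=\tr\mSigma_t$. Finite second moments of $\mM_t$ make $\Tcal$ a well-defined linear map on the finite-dimensional space $\Sbb_n$, and $\Tcal$ preserves PSD matrices.

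For sufficiency, assume $\rho(\Tcal)<1$. Then $\Tcal^t\to0$ in operator norm, so $\Tcal^t(\vy_0\vy_0^\top)\to0$ and hence $\tr\mSigma_t\to0$. For necessity, assume $\Ebb\norm{\vy_t}^2\to0$ for every deterministic $\vy_0$. Since $\mSigma_t\succeq0$, we have $\norm{\mSigma_t}\le\tr\mSigma_t\to0$, so $\Tcal^t(\vv\vv^\top)\to0$ for every $\vv$. Every symmetric matrix is a linear combination of rank-one matrices $\vv\vv^\top$ (for example by spectral decomposition), so by linearity $\Tcal^t(\mS)\to0$ for every $\mS\in\Sbb_n$. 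In finite dimensions, $\Tcal^t\to0$ pointwise on a basis implies $\Tcal^t\to0$, which forces $\rho(\Tcal)<1$: an eigenvalue $\lambda$ with $|\lambda|\ge1$ would give a complex eigenvector whose real or imaginary part fails to decay. I expect this eigenvector step to need the most care, since $\Tcal$ acts on real symmetric matrices while its eigenvector may be complex. I would handle it by complexifying, or more cleanly by invoking the Perron--Frobenius lemma (\Cref{fh:lem:perron}). That lemma gives a PSD eigenmatrix $\mW\neq0$ with $\Tcal(\mW)=\rho(\Tcal)\mW$. If $\rho(\Tcal)\ge1$, then $\Tcal^t(\mW)=\rho^t\mW$ does not tend to zero. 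Writing $\mW=\sum_i w_i\vv_i\vv_i^\top$ with $w_i\ge0$, some start $\vy_0=\vv_i$ must then fail to decay, a contradiction.

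For the mean-radius inequality, Jensen's inequality gives $\Ebb[\vy_t]\Ebb[\vy_t]^\top\preceq\mSigma_t$. By independence, $\Ebb[\vy_t]=(\Ebb\mM_t)^t\vy_0$. Hence for every $\vy_0$,
\[
 \norm{(\Ebb\mM_t)^t\vy_0}^2\le\tr\Tcal^t(\vy_0\vy_0^\top)\le C\norm{\Tcal^t}\norm{\vy_0}^2 .
\]
Taking the supremum over unit $\vy_0$, then $t$-th roots, and applying Gelfand's formula on both sides yields $\rho(\Ebb\mM_t)^2\le\rho(\Tcal)$. In particular, strict mean-square stability gives $\rho(\Tcal)<1$, and therefore $\rho(\Ebb\mM_t)<1$.
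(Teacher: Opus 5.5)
Your proof is correct and follows the paper's argument for the equivalence: the recursion $\mSigma_{t+1}=\Tcal(\mSigma_t)$, decay of $\Tcal^t$ when $\rho(\Tcal)<1$, and for necessity the bound $\norm{\mSigma_t}\le\tr\mSigma_t$ together with the fact that the matrices $\vv\vv^\top$ span $\Sbb_n$. Either of your fixes for the complex-eigenvector issue works; the paper simply asserts that an eigenvalue of modulus at least one contradicts $\Tcal^t\to0$.

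The one difference is the mean-radius inequality. You apply Gelfand's formula directly to the Jensen bound $\norm{(\Ebb\mM_t)^t}^2\le C\norm{\Tcal^t}$. The paper instead first shows that strict mean-square stability forces $\rho(\Ebb\mM_t)<1$. It then applies this to the rescaled matrices $\mM_t/\sqrt{s}$, whose moment map is $\Tcal/s$, and lets $s\downarrow\rho(\Tcal)$. Both are valid; yours replaces the rescaling step with Gelfand's formula.
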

\end{samepage}
\begin{proof}[Proof of \Cref{fh:lem:covariance}]
Write $\mSigma_t\coloneqq\Ebb[\vy_t\vy_t^\top]$. Since $\vy_t$ depends only on the initial state and on $\mM_0,\ldots,\mM_{t-1}$, it is independent of $\mM_t$, so
\begin{align*}
 \mSigma_{t+1}
 &=\Ebb[\mM_t\vy_t\vy_t^\top \mM_t^\top]\\
 &=\Ebb\bigl[\mM_t\,\Ebb[\vy_t\vy_t^\top\mid \mM_t]\,\mM_t^\top\bigr]\\
 &=\Ebb[\mM_t\mSigma_t\mM_t^\top]=\Tcal(\mSigma_t),
\end{align*}
with $\mSigma_0=\vy_0\vy_0^\top$. The second equality uses the tower property, conditioning on $\mM_t$, and the third uses independence. If $\rho(\Tcal)<1$, the Jordan form of $\Tcal$ shows that $\norm{\Tcal^t}$ is bounded by a polynomial in $t$ times a decaying exponential, so $\Tcal^t\to0$ and $\tr \mSigma_t\to0$.

Conversely, suppose $\Ebb\norm{\vy_t}^2\to0$ for every $\vy_0$. Since $\mSigma_t\succeq0$ we have $\norm{\mSigma_t}_F\le\tr \mSigma_t$, so $\Tcal^t(\vy_0\vy_0^\top)=\mSigma_t\to0$ for every $\vy_0$. Rank-one symmetric matrices span $\Sbb_n$: the diagonal basis elements are $\ve_i\ve_i^\top$, and each off-diagonal symmetric basis element is $(\ve_i+\ve_j)(\ve_i+\ve_j)^\top-\ve_i\ve_i^\top-\ve_j\ve_j^\top$. Hence $\Tcal^t\to0$ as an operator on the finite-dimensional space $\Sbb_n$. An eigenvalue of modulus at least one would contradict this convergence, so $\rho(\Tcal)<1$.

For the statements about the mean, independence gives $\Ebb\vy_{t+1}=(\Ebb\mM_t)\Ebb\vy_t$, and the matrices have the same law at each time. Iterating and applying Jensen's inequality,
\begin{align*}
 \Ebb\vy_t&=(\Ebb\mM_t)^t\vy_0,\\
 \norm{(\Ebb\mM_t)^t\vy_0}^2
 &=\norm{\Ebb\vy_t}^2
 \le\Ebb\norm{\vy_t}^2
 =\tr\Tcal^t(\vy_0\vy_0^\top).
\end{align*}
Under strict mean-square stability the right side tends to zero for every $\vy_0$, so $(\Ebb\mM_t)^t\to0$ and $\rho(\Ebb\mM_t)<1$. To obtain \eqref{fh:eq:meanradius}, apply this implication to the matrices $\mM_t/\sqrt{s}$ for any real $s>\rho(\Tcal)$. Their second-moment map is $\Tcal/s$, so
\begin{align*}
 \rho(\Tcal/s)&=\rho(\Tcal)/s<1, &&\explain{choice of $s$}\\
 \rho\!\left(\Ebb\mM_t/\sqrt{s}\right)&<1, &&\explain{the mean-stability implication above}\\
 \rho(\Ebb\mM_t)^2&<s. &&\explain{rescale the mean matrix}
\end{align*}
Letting $s\downarrow\rho(\Tcal)$ proves \eqref{fh:eq:meanradius}, including when $\rho(\Tcal)=0$. This completes the proof of \Cref{fh:lem:covariance}.
\end{proof}

\paragraph{Connection to the Kronecker second-moment matrix.}
The matrix $\Ebb[\mM_t\otimes\mM_t]$ represents the map $\mS\mapsto\Ebb[\mM_t\mS\mM_t^\top]$ on all $n\times n$ matrices in the column-stacked basis.
For the projected AGM dynamics, this is $\mK^+(\eta)$ in \eqref{eq:agm-local-second-moment-operator}.
The map $\Tcal$ in the preceding lemma is the restriction to symmetric matrices.

Write $\mPhi_t\coloneqq\mM_{t-1}\cdots\mM_0$. For fixed vectors $\vu,\vv$, the Cauchy--Schwarz inequality gives
\begin{align*}
 \norm{\Ebb[\mPhi_t\vu\vv^\top\mPhi_t^\top]}_F
 &\le\Ebb\norm{\mPhi_t\vu\vv^\top\mPhi_t^\top}_F\\
 &=\Ebb\bigl[\norm{\mPhi_t\vu}\norm{\mPhi_t\vv}\bigr]\\
 &\le\bigl(\Ebb\norm{\mPhi_t\vu}^2\,
            \Ebb\norm{\mPhi_t\vv}^2\bigr)^{1/2}.
\end{align*}
If the symmetric second moments decay, the right side tends to zero. Since the matrices $\vu\vv^\top$ span all matrices, the full moment map also has powers tending to zero. The converse follows by restriction. Replacing $\mM_t$ by $\mM_t/\sqrt{s}$ divides both moment maps by $s$. Applying the same argument for every $s>0$ shows that the two spectral radii are below $s$ for exactly the same $s$, hence are equal.

Recall that the history state uses $(\vx_t,\vx_{t-1})$, whereas \eqref{fh:eq:model} uses position and momentum $(\vx_t,\vn_t)$. The position update gives their linear relation
\[
  \begin{pmatrix}\vx_t\\\vn_t
  \end{pmatrix}
  =
  \begin{pmatrix}
   \mI&0\\-\lr^{-1}\mI&\lr^{-1}(1-\lr\wdc)\mI
  \end{pmatrix}
  \begin{pmatrix}\vx_t\\\vx_{t-1}
  \end{pmatrix}.
\]
This change is invertible when $\lr\wdc\ne1$, and an invertible change of state induces a similarity between the moment maps. Their spectral radii are therefore equal there. The entries of both moment maps are polynomials in $\lr$, and the spectral radius is continuous in the entries, so the equality extends to $\lr\wdc=1$ as well.

\begin{lemma}[A Lyapunov criterion for PSD-preserving maps]\label{fh:lem:positive}
If $\mathcal U:\Sbb_n\to \Sbb_n$ preserves PSD matrices, then
\[
 \rho(\mathcal U)<1\quad\Longleftrightarrow\quad
 \text{there exists }\mW\succ0\text{ with }\mW-\mathcal U(\mW)\succ0.
\]
\end{lemma}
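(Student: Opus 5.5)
The two implications use different tools: the easy direction goes through a telescoping Neumann series, and the hard direction through a linear functional that is strictly positive on the PSD cone. Throughout, the PSD cone $\Sbb_n^+$ is closed, convex, pointed and solid in $\Sbb_n$, and $\mathcal U$ maps it into itself.

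\emph{($\Rightarrow$).} Assume $\rho(\mathcal U)<1$. Then the Neumann series $\sum_{j\ge0}\mathcal U^j$ converges in operator norm. Set
\[
 \mW\coloneqq\sum_{j\ge0}\mathcal U^j(\mI_n).
\]
Every term with $j\ge1$ is PSD because $\mathcal U$ preserves the cone, and the $j=0$ term is $\mI_n$. Hence $\mW\succeq\mI_n\succ0$. The series telescopes:
\[
 \mW-\mathcal U(\mW)=\mI_n\succ0.
\]
This gives the required $\mW$.

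\emph{($\Leftarrow$).} Assume $\mW\succ0$ with $\mW-\mathcal U(\mW)\succ0$.
\begin{enumerate}
\item Since the difference is positive definite, there is $\gamma<1$ with $\mathcal U(\mW)\preceq\gamma\mW$. One explicit choice is $\gamma=1-\lambda_{\min}(\mW-\mathcal U(\mW))/\lambda_{\max}(\mW)$. If $\gamma<0$, replace it by $\gamma=0$; since $\mathcal U(\mW)\succeq0$, this case forces $\mathcal U(\mW)=0$, and $\gamma=0$ still works.
\item Because $\mathcal U$ preserves the Loewner order, iterating gives $0\preceq\mathcal U^t(\mW)\preceq\gamma^t\mW$ for all $t$.
\item Every $\mX\in\Sbb_n$ satisfies $-\|\mX\|\lambda_{\min}(\mW)^{-1}\mW\preceq\mX\preceq\|\mX\|\lambda_{\min}(\mW)^{-1}\mW$. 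Write $c\coloneqq\|\mX\|/\lambda_{\min}(\mW)$. Then $\mX=\mX_1-\mX_2$ with $\mX_1\coloneqq c\mW+\mX$ and $\mX_2\coloneqq c\mW$, both PSD and both $\preceq 2c\mW$.
\item Monotonicity and step 2 then give $\|\mathcal U^t(\mX_i)\|\le 2c\,\gamma^t\|\mW\|$. Therefore $\|\mathcal U^t\|=O(\gamma^t)$ as an operator on $\Sbb_n$.
\item Gelfand's formula yields $\rho(\mathcal U)\le\gamma<1$.
\end{enumerate}

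This direction avoids Perron--Frobenius entirely. An alternative is to take a PSD eigenmatrix $\mV$ of the adjoint $\mathcal U^*$ with eigenvalue $\rho(\mathcal U)$ (\Cref{fh:lem:perron}) and pair it with the Lyapunov inequality:
\[
 (1-\rho)\langle\mV,\mW\rangle=\langle\mV,\mW-\mathcal U(\mW)\rangle>0.
\]
Since $\langle\mV,\mW\rangle>0$, this gives $\rho<1$. I would present the domination argument as the main proof because it is self-contained.

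The main obstacle is step 3 of the converse: converting the one-matrix inequality into a bound on the whole operator. The key point is that $\mW\succ0$ lies in the interior of the cone, so it dominates every symmetric matrix up to scale, and a cone-preserving map is controlled by its action on such an interior point. The rest is routine bookkeeping with constants.
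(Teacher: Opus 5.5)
Your proof is correct and follows the paper's argument: the Neumann series $\sum_{j\ge0}\mathcal U^j(\mI_n)$ gives the forward direction. For the converse you use the same contraction $\mathcal U(\mW)\preceq\gamma\mW$, iterated by order preservation, together with domination of every symmetric matrix by a multiple of $\mW$; you finish with Gelfand's formula where the paper just notes that $\mathcal U^j\to0$ on a basis, and your $\gamma<0$ case is vacuous, since $\mathcal U(\mW)\succeq0$ gives $\lambda_{\min}(\mW-\mathcal U(\mW))\le\lambda_{\max}(\mW)$ and hence $\gamma\ge0$ automatically.
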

\begin{proof}[Proof of \Cref{fh:lem:positive}]
If $\rho(\mathcal U)<1$, the Neumann series $\mW\coloneqq\sum_{j\ge0}\mathcal U^j(\mI )$ converges, satisfies $\mW\succeq \mI $ because every summand is PSD, and $\mW-\mathcal U(\mW)=\mI $.
Conversely, given such a $\mW$, choose $\varepsilon\in(0,1)$ with
$0\preceq\mathcal U(\mW)\preceq(1-\varepsilon)\mW$.
For every symmetric $\mX$ there is a finite $a$ with $-a\mW\preceq \mX\preceq a\mW$. Since $\mathcal U$ preserves the order $\preceq$,
\[
 -a(1-\varepsilon)^j\mW\preceq\mathcal U^j(\mX)\preceq a(1-\varepsilon)^j\mW\qquad(j\ge0).
\]
Applying this to a finite basis of $\Sbb_n$ gives $\mathcal U^j\to0$, hence $\rho(\mathcal U)<1$. This completes the proof of \Cref{fh:lem:positive}.
\end{proof}

\begin{lemma}[Perron--Frobenius theorem for PSD-preserving maps]
\label{fh:lem:perron}
Let $\mathcal U:\Sbb_n\to\Sbb_n$ preserve PSD matrices,
and suppose $R=\rho(\mathcal U)>0$. There is a nonzero
$\mW\in\Sbb_n^+$ with $\mathcal U(\mW)=R\mW$.
\end{lemma}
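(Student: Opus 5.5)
We need to prove the Perron--Frobenius statement for PSD-preserving maps on $\Sbb_n$: if $\mathcal U$ preserves the PSD cone and $R=\rho(\mathcal U)>0$, then some nonzero PSD $\mW$ satisfies $\mathcal U(\mW)=R\mW$. The plan is the standard Krein--Rutman argument in finite dimensions, via the resolvent. The key facts are that $\Sbb_n^+$ is a closed, pointed, convex cone with nonempty interior in $\Sbb_n$, and that it spans $\Sbb_n$; the latter was already used in \Cref{fh:lem:positive}.

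\textbf{Step 1: resolvent blow-up.} For real $s>R$, the Neumann series
\[
(s\Ical-\mathcal U)^{-1}=\sum_{j\ge0}s^{-j-1}\mathcal U^j
\]
converges. Every term preserves PSD matrices, so the resolvent does too. We claim that $\norm{(s\Ical-\mathcal U)^{-1}(\mI_n)}_F\to\infty$ as $s\downarrow R$. Suppose instead that it stays bounded. For any PSD $\mX$ we have $0\preceq\mX\preceq\norm{\mX}\mI_n$, and the resolvent preserves order, so
\[
0\preceq(s\Ical-\mathcal U)^{-1}(\mX)\preceq\norm{\mX}\,(s\Ical-\mathcal U)^{-1}(\mI_n).
\]
Hence the resolvent is bounded on the PSD cone. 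Because the cone spans $\Sbb_n$, it is then bounded as an operator, uniformly for $s\downarrow R$. But some eigenvalue $\lambda$ of $\mathcal U$ has $|\lambda|=R$, so $\norm{(s\Ical-\mathcal U)^{-1}}\ge 1/\operatorname{dist}(s,\sigma(\mathcal U))$. This gives a contradiction only if the eigenvalue $R$ itself lies in the spectrum.

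\textbf{Step 2: the main obstacle.} The difficulty is exactly that the peripheral eigenvalue might not be $R$; it could be $Re^{i\theta}$ with $\theta\neq0$. To handle this, I would use the power series $f(z)=\sum_j z^j\mathcal U^j(\mI_n)$ instead of a distance bound. Its coefficients lie in a cone, and each scalar coefficient $\inner{\mI_n}{\mathcal U^j(\mI_n)}$ is nonnegative. By Pringsheim's theorem, a power series with nonnegative coefficients has a singularity at the positive real point of its circle of convergence.

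The radius of convergence is $1/R$. The $\limsup$ of $\norm{\mathcal U^j(\mI_n)}^{1/j}$ equals $\rho(\mathcal U)$ because $\norm{\mathcal U^j}$ is controlled by $\norm{\mathcal U^j(\mI_n)}$ through the order bound above. So $f$, equivalently the resolvent applied to $\mI_n$, is unbounded as $s\downarrow R$ along the reals. That establishes the blow-up claimed in Step 1.

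\textbf{Step 3: extracting the eigenmatrix.} Set
\[
\mW_s:=\frac{(s\Ical-\mathcal U)^{-1}(\mI_n)}{\norm{(s\Ical-\mathcal U)^{-1}(\mI_n)}_F}.
\]
Each $\mW_s$ is PSD with unit norm, so by compactness a subsequence converges to some PSD $\mW$ with $\norm{\mW}_F=1$. From the identity $(s\Ical-\mathcal U)\mW_s=\mI_n/\norm{\cdot}_F\to0$, passing to the limit gives $R\mW-\mathcal U(\mW)=0$. This $\mW$ is the required nonzero PSD eigenmatrix.
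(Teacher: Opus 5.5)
The paper does not prove this lemma. It quotes it as the case $K=\Sbb_n^+$ of the Perron--Frobenius theorem for linear maps that leave a proper cone invariant, citing Vandergraft and Berman--Plemmons. You instead give a self-contained proof by the classical resolvent-positivity argument. The resolvent $(s\mathcal I-\mathcal U)^{-1}$ preserves PSD matrices for $s>R$, its value at the order unit $\mI_n$ blows up as $s\downarrow R$, and a normalized limit of these values is the eigenmatrix. Your route is constructive and uses only two facts: $\mI_n$ is an interior point, and the unit slice of the cone is compact. So it transfers to any proper cone, with an interior point in place of $\mI_n$ and a strictly positive functional in place of the trace. The citation is shorter and covers the general case at no cost. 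You are right that the distance bound in Step~1 fails when the peripheral spectrum avoids $R$, and Step~3 is fine as written.

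One inference in Step~2 needs tightening. Pringsheim's theorem gives only a singularity of $g(z)=\sum_j z^j\operatorname{tr}\mathcal U^j(\mI_n)$ at $z=1/R$. A singularity of a nonnegative-coefficient series need not be a blow-up: $\sum_{j\ge1}z^j/j^2$ is bounded on $[0,1]$.

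What saves you is that $g(z)=\operatorname{tr}\bigl((\mathcal I-z\mathcal U)^{-1}(\mI_n)\bigr)$ is a rational function of $z$, by Cramer's rule. So its singularity at $1/R$ is a pole, and $g(x)\uparrow\infty$ as $x\uparrow 1/R$; state this explicitly.

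You also use implicitly that $g$, not just the matrix series $f$, has radius of convergence exactly $1/R$. This follows from your order bound together with $\norm{A}_F\le\operatorname{tr}A\le\sqrt n\,\norm{A}_F$ for $A\succeq0$.

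Alternatively, you can skip Pringsheim and finish Step~1 directly. For $|z|>R$ and $\mX\succeq0$,
$\norm{(z\mathcal I-\mathcal U)^{-1}(\mX)}_F\le\sum_j|z|^{-j-1}\operatorname{tr}\mathcal U^j(\mX)\le\norm{\mX}\operatorname{tr}\bigl((|z|\mathcal I-\mathcal U)^{-1}(\mI_n)\bigr)$.
Extend this to all inputs by spanning, as in your Step~1. Then a bounded real resolvent at $\mI_n$ would bound the complex resolvent at $z=(1+\epsilon)Re^{i\theta}$, near a peripheral eigenvalue $Re^{i\theta}$. That is impossible, since the resolvent norm is at least $1/\operatorname{dist}(z,\sigma(\mathcal U))$.
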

This is the Perron--Frobenius theorem for a linear map that leaves a proper cone invariant, here the cone $\Sbb_n^+$ of positive semidefinite matrices (\citealp{vandergraft1968spectral}; \citealp[Chapter~1]{berman1994nonnegative}).

\begin{lemma}[Jury stability test for a real quadratic]\label{fh:lem:quadratic}
Both roots of $p(x)=x^2-\sigma x+\pi$, with real $\sigma,\pi$, lie strictly inside the unit disk if and only if
$|\pi|<1$, $1-\sigma+\pi>0$, and $1+\sigma+\pi>0$.
\end{lemma}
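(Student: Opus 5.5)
The plan is a direct case analysis on the discriminant of $p$, using Vieta's formulas: the roots $x_1,x_2$ satisfy $x_1x_2=\pi$ and $x_1+x_2=\sigma$. Since $p$ is monic, it is positive for all sufficiently large $|x|$. The two Jury conditions $1-\sigma+\pi>0$ and $1+\sigma+\pi>0$ are exactly $p(1)>0$ and $p(-1)>0$, so I will argue throughout in terms of the signs of $p(\pm1)$.

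\emph{Necessity.} Suppose both roots lie strictly inside the unit disk. Then $|\pi|=|x_1||x_2|<1$. If the roots are non-real, they form a conjugate pair, so $p$ has no real zero and is positive on all of $\mathbb{R}$; in particular $p(\pm1)>0$. If the roots are real, they lie in $(-1,1)$. Hence $p$ does not vanish on $[1,\infty)$ and is eventually positive, so $p(1)>0$. The same argument on $(-\infty,-1]$ gives $p(-1)>0$.

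\emph{Sufficiency.} Assume the three inequalities hold.
\begin{itemize}
\item If the discriminant $\sigma^2-4\pi$ is negative, the roots are conjugate, so $|x_1|^2=x_1\bar x_1=\pi$. Then $|x_1|^2=\pi\le|\pi|<1$, and both roots lie inside the disk.
\item If the roots are real, then $p(1)>0$ and the monic shape of $p$ imply that the number of roots in $(1,\infty)$ is even. No root equals $1$, since $p(1)\neq0$. Likewise the number of roots in $(-\infty,-1)$ is even, and no root equals $-1$.
\end{itemize}
In the real case, the parity statements leave three configurations: both roots exceed $1$, both roots are below $-1$, or both roots lie in $(-1,1)$. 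In the first two, $x_1x_2>1$, which contradicts $|\pi|<1$. Hence both roots lie in $(-1,1)$.

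The only step needing any care is this parity argument in the real-root case, including the double-root subcase. A double root is counted with multiplicity two on one side of $\pm1$, so it fits the same three configurations and needs no separate treatment. Everything else is a direct reading of Vieta's formulas.
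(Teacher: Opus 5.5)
Your proof is correct, and it differs from the paper mainly in that the paper gives no proof. It states the lemma as the classical Schur--Cohn (Jury) test for a real quadratic and only notes that the two linear conditions are $p(1)>0$ and $p(-1)>0$. You instead verify the degree-two case directly, using Vieta's formulas and the sign pattern of a monic quadratic. For necessity, you use $|x_1x_2|<1$ and the fact that $p$ cannot vanish on $[1,\infty)$ or $(-\infty,-1]$. For sufficiency, you split into two cases. In the conjugate case, $|x_1|^2=\pi$. In the real case, $p(\pm1)>0$ keeps $\pm1$ outside the closed interval spanned by the roots, and the product bound rules out both roots lying beyond the same endpoint.

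The citation buys generality, since the Jury table handles any degree. The paper never needs this, because it applies the lemma only to the quadratic characteristic polynomials of the mean momentum recurrence along eigendirections. Your argument buys self-containment and shows exactly where each hypothesis enters. For instance, your sufficiency step uses $|\pi|<1$ only through $\pi<1$. This fits the identity $p(1)+p(-1)=2(1+\pi)$, so $\pi>-1$ already follows from the two linear conditions.
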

This is the Schur--Cohn, or Jury, stability test for a real quadratic \citep{jury1964theory}: the conditions are $|\pi|<1$, $p(1)>0$ and $p(-1)>0$, and they are equivalent to both roots lying in the open unit disk.
\clearpage
\endgroup

\end{document}